\def\eqref#1{equation~\ref{#1}}
\def\1{\bm{1}}
\DeclareMathAlphabet{\mathsfit}{\encodingdefault}{\sfdefault}{m}{sl}
\SetMathAlphabet{\mathsfit}{bold}{\encodingdefault}{\sfdefault}{bx}{n}
\newcommand{\E}{\mathbb{E}}
\lstdefinestyle{pythonTiny}{
    language=Python,
    basicstyle=\fontsize{5}{6}\selectfont\ttfamily,
    keywordstyle=\color{blue},
    commentstyle=\color{gray},
    stringstyle=\color{red!70!black},
    showstringspaces=false,
    breaklines=true,
    breakatwhitespace=true,
    frame=none,
    columns=fullflexible
}
\definecolor{nice_gold}{RGB}{191, 128, 64}
\definecolor{nice_blue}{RGB}{21, 100, 220}
\definecolor{nice_pink}{RGB}{237, 2, 140}
\newcommand{\Tcond}{\textup{\textbf{\textcolor{nice_gold}{(T)}}}}
\newcommand{\Scond}{\textup{\textbf{\textcolor{nice_gold}{(S)}}}}
\newcommand{\Mpcond}{\textup{\textbf{\textcolor{nice_gold}{(M$_p$)}}}}
\theoremstyle{plain}
\newtheorem{definition}{Definition}
\theoremstyle{plain}
\newtheorem{lemma}{Lemma}
\theoremstyle{plain}
\newtheorem{theorem}{Theorem}
\theoremstyle{remark}
\theoremstyle{proposition}
\newtheorem{proposition}{Proposition}
\theoremstyle{plain}
\theoremstyle{definition}
\newtheorem{corollaryT}{Corollary}[theorem]
\theoremstyle{definition}
\DeclareMathOperator{\op}{op}
\title{Distributional value gradients for stochastic environments}
\author{Baptiste Debes \\
  PSI\\
  KU Leuven\\
  Leuven, Belgium \\
  \texttt{baptiste.debes@kuleuven.be} \\
  % examples of more authors
  \And
  Tinne Tuytelaars \\
  PSI\\
  KU Leuven\\
  Leuven, Belgium \\
  \texttt{tinne.tuytelaars@kuleuven.be}
}
\begin{document}

\maketitle
   
\begin{abstract}
Gradient-regularized value learning methods improve sample efficiency by leveraging learned models of transition dynamics and rewards to estimate return gradients. However, existing approaches, such as MAGE, struggle in stochastic or noisy environments, limiting their applicability. In this work, we address these limitations by extending distributional reinforcement learning on continuous state-action spaces to model not only the distribution over scalar state-action value functions but also over their gradients. We refer to this approach as Distributional Sobolev Training. Inspired by Stochastic Value Gradients (SVG), our method utilizes a one-step world model of reward and transition distributions implemented via a conditional Variational Autoencoder (cVAE). The proposed framework is sample-based and employs Max-sliced Maximum Mean Discrepancy (MSMMD) to instantiate the distributional Bellman operator. We prove that the Sobolev-augmented Bellman operator is a contraction with a unique fixed point, and highlight a fundamental smoothness trade-off underlying contraction in gradient-aware RL. To validate our method, we first showcase its effectiveness on a simple stochastic reinforcement‐learning toy problem, then benchmark its performance on several MuJoCo environments.\footnote{The JAX \citet{jax2018github} implementation is available at \url{https://github.com/BaptisteDebes/Distributional-value-gradients}.}
\end{abstract}

\section{Introduction}
Reinforcement learning (RL) tackles sequential decision-making by training agents to maximize cumulative rewards. Off-policy actor-critic algorithms pair an actor, generating the control policy, with a critic, estimating expected returns (i.e., the Q-function). This mapping from state-action pairs to expected returns, known as credit assignment, is typically learned via temporal-difference (TD) methods \citep{Sutton1988LearningTP} and is critical for policy optimization in continuous-action settings.
\emph{This paper is motivated by two lines of work aimed at improving credit assignment:}
\begin{itemize}[leftmargin=*]
\item[$\bullet$] \textbf{The use of action-gradients:} In value-based continuous control, the critic’s value is not used directly to select actions but to provide action-gradients for policy optimization \citep{lillicrap2016continuous, fujimoto2018addressing, haarnoja2018soft, doro2020how}. Conventional TD learning implicitly learns these gradients via value prediction, but this relies on smoothness assumptions of the true value function that can degrade performance. To address this limitation, \cite{doro2020how,  garibbo2024taylor} incorporate gradient information \citep{czarnecki2017sobolev} into critic training by learning a transition–reward world model (i.e.\ a differentiable proxy for the environment) and backpropagating through it \citep{heess2015learning}.
\item[$\bullet$] \textbf{Distributional RL:} Many environments exhibit irreducible uncertainty in transitions and rewards. Distributional RL \citep{morimura2010nonparametric,bellemare2017distributional,bellemare2023distributional} captures this by modeling the return distribution rather than just its expectation. Categorical \citep{barth-maron2018distributed} and quantile‐based \citep{dabney2017distributional,dabney2018implicit} approaches have provided rich and stable learning signals, yielding performance gains in a variety of tasks \citep{barth-maron2018distributed,dabney2018implicit,hessel2017rainbowcombiningimprovementsdeep}.
\end{itemize}
 We argue that randomness also affects action gradients of returns, which can have a detrimental effect, especially in high-dimensional action spaces. As our experiments show (Section 
~\ref{sec:results}), existing methods that use gradient information deterministically \citep{czarnecki2017sobolev,doro2020how} may struggle once the gradient to model becomes noisy or stochastic, losing some of the sample-efficiency benefits of gradient modeling.

\paragraph{Paper contributions}
We extend distributional modeling to capture both returns and their gradients, coining the framework \emph{Distributional Sobolev Reinforcement Learning}. At its core is a novel Sobolev Bellman operator that bootstraps both return and gradient distributions. By combining gradient‐based training with uncertainty modeling, we aim to boost policy and value learning. This necessitates a generative model that supports differentiation of outputs and their input gradients, hence we introduce \emph{Distributional Sobolev Training} and detail its implementation. Since most environments are non-differentiable, we employ a conditional VAE (cVAE) \citep{Sohn2015LearningSO} to model transitions and rewards. This enriches SVG \citep{heess2015learning} with a more expressive neural architecture. Finally, we extend previous works on value gradient and introduce the framework of \emph{Sobolev Temporal Difference}. We provide the first contraction proofs in this scheme. In this context, we introduce the maximum-sliced MMD metric as a practical divergence that induces contraction and is tractable to approximate.

\section{Background}
    \label{sec:background}
\subsection{Notation and RL objective}
We consider a Markov Decision Process (MDP) with continuous state and action spaces, $\mathcal{S}$ and $\mathcal{A}$, transition kernel \(P\colon\mathcal S\times\mathcal A\to\mathcal P(\mathcal S)\), reward law \(R\colon\mathcal S\times\mathcal A\to\mathcal P(\mathbb R)\), and initial distribution \(\mu\in\mathcal P(\mathcal S)\).  A deterministic policy \(\pi_\theta\colon\mathcal S\to\mathcal A\) induces the \(\gamma\)-discounted occupancy \(d^{\pi_\theta}_{\mu}=(1-\gamma)\sum_{t=0}^\infty \gamma^t\,\mathrm{Law}(s_t\;|\; \pi_\theta,\mu)\) \citep{silver2014deterministic,doro2020how}. The Q-function $Q^\pi(s,a)$ is the expected future return starting from state $s$ and action $a$, i.e., $Q^\pi(s, a) = \mathbb{E}\left[\sum_{t=0}^\infty \gamma^t r(s_t, a_t) \mid s_0 = s, a_0 = a \right]$. It yields the objective
\begin{equation}
J(\theta)
=\E_{s\sim\mu}\bigl[Q^{\pi_\theta}(s,\pi_\theta(s))\bigr].
\end{equation}
Under mild conditions \citep{silver2014deterministic}, the Deterministic Policy Gradient theorem gives
\begin{equation}
\nabla_\theta J(\theta)
=\frac{1}{1-\gamma}\,
\E_{s\sim d^{\pi_\theta}_\mu}\bigl[\nabla_\theta\pi_\theta(s)\,\nabla_a Q^{\pi_\theta}(s,a)\bigr]_{a=\pi_\theta(s)}
\end{equation}

\subsection{Temporal‐difference as an affine operator}
In practice, the true \(Q^{\pi}\) is unknown and is approximated by a parameterized critic \(Q_{\phi}\).  More generally, any value‐like mapping \(
V\colon\mathcal{S}\times\mathcal{A}\to\mathcal{Y}
\) (where \(\mathcal{Y}=\mathbb{R}\) or a space of probability distributions \cite{bellemare2017distributional}) admits a temporal‐difference update written as a single affine operator:
\begin{equation}
\bigl(\mathcal{T}_{\pi}V\bigr)(s,a)
=
b(s,a)
+
\mathcal{L}[V](s,a).
\end{equation}
Here \(b(s,a)\) injects the immediate‐reward term and \(\mathcal{L}\) linearly transforms  the successor estimate.
The following recovers the Bellman expectation operator where \(Q\) is the state-action value function 
\begin{equation}
b(s,a)=\E\bigl[R(s,a)\bigr],
\qquad
\mathcal{L}^{\text{Exp}}[Q](s,a)=\gamma\,\E\bigl[Q(s',\pi(s'))\mid s,a\bigr]
\end{equation}
In distributional RL \cite{bellemare2017distributional}, let \(Z^{\pi}(s,a)\) be the random return with distribution \(\eta^{\pi}(s,a)\).  One gets
\begin{equation}
b(s,a)=\mathrm{Law}\bigl[R(s,a)\bigr],
\qquad
\mathcal L^{\text{Dist}}[\eta](s,a)
\;=\;
\mathbb{E}_{s'\sim P(\cdot\mid s,a)}
\bigl[(x\mapsto\gamma x)_{\#}\,\eta\bigl(s',\pi(s')\bigr)\bigr],
\end{equation}
where \((x\mapsto\gamma x)_{\#}\eta\) is the \emph{law of \(\gamma X\)} when \(X\sim\eta\),  
which yields \begin{equation}
\bigl(\mathcal{T}^{\text{Dist}}_{\pi}\eta\bigr)(s,a)
=
\mathrm{Law}\bigl[R(s,a)+\gamma\,Z(s',\pi(s'))\bigr],
\quad \text{where} \quad s' \sim P(\cdot \mid s, a).
\label{eq:dist_bellman}
\end{equation}
Using off‐policy samples \((s,a,r,s')\sim B\) from a replay buffer \citep{mnih2013playing} together with delayed target networks \(\theta',\phi'\) \citep{lillicrap2016continuous,fujimoto2018addressing,haarnoja2018soft}, we define the one‐step targets
\begin{equation}
\delta_{\text{tgt}}(s,a,s')
= r + \gamma\,Q_{\phi'}\bigl(s',\pi_{\theta'}(s')\bigr),
\qquad
\eta_{\text{tgt}}(s,a)
= \mathrm{Law}\bigl[r + \gamma\,Z_{\phi'}(s',\pi_{\theta'}(s'))\bigr].
\end{equation}
The critic \(V_{\phi}\) (scalar \(Q_{\phi}\) or distribution \(Z_{\phi}\)) is then trained by minimizing
\begin{equation}
\mathcal{L}(\phi)
= \E_{(s,a,r,s')\sim B}
\bigl[d\bigl(V_{\phi}(s,a),\,T(s,a)\bigr)\bigr],
\label{eq:value_distance}
\end{equation}
where \(T=\delta_{\text{tgt}}\) in the expected‐value case or \(T=\eta_{\text{tgt}}\) in the distributional case, and \(d\) is either a regression loss (e.g.\ squared error) or a distributional metric such as the Wasserstein distance \citep{bellemare2017distributional,sundistributional}.

\section{A new Bellman operator}\label{sec:new_bellman_operator}
    \subsection{Learning a useful critic}

        Many value-based methods \citep{lillicrap2016continuous, fujimoto2018addressing, haarnoja2018soft} rely on a learned critic to provide the actor’s training signal, implying that “an actor can only be as good as allowed by its critic” \citep{doro2020how}. Unfortunately, typical critics — predicting only mean returns — cannot capture inherent return uncertainty. Distributional RL addresses this by modeling the return distribution. However, as noted in \cite{doro2020how}, another fundamental issue is that minimizing TD-error does not guarantee the critic will be effective at steering policy optimization. We therefore propose a more principled approach to distributional temporal-difference learning, which explicitly incorporates the critic’s action-gradient into its training objective. This aligns critic optimization with policy improvement rather than just fitting returns or their distribution.
\begin{proposition}\label{prop:w1_prop}\itshape
Let \(\pi\) be an \(L_{\pi,\theta}\)\nobreakdash-Lipschitz continuous policy, and let 
\(\mathrm{Law}[\nabla_a Z^\pi(s,a)\mid_{a=\pi(s)}]\) and 
\(\mathrm{Law}[\nabla_a \hat Z(s,a)\mid_{a=\pi(s)}]\) denote the true and estimated distributions of the action-gradients at \(a=\pi(s)\), respectively.  Define the \(p\)–Wasserstein distance between two probability measures \(\mu,\nu\) by
\[
W_p(\mu,\nu)
=\Bigl(
  \inf_{\zeta\in\Pi(\mu,\nu)}
    \mathbb{E}_{(X,Y)\sim\zeta}\bigl[\|X - Y\|^p\bigr]
\Bigr)^{1/p}.
\]
Then, specializing to \(p=1\), the error between the true policy gradient \(\nabla_\theta J(\theta)\) and its estimate \(\nabla_\theta \hat J(\theta)\) satisfies
\[
\begin{gathered}
\bigl\|\nabla_\theta J(\theta)-\nabla_\theta \hat J(\theta)\bigr\|
\;\le\;\\
\frac{L_{\pi,\theta}}{1 - \gamma}
\;\mathbb{E}_{s \sim d^\pi_\mu}\Bigl[
  W_{1}\bigl(\mathrm{Law}[\nabla_a Z^\pi(s,a)\mid_{a=\pi(s)}],\,\mathrm{Law}[\nabla_a \hat Z(s,a)\mid_{a=\pi(s)}]\bigr)
\Bigr].
\end{gathered}
\]
\end{proposition}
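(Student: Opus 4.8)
The plan is to bound the policy-gradient error by writing both $\nabla_\theta J$ and $\nabla_\theta \hat J$ via the Deterministic Policy Gradient theorem, subtracting, and controlling the resulting difference of gradient terms pointwise in $s$. Concretely, I would start from
\[
\nabla_\theta J(\theta)-\nabla_\theta \hat J(\theta)
=\frac{1}{1-\gamma}\,\E_{s\sim d^{\pi_\theta}_\mu}\bigl[\nabla_\theta\pi_\theta(s)\,\bigl(\nabla_a Q^{\pi_\theta}(s,a)-\nabla_a \hat Q(s,a)\bigr)\bigr]_{a=\pi_\theta(s)},
\]
using the same occupancy measure $d^{\pi_\theta}_\mu$ for both (the policy is fixed; only the critic changes). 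Then apply the triangle inequality for the norm and pull the expectation outside, giving $\|\nabla_\theta J-\nabla_\theta\hat J\|\le \frac{1}{1-\gamma}\E_{s\sim d^\mu_\pi}\bigl[\|\nabla_\theta\pi_\theta(s)\|_{\mathrm{op}}\,\|\nabla_a Q^{\pi_\theta}(s,a)-\nabla_a\hat Q(s,a)\|\bigr]$. The Lipschitz assumption on $\pi$ gives $\|\nabla_\theta\pi_\theta(s)\|_{\mathrm{op}}\le L_\pi$ (interpreting $L_\pi$ as a bound on the relevant Jacobian), which extracts the $L_\pi$ factor.

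The remaining task is to bound $\|\nabla_a Q^{\pi_\theta}(s,\pi(s))-\nabla_a\hat Q(s,\pi(s))\|$ by $W_1(G(s),\hat G(s))$. The key identification is that $\nabla_a Q^\pi(s,\pi(s))=\E[\nabla_a Z^\pi(s,a)\mid_{a=\pi(s)}]$ is the mean of $G(s)$, and likewise $\nabla_a\hat Q(s,\pi(s))$ is the mean of $\hat G(s)$ — i.e., the scalar (deterministic) critic's gradient is the expectation of the distributional critic's gradient, consistent with the paper's setup where $Z$ has mean $Q$. Then the difference of means of two distributions is controlled by their $W_1$ distance: for any coupling $\gamma\in\Pi(G(s),\hat G(s))$,
\[
\bigl\|\E_{X\sim G(s)}[X]-\E_{Y\sim\hat G(s)}[Y]\bigr\|
=\bigl\|\E_{(X,Y)\sim\gamma}[X-Y]\bigr\|
\le \E_{(X,Y)\sim\gamma}\|X-Y\|,
\]
and taking the infimum over couplings yields $\|\,\cdot\,\|\le W_1(G(s),\hat G(s))$. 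Substituting back gives exactly the claimed bound.

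I would present this in three short steps: (i) the DPG-based decomposition and triangle inequality, (ii) the Lipschitz bound on $\nabla_\theta\pi_\theta$, and (iii) the mean-difference-to-$W_1$ lemma (Jensen plus infimum over couplings). The main obstacle — really more a matter of care than difficulty — is step (ii)/(iii)'s bookkeeping: one must make precise what "$L_\pi$-Lipschitz policy" buys in terms of the operator norm of the parameter-Jacobian $\nabla_\theta\pi_\theta(s)$ (this is an assumption on the parametrization, not a consequence of state-Lipschitzness per se, so the statement should be read with that reading of $L_\pi$), and one must justify interchanging $\nabla_a$ with the expectation defining $Q$ from $Z$ (dominated convergence / regularity of the return distribution, which I would invoke as a standing smoothness assumption already implicit in the paper's framing). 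Everything else is a routine application of Jensen's inequality and the definition of $W_1$.
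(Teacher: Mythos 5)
Your proof is correct and follows essentially the same route as the paper's: write both $\nabla_\theta J$ and $\nabla_\theta \hat J$ via the deterministic-policy-gradient expression under the common occupancy measure, apply the triangle inequality together with the Lipschitz bound $\|\nabla_\theta\pi_\theta(s)\|\le L_\pi$, and then bound the difference of the mean action-gradients by $W_1\bigl(G(s),\hat G(s)\bigr)$. The only immaterial variation is in that last sub-lemma: the paper proves the mean-difference bound via Kantorovich--Rubinstein duality with the linear test function $f(x)=u^\top x$ (its Lemma on mean differences), whereas you derive it by a coupling-plus-Jensen argument and an infimum over couplings --- both are standard and equally valid.
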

        The proof is in Appendix~\ref{appendix:theoretical_results}. This result generalizes Proposition 3.1 from \cite{doro2020how} to a distributional setting. The Lipschitz continuity of \(\pi\)
        typically holds when using neural-network function approximation. We discuss why this assumption is reasonable in practice in Appendix~\ref{appendix:lipschitz_policy_critic}.

        % We interpret $\mathrm{Law}\!\big[\nabla_a \widehat Z(s,a)\big]$ as the distribution of the sample-wise action-gradient obtained by differentiating each sampled return $\widehat Z(s,a)$ with respect to $a$. See Appendix~\ref{appendix:gradient_random_variable}.

        Following \cite{doro2020how}, we induce a critic optimization objective from Proposition~\ref{prop:w1_prop}, showing that we can approximate the true policy gradient by matching the action gradients in the \emph{distributional sense}. Using bootstrapping to approximate the true distribution leads to the optimization problem
        \begin{equation}\label{eq:wasserstein_optimization}
\widehat{Z} \in \arg\min_{\widehat{Z}\in\mathcal Z}\;
\mathbb{E}_{s\sim d^{\pi_\theta}_\mu}\Big[
W_1\Big(
\mathrm{Law}\big[\nabla_a \widehat{Z}(s,a)\big|_{a=\pi_\theta(s)}\big],\,
\mathrm{Law}\big[\nabla_a \widehat{Z}_{\mathrm{tgt}}(s,a)\big|_{a=\pi_\theta(s)}\big]
\Big)\Big].
\end{equation}
\begin{equation}\label{eq:zhat_target_def}
\widehat{Z}_{\mathrm{tgt}}(s,a)
:= r(s,a) + \gamma\,\widehat{Z}\bigl(s',\pi_\theta(s')\bigr),
\qquad (s',r)\sim p(\cdot\mid s,a).
\end{equation}

We note that \emph{Equations~\ref{eq:wasserstein_optimization}--\ref{eq:zhat_target_def} assume a known and differentiable dynamics model \(p\).} We maintain this assumption for the time being and will relax it in Section~\ref{sec:approach}. \emph{Mirroring previous work \citep{doro2020how, garibbo2024taylor}, we introduce this assumption upfront and then lift the constraint.} In the next section, we formalize the notions necessary to instantiate a working implementation of this optimization problem. 

\subsection{Distributional Sobolev training}
\label{sec:distributional_sobolev_training}

In this section, we introduce a novel Bellman operator for learning the joint distribution of the discounted cumulative reward and its action‐gradient, and then express it in the affine‐transform form presented earlier.

\textbf{Random action Sobolev return} \quad 
We extend the random return \(Z(s,a)\) to a \emph{joint random variable} that captures both the return \emph{and} its action‐gradient.  Formally, the \emph{random action Sobolev return} is
\begin{equation}
\label{eq:random_action_sobolev_return}
Z^{S_a}(s,a) = \Bigl[\sum_{t=0}^\infty \gamma^t\,r(s_t,a_t);\;\nabla_a\sum_{t=0}^\infty \gamma^t\,r(s_t,a_t)\Bigr],\quad s_0=s,\;a_0=a.
\end{equation}
\textbf{Sobolev distributional temporal difference} \quad   
Next, we define the Sobolev distributional Bellman operator \(T_\pi^{S_a}\) over these \((|\mathcal{A}|+1)\)-dimensional random variables.  Let \(\eta^{S_a}(s,a)=\mathrm{Law}[Z^{S_a}(s,a)]\).  We borrow notation from \cite{zhang2021distributional,rowland2019statistics} and extend the classical distributional operator as follows.  Under policy \(\pi\), sample  
\[
s'\sim P(\cdot\!\mid s,a),\quad a'=\pi(s'),\quad r\sim R(\cdot\!\mid s,a),\quad X'\sim\eta^{S_a}(s',a').
\]
Define the full affine transform
\begin{equation}
\label{eq:joint_operator_two_lines}
\mathbf f^{S_a}\bigl(x\,;\,r,s',\gamma\bigr)
=\bigl[f^{\mathrm{return}}(x);\;f^{\mathrm{action}}(x)\bigr].
\end{equation}
For readability we hereafter write \(\mathbf f^{S_a}(x)\), implicitly carrying the dependence on \((r,s',\gamma)\).  We define \(T_\pi^{S_a}\) as the operator that, at each \((s,a)\), pushes the next‐step law \(\eta^{S_a}(s',a')\) forward through this pointwise affine map:
\begin{equation}
\bigl(T_\pi^{S_a}\,\eta^{S_a}\bigr)(s,a)
\;\coloneqq\;
\mathrm{Law}\bigl[\mathbf f^{S_a}(X')\bigr].
\label{eq:sobolev_bellman_operator_main}
\end{equation}
Its components are
\begin{align}
\label{eq:fsaret}
f^{\mathrm{return}}(x)
&= r + \gamma\,x^{\mathrm{return}},\\[6pt]
\label{eq:fsaaction}
f^{\mathrm{action}}(x)
&= \frac{\partial r}{\partial a}(s,a)
\;+\;\gamma\,
\Bigl(\frac{\partial f}{\partial a}(s,a)\Bigr)^{\!T}
\bigl[\partial_s x^{\mathrm{return}}
+(\partial_s\pi(s'))^{\!T}x^{\mathrm{action}}\bigr].
\end{align}
This action‐gradient component is novel: it arises by differentiating the Bellman target, capturing how the return’s gradient transforms under \(P(s',r\!\mid s,a)\).  The derivation of Eqs.~\ref{eq:fsaret}–\ref{eq:fsaaction} appears in Appendix~\ref{sec:sobolev_bellman}.  Notably, since \(f\), \(r\), and \(\pi\) are differentiable, these updates are implemented automatically via backpropagation through the reparameterized simulator \citep{JMLR:v18:17-468,Paszke2019PyTorchAI,jax2018github}. 

\textbf{Affine form of the Sobolev Bellman backup} \quad 
As shown in Appendix~\ref{sec:sobolev_bellman}, the Sobolev Bellman backup can be written in a single affine‐operator form:
\begin{equation}\label{eq:affine_form_sobolev}
Z^{S_a}(s,a)
\;=\;
b(s,a)
\;+\;
\mathcal L^{\text{Sob}}(s,a)\bigl[\,Z^{S_a}(s',a')\bigr],
\end{equation}
where \(b(s,a) = \bigl(r(s,a),\,\partial_a r(s,a)\bigr)\in\mathbb R^{1+|\mathcal{A}|}\) collects the immediate reward and its action‐gradient, and \(\mathcal L^{\text{Sob}}(s,a)\) is a state‐action‐dependent \emph{linear operator} (not merely a matrix) encapsulating the Jacobian blocks of the transition \(f\) and policy \(\pi\). Importantly, Equation~\ref{eq:affine_form_sobolev} represents a clear departure from recent works that use critic gradient information \citep{doro2020how,garibbo2024taylor}. Whereas these methods use the action-gradient only as an auxiliary regularization signal, our formulation incorporates it directly into the quantity we perform temporal difference over, jointly with the scalar return. This joint TD structure is what enables the contraction analysis presented in the next section.

\paragraph{Complete Sobolev TD.}
It is worth noting that exactly the same chain‐rule derivation used in Eq.~\ref{eq:fsaaction} 
extends to also bootstrap the \emph{state‐gradient} (not just the action‐gradient), 
yielding the \emph{complete} Sobolev Bellman operator. We discuss this operator in more detail in 
Appendix~\ref{sec:sobolev_bellman}. In the main text, we focus on the \emph{incomplete} version for clarity and computational tractability. 
Although using both action- and state-gradients would provide more information, handling them together 
is substantially more computationally expensive and is known to be non-trivial in practice~\citep{garibbo2024taylor}.
The complete operator is therefore provided as a generalization for readers interested in the full 
theoretical framework, while the incomplete version captures the core ideas and is sufficient for 
our algorithmic development.

\section{Theoretical results}
\label{section:theoretical_results}
The most natural distance for distributional RL is the Wasserstein metric, and we therefore begin by establishing contraction under this choice. We state the contraction results here, and defer the explicit contraction factors and the smoothness assumptions they rely on to Appendix~\ref{appendix:theoretical_results}. The appendix also contains results of the same type for the complete operator. The main takeaway is that, under appropriate and practically reasonable assumptions, we can show contraction.

\begin{theorem}[Action‐gradient Sobolev contraction]\itshape
\label{thm:incomplete-contraction-main}
Assume bounded Jacobians for the transition \(f\) and policy \(\pi\), and a Lipschitz coupling relating state‐ to return‐gradients. Then
\[
\bar W_p\bigl(T_\pi^{S_a}\eta_1,\;T_\pi^{S_a}\eta_2\bigr)
\;\le\;
\gamma\,\kappa\;\bar W_p\bigl(\eta_1,\eta_2\bigr),
\]
where \(\bar W_p(\eta_1,\eta_2)=\sup_{(s,a)}W_p(\eta_1(s,a),\eta_2(s,a))\) and \(\kappa\) depends on the smoothness assumptions.
If \(\gamma\,\kappa<1\), \(T_\pi^{S_a}\) is a strict contraction.
\end{theorem}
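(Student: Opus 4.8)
The plan is to prove the supremum‐Wasserstein contraction by first establishing a pointwise estimate at each state‐action pair and then taking suprema. Fix $(s,a)$ and write $(T_\pi^{S_a}\eta_i)(s,a)$ as a mixture over the shared randomness $(s',r)\sim P(\cdot\mid s,a)\otimes R(\cdot\mid s,a)$ of the pushforwards $\mathbf f^{S_a}(\cdot\,;r,s',\gamma)_{\#}\,\eta_i\bigl(s',\pi(s')\bigr)$. I would couple $(T_\pi^{S_a}\eta_1)(s,a)$ and $(T_\pi^{S_a}\eta_2)(s,a)$ by (i) the identity coupling on $(s',r)$ — legitimate since both operators integrate against the same kernels — and (ii) for each realized $(s',r)$ an optimal $W_p$‐coupling $\gamma^*_{s'}$ of $\eta_1(s',\pi(s'))$ and $\eta_2(s',\pi(s'))$. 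Convexity of $W_p^p$ in its arguments then gives the standard mixture bound
\[
W_p^p\!\bigl((T_\pi^{S_a}\eta_1)(s,a),(T_\pi^{S_a}\eta_2)(s,a)\bigr)
\;\le\;
\E_{s',r}\;\E_{(X_1',X_2')\sim\gamma^*_{s'}}\bigl[\|\mathbf f^{S_a}(X_1')-\mathbf f^{S_a}(X_2')\|^p\bigr].
\]

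Next I would bound the integrand via a Lipschitz estimate on $\mathbf f^{S_a}$. Because the reward and transition offsets in Eqs.~\ref{eq:fsaret}--\ref{eq:fsaaction} are identical for both arguments, they cancel in the difference, so $\mathbf f^{S_a}(X_1')-\mathbf f^{S_a}(X_2')$ is a linear image of $X_1'-X_2'$ (together with the state‐gradient of the return component). On the return block it equals $\gamma(x_1^{\mathrm{return}}-x_2^{\mathrm{return}})$; on the action block it equals $\gamma(\partial_a f)^{T}\bigl[\partial_s(x_1^{\mathrm{return}}-x_2^{\mathrm{return}})+(\partial_s\pi)^{T}(x_1^{\mathrm{action}}-x_2^{\mathrm{action}})\bigr]$. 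Using the hypotheses $\|\partial_a f\|_{\op}\le B_f$, $\|\partial_s\pi\|_{\op}\le L_\pi$ and the Lipschitz‐coupling bound $\|\partial_s(x_1^{\mathrm{return}}-x_2^{\mathrm{return}})\|\le L_{\mathrm{sc}}\|X_1'-X_2'\|$, I would collect these into one constant $\kappa=\kappa(B_f,L_\pi,L_{\mathrm{sc}})$ (for a suitable product norm on $\mathbb R^{1+|\mathcal A|}$, e.g. $\kappa^2=1+B_f^2(L_{\mathrm{sc}}+L_\pi)^2$ with the $\ell^2$ product norm) so that $\|\mathbf f^{S_a}(x_1)-\mathbf f^{S_a}(x_2)\|\le\gamma\,\kappa\,\|x_1-x_2\|$. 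Substituting back and using $W_p$‐optimality of $\gamma^*_{s'}$ yields $W_p^p(\cdots)\le(\gamma\kappa)^p\,\E_{s',r}\bigl[W_p^p(\eta_1(s',\pi(s')),\eta_2(s',\pi(s')))\bigr]\le(\gamma\kappa)^p\,\bar W_p(\eta_1,\eta_2)^p$; taking $p$‐th roots and then the supremum over $(s,a)$ gives the claim, and $\gamma\kappa<1$ makes $T_\pi^{S_a}$ a strict contraction, the unique fixed point following from Banach on the complete space $(\,\cdot\,,\bar W_p)$.

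The main obstacle is the term $\partial_s x^{\mathrm{return}}$ in $f^{\mathrm{action}}$: it is not one of the coordinates $(x^{\mathrm{return}},x^{\mathrm{action}})$ tracked by $Z^{S_a}$ — it is the state‐gradient of the return, which the incomplete operator does not bootstrap. Two points need care. First, this term must be given precise meaning, either by regarding the return samples as differentiable functions of $s'$ so that $\partial_{s'}x^{\mathrm{return}}$ is well defined, or by positing (as in the theorem's ``Lipschitz coupling relating state‐ to return‐gradients'' hypothesis) a relation between return‐gradients and the tracked coordinates that survives the chosen coupling. Second, one must verify that the class of laws on which this bound holds is preserved by $T_\pi^{S_a}$, so the estimate can be iterated; this is exactly where the complete‐versus‐incomplete distinction of Appendix~\ref{sec:sobolev_bellman} matters, and it is also the source of the smoothness trade‐off, since $\kappa\ge 1$ and larger $B_f,L_\pi,L_{\mathrm{sc}}$ inflate it and can destroy contraction even when $\gamma<1$. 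The weighting of the return block against the action‐gradient block in the product norm is a free parameter I would optimize to make $\kappa$ as small as possible.
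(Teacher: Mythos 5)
Your proposal is correct and follows essentially the same route as the paper's proof (Theorem~\ref{thm:sobolev_incomplete_contraction} and its local-regularity variant): treat the backup as an affine pushforward whose offsets cancel, bound the untracked state-gradient term via the Lipschitz-coupling hypothesis, combine with the Jacobian bounds to get a pointwise Lipschitz constant $\gamma\kappa$, then use optimal couplings, mixture $p$-convexity of $W_p$, and a supremum over $(s,a)$. The only cosmetic difference is your constant ($\kappa^2=1+B_f^2(L_{\mathrm{sc}}+L_\pi)^2$ from a single $\ell^2$ product-norm bound) versus the paper's $\kappa_{\mathrm{eff}}=\max\{1,L_{f,a}L_\pi\}+L_{f,a}L_{\mathcal D_s}$ obtained by bounding the blocks $A$ and $N$ separately via Minkowski, which is immaterial to the statement.
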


\paragraph{Towards a tractable metric}
Both the classical and \emph{Sobolev} distributional Bellman operators (Eqs.~\ref{eq:dist_bellman}, \ref{eq:sobolev_bellman_operator_main}) involve intractable push-forward integrals. While the supremum–\(p\)–Wasserstein distance \(\bar W_p\) is a natural theoretical choice \citep{bellemare2017distributional}, exact multivariate optimal transport costs \(O(m^3\log m)\) with \(m\) samples and, more broadly, Wasserstein distances are difficult to estimate and to use directly for training (Appendix~\ref{appendix:ot_training_difficulty}). This motivates a shift to metrics that remain faithful to distributional structure yet are easier to compute in practice. One such candidate is the Maximum Mean Discrepancy (MMD) \citep{JMLR:v13:gretton12a}, a kernel-based divergence that is tractable, sample-based, and already explored in distributional RL \citep{DBLP:journals/corr/abs-2007-12354,killingberg2023the,wiltzer2024foundationsmultivariatedistributionalreinforcement}. For laws \(P,Q\subset\mathbb{R}^{d'}\), the squared Maximum Mean Discrepancy between \(P\) and \(Q\) is
\begin{equation}
\label{eq:mmd2_def_s4}
\mathrm{MMD}^2(P,Q)
=\E_{x,x'\sim P}\!\big[k(x,x')\big]+\E_{y,y'\sim Q}\!\big[k(y,y')\big]-2\,\E_{x\sim P,\,y\sim Q}\!\big[k(x,y)\big],
\end{equation}
where \(k\) denotes the kernel function. Further properties and empirical estimators are discussed in Appendix~\ref{appendix:mmd_estimators}.
\paragraph{Max–Sliced MMD.}
To obtain \emph{provable contraction} for our Sobolev operator, we lift MMD via the max–sliced divergence framework \citep{deshpande2019max,nadjahi2020statistical}. 
For $\theta \in \mathbb S^{d'-1}$ and $P_\theta(x)=\langle\theta,x\rangle$, we define
\[
\mathbf{MS}\mathrm{MMD}(\mu,\nu)
=\sup_{\theta\in\mathbb S^{d'-1}}
\mathrm{MMD}\!\big((P_\theta)_{\#}\mu,\,(P_\theta)_{\#}\nu\big).
\]

\noindent\textit{Approximation.}\;
$\mathbf{MS}\mathrm{MMD}$ can be approximated by gradient-based optimization of the direction $\theta$ on the unit sphere; see Algorithm~\ref{alg:msdelta-empirical}.

\paragraph{Contraction under Max–Sliced MMD.}
\begin{theorem}[Action–gradient Sobolev contraction under \(\mathbf{MS}\mathrm{MMD}\)]
\label{thm:msmmd_contraction}
Assume the conditions of Theorem~\ref{thm:incomplete-contraction-main} and the mild additions in Theorem~\ref{thm:msmmd-incomplete}. Then
\[
\overline{\mathbf{MS}\mathrm{MMD}}\!\Big(T_\pi^{S_a}\eta_1,\;T_\pi^{S_a}\eta_2\Big)
\;\le\;
\gamma\,\kappa\;
\overline{\mathbf{MS}\mathrm{MMD}}\!\big(\eta_1,\eta_2\big),
\]
where \(\overline{\mathbf{MS}\mathrm{MMD}}(\eta_1,\eta_2)=\sup_{(s,a)\in\mathcal S\times\mathcal A}\mathbf{MS}\mathrm{MMD}(\eta_1(s,a),\eta_2(s,a))\) and \(\kappa\) depends on the Jacobian bounds; see Appendix~\ref{appendix:theoretical_results_MSMMD}. If \(\gamma\,\kappa<1\), \(T_\pi^{S_a}\) is a strict contraction with a unique fixed point.
\end{theorem}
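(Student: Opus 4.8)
The plan is to reduce the $\mathbf{MS}\mathrm{MMD}$ contraction to the already-established $\bar W_p$ contraction of Theorem~\ref{thm:incomplete-contraction-main}, by exploiting two structural facts: (i) for a fixed slicing direction $\theta\in\mathbb S^{d'-1}$, the pushforward $(P_\theta)_\#$ commutes in a controlled way with the affine Bellman map, and (ii) one-dimensional $\mathrm{MMD}$ with a suitably regular kernel is dominated by the one-dimensional Wasserstein distance. First I would recall that the Sobolev backup acts as a pointwise affine transform $\mathbf f^{S_a}(x)=b(s,a)+\mathcal L^{\text{Sob}}(s,a)[x]$ (Eq.~\ref{eq:affine_form_sobolev}), so that for any unit vector $\theta$, $\langle\theta,\mathbf f^{S_a}(x)\rangle = \langle\theta,b\rangle + \langle (\mathcal L^{\text{Sob}})^{\!*}\theta, x\rangle$. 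The translation by $\langle\theta,b\rangle$ leaves $\mathrm{MMD}$ invariant provided the kernel is translation-invariant (which I would add to the ``mild additions'' of Theorem~\ref{thm:msmmd-incomplete}), and the linear part sends the slice $\theta$ to the (unnormalized) direction $u=(\mathcal L^{\text{Sob}})^{\!*}\theta$. Writing $u = \|u\|\,\hat u$ with $\hat u\in\mathbb S^{d'-1}$ and $\|u\|\le\gamma\kappa$ by the Jacobian-bound hypothesis, a slice of the transformed measure in direction $\theta$ equals a scaled slice of the original measure in direction $\hat u$.

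The second ingredient is a scaling/comparison lemma for one-dimensional $\mathrm{MMD}$: if the base kernel $k$ on $\mathbb R$ is translation-invariant with Lipschitz-type regularity (e.g. a smooth characteristic kernel, or more simply one of bounded variation / with bounded derivative), then there is a constant $C_k$ such that $\mathrm{MMD}((c\cdot)_\#\mu',(c\cdot)_\#\nu') \le C_k\,\max(1,|c|)\,W_1(\mu',\nu')$ for scalars $c$, and in particular $\mathrm{MMD}$ on the line is upper-bounded by a constant times $W_1$, while $W_1\le W_p$. Combining with step one, for each $\theta$,
\[
\mathrm{MMD}\!\big((P_\theta)_\#T_\pi^{S_a}\eta_1,\,(P_\theta)_\#T_\pi^{S_a}\eta_2\big)
= \mathrm{MMD}\!\big((c\,P_{\hat u})_\#\eta_1(s',a'),\,(c\,P_{\hat u})_\#\eta_2(s',a')\big),
\]
with $c=\|u\|\le\gamma\kappa$; bounding the right side by $C_k\,W_1$ of the one-dimensional marginals and then by $\bar W_p(\eta_1,\eta_2)$ at $(s',a')$, and finally taking the supremum over $\theta$ on the left and over $(s',a')$ on the right (using that $P(\cdot\mid s,a)$ and $\pi$ only select a successor state-action pair), yields $\overline{\mathbf{MS}\mathrm{MMD}}(T_\pi^{S_a}\eta_1,T_\pi^{S_a}\eta_2)\le \gamma\kappa\,C_k\,\bar W_p(\eta_1,\eta_2)$. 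A cleaner route that avoids the constant $C_k$ and gives the claimed clean bound is to argue directly in $\mathrm{MMD}$: show $\mathrm{MMD}$ is itself homogeneous of degree-one-bounded under linear maps, i.e. $\mathbf{MS}\mathrm{MMD}((Ax)_\#\mu,(Ax)_\#\nu)\le \|A\|_{\op}\,\mathbf{MS}\mathrm{MMD}(\mu,\nu)$ when $k$ is a scale-robust kernel such as the energy/Laplace kernel $k(x,y)=-\|x-y\|$ or $e^{-\|x-y\|}$ — for the energy kernel this is exact since sliced energy distance relates to $W_1$-type quantities — and combine with $\|(\mathcal L^{\text{Sob}})^{\!*}\|_{\op}\le\gamma\kappa$ from Theorem~\ref{thm:incomplete-contraction-main}'s Jacobian bounds. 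Then one applies Banach's fixed-point theorem on the complete metric space $(\mathcal Z,\overline{\mathbf{MS}\mathrm{MMD}})$ (completeness and the fact that $\mathbf{MS}\mathrm{MMD}$ is a genuine metric when $k$ is characteristic follow from \citet{nadjahi2020statistical}) to conclude existence and uniqueness of the fixed point when $\gamma\kappa<1$.

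The main obstacle is controlling the interaction between slicing and the linear operator $\mathcal L^{\text{Sob}}$: unlike a plain matrix, $\mathcal L^{\text{Sob}}$ acts on the $(1+|\mathcal A|)$-dimensional return/gradient pair through coupled Jacobian blocks of $f$ and $\pi$ (Eqs.~\ref{eq:fsaret}–\ref{eq:fsaaction}), so I must verify that its adjoint on slicing directions has operator norm genuinely bounded by $\gamma\kappa$ with the \emph{same} $\kappa$ as in Theorem~\ref{thm:incomplete-contraction-main} — this is where the ``Lipschitz coupling relating state- to return-gradients'' assumption is consumed, and where I expect the bookkeeping to be most delicate. A secondary subtlety is choosing the kernel class: for a generic translation-invariant characteristic kernel one only gets $\mathrm{MMD}\lesssim C_k W_1$ with a dimension-free but kernel-dependent $C_k$, which weakens the bound to $\gamma\kappa C_k$; to obtain the stated clean rate $\gamma\kappa$ one either restricts to a scale-homogeneous kernel (energy distance) or absorbs $C_k$ into the definition of $\kappa$ — I would flag this explicitly and push the precise kernel hypotheses into Appendix~\ref{appendix:theoretical_results_MSMMD} via the ``mild additions in Theorem~\ref{thm:msmmd-incomplete}.'' Finally, one must check that taking suprema over $\theta$ and over $(s,a)$ is legitimate (the sup over $(s,a)$ on the post-backup side reduces to a sup over reachable $(s',a')$, hence is bounded by $\bar{\mathbf{MS}\mathrm{MMD}}(\eta_1,\eta_2)$), which is routine once the pointwise inequality is in hand.
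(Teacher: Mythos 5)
Your second, ``cleaner'' route has the right shape and is essentially the strategy the paper follows: fix a slicing direction \(\theta\), use translation invariance of the one-dimensional kernel MMD, convert the linear part into a rescaled slice along a new unit direction with scale bounded by the operator norm of the linear block, and then average over the model noise and take suprema (this is Lemma~\ref{lem:maxsliced-affine-anisotropic} combined with Lemma~\ref{lem:maxsliced-mix} and Theorem~\ref{thm:maxsliced-sup-contraction-mv} in the appendix). Your first route, however, is a dead end for the stated theorem: bounding the sliced MMD by \(C_k\,W_1\) and invoking Theorem~\ref{thm:incomplete-contraction-main} only yields \(\overline{\mathbf{MS}\mathrm{MMD}}(T_\pi^{S_a}\eta_1,T_\pi^{S_a}\eta_2)\le \gamma\kappa\,C_k\,\bar W_p(\eta_1,\eta_2)\), a cross-metric inequality; since there is no reverse bound of \(W_1\) by MMD, this cannot be turned into a contraction of \(\overline{\mathbf{MS}\mathrm{MMD}}\) against itself, and hence cannot feed a Banach fixed-point argument in that metric.

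The genuine gap in the cleaner route is the state-gradient term. On \(\mathbb R^{1+m}\) the incomplete backup is only \emph{pseudo}-affine: by \eqref{eq:fsaaction} (and \eqref{eq:joint_operator_block_affine_dist}) the target depends on \(\partial_s x^{\mathrm{return}}\), which is not a coordinate of \(x=(x^{\mathrm{return}},x^{\mathrm{action}})\), so the identity \(\langle\theta,\mathbf f^{S_a}(x)\rangle=\langle\theta,b\rangle+\langle(\mathcal L^{\mathrm{Sob}})^{*}\theta,\,x\rangle\) you rely on does not hold, and there is no adjoint on slicing directions whose norm is bounded by ``the same \(\kappa\)'' as in Theorem~\ref{thm:incomplete-contraction-main}: that \(\kappa_{\mathrm{eff}}\) is obtained through a coupling-based bound (Equation~\ref{eq:coupling_identity}), and couplings are unavailable for MMD. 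The paper resolves this with an extra assumption you do not supply: a distributional lifting map \(\mathcal J_{s',a'}\) that augments the law of \((Z,\partial_a Z)\) with \(\partial_s Z\) and is \(L_{\mathrm{aug}}\)-Lipschitz in \(\mathbf{MS}\mathrm{MMD}\); the slicing argument is then applied to the genuine matrix \([\,A\ \ N\,]\) on \(\mathbb R^{1+m+n}\), giving the factor \(L_{\mathrm{aug}}\,c_k(\bar L)\) of Theorem~\ref{thm:msmmd-incomplete}, which is what \(\kappa\) abbreviates in the main text. Two smaller inaccuracies: for the kernels at issue (multiquadric, and likewise energy) the unsquared MMD scales as \(\sqrt{s}\), not \(s\), under scaling (Lemma~\ref{lem:mq-scale-contraction}), so the exact degree-one homogeneity you assert for the energy kernel is false and the contraction factor is of the form \(c_k(\bar L)=\sqrt{\bar L}\); and the averaging over environment noise needs mixture convexity of MMD for a merely conditionally positive definite kernel (Proposition~\ref{prop:gamma-equals-mmd-cpd}, Lemmas~\ref{lem:mmd_cpd_mixture_p_convexity} and~\ref{lem:maxsliced-mix}), which you dismiss as routine but is a required ingredient of the proof.
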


\textbf{Trade-off interpretation}\quad The condition $\gamma\,\kappa<1$ makes the trade-off explicit: either enforce smoothness (reduce $\kappa$ via bounded Jacobians and Lipschitz couplings) or shorten the effective horizon (reduce $\gamma$). A similar quantity $\kappa$ governs our contraction results for both Wasserstein (Theorem~\ref{thm:incomplete-contraction-main}) and max–sliced MMD (Theorem~\ref{thm:msmmd_contraction}), so the trade-off arises in each case. Importantly, $\kappa$ is dictated by the environment’s dynamics and policy sensitivities; when the underlying physics yields large gradients, this effect cannot be eliminated and the only remedy is to lower $\gamma$. \emph{This observation is a core contribution of our work and is enabled by the Sobolev Temporal Difference framework we introduce.}

\section{Approach}
\label{sec:approach}
To turn our theoretical Sobolev‐distributional Bellman operator into a practical algorithm, we need (i) a critic that produces joint return–gradient samples, (ii) Bellman backups without intractable integrals, (iii) a tractable metric for comparing predicted and target distributions, and (iv) to relax the assumption that our environment is differentiable. We address each of these below.

\textbf{Sobolev inductive bias.}\quad  
We coin the term \emph{Sobolev inductive bias} for the simple idea of modeling a gradient by a gradient: using the gradient of our approximator to stand in for the gradient of the true function.  Concretely, let \(F\colon\mathbb{R}^a\to\mathbb{R}^b\) be a differentiable target and \(f_\varphi\) a neural network parameterized by \(\varphi\).  Sobolev training is one concrete instance of this principle (e.g.\ \citep{czarnecki2017sobolev,doro2020how}), which implements  
\(\mathcal L^S(\varphi;x)=\|F(x)-f_\varphi(x)\|^2+\lambda_S\,\|\nabla_xF(x)-\nabla_xf_\varphi(x)\|^2\).  Further mathematical motivation for this inductive bias can be found in Appendix~\ref{appendix:sobolev_bias}.

\textbf{Reparameterized Sobolev critic.}\quad
We model the joint return-and-gradient law \(\eta^{S_a}_\pi(s,a)\) by a generator
\[
Z_\phi^{S_a} \;\colon\;(s,a,\xi)\;\longmapsto\;\Bigl(Z_\phi(s,a,\xi),\;\nabla_a Z_\phi(s,a,\xi)\Bigr),
\qquad
\xi\sim\mathcal N(0,I).
\]
such that \(\mathrm{Law}\bigl[Z_\phi^{S_a}(s,a)\bigr]\,=\,\eta_\phi^{S_a}(s,a)
\,\approx\,\eta^{S_a}_\pi(s,a)\). This purely sample-based critic sidesteps intractable likelihoods and scales naturally to high-dimensional actions. It is similar in spirit to \cite{singh2020samplebased,freirich2019distributional}, and is structured as a generative model that deterministically maps noise to samples \citep{li2015generativemomentmatchingnetworks,goodfellow2014generative}. We discuss this choice further and why most alternative parametrizations would not fit our scenario in Appendix \ref{appendix:distribution_parametrization}.

\textbf{Overestimation bias.}\quad
Value estimates routinely exhibit overestimation \citep{NIPS2010_091d584f,DBLP:journals/corr/HasseltGS15}, and even gradient‐regularized critics inherit this issue \citep{doro2020how,garibbo2024taylor}. It biases the policy toward overvalued actions, undermining its performance. TD3 addresses it by training two critics and setting the target in Equation~\ref{eq:value_distance} to the minimum of the two \citep{fujimoto2018addressing}.  In our sample‐based distributional setting, we follow TQC \citep{kuznetsov2020controllingoverestimationbiastruncated}: train two distributional critics, draw \(N\) samples from each, discard the top \(p\%\) by magnitude, and concatenate the rest to form the target distribution.

\noindent\textbf{One‐step world model.}\quad  
Now we relax the assumption that the environment is differentiable. Unlike \citep{doro2020how,garibbo2024taylor}, which fit only the conditional expectation \((\hat s',\hat r)=\E[s',r\mid s,a]\), we learn a stochastic, differentiable simulator \(g\) whose push‐forward law approximates the true transition–reward distribution \(P(s',r\mid s,a)\). Concretely, we posit
\[
(\hat s',\hat r)\;=\;g(s,a,\varepsilon),\quad \varepsilon\sim\rho_w(\varepsilon),
\qquad
\mathrm{Law}\bigl[g(s,a)\bigr]\approx \mathrm{Law}\bigl[s', r \,|\, s, a\bigr].
\]
Using \(g\) in place of the true environment, our Sobolev Bellman update from Equation~\ref{eq:joint_operator_two_lines} becomes
\[
Z^{S_a}(s,a)
=\mathbf f^{S_a}\bigl(Z^{S_a}(\hat s',\hat a');\,\hat r,s',\gamma\bigr),
\quad
\hat a'=\pi(\hat s').
\]
We implement \(g\) via a conditional VAE \citep{Sohn2015LearningSO}, which has proven effective for modeling dynamics in RL \citep{ha2018world,Zhu_Ren_Li_Liu_2024}, with learned prior \(p_\upsilon(\varepsilon\mid s,a)\), encoder \(q_\zeta(\varepsilon\mid s,a,s',r)\), and decoder \(p_\psi(s',r\mid s,a,\varepsilon)\).

Because Sobolev TD requires drawing many model transitions per update and differentiating each sample with respect to \((s,a)\), the world model must support both \emph{cheap sampling} and \emph{cheap reparameterized gradients}. Several generative model classes satisfy these requirements, such as normalizing flows~\citep{dinh2016density} or GANs~\citep{goodfellow2014generative}. Diffusion models~\citep{ho2020denoising} have shown strong empirical performance in high-dimensional generative tasks, but they are not well suited to our setting: computing the Jacobian of a generated sample with respect to its conditioning variables requires backpropagating through the full denoising chain, which is prohibitively expensive for the repeated one-step model queries whose gradients Sobolev TD evaluates.

\noindent\textbf{Algorithm summary.}\quad  
We replace the critic in standard off‐policy actor–critic \citep{fujimoto2018addressing,barth-maron2018distributed} with our Sobolev distributional critic and train it and the cVAE world model \(g(s,a,\varepsilon)\) jointly; this yields the \textbf{Distributional Sobolev Deterministic Policy Gradient (DSDPG) algorithm}.  At each step we sample Sobolev‐returns and \((\hat s',\hat r)\), form bootstrapped Sobolev‐Bellman targets, minimize the MSMMD loss on value–gradient pairs, and update the actor by ascending the gradient of the critic’s estimated expected return. \emph{Even though plain MMD is not shown to be contractive, we also instantiate our framework using it, thus proposing two metrics to train the critics.} Fig.~\ref{fig:sketch_full_method} (left) illustrates the overall DSDPG workflow, while the right panel shows how Sobolev‐return distributions are sampled and the MMD loss is estimated. The full pseudo-code can be found in Appendix \ref{section:pseudo-codes}.

\begin{figure}[h]
  \centering
  \begin{minipage}{0.47\textwidth}
    \includegraphics[clip, trim=175 175 375 105, width=\textwidth]{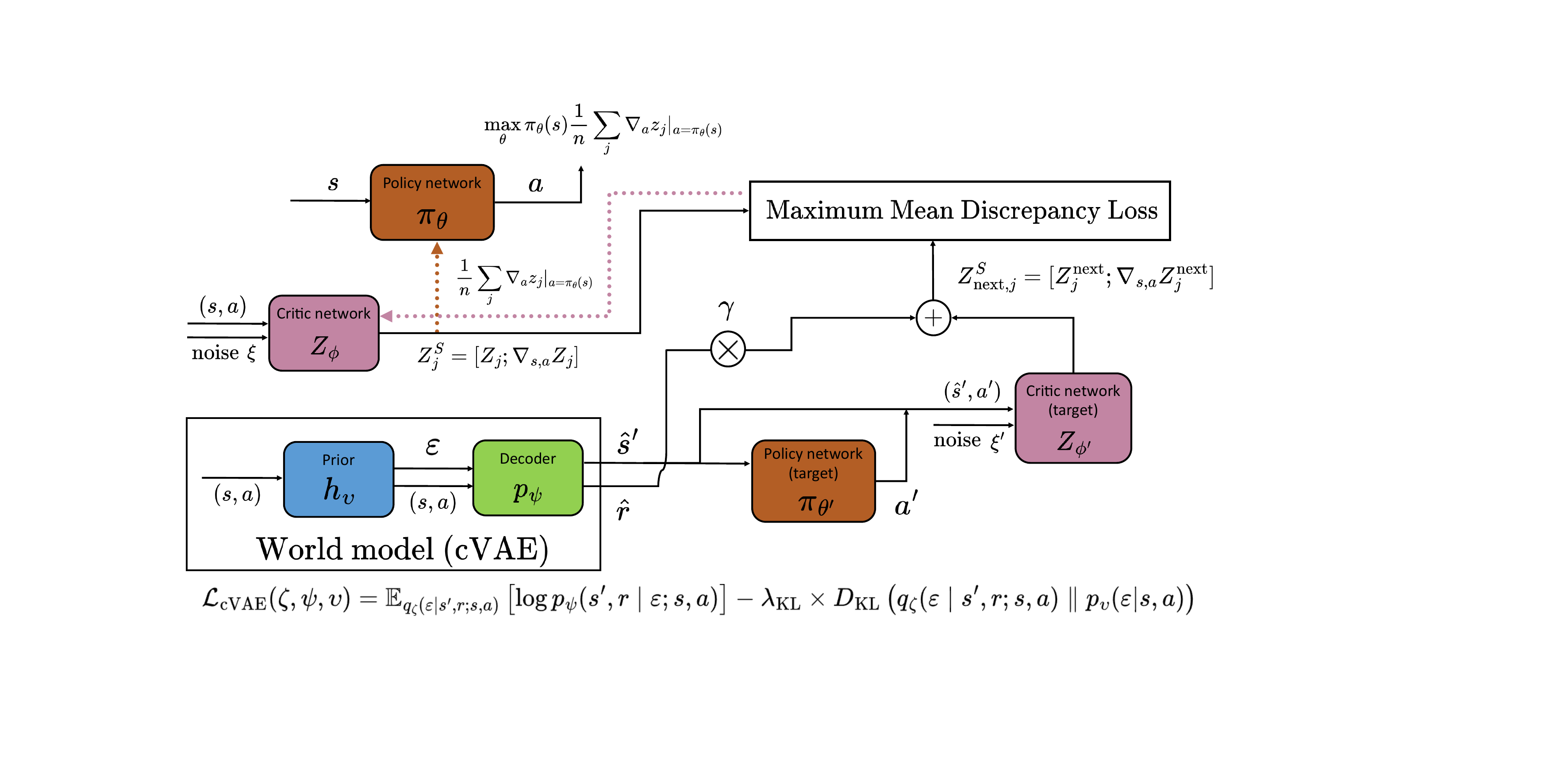}
  \end{minipage}
  \begin{minipage}{0.48\textwidth}
    \includegraphics[width=\textwidth]{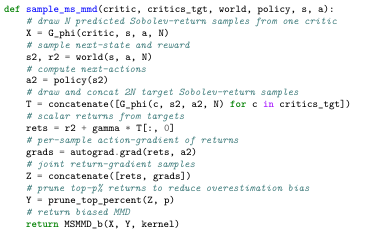}
  \end{minipage}
  \iffalse
\begin{minipage}{0.47\textwidth}
    \begin{minted}[fontsize=\fontsize{5}{6}\selectfont]{python}
def sample_ms_mmd(critic, critics_tgt, world, policy, s, a):
    # draw N predicted Sobolev-return samples from one critic
    X = G_phi(critic, s, a, N)
    # sample next-state and reward
    s2, r2 = world(s, a, N)
    # compute next-actions
    a2 = policy(s2)
    # draw and concat 2N target Sobolev-return samples
    T = concatenate([G_phi(c, s2, a2, N) for c in critics_tgt])
    # scalar returns from targets
    rets = r2 + gamma * T[:, 0]
    # per-sample action-gradient of returns
    grads = autograd.grad(rets, a2)
    # joint return–gradient samples
    Z = concatenate([rets, grads])
    # prune top-p% returns to reduce overestimation bias
    Y = prune_top_percent(Z, p)
    # return biased MMD
    return MSMMD_b(X, Y, kernel)
    \end{minted}
\end{minipage}
    \fi

  \caption{Left: block diagram of our DSDPG algorithm, where the critic \(Z_\phi\) maps noise \(\xi\) and \((s,a)\) to Sobolev‐return samples, a cVAE world model generates next‐state–reward samples \((\hat s',\hat r)\), MSMMD or MMD compares predicted and bootstrapped Sobolev‐return distributions, and the policy is updated via the critic’s mean (gradient flows shown as dashed arrows; inspired by \cite{singh2020samplebased}). Right: pseudocode for estimating the biased MSMMD between predicted and bootstrapped Sobolev returns.}

  \label{fig:sketch_full_method}
\end{figure}

    %Diagram of our Distributional Sobolev Deterministic Policy Gradient (DSDPG) algorithm. The distributional critic $Z_\phi$ (pink) maps noise $\xi$ and state-action pair $(s,a)$ to samples from the Sobolev return distribution $Z^S$. Targets are computed from next-state samples \((\hat{s}', \hat{r})\) generated by a conditional VAE (cVAE, blue/green). The critic's output is differentiated w.r.t. $a$, and the target/predicted distributions are compared via Maximum Mean Discrepancy (MMD). The policy (brown) is updated based on the critic's empirical mean (top). Dashed lines indicate gradient flows. Diagram inspired by \cite{singh2020samplebased

        \section{Results}
    \label{sec:results}
\subsection{Toy Reinforcement Learning}
\label{section:toy_rl}
We introduce a simple continuous‐state, continuous‐action 2D point‐mass task within a square bounding box (Figure~\ref{fig:toy_rl_env}). At each time step, the agent controls the 2D acceleration of the mass. At the start of each episode, we randomly draw one of \(N\) possible bonus locations arranged uniformly around the center. Reaching this location ends the episode with a terminal reward. The agent does not know which location is correct. Instead, each time it visits a location, a binary memory flag is set. This partial observability makes the task sparse and forces exploration. By varying \(N\), we tune the number of modes in the return distribution: small \(N\) yields near‐deterministic returns, large \(N\) yields multimodal, high‐variance returns. This toy task offers a clean, adjustable benchmark for gradient‐aware distributional critics under controlled multimodal uncertainty.

As shown in Figure~\ref{fig:toy_rl_bench}, Distributional Sobolev (MSMMD Sobolev in pink and MMD Sobolev, orange) consistently outperforms all baselines as we increase the number of bonus locations. This indicates that our method effectively leverages gradient information and remains robust to the growing multimodality of the return distribution. By contrast, deterministic Sobolev (Huber Sobolev, red) offers no clear advantage over gradient‐agnostic critics. Interestingly, the provably contractive MSMMD has a mild advantage over plain MMD. For brevity, we defer the full experimental details and baseline descriptions to the next subsection, while exact empirical settings are in Appendix~\ref{appendix:toy_rl_env}.

 \begin{figure}[h!]
        \centering
        \begin{subfigure}[b]{0.25\linewidth}
            \centering
            \includegraphics[width=\textwidth]{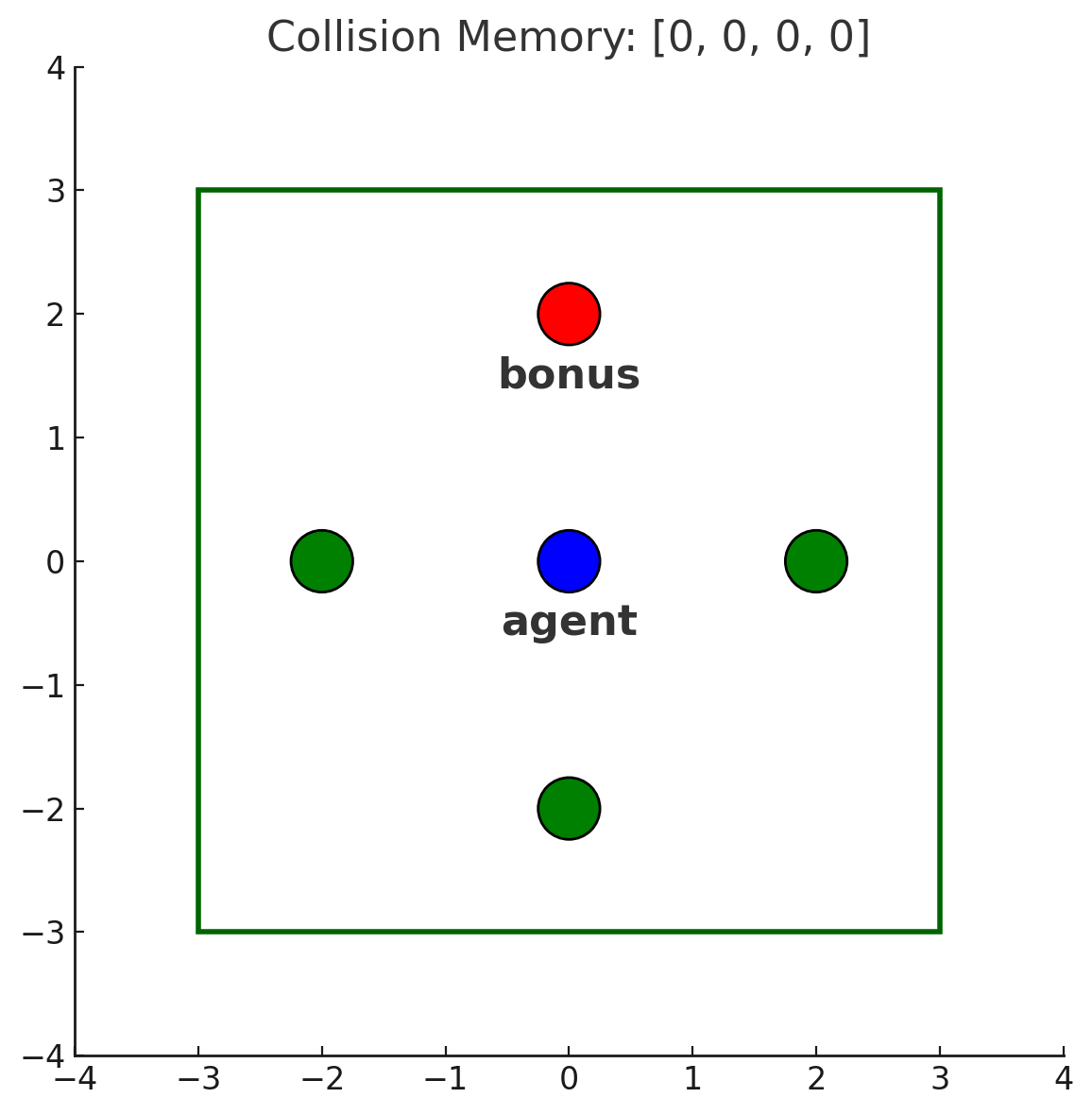}%
            \caption{}
            \label{fig:toy_rl_env}
        \end{subfigure}
        \begin{subfigure}[b]{0.73\linewidth}
            \centering
            \includegraphics[width=\textwidth]{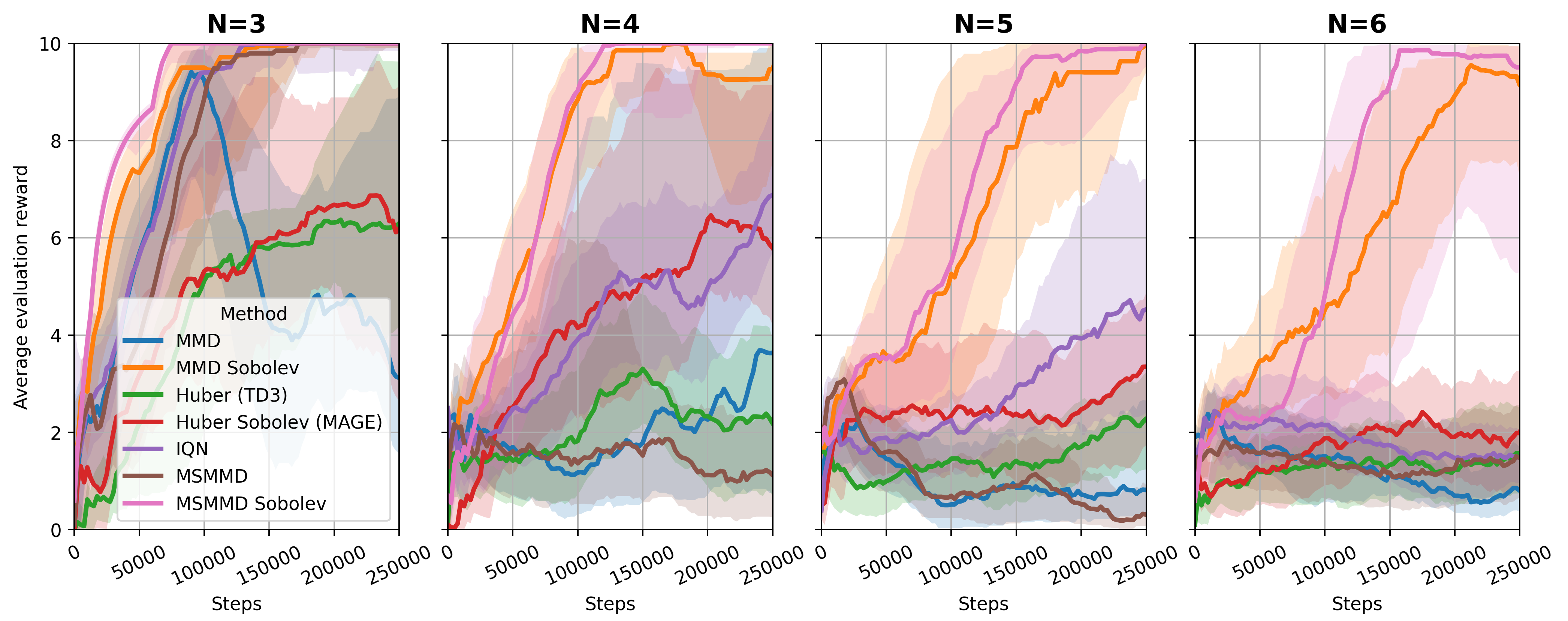}%
            \caption{}
            \label{fig:toy_rl_bench}
        \end{subfigure}
        \caption{(a) Illustration of the 2D point‐mass environment with \(N\) possible bonus locations. (b) Evaluation curves (10 means the agent reached the bonus) for our Distributional Sobolev (MMD Sobolev and MSMMD Sobolev, in orange and pink resp.), deterministic Sobolev (MAGE \cite{doro2020how}, in red), and other baselines as \(N\) varies (median over 5 seeds and 25\%-75\% IQR).}

        \label{fig:toy_rl_main}
    \end{figure}

\subsection{Reinforcement learning}
    \label{sec:results_rl}
    In this section, we evaluate the complete solution, including the learned world model, on several standard MuJoCo environments \citep{todorov2012mujoco} from the Gymnasium library \cite{towers2024gymnasium}. Our deep learning framework is JAX \citep{jax2018github}. Similarly to MAGE \citep{doro2020how}, for every exploration step, 10 critic updates are taken (UTD ratio = 10). All experiments are run in a \textit{Dyna} setting \citep{deisenroth2011pilco} where new samples are drawn from the world model for every critic update. Thus, we evaluate the ability of the Distributional Sobolev training framework against deterministic Sobolev \citep{czarnecki2017sobolev,doro2020how,garibbo2024taylor}. \emph{Hence, the results should not be understood as looking for state-of-the-art performance in the given environments but rather as a showcase that in some difficult settings, distributional methods can perform better.}

   We compare against four baselines: 
(i) a TD3 variant trained with a Huber loss \citep{fujimoto2018addressing,doro2020how}; 
(ii) deterministic Sobolev (MAGE) trained on both return and action‐gradient via Huber loss \citep{doro2020how}; 
and (iii-iv) distributional baselines, IQN \citep{dabney2018implicit} and standard MMD training \citep{DBLP:journals/corr/abs-2007-12354}. 
All variants involving MMD use a multiquadric kernel with \(h=100\). We use an identical base architecture for every method, with the distributional approaches sampling by reparameterization (Section~\ref{sec:approach}). Each distributional variant generates 10 samples per transition. Further architectural and hyperparameter details are in Appendix~\ref{appendix:rl}.

   To evaluate robustness under stochastic dynamics, we use two complementary perturbations in MuJoCo. 
\textbf{(i) Multiplicative observation noise:} at the start of each episode, we sample \(n\sim\mathcal{U}[0.8,1.2]\) and scale the observation \(s\mapsto n\,s\), following \citet{khraishi2023simplenoisyenvironmentaugmentation}; this induces partial observability and substantially increases task difficulty. 
\textbf{(ii) Additive Gaussian dynamics noise:} we inject zero-mean Gaussian noise directly into positions and velocities, adopting the standard deviations reported by \citet{khraishi2023simplenoisyenvironmentaugmentation}. 
This second perturbation makes control harder while preserving the qualitative structure of the original tasks. 
Results are summarized in Figure~\ref{fig:mujoco_grid}, where the left panels report the final evaluation performance after 150{,}000 iterations (250{,}000 for Humanoid-v2), and the right panels show the normalized area under the evaluation curve (nAUC). 
All results are reported as medians with bootstrapped 95\% confidence intervals. The full learning curves are provided in Figure~\ref{fig:main_curves}.

In the noise-free setting, DSDPG (MSMMD Sobolev and MMD Sobolev) match the performance of all baselines across six MuJoCo tasks. 
Under multiplicative observation noise, it outperforms every competitor in three of the six environments, most notably Ant-v2 and Humanoid-v2, while deterministic Sobolev (MAGE) suffers severe drops on Walker2d-v2 and Humanoid-v2 and shows larger variance in the simple InvertedDoublePendulum-v2.
Under Gaussian noise, DSDPG again outperforms the baselines in three of the six environments. 
Notably, on Ant-v2, the noise scale borrowed from \citet{khraishi2023simplenoisyenvironmentaugmentation} makes the task substantially more difficult than previously reported. 

Additional experiments are provided in Appendix~\ref{appendix:rl_more_experiments}, including sensitivity to the noise scale, kernel bandwidth, number of Sobolev samples, and world-model capacity. We also report an ablation that removes the overestimation-bias correction. For DSDPG, this corresponds to disabling the TQC truncation \citep{kuznetsov2020controllingoverestimationbiastruncated}, and for deterministic baselines, to removing the double estimation \citep{fujimoto2018addressing}. This ablation shows that overestimation bias correction plays a crucial role in stabilizing learning and achieving high performance. A wall-clock runtime comparison for Humanoid-v2 is provided in Table~\ref{tab:runtime_humanoid}.

\begin{figure}[h]
  \centering
  % tighten up spacing around the caption
  \setlength{\abovecaptionskip}{4pt}
  \setlength{\belowcaptionskip}{4pt}

  \begin{subfigure}{0.49\linewidth}
    \centering
    \includegraphics[width=\linewidth]{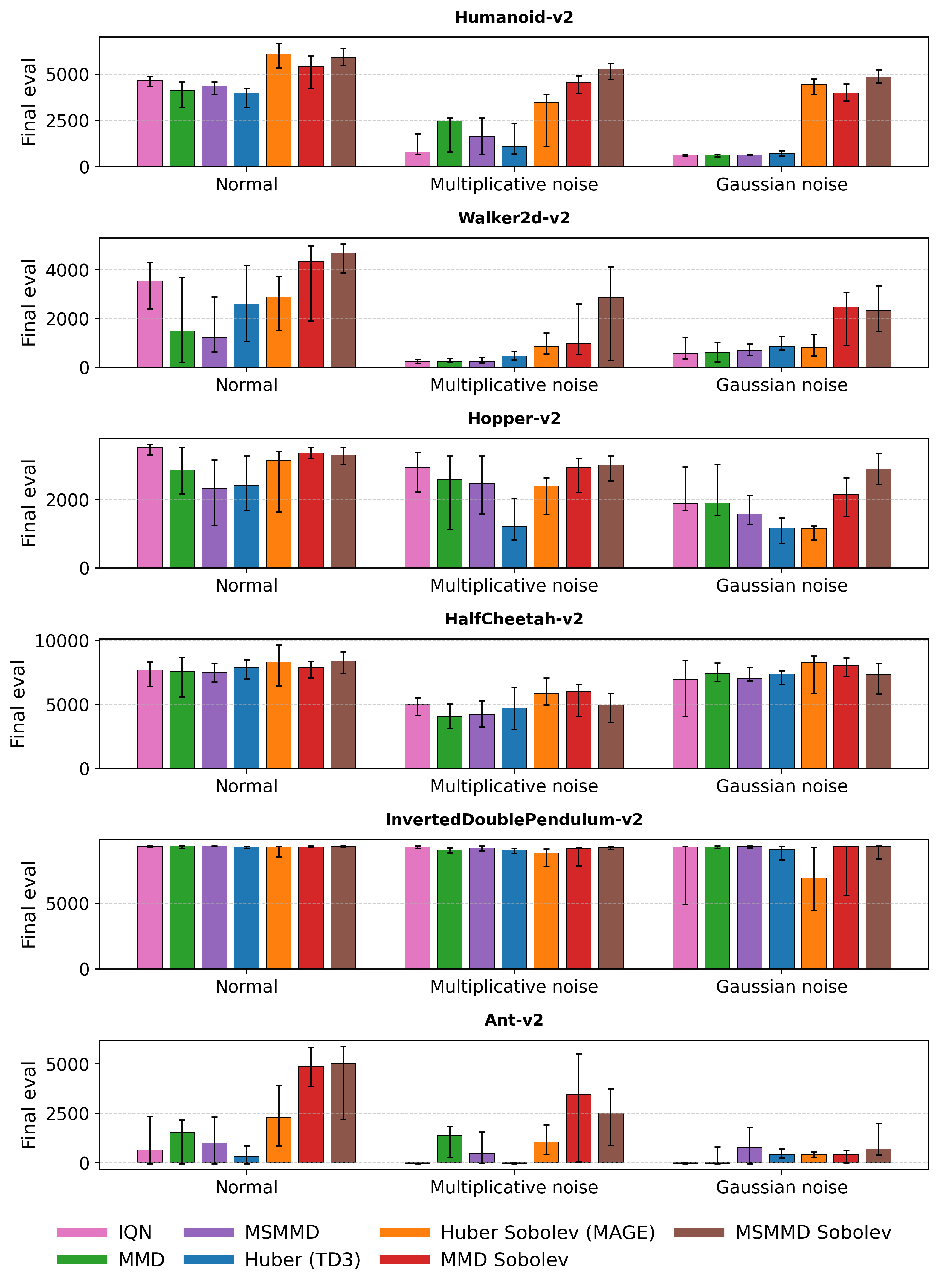}
    \caption{}
    \label{fig:mujoco_grid:a}
  \end{subfigure}
  \begin{subfigure}{0.49\linewidth}
    \centering
    \includegraphics[width=\linewidth]{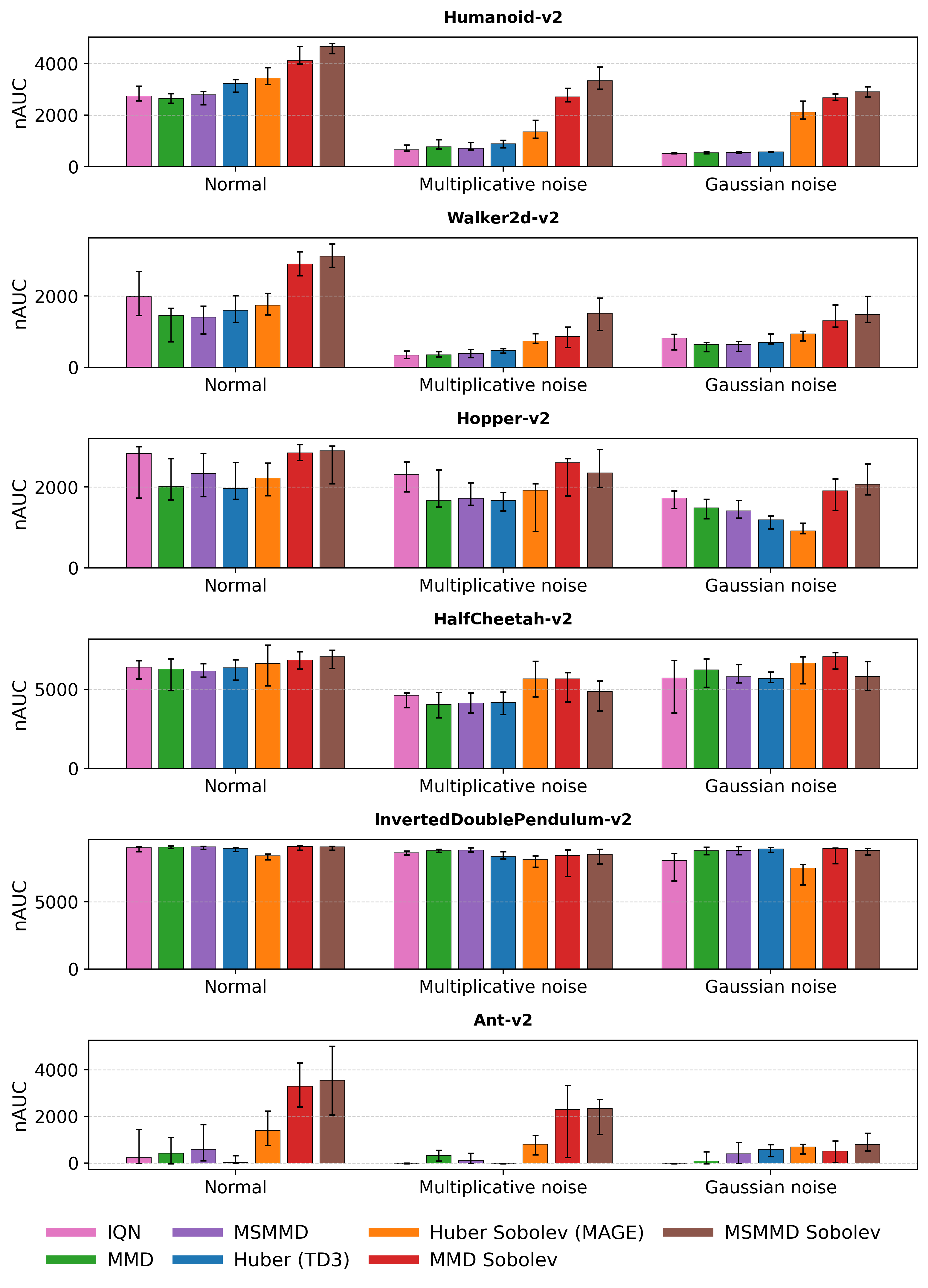}
    \caption{}
    \label{fig:mujoco_grid:b}
  \end{subfigure}

  \caption{Evaluation of DSDPG (MSMMD/MMD Sobolev), deterministic Sobolev/MAGE \cite{doro2020how}, TD3-Huber \cite{fujimoto2018addressing}, IQN \cite{dabney2018implicit}, standard MMD \cite{DBLP:journals/corr/abs-2007-12354,killingberg2023the} on six MuJoCo tasks. Results are reported over 10 random seeds with median and 95\% bootstrap confidence intervals. We compare three settings: normal environment, multiplicative observation noise, and Gaussian dynamics noise \citep{luo2021distributional}. (\subref{fig:mujoco_grid:a}) Final evaluation performance. (\subref{fig:mujoco_grid:b}) Normalized AUC over the entire training curve. Our DSDPG variants (MMD Sobolev and MSMMD Sobolev, red and brown) are on par or better than competing methods, shining especially on harder tasks and under noisy environments.}
  \label{fig:mujoco_grid}
\end{figure}

\paragraph{World-model ablation.}
To verify that our method does not depend on the particular choice of cVAE as world model, 
we performed an ablation in which the cVAE was replaced with a lightweight normalizing-flow model \citep{dinh2016density}. 
The flow-based model integrates into the Sobolev TD framework without modification, as the algorithm only requires that the world model provide cheap sampling and differentiable one-step transitions with respect to $(s,a)$. 
Across all tested MuJoCo tasks, the qualitative behaviour of the method remains unchanged, indicating that the benefits of Distributional Sobolev TD arise from its ability to exploit gradient information in stochastic environments rather than from the specific generative architecture used. 
A brief presentation of normalizing flows is provided in Appendix~\ref{appendix:flows}, 
and the full ablation results, including architecture, training setup, and figures, 
are reported in Appendix~\ref{appendix:flow_ablation}.

\section{Related works}
    \textbf{Gradient informed training}
     Our work explicitly models gradients in a stochastic manner using neural networks. As such, it can be seen as a distributional extension of Sobolev training \cite{czarnecki2017sobolev} which was already adapted to reinforcement learning in \cite{doro2020how,garibbo2024taylor}. Gradient modeling in value-based RL dates back to \cite{fairbank2008reinforcement}. Additionally, our approach shares connections with Physics Informed Neural Networks (PINNs) \citep{raissi2017physics}, which approximate physical processes via differential constraints. In particular, uncertainty-aware PINNs \citep{yang2020physics, zhong2023pi} treat random processes and their derivatives as random variables, akin to our approach of leveraging Sobolev inductive bias and generative modeling (through reparameterization) while enforcing consistency via a tractable distributional distance.

    \textbf{Distributional RL}
    We extend distributional RL \citep{bellemare2017distributional,bellemare2023distributional} by modeling return gradients for deterministic policies in continuous action spaces, building on distributional DDPG \citep{barth-maron2018distributed,lillicrap2016continuous}. Because these gradients are multi-dimensional, our method aligns with distributional multivariate returns \citep{zhang2021distributional,freirich2019distributional,wiltzer2024foundationsmultivariatedistributionalreinforcement}, highlighting the need for tractable discrepancy measures over multi-dimensional distributions. We treat our distributional critic as a generative model capable of producing actual samples of the underlying distribution \citep{freirich2019distributional,singh2020samplebased,doan2018gan}, in contrast to approaches that generate only pseudo-samples \citep{zhang2021distributional,DBLP:journals/corr/abs-2007-12354} or rely on summary statistics \citep{bellemare2017distributional,barth-maron2018distributed,dabney2017distributional,dabney2018implicit}. We measure distributional discrepancy via the Maximum Mean Discrepancy (MMD) \citep{JMLR:v13:gretton12a,li2015generativemomentmatchingnetworks,bińkowski2021demystifyingmmdgans,oskarsson2020probabilistic}, which has proven effective in distributional RL \citep{DBLP:journals/corr/abs-2007-12354,killingberg2023the,zhang2021distributional}.
    
     \textbf{Model-based RL}
    Finally, because environment dynamics and rewards are unknown and non-differentiable, we adopt a world model akin to SVG(1) \citep{heess2015learning}, instantiated as a cVAE \citep{kingma2013auto,Sohn2015LearningSO}. Generating new data through this model places our approach within model-based RL \citep{chua2018deepreinforcementlearninghandful,feinberg2018model}, particularly the \textit{Dyna} family \citep{10.1145/122344.122377}. More specifically, our method is related to approaches that rely on variational techniques \citep{ha2018world,hafner2020mastering,Zhu_Ren_Li_Liu_2024} or backpropagate through world models \citep{hafner2019dream,clavera2020model,amos2021model,henaff2017model,henaff2019model,byravan2020imagined}.
\section{Conclusion, limitations and future work}

    In this work, we introduced Distributional Sobolev Deterministic Policy Gradient (DSDPG). 
Our main contributions involve modeling a distribution over both the output and the gradient of a random function, and deriving a tractable computational scheme to achieve this using Maximum Mean Discrepancy (MMD) and its max-sliced variant MSMMD. We demonstrated the effectiveness of the method in leveraging gradient information. %(Appendix~\ref{appendix:toy_supervised}). 
We further extended the approach to reinforcement learning by leveraging a differentiable world model of the environment that infers gradients from observations. 
Building on this, we proposed the framework of Sobolev Temporal Difference, allowing us to provide the first contraction results in gradient-aware RL. 
We established contraction for both Wasserstein and the tractable MSMMD divergences, and highlighted smoothness assumptions required for contractive gradient-aware training. 
Finally, we showed that distributional gradient modeling improves stability in a controlled toy problem and enhances robustness to noise in MuJoCo environments, with stronger benefits in high-dimensional tasks.

    While our approach shows promise, there remain challenges to address. A primary consideration is the high computational cost, as both policy evaluation and improvement require multiple samples from the distributional critic and their input gradients. Future work should explore more efficient inductive biases.

    A second avenue for future work concerns the different variants of the Sobolev Temporal Difference operator. 
    In the main text, we focus on the action-gradient variant for clarity and tractability, whereas the complete variant also includes state-gradients and provides additional information about how returns vary with the dynamics. 
    Using both gradients jointly is computationally demanding and remains non-trivial in practice, but addressing these challenges could further extend the applicability of Sobolev-based distributional critics.

    Finally, we believe that the ideas introduced in this work could benefit other fields where aleatoric uncertainty in gradient modeling is important, such as Physics-Informed Neural Networks \citep{yang2019adversarial}, Neural Volume Rendering \citep{lindell2021autoint} and more generally applications of double backpropagation \citep{drucker1991double}.
    
\section*{Acknowledgements}
This project has received funding from the European Research Council (ERC) under the European Union's Horizon 2020 research and innovation programme (grant agreement n° 101021347).

\bibliography{iclr2026_conference}
\bibliographystyle{iclr2026_conference}

\appendix
\newpage

\addcontentsline{toc}{section}{Appendix}

% start collecting ToC entries from now on (the appendix)
\startcontents[app]

% print the appendix-only table of contents
\printcontents[app]{l}{1}{
  \section*{Appendix contents}
  \setcounter{tocdepth}{3} % sections+subsections (adjust as you like)
}
\section{Derivatives of reparameterizable random variables}
\label{appendix:gradient_random_variable}
In several places, such as Section~\ref{sec:new_bellman_operator}, we need to talk about “the gradient of a random variable with respect to its conditioning variable.”  To make this precise and intuitive, we rely on the common \emph{reparameterization trick}. We explain that, under appropriate continuity and measurability assumptions, this construction yields a well-defined random variable.

\medskip
\noindent\textbf{Reparameterization view.}  
Suppose \(Y\mid x\) is a conditional random variable, where \(x\in\mathcal X\subset\mathbb R^n\) and \(\mathcal Y\subset\mathbb R^m\).  We assume there exists a measurable “noise” variable \(\Xi\sim p(\xi)\) on a space \(\mathcal Z\) and a deterministic map
\[
g \;:\; \mathcal X\times\mathcal Z \;\longrightarrow\; \mathcal Y
\]
that is \(C^1\) in its first argument, such that
\[
Y \;=\; g\bigl(x,\Xi\bigr)
\quad\text{has the same law as }Y\mid x.
\]
In other words, for each sample \(\xi\), the realization
\[
y(x;\xi)\;=\;g\bigl(x,\xi\bigr)
\]
is a deterministic function of \(x\).

\medskip
\noindent\textbf{Pathwise derivative.}  
Because \(g(\cdot,\xi)\) is differentiable in \(x\), we define the \emph{pathwise} or \emph{reparameterization} derivative by
\[
\frac{\partial}{\partial x}\,Y
\;\overset{\mathrm{def}}=\;
\frac{\partial}{\partial x}\,g\bigl(x,\Xi\bigr),
\]
that is, for each fixed \(\xi\), 
\(\;\partial_x y(x;\xi)=\partial_x g(x,\xi)\). We differentiate the pathwise realization \(x \mapsto g(x,\xi)\), then regard \(\partial_x g(x,\Xi)\) as a random variable taking values in \(\mathbb R^{m\times n}\).

\medskip
\noindent\textbf{Measurability.}  
Because \(g\in C^1\) in its first argument, for each fixed \(\xi\) the map
\[
x\;\longmapsto\;\partial_x g(x,\xi)
\]
is continuous in \(x\).  Moreover, by assumption \(\xi\mapsto g(x,\xi)\) is measurable for each \(x\).  It follows from Lemma 4.51 from \cite{AliprantisBorder2006} (“Carathéodory functions are jointly measurable”)  that
\[
(x,\xi)\;\longmapsto\;\partial_x g(x,\xi)
\]
is jointly Borel–measurable on \(\mathcal X\times\mathcal Z\).  Consequently, when \(\Xi\sim p(\xi)\) the composition \(\partial_xg(x,\Xi)\) is a well‐defined \(\mathbb R^{m\times n}\)–valued random variable.  

\medskip
\noindent\textbf{Practical computation.}  
 In modern autodiff frameworks, once we formulate the generative model as \(\;Y = g(x,\xi)\), calling \(\mathrm{grad}(Y, x)\) automatically returns the sample-wise Jacobian \(\partial_x g(x,\xi)\). Hence, all ``random variable gradients'' in our Sobolev Bellman operator are tractable to sample using standard reparameterization and automatic differentiation techniques.
 
\section{Sobolev Inductive Bias and Distribution Parametrization}
\subsection{Sobolev inductive bias}
\label{appendix:sobolev_bias}
We discussed the Sobolev inductive bias in Section~\ref{sec:approach}. Recall that given a differentiable target \(F\colon\mathbb{R}^a\to\mathbb{R}^b\), Sobolev training \citep{czarnecki2017sobolev} (originally introduced as “double backpropagation” by \cite{drucker1991double}) minimizes both value‐ and gradient‐mismatch:
\[
\mathcal{L}^S(\varphi;x)
=\|F(x)-f_\varphi(x)\|^2
\;+\;\lambda_S\|\nabla_xF(x)-\nabla_xf_\varphi(x)\|^2.
\]
The first term aligns network outputs to \(F(x)\), while the second enforces that the input‐derivatives of \(f_\varphi\) match those of \(F\).  Below, we build on this bias to derive conditions on valid gradient fields and their relation to Jacobian symmetry.

Using the gradient of a neural network to predict gradients is not the only way to do so. Indeed, \cite{jaderberg2017decoupled} proposed to generate \textit{synthetic gradients} directly via the outputs of a neural network or a single linear layer. However, \cite{czarnecki2017sobolev,miyato2017synthetic} observed that predicting synthetic gradients as gradients of a neural network directly improved performance on decoupled trainings experiments from \cite{jaderberg2017decoupled}. A partial explanation for this observation is given by Proposition~4 in \cite{czarnecki2017sobolev}, which provides a necessary condition for a function \(f_\varphi(x): \mathbb{R}^n \rightarrow \mathbb{R}^n\) to produce a valid gradient vector field. It states that if \(f_\varphi(x)\) produces a valid gradient vector field of some potential function \(\phi(x)\), then the Jacobian matrix \(\frac{\partial f_\varphi}{\partial x}(x) : \mathbb{R}^n \rightarrow \mathbb{R}^n \times \mathbb{R}^n\) must be symmetric.

This can be readily seen from the observation that if $\frac{\partial \phi}{\partial x}$ is the gradient vector field to be approximated by $f_\varphi(x)$ then $\frac{\partial^2 \phi}{\partial x^2}(x)$ is the Hessian matrix that is known to be symmetric. 

A more in-depth discussion is provided in \cite{chaudhari2024gradient}, where Lemma~1 gives a \textbf{necessary and sufficient} condition: a differentiable function \(f : \mathbb{R}^n \rightarrow \mathbb{R}^n\) has a scalar-valued anti-derivative if and only if its Jacobian is symmetric everywhere, meaning that for all \(x \in \mathbb{R}^n\),
\(\frac{\partial f}{\partial x}(x) = \left(\frac{\partial f}{\partial x}(x)\right)^T\). \textbf{This suggests a good inductive bias should be to restrict $f_\varphi(x)$ to the class of functions with symmetric Jacobian.} This condition is directly satisfied via the Sobolev inductive bias as well as gradient networks proposed in \cite{chaudhari2024gradient}. As pointed out in \cite{czarnecki2017sobolev}, this is a very unlikely condition to be met by a regular feedforward neural network that would predict the gradient directly as its output.

\subsection{Distribution parametrization}
\label{appendix:distribution_parametrization}        
The Sobolev inductive bias introduced in Section \ref{sec:approach} and \ref{appendix:sobolev_bias} requires a generative model where both the output and its input-gradient are treated as random variables. While the reparameterization trick \citep{kingma2013auto} could be used with a conditional Gaussian distribution, determining how to distribute the gradient of the samples with respect to the conditioners is less straightforward. To address this, we employ a method that relies solely on sampled data and does not assume tractable density estimation of Sobolev random variables.

        Moreover, we found that the most common distribution parametrizations for distributional reinforcement learning, namely discrete categorical \citep{bellemare2017distributional, barth-maron2018distributed} and quantile-based \citep{dabney2017distributional, dabney2018implicit}, do not scale tractably to higher dimensions, specifically in the way they instantiate Equation~\ref{eq:value_distance}. These considerations further motivated us to adopt a sample-based approach for the distributional Sobolev critic, similar in spirit to \cite{singh2020samplebased, freirich2019distributional}. As a result, our distributional critic is structured as a generative model that deterministically maps noise to samples \citep{li2015generativemomentmatchingnetworks,goodfellow2014generative}. This parametrization lets us model a distribution over both outputs and their input-gradients via the Sobolev inductive bias: differentiating each generated sample with respect to the conditioner yields a reparameterized, sample-wise gradient, which aligns with the intuition from Appendix~\ref{appendix:gradient_random_variable}.

\newpage
\section{Sobolev Bellman operator}
\label{sec:sobolev_bellman}
We first specify our smoothness and boundedness assumptions on the return, transition, reward, and policy functions in Section \ref{sec:wasserstein_preamble_and_assumptions}. For completeness, we summarize mild conditions under which the neural policy and critic used in our method are Lipschitz in Section \ref{appendix:lipschitz_policy_critic}. We then derive how differentiating the distributional Bellman equation yields the action‐gradient update in Section \ref{appendix:derivation_action_grad}. Next, we show how to bundle the return and its gradient into a single affine update rule in Section \ref{appendix:distributional_sobolev_bellman_update}. We follow and give both the pathwise and integral forms of the full operator in Section \ref{appendix:distributional_sobolev_operator}. Finally, we explain how to include state‐gradients for a full Sobolev backup in Section \ref{appendix:complete_sobolev}.

\subsection{Preamble and assumptions}
\label{sec:wasserstein_preamble_and_assumptions}

Unlike traditional Bellman operators acting on scalar‐ or fixed‐dimensional vector‐valued functions, our operator acts on the space of continuously differentiable, bounded functions with bounded first derivatives over the compact domain \(\mathcal S\times\mathcal A\).  Concretely, we assume the return‐distribution admits a reparameterization
\[
Z\bigl(s,a;\,\varepsilon_Z\bigr)\;\in\;C^1_b\bigl(\mathcal S\times\mathcal A\bigr),
\]
where \(\varepsilon_Z\sim p(\varepsilon_Z)\) is exogenous noise, so that each sample \(Z(s,a;\varepsilon_Z)\) and its gradients \(\nabla_{s,a}Z(s,a;\varepsilon_Z)\) are bounded and continuous on \(\mathcal S\times\mathcal A\).

Let \(\varepsilon_f\sim p(\varepsilon_f)\), \(\varepsilon_r\sim p(\varepsilon_r)\), and \(\varepsilon_{\pi}\sim p(\varepsilon_{\pi})\) be independent noise variables on \(\mathcal Z\), and let
\[
\begin{aligned}
f &: \mathcal{S}\times\mathcal{A}\times\mathcal{Z} \to \mathcal{S}, &&\text{(reparameterized transition)}\\
r &: \mathcal{S}\times\mathcal{A}\times\mathcal{Z} \to \mathbb{R}, &&\text{(reparameterized reward)}\\
\pi &: \mathcal{S}\times\mathcal{Z} \to \mathcal{A}, &&\text{(reparameterized policy)}
\end{aligned}
\]
be \(C^1\) maps such that, for each draw \(\varepsilon_f,\varepsilon_r,\varepsilon_{\pi},\varepsilon_Z\),
\[
s' = f(s,a;\varepsilon_f),
\quad
r = r(s,a;\varepsilon_r),
\quad
a' = \pi(s';\varepsilon_{\pi}),
\quad
Z = Z(s,a;\varepsilon_Z).
\]
\noindent\textbf{Importantly, we assume that the gradients of all these \(C^1\) mappings (\(Z\), \(f\), \(r\), and \(\pi\)) are tractable to compute in practice, for instance using automatic differentiation.}

We further assume these maps have uniformly bounded Jacobians: there exist constants \(L_{f,s},L_{f,a},L_{r},L_{\pi},L_Z<\infty\) such that
\[
\begin{aligned}
\sup_{(s,a,\varepsilon_f)}\Bigl\|\tfrac{\partial f}{\partial s}(s,a;\varepsilon_f)\Bigr\| &\le L_{f,s}, 
&\quad
\sup_{(s,a,\varepsilon_f)}\Bigl\|\tfrac{\partial f}{\partial a}(s,a;\varepsilon_f)\Bigr\| &\le L_{f,a},\\[6pt]
\sup_{(s,a,\varepsilon_r)}\Bigl\|\tfrac{\partial r}{\partial a}(s,a;\varepsilon_r)\Bigr\| &\le L_{r},
&\quad
\sup_{(s,\varepsilon_{\pi})}\Bigl\|\tfrac{\partial \pi}{\partial s}(s;\varepsilon_{\pi})\Bigr\| &\le L_{\pi},\\[6pt]
\sup_{(s,a,\varepsilon_Z)}\Bigl\|\nabla_{s,a}Z(s,a;\varepsilon_Z)\Bigr\| &\le L_{Z}.
\end{aligned}
\]

\subsection{Lipschitzness of neural policies and critics}
\label{appendix:lipschitz_policy_critic}

For completeness, we summarize the mild conditions under which the neural 
policy and critic used in our method are Lipschitz. Similar assumptions are 
standard in analyses of gradient-aware RL methods \citep{doro2020how}.

Let $F_\theta : \mathbb{R}^{d_{\mathrm{in}}} \to \mathbb{R}^{d_{\mathrm{out}}}$ 
denote either the deterministic policy $\pi_\theta(s)$ or the critic 
$Q_\theta(s,a)$ (deterministic or distributional). We assume a standard MLP
\[
h_0(x)=x, \qquad
h_{\ell+1}(x)=\sigma_\ell(W_\ell h_\ell(x)+b_\ell), \qquad \ell=0,\dots,L-1,
\]
with final output
\[
F_\theta(x)=
\begin{cases}
\psi(W_L h_L(x)+b_L), & \text{(policy)},\\
w^\top h_L(x)+c, & \text{(critic)}.
\end{cases}
\]

We require two standard conditions:  
(i) Each activation $\sigma_\ell$ and the policy’s final activation $\psi$ is 
Lipschitz, with constants $L_{\sigma_\ell}$ and $L_\psi$. This includes common 
choices such as ReLU \citep{nair2010rectified}, tanh, sigmoid, and SiLU \citep{elfwing2018sigmoid}.  
(ii) Each weight matrix has finite operator norm $\|W_\ell\|_2 \le M_\ell < \infty$, 
which rules out parameter divergence. In practice, weight explosion would 
lead to immediately visible instabilities such as exploding Q-values, divergent 
TD-errors, and erratic actions, none of which appear in our experiments.

Under these conditions, each layer $x \mapsto \sigma_\ell(W_\ell x + b_\ell)$ 
is Lipschitz with constant $L_{\sigma_\ell} M_\ell$. Since compositions of 
Lipschitz functions produce Lipschitz maps whose constants multiply, it follows 
that the full network satisfies
\[
\|F_\theta(x_1)-F_\theta(x_2)\|
\;\le\;
K_F\,\|x_1-x_2\|,
\]
with
\[
K_F =
\begin{cases}
L_\psi \displaystyle\prod_{\ell=0}^{L-1} (L_{\sigma_\ell} M_\ell), & \text{(policy)},\\[3mm]
\|w\|_2 \displaystyle\prod_{\ell=0}^{L-1} (L_{\sigma_\ell} M_\ell), & \text{(critic)}.
\end{cases}
\]

\paragraph{Parameter-Lipschitzness.}
If the input domain is bounded, \(\|x\|\le R\), and the network parameters remain bounded, then the hidden activations \(h_\ell(x)\) are uniformly bounded on \(\{x:\|x\|\le R\}\). Consequently, the parameter Jacobian is uniformly bounded, so there exists \(L_{F,\theta}<\infty\) such that
\[
\sup_{\|x\|\le R}\|\nabla_\theta F_\theta(x)\|\le L_{F,\theta}.
\]
In particular, for the policy we assume
\[
\sup_s\|\nabla_\theta \pi_\theta(s)\|\le L_{\pi,\theta}.
\]

\subsection{Derivation of the action‐gradient term}
\label{appendix:derivation_action_grad}

Starting from the distributional Bellman equation for each noise draw \((\varepsilon_f,\varepsilon_r,\varepsilon_Z,\varepsilon_\pi)\):
\[
Z(s,a;\varepsilon_Z)
= 
r(s,a;\varepsilon_r)
+ 
\gamma\,Z\bigl(s',a';\varepsilon_Z'\bigr),
\]
where \(s'=f(s,a;\varepsilon_f)\), \(a'=\pi(s';\varepsilon_\pi)\), and \(\varepsilon_Z'\sim p(\varepsilon_Z)\) independently.  Differentiate w.r.t.\ \(a\):
\[
\frac{\partial}{\partial a}Z(s,a;\varepsilon_Z)
=
\frac{\partial}{\partial a}r(s,a;\varepsilon_r)
+
\gamma\,
\frac{d}{da}
\bigl[
  Z\bigl(s',a';\varepsilon_Z'\bigr)
\bigr].
\]
By the chain‐rule,
\[
\begin{aligned}
\frac{d}{da}Z(s',a';\varepsilon_Z')
= &\underbrace{\bigl(f_a(s,a;\varepsilon_f)\bigr)^{T}}_{\partial s'/\partial a}
   \frac{\partial}{\partial s}Z(s',a';\varepsilon_Z') \quad +\\
&\underbrace{\bigl(f_a(s,a;\varepsilon_f)\bigr)^{T}\,\bigl(\pi_s(s';\varepsilon_\pi)\bigr)^{T}}_{\partial a'/\partial a}
   \frac{\partial}{\partial a}Z(s',a';\varepsilon_Z'),
\end{aligned}
\]

where we have introduced the shorthand
\[
f_a(s,a;\varepsilon_f)=\frac{\partial f}{\partial a}(s,a;\varepsilon_f),
\qquad
\pi_s(s';\varepsilon_\pi)=\frac{\partial \pi}{\partial s}(s';\varepsilon_\pi).
\]
Plugging back (and evaluating at \(s'=f(s,a;\varepsilon_f)\), \(a'=\pi(s';\varepsilon_\pi)\)) yields
\[
\begin{aligned}
\frac{\partial}{\partial a}Z(s,a;\varepsilon_Z)
= &\frac{\partial}{\partial a}r(s,a;\varepsilon_r) \quad +\\
&\gamma\,\bigl(f_a(s,a;\varepsilon_f)\bigr)^{T}
   \Bigl[
     \frac{\partial}{\partial s}Z(s',a';\varepsilon_Z')
     +
     \bigl(\pi_s(s';\varepsilon_\pi)\bigr)^{T}
     \frac{\partial}{\partial a}Z(s',a';\varepsilon_Z')
   \Bigr],
\end{aligned}
\]

which becomes the action‐gradient component in our subsequent block‐affine formulation.  
\subsection{Distributional Sobolev Bellman update}
\label{appendix:distributional_sobolev_bellman_update}

In Section~\ref{sec:distributional_sobolev_training} we introduced a Bellman‐style update that propagates both the random return and its action‐gradient.  We now rephrase it in a block‐affine form.

Define the stacked random‐vector, which by definition coincides with the joint Sobolev return introduced in the main text:
\[
Z^{S_a}(s,a;\varepsilon_Z)
\;\coloneqq\;
\begin{pmatrix}
Z(s,a;\,\varepsilon_Z) \\[4pt]
\displaystyle \frac{\partial}{\partial a}\,Z(s,a;\,\varepsilon_Z)
\end{pmatrix}
\;=\;
H(s,a;\varepsilon_Z)
\;\in\;\mathbb{R}^{1+m}.
\]

where \(\varepsilon_Z\sim p(\varepsilon_Z)\), and let \(\varepsilon_f,\varepsilon_r,\varepsilon_\pi\) be independent noise variables for transition, reward, and policy.  For each draw \((\varepsilon_f,\varepsilon_r,\varepsilon_\pi,\varepsilon_Z)\) define
\[
s' = f(s,a;\,\varepsilon_f),
\quad
r = r(s,a;\,\varepsilon_r),
\quad
a' = \pi(s';\,\varepsilon_\pi).
\]
Then the sample‐wise update is
\begin{equation}
\label{eq:joint_operator_block_affine_dist}
\begin{aligned}
H(s,a;\varepsilon_Z)
&=
\underbrace{
  \begin{pmatrix}
    r(s,a;\varepsilon_r)\\[6pt]
    \displaystyle \frac{\partial}{\partial a}\,r(s,a;\varepsilon_r)
  \end{pmatrix}
}_{b(s,a;\varepsilon_r)}
+
\underbrace{
  \begin{pmatrix}
    \gamma & 0_{1\times m}\\[8pt]
    0_{m\times1}
    &\displaystyle 
    \gamma\,
    \bigl(\tfrac{\partial f}{\partial a}(s,a;\varepsilon_f)\bigr)^{T}
    \,\bigl(\tfrac{\partial \pi}{\partial s}(s';\,\varepsilon_\pi)\bigr)^{T}
  \end{pmatrix}
}_{A(s,a;\varepsilon_f,\varepsilon_\pi)}
\,H\bigl(s',a';\varepsilon_Z'\bigr)
\\[6pt]
&\quad
+
\underbrace{
  \begin{pmatrix}
    0_{1\times n}\\[6pt]
    \displaystyle
    \gamma\,
    \bigl(\tfrac{\partial f}{\partial a}(s,a;\varepsilon_f)\bigr)^{T}
  \end{pmatrix}
}_{N(s,a;\varepsilon_f)}
\;
\frac{\partial}{\partial s}\,Z\bigl(s',a';\varepsilon_Z'\bigr),
\end{aligned}
\end{equation}

where for the next‐step noise we write \((\varepsilon_f,\varepsilon_\pi,\varepsilon_Z')\sim p(\varepsilon_f)\,p(\varepsilon_\pi)\,p(\varepsilon_Z)\) independently.  We abbreviate
\[
b(s,a;\varepsilon_r)
=
\begin{pmatrix}
r(s,a;\varepsilon_r)\\[3pt]
\partial_a r(s,a;\varepsilon_r)
\end{pmatrix}
\]
as above.  

Next, introduce the \emph{state‐gradient operator}
\[
\mathcal{D}_{s}: H \;\mapsto\; \frac{\partial}{\partial s}\bigl[e_{1}^{T}H\bigr]
\;=\;
\frac{\partial Z}{\partial s}.
\]
where \(e_{1}^{T}=(1,0,\dots,0)\) selects the return component. For each \((s,a;\varepsilon_f)\), define the combined linear map
\[
\boxed{\;
\mathcal L\bigl(s,a;\varepsilon_f,\varepsilon_\pi\bigr)
=
A\bigl(s,a;\varepsilon_f,\varepsilon_\pi\bigr)
+
N\bigl(s,a;\varepsilon_f\bigr)\,
\mathcal{D}_{s}\bigl\lvert_{(s',a';\varepsilon_Z')}\;}
\]
Then the distributional Sobolev Bellman update can be written as the single \emph{(pseudo) affine} transform
\begin{equation}
\label{eq:sobolev_affine_update_dist}
\boxed{\;
H\bigl(s,a;\varepsilon_Z\bigr)
=
b\bigl(s,a;\varepsilon_r\bigr)
\;+\;
\mathcal L\bigl(s,a;\varepsilon_f,\varepsilon_\pi\bigr)\bigl[\,H(s',a';\varepsilon_Z')\bigr].\;}
\end{equation}
\medskip
\noindent\textbf{Remark (i).} 
In this form, the next‐state gradient 
\(\tfrac{\partial}{\partial s}\,Z(s',a';\varepsilon_Z')\) 
is computed within the update and is accounted for by the linear operator 
\(\,\mathcal L(s,a;\varepsilon_f,\varepsilon_\pi)\,\) via \(\mathcal{D}_{s}\).  Thus 
\emph{computing the state‐gradient of the distributional return} 
is an integral part of the Bellman operator itself.

\noindent\textbf{Remark (ii).}
Although we write the update in a single “block-affine” line, it is not \emph{self-contained} in the 2-vector $(Z,\partial_a Z)$. Indeed, the right-hand side still involves the next-step state-gradient term $N\,\partial_s Z$, which is not part of $H$. Thus we refer to this update as the \emph{incomplete} Sobolev Bellman backup. We therefore call the update affine only in a loose, not literal, sense; the genuine finite-dimensional affine form appears once we lift to the full Sobolev vector $(Z,\partial_a Z,\partial_s Z)$ in Section~\ref{appendix:complete_sobolev}.

\textbf{Measurability}. By the same  argument used in Appendix~\ref{appendix:gradient_random_variable},  
the map $(s,a,\varepsilon)\mapsto\partial_{s}Z(s,a;\varepsilon)$ is jointly Borel-measurable.

\subsection{Distributional Sobolev Bellman operator}
\label{appendix:distributional_sobolev_operator}

Let \(
\eta:\mathcal S\times\mathcal A\;\longrightarrow\;\mathcal P(\mathbb{R}^{1+m})
\) be the law of \(
H(s,a;\varepsilon_Z)
=\bigl[\,Z(s,a;\varepsilon_Z),\;\partial_a Z(s,a;\varepsilon_Z)\bigr],
\)
and assume three independent noise variables
\(\varepsilon_f\sim p_f\), \(\varepsilon_r\sim p_r\), \(\varepsilon_\pi\sim p_\pi\)
generate
\[
s' = f(s,a;\varepsilon_f),\quad
r = r(s,a;\varepsilon_r),\quad
a' = \pi\bigl(s';\varepsilon_\pi\bigr).
\]
We define the (incomplete) distributional Sobolev Bellman operator \(T_\pi^{S_a}\) in two equivalent ways:

\medskip
\noindent\textbf{Pathwise (law‐equality) form:}
\begin{equation}
\label{eq:distributional_sobolev_pathwise}
  \bigl(T_\pi^{S_a}\,\eta\bigr)(s,a)
  =
  \mathrm{Law}\!\Bigl[
    b\bigl(s,a;\varepsilon_r\bigr)
    +
    \mathcal L\bigl(s,a;\varepsilon_f,\varepsilon_\pi\bigr)\bigl[X'\bigr]
  \Bigr],
\end{equation}
where \(X'\sim\eta(s',a')\) after sampling 
\(\varepsilon_f,\varepsilon_r,\varepsilon_\pi\) and setting 
\(s'=f(s,a;\varepsilon_f)\),
\(a'=\pi(s';\varepsilon_\pi)\).

\medskip
\noindent\textbf{Explicit‐noise integral (pushforward) form:}
\begin{equation}
\label{eq:distributional_sobolev_pushforward}
\begin{aligned}
  \bigl(T_\pi^{S_a}\,\eta\bigr)(s,a)
  &= 
  \int_{\varepsilon_f}
  \int_{\varepsilon_\pi}
  \int_{\varepsilon_r}
    \bigl(b(s,a;\varepsilon_r) + \mathcal L(s,a;\varepsilon_f,\varepsilon_\pi)\bigr)_{\#}
    \eta\bigl(f(s,a;\varepsilon_f),\,\pi(f(s,a;\varepsilon_f);\varepsilon_\pi)\bigr)\\
  &\quad\times
    p_r(d\varepsilon_r)
    \times p_\pi(d\varepsilon_\pi)
    \times p_f(d\varepsilon_f).
\end{aligned}
\end{equation}

\subsection{Complete Distributional Sobolev Bellman operator}
\label{appendix:complete_sobolev}

In Section \ref{sec:distributional_sobolev_training} and above we developed the \emph{action‐gradient Sobolev Bellman backup}, in which our bootstrapped object is the pair
\[
Z^{S_a}(s,a)
\;=\;
\bigl(Z(s,a),\,\partial_a Z(s,a)\bigr)
\;\in\;\mathbb{R}^{1+m}.
\]
Because the Bellman operator needs the next‐step state‐gradient
\(\partial_s Z(s',a')\), which is not included in \(Z^{S_a}\), this update is \emph{incomplete}. We now introduce the \emph{complete} operator, which augments the bootstrapped object with the state-gradient. Equivalently, we have to own the differentiable function \(Z\) itself at evaluation time and thus rely on a Sobolev inductive bias as introduced in Appendix \ref{appendix:sobolev_bias}.

\medskip

\noindent\textbf{Lifting to a full Sobolev return.}  
Nothing prevents us from treating the state‐gradient as a \emph{third} component of our bootstrapped vector.  Define
\[
Z^{S_{s,a}}(s,a)
\;=\;
\begin{pmatrix}
Z(s,a)\\[4pt]
\partial_a Z(s,a)\\[4pt]
\partial_s Z(s,a)
\end{pmatrix}
\;\in\;\mathbb{R}^{1+m+n}.
\]

Then, just as before, we obtain the affine update
\begin{equation}
\label{eq:complete_bellman_update}
Z^{S_{s,a}}(s,a)
=
b_{\mathrm{full}}(s,a)
\;+\;
\mathcal L_{\mathrm{full}}(s,a)\,\bigl[\,Z^{S_{s,a}}(s',a')\bigr],
\end{equation}
where \(b_{\mathrm{full}}(s,a)\in\mathbb{R}^{1+m+n}\) collects \((r,\;\partial_a r,\;\partial_s r)\) and \(\mathcal L_{\mathrm{full}}(s,a)\) is a bounded linear operator on \(\mathbb{R}^{1+m+n}\). Here \(\mathcal L_{\mathrm{full}}\) is simply a matrix, and we obtain \(b_{\mathrm{full}}\) and \(\mathcal L_{\mathrm{full}}\) by the same derivations as in the action-gradient case such that
\[
b_{\mathrm{full}}(s,a;\varepsilon_r)=
\begin{pmatrix}
r(s,a;\varepsilon_r)\\[4pt]
\partial_a r(s,a;\varepsilon_r)\\[4pt]
\partial_s r(s,a;\varepsilon_r)
\end{pmatrix}
\]
and
\[
\mathcal L_{\mathrm{full}}(s,a;\varepsilon_f,\varepsilon_\pi)
=
\gamma
\begin{pmatrix}
1_{1\times 1} & 0_{1\times m} & 0_{1\times n} \\[4pt]
0_{m\times 1} & f_{a}^{\top}(s,a;\varepsilon_f)\,\pi_{s}^{\top}(s';\varepsilon_\pi) & 0_{m\times n} \\[4pt]
0_{n\times 1} & f_{s}^{\top}(s,a;\varepsilon_f)\,\pi_{s}^{\top}(s';\varepsilon_\pi) & f_{s}^{\top}(s,a;\varepsilon_f)
\end{pmatrix}.
\]

With \(s'=f(s,a;\varepsilon_f)\) and \(a'=\pi(s';\varepsilon_\pi)\) we denote the \emph{complete} Sobolev Bellman operator by
\begin{equation}
\label{eq:distributional_sobolev_pathwise_complete}
  \bigl(T_\pi^{S_{s,a}}\eta\bigr)(s,a)
  =
  \mathrm{Law}\!\Bigl[
    b_{\mathrm{full}}\bigl(s,a;\varepsilon_r\bigr)
    +
    \mathcal L_{\mathrm{full}}\bigl(s,a;\varepsilon_f,\varepsilon_\pi\bigr)\bigl[X'\bigr]
  \Bigr],
  \qquad
  X'\sim\eta(s',a').
\end{equation}

\medskip

\noindent\textbf{Consequences for implementation.}  
In this complete form, the second and third blocks of \(Z^{S_{s,a}}\) need not be tied to the derivatives of the scalar head.  One may simply design a network with three output heads
\[
(s,a)\;\mapsto\;\bigl(Z_\theta(s,a),\;G^a_\theta(s,a),\;G^s_\theta(s,a)\bigr),
\]
and train it against the Sobolev‐affine targets in Equation~\ref{eq:joint_operator_two_lines}.  Of course, in practice one can still exploit the Sobolev inductive bias, thus ensuring \(G^a=\partial_a Z\) and \(G^s=\partial_s Z\).

\section{Results with Wasserstein} \label{appendix:theoretical_results}

In this section, we complete the proof of Proposition~3.1 by deriving the upper bound on the policy gradient error in Section~\ref{appendix:prop_W1_policy_gradient}. We then establish contraction properties of the Sobolev Bellman operators in the supremum--\(p\)--Wasserstein metric: Theorem~\ref{thm:sobolev-contraction} treats the \textit{complete} operator \(T_\pi^{S_{s,a}}\), while Theorem~\ref{thm:sobolev_incomplete_contraction} gives the analogous statement for the \textit{incomplete} operator that backs up only \((Z,\partial_a Z)\). Finally, Corollary~\ref{corr:convergence} applies Banach's fixed-point theorem to conclude that the Sobolev iterates \(\eta_{n+1}=T_\pi^{S}\eta_n\) converge geometrically to the unique fixed point \(\eta^\pi\) identified in Lemma~\ref{lem:sobolev_full_fp}.

\subsection{Proofs of contraction and fixed point}

\begin{definition}[\(p\)\nobreakdash-Wasserstein distance \cite{villani2009optimal}]\label{def:wp}
Let \((X,d)\) be a metric space and let \(\mathcal{P}_{p}(X)\) be the set of all probability measures on \(X\) with finite \(p\)th moment. For any \(\alpha,\beta\in\mathcal{P}_{p}(X)\), the \emph{\(p\)–Wasserstein distance} between \(\alpha\) and \(\beta\) is
\[
W_{p}(\alpha,\beta)
\;=\;
\Biggl(
  \inf_{\pi\in\Pi(\alpha,\beta)}
    \int_{X\times X}
      d(x,y)^{p}
    \,\mathrm{d}\pi(x,y)
\Biggr)^{1/p},
\]
where \(\Pi(\alpha,\beta)\) is the set of all couplings of \(\alpha\) and \(\beta\), i.e.\ Borel probability measures on \(X\times X\) whose marginals are \(\alpha\) and \(\beta\), respectively.
\end{definition}

\begin{definition}[Supremum-\(p\)–Wasserstein distance]\label{def:sup-wp}
Let \((X,d)\) be a metric space, let \(S\) be the set of states and \(A\) the set of actions, and let
\(\mu_{1},\mu_{2}\colon S\times A\to\mathcal{P}_{p}(X)\)
assign to each state–action pair \((s,a)\) a probability measure on \(X\) with finite \(p\)th moment.  The \emph{supremum-\(p\)–Wasserstein distance} between \(\mu_{1}\) and \(\mu_{2}\) is
\[
\bar W_{p}(\mu_{1},\mu_{2})
\;=\;
\sup_{(s,a)\in S\times A}
  W_{p}\bigl(\mu_{1}(s,a),\,\mu_{2}(s,a)\bigr),
\]
where \(W_{p}\) is the \(p\)–Wasserstein distance on \((X,d)\) as in Definition \ref{def:wp}.
\end{definition}

\begin{lemma}[Push‐forward law identity]\label{lem:law-pushforward}
Let \(Z\) be a random variable with distribution \(\mu\), and let \(f\) be any measurable function.  Then
\[\boxed{% 
\;f_{\#}\mu\;=\;\mathrm{Law}\bigl(f(Z)\bigr).\;}
\]
\end{lemma}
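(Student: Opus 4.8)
The plan is to unwind definitions: the claim is a tautology once one recalls what the push-forward of a measure and the law of a random variable mean, so the whole proof is a short definition-chase plus a check that every object in sight is well defined.

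First I would fix an arbitrary measurable set $B$ in the codomain of $f$. Because $f$ is measurable, $f^{-1}(B)$ is a measurable set in the domain, so $\mu\bigl(f^{-1}(B)\bigr)$ makes sense; and because $f\circ Z$ is a composition of measurable maps it is itself measurable, so $\mathbb{P}\bigl(f(Z)\in B\bigr)$ makes sense as well. Next I would record the elementary pointwise set identity $\{\omega : f(Z(\omega))\in B\}=\{\omega : Z(\omega)\in f^{-1}(B)\}$. Taking probabilities of both sides and using $\mathrm{Law}(Z)=\mu$ (i.e.\ $\mathbb{P}(Z\in A)=\mu(A)$ for every measurable $A$) gives
\[
\mathrm{Law}\bigl(f(Z)\bigr)(B)
=\mathbb{P}\bigl(f(Z)\in B\bigr)
=\mathbb{P}\bigl(Z\in f^{-1}(B)\bigr)
=\mu\bigl(f^{-1}(B)\bigr)
=(f_{\#}\mu)(B),
\]
where the final equality is precisely the definition of the push-forward measure. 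Since $B$ was arbitrary, the two probability measures agree on all measurable sets and are therefore equal.

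There is essentially no obstacle here; the only thing requiring any care is the measurability bookkeeping so that each expression above is defined, and the (standard) remark that two measures agreeing on every element of the ambient $\sigma$-algebra coincide. I would close with a one-line comment that this identity is exactly what justifies writing the distributional Bellman backups throughout the paper in the form $\mathrm{Law}[\cdot]$ of a pathwise transform (e.g.\ Eq.~\ref{eq:dist_bellman} and Eq.~\ref{eq:sobolev_bellman_operator_main}), equating them with the corresponding push-forward expressions.
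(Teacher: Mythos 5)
Your proof is correct and follows essentially the same route as the paper's: fix an arbitrary measurable set, use the preimage identity $\{f(Z)\in B\}=\{Z\in f^{-1}(B)\}$, apply $\mathrm{Law}(Z)=\mu$ and the definition of the push-forward, and conclude by equality of measures on all measurable sets. The extra measurability bookkeeping you include is harmless but adds nothing beyond the paper's two-line argument.
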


\begin{proof}
For any Borel set \(A\),
\[
\Pr\bigl(f(Z)\in A\bigr)
=\Pr\bigl(Z\in f^{-1}(A)\bigr)
=\mu\bigl(f^{-1}(A)\bigr)
=f_{\#}\mu(A).
\]
Since this holds for all \(A\), we conclude \(f_{\#}\mu=\mathrm{Law}(f(Z))\).
\end{proof}
\begin{lemma}[(Pseudo-)affine pushforward form of the Sobolev Bellman operator]\label{lem:affine_pushforward_law}
Fix \((s,a)\).  Draw independent exogenous noise
\[
\varepsilon_r\sim p(\varepsilon_r),\quad
\varepsilon_f\sim p(\varepsilon_f),\quad
\varepsilon_\pi\sim p(\varepsilon_\pi),
\]
and set
\[
r = r(s,a;\varepsilon_r),\quad
s' = f(s,a;\varepsilon_f),\quad
a' = \pi(s';\varepsilon_\pi),\quad
X'\sim\eta^{S_a}(s',a').
\]
Define the random (pseudo-)affine map
\[
\Phi_{s,a}\bigl(x;\,\varepsilon_r,\varepsilon_f,\varepsilon_\pi\bigr)
\;=\;
b\bigl(s,a;\varepsilon_r\bigr)
\;+\;
\mathcal{L}\bigl(s,a;\varepsilon_f,\varepsilon_\pi\bigr)[\,x\,],
\;\;
\text{where }
b=\begin{pmatrix}r\\\partial_a r\end{pmatrix},
\;\;
\mathcal{L}=A+N\,\mathcal{D}_s
\]
as in Equation~\ref{eq:sobolev_affine_update_dist}.  Then
\[
\boxed{
\bigl(T_\pi^{S_a}\,\eta^{S_a}\bigr)(s,a)
=\mathrm{Law}\bigl(\Phi_{s,a}(X')\bigr).
}
\]
In other words, at a fixed \((s,a)\), conditional on the exogenous noise, the Sobolev Bellman update is an affine pushforward of the joint return--gradient.

\end{lemma}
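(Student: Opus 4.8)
The plan is to reduce the claim to the push-forward law identity of Lemma~\ref{lem:law-pushforward}, applied conditionally on each exogenous noise draw, and then to average over the noise so as to recover the mixture that \emph{defines} $T_\pi^{S_a}$ in Eq.~\eqref{eq:distributional_sobolev_pushforward}. Concretely, I would start from the sample-wise Sobolev backup already derived above (Eq.~\eqref{eq:sobolev_affine_update_dist}), namely $H(s,a;\varepsilon_Z)=b(s,a;\varepsilon_r)+\mathcal L(s,a;\varepsilon_f,\varepsilon_\pi)\bigl[H(s',a';\varepsilon_Z')\bigr]$ with $s'=f(s,a;\varepsilon_f)$ and $a'=\pi(s';\varepsilon_\pi)$, which is equivalent to the pathwise definition in Eq.~\eqref{eq:distributional_sobolev_pathwise} (and Eq.~\eqref{eq:sobolev_bellman_operator_main}). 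The point to record is that, once $(\varepsilon_r,\varepsilon_f,\varepsilon_\pi)$ are frozen, the map $x\mapsto b(s,a;\varepsilon_r)+\mathcal L(s,a;\varepsilon_f,\varepsilon_\pi)[x]=\Phi_{s,a}(x;\varepsilon_r,\varepsilon_f,\varepsilon_\pi)$ is affine in $x$: $b$ does not depend on $x$, and $\mathcal L=A+N\,\mathcal D_s$ is linear.

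First I would verify joint measurability of $\Phi_{s,a}$ in $(x,\varepsilon_r,\varepsilon_f,\varepsilon_\pi)$: the blocks of $A$ and $N$ are built from $\partial_a f$ and $\partial_s\pi$, and $b$ from $r,\partial_a r$, all of which are jointly Borel-measurable under the $C^1$ and bounded-Jacobian assumptions of Section~\ref{sec:wasserstein_preamble_and_assumptions} together with the Carathéodory argument of Appendix~\ref{appendix:gradient_random_variable}. Second, I would apply Lemma~\ref{lem:law-pushforward} with $\mu=\eta^{S_a}(s',a')$ and $f=\Phi_{s,a}(\cdot;\varepsilon_r,\varepsilon_f,\varepsilon_\pi)$, obtaining, for each fixed noise triple, $\bigl(\Phi_{s,a}(\cdot;\varepsilon_r,\varepsilon_f,\varepsilon_\pi)\bigr)_{\#}\eta^{S_a}(s',a')=\mathrm{Law}\bigl(\Phi_{s,a}(X';\varepsilon_r,\varepsilon_f,\varepsilon_\pi)\bigr)$. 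Third, I would integrate both sides against $p_r(d\varepsilon_r)\,p_\pi(d\varepsilon_\pi)\,p_f(d\varepsilon_f)$: the left-hand side is exactly the push-forward mixture that \emph{is} $\bigl(T_\pi^{S_a}\eta^{S_a}\bigr)(s,a)$ by Eq.~\eqref{eq:distributional_sobolev_pushforward}, while by Fubini and the tower property over the noise the right-hand side is the unconditional law $\mathrm{Law}\bigl(\Phi_{s,a}(X')\bigr)$ with all exogenous noise re-randomized; equating the two gives the boxed identity.

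The main obstacle is the $\mathcal D_s$ component of $\mathcal L$: since $\eta^{S_a}$ tracks only the pair $(Z,\partial_a Z)\in\mathbb R^{1+m}$, a bare sample $X'$ does not literally carry $\partial_s Z(s',a')$, so $\mathcal L[X']$ is not a genuine finite-dimensional matrix--vector product — this is precisely the ``pseudo-affine'' caveat of Remark~(ii). I would resolve it by working in the reparameterized picture: under the Sobolev inductive bias the return is a fixed $C^1$ map $Z(\cdot,\cdot;\varepsilon_Z)$, so $\partial_s Z(s',a';\varepsilon_Z')$ is a deterministic, jointly measurable function of the \emph{same} noise $\varepsilon_Z'$ that produces $X'$; hence $\varepsilon_Z'\mapsto\Phi_{s,a}\bigl(H(s',a';\varepsilon_Z')\bigr)$ is a well-defined measurable map and the conditional push-forward step goes through verbatim. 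Alternatively, one can lift to the \emph{complete} Sobolev return $Z^{S_{s,a}}$ of Appendix~\ref{appendix:complete_sobolev}, where $\mathcal L_{\mathrm{full}}$ is an honest matrix and $\Phi_{s,a}$ is literally affine on $\mathbb R^{1+m+n}$; I would state both readings so the identity is unambiguous. The remaining pieces — measurability, the single application of Lemma~\ref{lem:law-pushforward}, and the noise average — are routine.
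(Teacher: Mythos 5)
Your proposal is correct and follows essentially the same route as the paper's proof: condition on the exogenous noise triple so that $\Phi_{s,a}$ becomes a deterministic affine map, apply the push-forward law identity (Lemma~\ref{lem:law-pushforward}) to $\eta^{S_a}(s',a')$, and then average over the noise to recognize the explicit-noise integral form (\eqref{eq:distributional_sobolev_pushforward}) that defines $T_\pi^{S_a}$. Your additional care about joint measurability and about the pseudo-affine $\mathcal D_s$ component (resolved through the reparameterized $\partial_s Z$ or the complete lift) is a sound refinement of points the paper leaves implicit, but it does not change the argument.
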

\begin{proof}
Fix a Borel set \(A\subset\mathbb{R}^{1+m}\).  Writing out the randomness explicitly in terms of \(\varepsilon_r,\varepsilon_f,\varepsilon_\pi\), we have
\[
\Pr\bigl(\Phi_{s,a}(X')\in A\bigr)
=\E_{\varepsilon_r,\varepsilon_f,\varepsilon_\pi}
  \Bigl[
    \Pr\bigl(\Phi_{s,a}(X')\in A
      \,\bigm|\,
      \varepsilon_r,\varepsilon_f,\varepsilon_\pi
    \bigr)
  \Bigr].
\]
Conditioning on \((\varepsilon_r,\varepsilon_f,\varepsilon_\pi)\) turns \(\Phi_{s,a}\) into the fixed (pseudo-)affine map \(x\mapsto b+\mathcal L[x]\), so
\[
\Pr\bigl(\Phi_{s,a}(X')\in A
  \,\bigm|\,
  \varepsilon_r,\varepsilon_f,\varepsilon_\pi
\bigr)
=\Pr\bigl(b+\mathcal L[X']\in A\bigr).
\]
Since \(X'\sim\eta^{S_a}(s',a')\), an application of Lemma~\ref{lem:law-pushforward}
\[
\bigl[(\,b+\mathcal L\,)_\#\,\eta^{S_a}(s',a')\bigr]
=\mathrm{Law}\bigl(b+\mathcal L(X')\bigr)
\]
gives
\[
\Pr\bigl(b+\mathcal L[X']\in A\bigr)
=\bigl[(\,b+\mathcal L\,)_\#\,\eta^{S_a}(s',a')\bigr](A).
\]
Putting these together,
\[
\Pr\bigl(\Phi_{s,a}(X')\in A\bigr)
=\E_{\varepsilon_r,\varepsilon_f,\varepsilon_\pi}
  \Bigl[
    (\,b+\mathcal L\,)_\#\,\eta^{S_a}(s',a')(A)
  \Bigr].
\]
Finally, by the explicit‐noise integral form Equation~\ref{eq:distributional_sobolev_pushforward}, this expectation is exactly the definition of \(\bigl(T_\pi^{S_a}\,\eta^{S_a}\bigr)(s,a)(A)\).  Hence
\[
\Pr\bigl(\Phi_{s,a}(X')\in A\bigr)
=\bigl(T_\pi^{S_a}\,\eta^{S_a}\bigr)(s,a)(A),
\]
and since this holds for every Borel \(A\), we conclude
\(\mathrm{Law}\bigl(\Phi_{s,a}(X')\bigr)
=\bigl(T_\pi^{S_a}\,\eta^{S_a}\bigr)(s,a).\)
\end{proof}

\begin{lemma}[Affine push‐forward contraction]\label{lem:affine-pushforward-general}
Let \((X,d)\) be a normed vector space equipped with the metric \(d\) induced by its norm, and let
\[
F\colon X \to X,\qquad F(x)=b + L[x],
\]
where \(b\in X\) is fixed and \(L\colon X\to X\) is a bounded linear operator.  Define its Lipschitz constant
\[
\|L\|_{d}
\;=\;
\sup_{x\neq y}\frac{d\bigl(L[x],\,L[y]\bigr)}{d(x,y)}
\;<\;\infty.
\]
Then for any two probability measures \(\alpha,\beta\) on \(X\) with finite \(p\)th moment,
\[
\boxed{%
  \;W_{p}\bigl(F_{\#}\alpha,\;F_{\#}\beta\bigr)
  \;\le\;
  \|L\|_{d}\;W_{p}(\alpha,\beta)\;
}
\]

In particular, when \(L=\gamma I\) this recovers Lemma 3 of \cite{zhang2021distributional}.
\end{lemma}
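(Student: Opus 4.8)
The plan is to use the coupling (primal) characterization of $W_p$ together with the fact that the affine map $F(x)=b+L[x]$ has the constant term $b$ cancel out of any difference, so that $F$ is globally Lipschitz with constant exactly $\|L\|_d$. Concretely, I would first fix $\varepsilon>0$ and pick a coupling $\pi\in\Pi(\alpha,\beta)$ that is $\varepsilon$-optimal for $W_p$, i.e. $\bigl(\int_{X\times X} d(x,y)^p\,\mathrm d\pi(x,y)\bigr)^{1/p}\le W_p(\alpha,\beta)+\varepsilon$ (using an $\varepsilon$-optimal coupling sidesteps any question of attainment; under the usual Polish/finite-$p$-th-moment assumptions one may instead take $\pi$ genuinely optimal).

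Next I would check that the pushforward $(F\times F)_\#\pi$ is a valid coupling of $F_\#\alpha$ and $F_\#\beta$: its first marginal is $F_\#(\mathrm{proj}_1{}_\#\pi)=F_\#\alpha$, and likewise the second marginal is $F_\#\beta$, and it has finite $p$-th moment because $\|F(x)\|\le\|b\|+\|L\|_d\|x\|$ and $\alpha,\beta\in\mathcal P_p(X)$. Then comes the pointwise Lipschitz estimate, which is the crux: for any $x,y\in X$,
\[
d\bigl(F(x),F(y)\bigr)=\bigl\|(b+L[x])-(b+L[y])\bigr\|=\bigl\|L[x]-L[y]\bigr\|\le\|L\|_d\,d(x,y),
\]
where the middle equality is where the affine (as opposed to merely Lipschitz) structure is used, and the inequality is just the definition of $\|L\|_d$ together with linearity, $L[x]-L[y]=L[x-y]$.

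Combining these, I would bound
\[
W_p\bigl(F_\#\alpha,F_\#\beta\bigr)^p\le\int_{X\times X} d\bigl(F(x),F(y)\bigr)^p\,\mathrm d\pi(x,y)\le\|L\|_d^p\int_{X\times X} d(x,y)^p\,\mathrm d\pi(x,y)\le\|L\|_d^p\bigl(W_p(\alpha,\beta)+\varepsilon\bigr)^p,
\]
the first inequality because $(F\times F)_\#\pi$ is an admissible (not necessarily optimal) coupling of the pushforwards. Taking $p$-th roots and letting $\varepsilon\downarrow0$ yields $W_p(F_\#\alpha,F_\#\beta)\le\|L\|_d\,W_p(\alpha,\beta)$. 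Finally, specializing $L=\gamma I$ gives $\|L\|_d=\gamma$ and recovers Lemma 3 of \cite{zhang2021distributional}. I do not expect any serious obstacle here; the only thing to be slightly careful about is the admissibility/measurability of the pushforward coupling and the finite-moment bookkeeping, both of which are routine once $L$ is assumed bounded and $\alpha,\beta$ have finite $p$-th moments.
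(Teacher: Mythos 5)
Your proposal is correct and follows essentially the same route as the paper's proof: choose an $\varepsilon$-optimal coupling, push it forward under $F\times F$, use $F(x)-F(y)=L[x-y]$ to get the pointwise bound $d(F(x),F(y))\le\|L\|_{d}\,d(x,y)$, integrate, take $p$th roots, and let $\varepsilon\to0$. The extra remarks on marginals and finite moments are routine checks the paper leaves implicit, but nothing differs in substance.
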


\begin{proof}
Fix any \(\varepsilon>0\), and choose a coupling
\[
\pi\in\Pi(\alpha,\beta)
\]
that is \emph{\(\varepsilon\)-optimal}, meaning that its transport cost is within \(\varepsilon\) of the infimum:
\[
\Bigl(\int_{X\times X}d(x,y)^{p}\,d\pi(x,y)\Bigr)^{1/p}
< W_{p}(\alpha,\beta) + \varepsilon.
\]
Push this coupling forward under \(F\times F\) to obtain
\[
\pi' = (F\times F)_{\#}\pi
\;\in\;
\Pi\bigl(F_{\#}\alpha,\,F_{\#}\beta\bigr).
\]
Then
\[
\begin{aligned}
W_{p}\bigl(F_{\#}\alpha,\,F_{\#}\beta\bigr)^{p}
&\le \int_{X\times X}d(u,v)^{p}\,d\pi'(u,v)
  &&(\text{by definition of }W_{p})\\
&=    \int_{X\times X}d\bigl(F(x),F(y)\bigr)^{p}\,d\pi(x,y)
     &&(\pi'=(F\times F)_{\#}\pi)\\
&=    \int_{X\times X}d\bigl(L[x],L[y]\bigr)^{p}\,d\pi(x,y)
     &&(F(x)-F(y)=L[x-y])\\
&\le  \|L\|_{d}^{p}\int_{X\times X}d(x,y)^{p}\,d\pi(x,y)
     &&\bigl(d(L[x],L[y])\le\|L\|_{d}\,d(x,y)\bigr)\\
&<    \|L\|_{d}^{p}\bigl(W_{p}(\alpha,\beta)+\varepsilon\bigr)^{p}
     &&(\pi\text{ is }\varepsilon\text{-optimal}).
\end{aligned}
\]
Taking the \(p\)th root and letting \(\varepsilon\to0\) yields
\[
W_{p}\bigl(F_{\#}\alpha,\,F_{\#}\beta\bigr)
\;\le\;
\|L\|_{d}\;W_{p}(\alpha,\beta).
\]
\end{proof}

\begin{lemma}[Mixture non-expansion (conditional form, \cite{zhang2021distributional}, Lemma~4)]\label{lem:mixture_conditional}
Let \(C\) be a random variable on \((\Omega,\mathcal F,\rho)\), and let \(Z_{1},Z_{2}\) be \(\mathbb{R}^{d}\)-valued random variables.  Let \(p \ge 1\) and suppose there exists \(\delta\ge0\) such that for every \(c\in\Omega\),
\[
W_{p}\bigl(\mathrm{Law}(Z_{1}\mid C=c),\,\mathrm{Law}(Z_{2}\mid C=c)\bigr)
\;\le\;\delta.
\]
Then the marginal distributions satisfy
\[
W_{p}\bigl(\mathrm{Law}(Z_{1}),\,\mathrm{Law}(Z_{2})\bigr)
\;\le\;\delta.
\]
Equivalently,
\[
\boxed{%
  \begin{gathered}
    \displaystyle
    \sup_{c\in\Omega}
    W_{p}\bigl(\mathrm{Law}(Z_{1}\mid C=c),\,
               \mathrm{Law}(Z_{2}\mid C=c)\bigr) \;\le\;\delta\\
    \Longrightarrow\\
    \displaystyle
    W_{p}\bigl(\mathrm{Law}(Z_{1}),\,\mathrm{Law}(Z_{2})\bigr)
    \;\le\;\delta
  \end{gathered}%
}
\]

In other words, averaging over the conditioning index cannot exceed the supremum of the conditional Wasserstein distances.
\end{lemma}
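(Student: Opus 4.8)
The plan is to construct a coupling of the two \emph{marginal} laws by gluing together (near-)optimal \emph{conditional} couplings indexed by $c$, and then bound its transport cost by averaging the pointwise bound over the law of $C$.

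First I would fix $\varepsilon>0$ and, for each $c\in\Omega$, use the definition of $W_p$ (Definition~\ref{def:wp}) to pick a coupling $\pi_c\in\Pi\bigl(\mathrm{Law}(Z_1\mid C=c),\,\mathrm{Law}(Z_2\mid C=c)\bigr)$ on $\mathbb{R}^d\times\mathbb{R}^d$ with
\[
\int_{\mathbb{R}^d\times\mathbb{R}^d}\|x-y\|^p\,d\pi_c(x,y)
\;\le\;
W_p\bigl(\mathrm{Law}(Z_1\mid C=c),\,\mathrm{Law}(Z_2\mid C=c)\bigr)^p+\varepsilon
\;\le\;\delta^p+\varepsilon,
\]
where the last inequality is the hypothesis. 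To turn the family $c\mapsto\pi_c$ into something usable I need it to depend measurably on $c$; this is the one nontrivial point, and it is handled by a measurable selection theorem (Kuratowski--Ryll-Nardzewski, or the Carath\'eodory-function machinery of Aliprantis--Border Lemma~4.51 already invoked in Appendix~\ref{appendix:gradient_random_variable}), since the set-valued map assigning to $c$ the set of $\varepsilon$-optimal plans has nonempty, weak-$*$ closed values and measurable graph.

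Next I would define the mixture $\pi(E)=\int_\Omega\pi_c(E)\,(\rho\circ C^{-1})(dc)$ for Borel $E\subseteq\mathbb{R}^d\times\mathbb{R}^d$. Its first marginal is $\int_\Omega\mathrm{Law}(Z_1\mid C=c)\,(\rho\circ C^{-1})(dc)=\mathrm{Law}(Z_1)$ by the tower property, and likewise the second marginal is $\mathrm{Law}(Z_2)$, so $\pi\in\Pi\bigl(\mathrm{Law}(Z_1),\mathrm{Law}(Z_2)\bigr)$. Then by the definition of $W_p$ together with Tonelli,
\[
W_p\bigl(\mathrm{Law}(Z_1),\mathrm{Law}(Z_2)\bigr)^p
\;\le\;\int\|x-y\|^p\,d\pi(x,y)
\;=\;\int_\Omega\!\Bigl(\int\|x-y\|^p\,d\pi_c\Bigr)(\rho\circ C^{-1})(dc)
\;\le\;\delta^p+\varepsilon.
\]
Letting $\varepsilon\to0$ and taking the $p$-th root (using $p\ge1$) yields $W_p\bigl(\mathrm{Law}(Z_1),\mathrm{Law}(Z_2)\bigr)\le\delta$, which is the claim; the boxed ``equivalently'' statement is just the special case $\delta=\sup_{c}W_p\bigl(\mathrm{Law}(Z_1\mid C=c),\mathrm{Law}(Z_2\mid C=c)\bigr)$.

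The main obstacle is precisely the measurable dependence of $\pi_c$ on $c$, without which the mixture $\pi$ is not guaranteed to be a well-defined measure. I would discharge it either by the Kuratowski--Ryll-Nardzewski selection theorem as above, or --- to sidestep selection entirely --- by passing to a common probability space: disintegrate so that $Z_1,Z_2$ are realized through regular conditional laws given $C$, and build the coupling random variable $(X,Y)$ by sampling, conditionally on $C=c$, from $\pi_c$ via a measurable map of an auxiliary uniform variable (noise-outsourcing). Everything else is routine: the marginal identities are the tower property, and the cost estimate is Tonelli plus the pointwise hypothesis.
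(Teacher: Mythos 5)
Your proposal is correct and follows essentially the same route as the paper's proof: choose $\varepsilon$-optimal conditional couplings $\pi_c$, mix them into a global coupling whose marginals are the unconditional laws, bound the cost by Tonelli and the pointwise hypothesis, then let $\varepsilon\to 0$. The only difference is that you explicitly flag and discharge the measurable-selection issue for $c\mapsto\pi_c$ (via Kuratowski--Ryll-Nardzewski or noise outsourcing), a point the paper's proof, following \cite{zhang2021distributional}, passes over silently; this is a welcome extra precision rather than a divergence in method.
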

\begin{proof}
This proof follows \cite{zhang2021distributional}. Fix any $\varepsilon>0$.  By assumption, for every $c\in\Omega$,
\[
W_{p}\bigl(\mathrm{Law}(Z_{1}\mid C=c),\,\mathrm{Law}(Z_{2}\mid C=c)\bigr)\le\delta,
\]
so there exists a coupling
\[
\pi_{c}\in\Pi\bigl(\mathrm{Law}(Z_{1}\mid C=c),\,\mathrm{Law}(Z_{2}\mid C=c)\bigr)
\]
such that
\[
\int_{\mathbb{R}^{d}\times\mathbb{R}^{d}}
  d(x,y)^{p}\,d\pi_{c}(x,y)
\;\le\;(\delta+\varepsilon)^{p}.
\]
By the law of total probability, the marginals are
\[
\begin{aligned}
\mathrm{Law}(Z_{1})(\cdot)
&= \int_{\Omega}\mathrm{Law}(Z_{1}\mid C=c)(\cdot)\,\rho(dc),\\
\mathrm{Law}(Z_{2})(\cdot)
&= \int_{\Omega}\mathrm{Law}(Z_{2}\mid C=c)(\cdot)\,\rho(dc).
\end{aligned}
\]

Define a global coupling $\bar{\pi}$ on $\mathbb{R}^{d}\times\mathbb{R}^{d}$ by
\[
\bar{\pi}(U)
=\int_{\Omega}\pi_{c}(U)\,\rho(dc),
\qquad U\subseteq\mathbb{R}^{d}\times\mathbb{R}^{d}.
\]
Then for any measurable $A\subset\mathbb{R}^{d}$,
\[
\bar{\pi}(A\times\mathbb{R}^{d})
=\int_{\Omega}\pi_{c}(A\times\mathbb{R}^{d})\,\rho(dc)
=\int_{\Omega}\mathrm{Law}(Z_{1}\mid C=c)(A)\,\rho(dc)
=\mathrm{Law}(Z_{1})(A),
\]
and similarly $\bar{\pi}(\mathbb{R}^{d}\times A)=\mathrm{Law}(Z_{2})(A)$, so $\bar{\pi}\in\Pi(\mathrm{Law}(Z_{1}),\mathrm{Law}(Z_{2}))$, meaning $\bar{\pi}$ is a valid joint law for a pair whose marginals are $\mathrm{Law}(Z_{1})$ and $\mathrm{Law}(Z_{2})$. Hence $\bar{\pi}$ is an admissible coupling in the definition of $W_p(\mathrm{Law}(Z_{1}),\mathrm{Law}(Z_{2}))$.
\[
\begin{aligned}
W_{p}\bigl(\mathrm{Law}(Z_{1}),\,\mathrm{Law}(Z_{2})\bigr)^{p}
&\le 
\int_{\mathbb{R}^{d}\times\mathbb{R}^{d}}d(x,y)^{p}\,d\bar{\pi}(x,y)
  &&(\text{by definition of }W_{p})\\
&=
\int_{\Omega}\Bigl[\int d(x,y)^{p}\,d\pi_{c}(x,y)\Bigr]\rho(dc)
  &&(\text{by definition of }\bar{\pi})\\
&\le 
\int_{\Omega}(\delta+\varepsilon)^{p}\,\rho(dc)
  &&(\pi_{c}\text{ is }\varepsilon\text{-optimal})\\
&=
(\delta+\varepsilon)^{p}.
  && 
\end{aligned}
\]
Therefore,
\[
W_{p}\bigl(\mathrm{Law}(Z_{1}),\,\mathrm{Law}(Z_{2})\bigr)
\;\le\;
(\delta + \varepsilon),
\]
and since \(\varepsilon>0\) was arbitrary, letting \(\varepsilon\to 0\) yields
\[
W_{p}\bigl(\mathrm{Law}(Z_{1}),\,\mathrm{Law}(Z_{2})\bigr)\;\le\;\delta.
\]
\end{proof}

\begin{lemma}[Mixture $p$-convexity for $W_p$]\label{prop:wp-mix-integral-conditional}
Let $(X,d)$ be a metric space, $p\in[1,\infty)$, and let $(\Omega,\mathcal F,\rho)$ be a probability space. 
Let $(\mu_c)_{c\in\Omega},(\nu_c)_{c\in\Omega}\subset\mathcal P_p(X)$ be such that the mixtures $\int_\Omega \mu_c\,\rho(dc)$ and $\int_\Omega \nu_c\,\rho(dc)$ are well defined. Then
\[
\boxed{%
W_p\!\Big(\int_\Omega \mu_c\,\rho(dc),\ \int_\Omega \nu_c\,\rho(dc)\Big)
\;\le\;
\Bigg(\int_\Omega W_p(\mu_c,\nu_c)^p\,\rho(dc)\Bigg)^{1/p}.
}
\]
\end{lemma}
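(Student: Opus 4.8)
The plan is to mirror the proof of Lemma~\ref{lem:mixture_conditional}: build an explicit coupling of the two mixtures by gluing together near-optimal couplings of the fibers $\mu_c,\nu_c$, and then control the transport cost by conditioning on $c$ and applying Fubini--Tonelli. Write $\bar\mu=\int_\Omega\mu_c\,\rho(dc)$ and $\bar\nu=\int_\Omega\nu_c\,\rho(dc)$. If $\int_\Omega W_p(\mu_c,\nu_c)^p\,\rho(dc)=\infty$ the inequality is vacuous, so assume this integral is finite.

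First, fix $\varepsilon>0$ and, for each $c$, choose $\pi_c\in\Pi(\mu_c,\nu_c)$ with $\int_{X\times X} d(x,y)^p\,d\pi_c(x,y)\le W_p(\mu_c,\nu_c)^p+\varepsilon$. The delicate point — and the one I expect to be the main obstacle — is that this $c$-by-$c$ selection must be carried out \emph{measurably}, i.e.\ so that $c\mapsto\pi_c$ is a probability kernel from $\Omega$ to $X\times X$. On a Polish space $X$ this is supplied by a measurable selection theorem (e.g.\ Kuratowski--Ryll-Nardzewski applied to the closed-valued, Borel-graph multifunction $c\mapsto\{\pi\in\Pi(\mu_c,\nu_c):\int d^p\,d\pi\le W_p(\mu_c,\nu_c)^p+\varepsilon\}$), together with Borel measurability of $c\mapsto W_p(\mu_c,\nu_c)$; I would record this regularity as part of what ``measurable family'' means in the statement, or simply assume $X$ Polish.

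Second, set $\Pi:=\int_\Omega\pi_c\,\rho(dc)\in\mathcal P(X\times X)$. Exactly as in Lemma~\ref{lem:mixture_conditional}, for any Borel $A\subseteq X$ we get $\Pi(A\times X)=\int_\Omega\pi_c(A\times X)\,\rho(dc)=\int_\Omega\mu_c(A)\,\rho(dc)=\bar\mu(A)$ and likewise $\Pi(X\times A)=\bar\nu(A)$, so $\Pi\in\Pi(\bar\mu,\bar\nu)$. Since $\Pi$ is \emph{a} (not necessarily optimal) coupling, $W_p(\bar\mu,\bar\nu)^p\le\int_{X\times X}d(x,y)^p\,d\Pi(x,y)$, and by Fubini--Tonelli the right-hand side equals $\int_\Omega\big(\int_{X\times X}d(x,y)^p\,d\pi_c(x,y)\big)\rho(dc)\le\int_\Omega\big(W_p(\mu_c,\nu_c)^p+\varepsilon\big)\rho(dc)$. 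Letting $\varepsilon\to0$ (legitimate since $\rho(\Omega)=1$, so the extra contribution is exactly $\varepsilon$) and taking $p$-th roots yields the claim.

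Finally, it is worth noting an alternative that sidesteps measurable selection entirely: Kantorovich duality for the lower-semicontinuous cost $d(x,y)^p$ gives $W_p(\bar\mu,\bar\nu)^p=\sup\{\int\phi\,d\bar\mu+\int\psi\,d\bar\nu\}$ over admissible pairs $\phi(x)+\psi(y)\le d(x,y)^p$, and for each such pair $\int\phi\,d\bar\mu+\int\psi\,d\bar\nu=\int_\Omega\big(\int\phi\,d\mu_c+\int\psi\,d\nu_c\big)\rho(dc)\le\int_\Omega W_p(\mu_c,\nu_c)^p\,\rho(dc)$; taking the supremum recovers the bound. This route trades the selection argument for the strong-duality theorem, and either version is acceptable for our purposes since it is only used to estimate push-forwards through the Sobolev backup.
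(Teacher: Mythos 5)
Your proof is correct and follows essentially the same route as the paper's: fix $\varepsilon$-optimal fiberwise couplings, invoke a measurable-selection step so that $c\mapsto\pi_c$ is a probability kernel, mix them into a coupling of $\bar\mu$ and $\bar\nu$, and bound its cost via Tonelli before letting $\varepsilon\to0$. You in fact treat the measurability point more carefully than the paper (which simply assumes the family can be chosen measurably), and the Kantorovich-duality alternative you sketch is a legitimate way to bypass the selection argument altogether.
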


\begin{proof}
\mbox{} \\
\mbox{} \\
\noindent\textbf{Step 1: $\varepsilon$-optimal couplings for each $c$.}\\
Fix $\varepsilon>0$. For each $c\in\Omega$, pick an $\varepsilon$-optimal coupling 
$\pi_c^\varepsilon\in\Pi(\mu_c,\nu_c)$ such that
\[
\int_{X\times X} d(x,y)^p\,\pi_c^\varepsilon(dx,dy)\ \le\ W_p(\mu_c,\nu_c)^p+\varepsilon.
\]

\medskip
\noindent\textbf{Step 2: Measurable selection and mixed coupling.}\\
Assume the couplings $\pi_c^\varepsilon$ can be chosen as a measurable function of $c$ so that the following mixed coupling is well defined
\[
\bar{\pi}^\varepsilon(U) \;:=\; \int_\Omega \pi_c^\varepsilon(U)\,\rho(dc),
\qquad U\subseteq X\times X\ \text{Borel}.
\]
For any measurable $A\subseteq X$,
\[
\bar{\pi}^\varepsilon(A\times X)
=\int_\Omega \pi_c^\varepsilon(A\times X)\,\rho(dc)
=\int_\Omega \mu_c(A)\,\rho(dc)
=\Big(\int_\Omega \mu_c\,\rho(dc)\Big)(A),
\]
and similarly
\[
\bar{\pi}^\varepsilon(X\times A)
=\int_\Omega \pi_c^\varepsilon(X\times A)\,\rho(dc)
=\int_\Omega \nu_c(A)\,\rho(dc)
=\Big(\int_\Omega \nu_c\,\rho(dc)\Big)(A).
\]
Hence $\bar{\pi}^\varepsilon$ has the mixed marginals 
$\int_\Omega \mu_c\,\rho(dc)$ and $\int_\Omega \nu_c\,\rho(dc)$, meaning that
\[
\bar{\pi}^\varepsilon\ \in\ \Pi\!\Big(\int_\Omega \mu_c\,\rho(dc),\ \int_\Omega \nu_c\,\rho(dc)\Big).
\]

\medskip
\noindent\textbf{Step 3: Bound the transport cost of the mixed coupling.}\\
Since $(c,x,y)\mapsto d(x,y)^p$ is nonnegative and measurable, Tonelli’s theorem allows us to exchange the order of integration in $(c,x,y)$:
\begin{align*}
\int_{X\times X} d(x,y)^p\,\bar{\pi}^\varepsilon(dx,dy)
&= \int_{X\times X} d(x,y)^p\ \Big(\int_\Omega \pi_c^\varepsilon(dx,dy)\,\rho(dc)\Big) \\
&= \int_\Omega\Big(\int_{X\times X} d(x,y)^p\,\pi_c^\varepsilon(dx,dy)\Big)\,\rho(dc) \\
&\le \int_\Omega \big(W_p(\mu_c,\nu_c)^p+\varepsilon\big)\,\rho(dc) \\
&= \int_\Omega W_p(\mu_c,\nu_c)^p\,\rho(dc)\ +\ \varepsilon.
\end{align*}

\medskip
\noindent\textbf{Step 4: Take the infimum over couplings and pass to the limit.}\\
By definition of $W_p$,
\[
W_p\!\Big(\int \mu_c\,d\rho,\ \int \nu_c\,d\rho\Big)^p
\ \le\ \int_{X\times X} d(x,y)^p\,\bar{\pi}^\varepsilon(dx,dy)
\ \le\ \int_\Omega W_p(\mu_c,\nu_c)^p\,\rho(dc) + \varepsilon.
\]
Taking $p$th roots and letting $\varepsilon \to 0$ yields
\[
W_p\!\Big(\int_\Omega \mu_c\,\rho(dc),\ \int_\Omega \nu_c\,\rho(dc)\Big)
\ \le\ \Bigg(\int_\Omega W_p(\mu_c,\nu_c)^p\,\rho(dc)\Bigg)^{1/p}.
\]
\end{proof}

\begin{lemma}[Spectral norm of a block‐triangular matrix] \label{lemma:norm_triangular_matrix}
Let 
\(A\in\mathbb{R}^{m\times m}\), 
\(B\in\mathbb{R}^{n\times m}\), 
and 
\(C\in\mathbb{R}^{n\times n}\).  Then
\[
\left\|
\begin{pmatrix}
1 & 0 & 0 \\[4pt]
0 & A & 0   \\[4pt]
0 & B & C
\end{pmatrix}
\right\|_{2}
\;\le\;
\max\{\,1,\|A\|_{2},\|C\|_{2}\}
\;+\;\|B\|_{2}.
\]
\end{lemma}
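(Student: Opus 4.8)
The plan is to split the matrix into its block-diagonal part plus a single off-diagonal block, apply subadditivity of the spectral norm, and then invoke two elementary facts: the operator norm of a block-diagonal matrix is the maximum of the norms of its diagonal blocks, and padding a matrix with zero blocks does not change its operator norm.

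First I would write
\[
M := \begin{pmatrix} 1 & 0 & 0 \\ 0 & A & 0 \\ 0 & B & C \end{pmatrix}
\;=\;
\underbrace{\begin{pmatrix} 1 & 0 & 0 \\ 0 & A & 0 \\ 0 & 0 & C \end{pmatrix}}_{=:D}
\;+\;
\underbrace{\begin{pmatrix} 0 & 0 & 0 \\ 0 & 0 & 0 \\ 0 & B & 0 \end{pmatrix}}_{=:N},
\]
so that $\|M\|_2 \le \|D\|_2 + \|N\|_2$ by the triangle inequality for the spectral norm. It then remains to bound the two summands separately.

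For the diagonal part, partition an arbitrary vector $v\in\mathbb{R}^{1+m+n}$ conformally as $v=(v_0,v_A,v_C)$ with $v_0\in\mathbb{R}$, $v_A\in\mathbb{R}^m$, $v_C\in\mathbb{R}^n$. Then $Dv=(v_0,\,Av_A,\,Cv_C)$, hence
\[
\|Dv\|^2 = |v_0|^2 + \|Av_A\|^2 + \|Cv_C\|^2
\;\le\; \max\{1,\|A\|_2^2,\|C\|_2^2\}\,\bigl(|v_0|^2+\|v_A\|^2+\|v_C\|^2\bigr),
\]
which yields $\|D\|_2 \le \max\{1,\|A\|_2,\|C\|_2\}$. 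For the off-diagonal part, the same partition gives $Nv=(0,0,Bv_A)$, so $\|Nv\| = \|Bv_A\| \le \|B\|_2\|v_A\| \le \|B\|_2\|v\|$, i.e.\ $\|N\|_2 \le \|B\|_2$. Combining the two bounds gives the claimed inequality.

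There is no real obstacle here; the lemma is purely bookkeeping. The only points requiring a moment's care are the conformal partition of $\mathbb{R}^{1+m+n}$ (in particular that the leading block is a scalar) and the two defining-property estimates for the spectral norm used above, both of which are immediate.
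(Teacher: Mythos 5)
Your proposal is correct and follows exactly the same route as the paper's proof: the identical splitting $M=D+N$, the triangle inequality for the spectral norm, and the two block norm facts $\|D\|_2=\max\{1,\|A\|_2,\|C\|_2\}$ and $\|N\|_2=\|B\|_2$ (which you verify explicitly via the conformal partition, where the paper simply cites them). No gaps.
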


\begin{proof}
Split
\[
M \;=\;
\begin{pmatrix}
1 & 0 & 0 \\[4pt]
0 & A & 0   \\[4pt]
0 & B & C
\end{pmatrix}
\;=\;
\underbrace{\begin{pmatrix}
1 & 0 & 0 \\[4pt]
0 & A & 0   \\[4pt]
0 & 0 & C
\end{pmatrix}}_{D}
\;+\;
\underbrace{\begin{pmatrix}
0 & 0 & 0 \\[4pt]
0 & 0 & 0 \\[4pt]
0 & B & 0
\end{pmatrix}}_{E}.
\]
By the triangle inequality for the spectral norm,
\[
\|M\|_{2}\;\le\;\|D\|_{2}+\|E\|_{2}.
\]
Since \(D\) is block‐diagonal,
\(\|D\|_{2}=\max\{1,\|A\|_{2},\|C\|_{2}\}\),
and \(E\) has only the single nonzero block \(B\), so
\(\|E\|_{2}=\|B\|_{2}\).
Substitution gives the claimed bound.
\end{proof}

\begin{theorem}[Supremum-\(p\)–Wasserstein contraction of the \textbf{complete} Sobolev Bellman operator]\label{thm:sobolev-contraction}
Let
\[
T_{\pi}^{S_{s,a}}\colon
\bigl(\mathcal{S}\times\mathcal{A}\to\mathcal{P}(\mathbb{R}^{1+m+n})\bigr)
\;\longrightarrow\;
\bigl(\mathcal{S}\times\mathcal{A}\to\mathcal{P}(\mathbb{R}^{1+m+n})\bigr),
\]
be the complete Sobolev Bellman operator bootstrapping the vector \((Z,\partial_{a}Z,\partial_{s}Z)\).

The distributional Sobolev Bellman operators are defined by the reparameterized policy 
\(\pi(s';\varepsilon_\pi)\), the transition \(f(s,a;\varepsilon_f)\) and reward \(r(s,a;\varepsilon_r)\).  By assumptions from Section \ref{sec:wasserstein_preamble_and_assumptions} we write
\[
\|\mathcal L\|_{d}
=\sup_{s,a}\sup_{\varepsilon_f,\varepsilon_\pi}
\bigl\|\mathcal L(s,a;\varepsilon_f,\varepsilon_\pi)\bigr\|_{2}
\;\le\;
\gamma\,
\bigl(\max\{1,\;L_{f,a}L_{\pi},\;L_{f,s}\}
       +L_{f,s}L_{\pi}\bigr)
\;=\;
\gamma\,\kappa_{\mathrm{full}},
\]
then for any two Sobolev return-distribution functions \(\eta_{1},\eta_{2}\),
\[
\boxed{%
\;
\bar W_{p}\!\bigl(T_\pi^{S_{s,a}}\eta_{1},\,T_\pi^{S_{s,a}}\eta_{2}\bigr)
\;\le\;
\|\mathcal L\|_{d}\;\bar W_{p}\!\bigl(\eta_{1},\,\eta_{2}\bigr)
\;\le\;
\gamma\,\kappa_{\mathrm{full}}\;\bar W_{p}\!\bigl(\eta_{1},\,\eta_{2}\bigr).
\;}
\]

In particular, \(T_\pi^{S_{s,a}}\) is a \(\|\mathcal L\|_{d}\)–contraction whenever \(\|\mathcal L\|_{d}<1\), and a sufficient condition for this is \(\gamma\,\kappa_{\mathrm{full}}<1\).

\end{theorem}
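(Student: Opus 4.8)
The plan is to reduce the contraction to three ingredients already available in the excerpt: the affine push-forward form of the complete operator (Eq.~\ref{eq:distributional_sobolev_pathwise_complete}), the single-map push-forward contraction of Lemma~\ref{lem:affine-pushforward-general}, and the conditional mixture non-expansion of Lemma~\ref{lem:mixture_conditional}. First I would note that, by Eq.~\ref{eq:distributional_sobolev_pathwise_complete}, for every $(s,a)$ the backup is the mixture over the exogenous noise $c=(\varepsilon_r,\varepsilon_f,\varepsilon_\pi)$ of the affine push-forwards $x\mapsto b_{\mathrm{full}}(s,a;\varepsilon_r)+\mathcal L_{\mathrm{full}}(s,a;\varepsilon_f,\varepsilon_\pi)[x]$ of $\eta(s',a')$, where $s'=f(s,a;\varepsilon_f)$ and $a'=\pi(s';\varepsilon_\pi)$; unlike the incomplete case this is a \emph{genuine} affine push-forward because $(Z,\partial_aZ,\partial_sZ)$ closes under the backup, so the map $\Phi_{s,a}(\cdot;c)=b_{\mathrm{full}}+\mathcal L_{\mathrm{full}}[\cdot]$ is well defined on the full vector. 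The key structural fact is that $f,\pi,r$ are shared by $\eta_1$ and $\eta_2$, so for each fixed $c$ both backups land at the same $(s',a')$ and apply the same affine map.

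Next I would fix $(s,a)$ and condition on $c=(\varepsilon_r,\varepsilon_f,\varepsilon_\pi)$. For fixed $c$ the two conditional laws are $(\Phi_{s,a}(\cdot;c))_{\#}\eta_i(s'_c,a'_c)$ with common linear part $\mathcal L_{\mathrm{full}}(s,a;\varepsilon_f,\varepsilon_\pi)$, so Lemma~\ref{lem:affine-pushforward-general} gives
\[
W_p\bigl((\Phi_{s,a}(\cdot;c))_{\#}\eta_1(s'_c,a'_c),(\Phi_{s,a}(\cdot;c))_{\#}\eta_2(s'_c,a'_c)\bigr)\le\|\mathcal L_{\mathrm{full}}(s,a;\varepsilon_f,\varepsilon_\pi)\|_2\,W_p\bigl(\eta_1(s'_c,a'_c),\eta_2(s'_c,a'_c)\bigr).
\]
Bounding the operator norm by the uniform constant $\|\mathcal L\|_d$ and the Wasserstein term by $\bar W_p(\eta_1,\eta_2)$, the right-hand side is at most $\|\mathcal L\|_d\,\bar W_p(\eta_1,\eta_2)$ uniformly in $c$. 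Lemma~\ref{lem:mixture_conditional}, applied with conditioning variable $C=(\varepsilon_r,\varepsilon_f,\varepsilon_\pi)$ and $Z_i=\Phi_{s,a}(X'_i;C)$ for $X'_i\sim\eta_i(s'_C,a'_C)$, then transfers the bound to the marginals, i.e. $W_p\bigl((T_\pi^{S_{s,a}}\eta_1)(s,a),(T_\pi^{S_{s,a}}\eta_2)(s,a)\bigr)\le\|\mathcal L\|_d\,\bar W_p(\eta_1,\eta_2)$; taking $\sup_{(s,a)}$ yields the first inequality of the theorem.

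Finally I would bound $\|\mathcal L\|_d$ itself. Writing $\mathcal L_{\mathrm{full}}=\gamma M$ with $M$ the block-triangular matrix of Appendix~\ref{appendix:complete_sobolev} (diagonal blocks $1$, $f_a^{\top}\pi_s^{\top}$, $f_s^{\top}$ and lower-left block $f_s^{\top}\pi_s^{\top}$), Lemma~\ref{lemma:norm_triangular_matrix} gives $\|M\|_2\le\max\{1,\|f_a^{\top}\pi_s^{\top}\|_2,\|f_s^{\top}\|_2\}+\|f_s^{\top}\pi_s^{\top}\|_2$; submultiplicativity of the operator norm, $\|A^{\top}\|_2=\|A\|_2$, and the Jacobian bounds of Section~\ref{sec:wasserstein_preamble_and_assumptions} bound these three quantities by $L_{f,a}L_\pi$, $L_{f,s}$, $L_{f,s}L_\pi$, so $\|\mathcal L_{\mathrm{full}}(s,a;\varepsilon_f,\varepsilon_\pi)\|_2\le\gamma\bigl(\max\{1,L_{f,a}L_\pi,L_{f,s}\}+L_{f,s}L_\pi\bigr)=\gamma\kappa_{\mathrm{full}}$; taking the sup over $(s,a),\varepsilon_f,\varepsilon_\pi$ gives $\|\mathcal L\|_d\le\gamma\kappa_{\mathrm{full}}$, the second inequality, and strict contraction (hence a unique fixed point via Banach's theorem) follows whenever $\|\mathcal L\|_d<1$, a fortiori when $\gamma\kappa_{\mathrm{full}}<1$. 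I do not expect a serious obstacle; the only points needing care are checking that the complete backup really is an honest affine push-forward closing on the full vector (so $\Phi_{s,a}$ is well defined, unlike the pseudo-affine incomplete case) and verifying the measurable-selection hypothesis needed for the mixture lemma, both of which are handled by the $C^1_b$/reparameterization assumptions and the joint-measurability remark already recorded in the appendix.
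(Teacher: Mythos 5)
Your proposal is correct and follows essentially the same route as the paper's proof: fix $(s,a)$, write the backup as an affine push-forward via the law identity, condition on the exogenous noise and apply Lemma~\ref{lem:affine-pushforward-general}, bound uniformly and pass to the marginal with Lemma~\ref{lem:mixture_conditional}, take the supremum over $(s,a)$, and finally bound $\|\mathcal L\|_d$ by $\gamma\,\kappa_{\mathrm{full}}$ using the block-triangular norm estimate of Lemma~\ref{lemma:norm_triangular_matrix} together with the Jacobian bounds. Your additional remarks on the genuine (non-pseudo) affine closure of the complete vector and on measurability are consistent with what the paper records in its appendix.
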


\begin{proof}
We show that the Sobolev Bellman map $T_\pi^{S_{s,a}}$ is a $\|\mathcal L\|_{d}$--contraction in the
supremum--$p$--Wasserstein metric.

By definition we have,
\[
\bar W_{p}\!\bigl(T_\pi^{S_{s,a}}\eta_{1},\,T_\pi^{S_{s,a}}\eta_{2}\bigr)
=\sup_{s,a}
  W_{p}\!\bigl(T_\pi^{S_{s,a}}\eta_{1}(s,a),\,T_\pi^{S_{s,a}}\eta_{2}(s,a)\bigr).
\]

Let's fix an arbitrary pair $(s,a)$. Then we draw the \emph{same} exogenous noises for both updates
\[
\varepsilon_r\sim p(\varepsilon_r),\quad
\varepsilon_f\sim p(\varepsilon_f),\quad
\varepsilon_\pi\sim p(\varepsilon_\pi),
\]
set
\[
r=r(s,a;\varepsilon_r),\quad
s' = f(s,a;\varepsilon_f),\quad
a' = \pi(s';\varepsilon_\pi),
\]
and sample
\[
X'_1\sim\eta_{1}^{S_{s,a}}(s',a'),\qquad
X'_2\sim\eta_{2}^{S_{s,a}}(s',a').
\]

Define the random affine map
\[
\Phi_{s,a}\bigl(x;\,\varepsilon_r,\varepsilon_f,\varepsilon_\pi\bigr)
\;=\;
b\bigl(s,a;\varepsilon_r\bigr)
\;+\;
\mathcal{L}\bigl(s,a;\varepsilon_f,\varepsilon_\pi\bigr)[\,x\,],
\]
\[\text{where }
b=\begin{pmatrix}r\\[2pt]\partial_{a}r\\[2pt]\partial_{s}r\end{pmatrix},
\;\;
\mathcal{L}=\gamma
\begin{pmatrix}
1 & 0 & 0 \\[4pt]
0 & f_{a}^{\top}\pi_{s}^{\top} & 0 \\[4pt]
0 & f_{s}^{\top}\pi_{s}^{\top} & f_{s}^{\top}
\end{pmatrix}\]
as in Equation~\ref{eq:complete_bellman_update}.

By Lemma~\ref{lem:affine_pushforward_law}, we have 
\[
T_\pi^{S_{s,a}}\eta_{i}(s,a)=\mathrm{Law}\!\bigl(\Phi_{s,a}(X'_i)\bigr),
\qquad i=1,2,
\]
so
\[
W_{p}\!\bigl(T_\pi^{S_{s,a}}\eta_{1}(s,a),\,T_\pi^{S_{s,a}}\eta_{2}(s,a)\bigr)
=
W_{p}\!\bigl(\mathrm{Law}(\Phi_{s,a}(X'_1)),\,\mathrm{Law}(\Phi_{s,a}(X'_2))\bigr).
\]

Since \(\mathrm{Law}(X'_i)=\eta_{i}^{S_{s,a}}(s',a')\), for every \((s',a')\),
\[
W_{p}\!\bigl(\mathrm{Law}(X'_1),\,\mathrm{Law}(X'_2)\bigr)
= W_{p}\!\bigl(\eta_{1}^{S_{s,a}}(s',a'),\,\eta_{2}^{S_{s,a}}(s',a')\bigr)
\;\le\; \bar W_{p}\!\bigl(\eta_{1},\,\eta_{2}\bigr).
\]

Condition on the full noise triple \((\varepsilon_r,\varepsilon_f,\varepsilon_\pi)\), 
so that \(\Phi_{s,a}\) is a deterministic affine map. By Lemma~\ref{lem:affine-pushforward-general},
\begin{align*}
&W_{p}\!\bigl(\mathrm{Law}(\Phi_{s,a}(X'_1)\mid\varepsilon_r,\varepsilon_f,\varepsilon_\pi),\,
               \mathrm{Law}(\Phi_{s,a}(X'_2)\mid\varepsilon_r,\varepsilon_f,\varepsilon_\pi)\bigr)\\
&\qquad\le\;
\|\mathcal L(s,a;\varepsilon_f,\varepsilon_\pi)\|_{d}\;
W_{p}\!\bigl(\eta_{1}^{S_{s,a}}(s',a'),\,\eta_{2}^{S_{s,a}}(s',a')\bigr)\\
&\qquad\le\;
\|\mathcal L(s,a;\varepsilon_f,\varepsilon_\pi)\|_{d}\;\bar W_{p}\!\bigl(\eta_{1},\,\eta_{2}\bigr).
\end{align*}

Taking the supremum over \(\varepsilon_f,\varepsilon_\pi\) and then applying Lemma~\ref{lem:mixture_conditional} yields
\[
W_{p}\!\bigl(\mathrm{Law}(\Phi_{s,a}(X'_1)),\,\mathrm{Law}(\Phi_{s,a}(X'_2))\bigr)
\;\le\;
\sup_{\varepsilon_f,\varepsilon_\pi}\|\mathcal L(s,a;\varepsilon_f,\varepsilon_\pi)\|_{d}\;\bar W_{p}\!\bigl(\eta_{1},\,\eta_{2}\bigr).
\]

Using \(T_\pi^{S_{s,a}}\eta_i(s,a)=\mathrm{Law}(\Phi_{s,a}(X'_i))\), we conclude
\[
\begin{aligned}
W_{p}\bigl(T_\pi^{S_{s,a}}\eta_{1}(s,a),\,T_\pi^{S_{s,a}}\eta_{2}(s,a)\bigr)
&= W_{p}\bigl(\mathrm{Law}(\Phi_{s,a}(X'_1)),\,\mathrm{Law}(\Phi_{s,a}(X'_2))\bigr)\\
&\le \Bigl(\sup_{\varepsilon_f,\varepsilon_\pi}
          \|\mathcal L(s,a;\varepsilon_f,\varepsilon_\pi)\|_{d}\Bigr)\;\bar W_{p}\!\bigl(\eta_{1},\,\eta_{2}\bigr).
\end{aligned}
\]
Finally, taking the supremum over \((s,a)\) yields
\[
\begin{aligned}
\bar W_{p}\bigl(T_\pi^{S_{s,a}}\eta_{1},\,T_\pi^{S_{s,a}}\eta_{2}\bigr)
&= \sup_{s,a}
    W_{p}\bigl(T_\pi^{S_{s,a}}\eta_{1}(s,a),\,T_\pi^{S_{s,a}}\eta_{2}(s,a)\bigr)\\
&\le
    \sup_{s,a}\Bigl(\sup_{\varepsilon_f,\varepsilon_\pi}
      \|\mathcal L(s,a;\varepsilon_f,\varepsilon_\pi)\|_{d}\Bigr)\;\bar W_{p}\!\bigl(\eta_{1},\,\eta_{2}\bigr)\\
&=
    \|\mathcal L\|_{d}\;\bar W_{p}\!\bigl(\eta_{1},\,\eta_{2}\bigr)
\end{aligned}
\]
so \(T_\pi^{S_{s,a}}\) is a contraction with coefficient \(\|\mathcal L\|_{d}\).  

By Section~\ref{sec:wasserstein_preamble_and_assumptions}, we have
\[
\|f_{a}(s,a;\varepsilon_f)\|_{2}\le L_{f,a},\quad
\|f_{s}(s,a;\varepsilon_f)\|_{2}\le L_{f,s},\quad
\|\pi_{s}(s';\varepsilon_\pi)\|_{2}\le L_{\pi}.
\]

Hence for each \((s,a,\varepsilon_f,\varepsilon_\pi)\) and applying Lemma \ref{lemma:norm_triangular_matrix} we have
\begin{align*}
\bigl\|\mathcal L(s,a;\varepsilon_f,\varepsilon_\pi)\bigr\|_{2}
  &=\;
     \gamma\!
     \left\|
     \begin{pmatrix}
       1 & 0 & 0 \\[4pt]
       0 & f_{a}^{\top}\pi_{s}^{\top} & 0 \\[4pt]
       0 & f_{s}^{\top}\pi_{s}^{\top} & f_{s}^{\top}
     \end{pmatrix}
     \right\|_{2} \\[6pt]
  &\le\;
     \gamma\Bigl(
       \underbrace{\max\!\bigl\{\,1,\;\|f_{a}^{\top}\pi_{s}^{\top}\|_{2},\;\|f_{s}^{\top}\|_{2}\bigr\}}_{\text{diagonal blocks}}
       \;+\;
       \underbrace{\|f_{s}^{\top}\pi_{s}^{\top}\|_{2}}_{\text{off-diagonal block $B$}}
     \Bigr) \\[6pt]
  &\le\;
     \gamma\Bigl(
       \max\!\bigl\{\,1,\;\|f_{a}\|_{2}\|\pi_{s}\|_{2},\;\|f_{s}\|_{2}\bigr\}
       +\|f_{s}\|_{2}\|\pi_{s}\|_{2}
     \Bigr) \\[6pt]
  &\le\;
     \gamma\Bigl(
       \max\{1,\;L_{f,a}L_{\pi},\;L_{f,s}\}
       +L_{f,s}L_{\pi}
     \Bigr)
     \;=\;\gamma\,\kappa_{\mathrm{full}}.
\end{align*}

Since the last inequality holds for every choice of \((s,a,\varepsilon_f,\varepsilon_\pi)\),
taking the supremum gives
\[
\|\mathcal L\|_{d}
=\sup_{s,a,\varepsilon_f,\varepsilon_\pi}
\bigl\|\mathcal L(s,a;\varepsilon_f,\varepsilon_\pi)\bigr\|_{2}
\;\le\;
\gamma\,\kappa_{\mathrm{full}}.
\]

\end{proof}

\begin{lemma}[Fixed‐point law of the \textbf{complete} Sobolev Bellman operator]\label{lem:sobolev_complete_fp}
Define the infinite‐horizon return and its full action‐ and state‐gradients under policy~\(\pi\) by
\[
Z(s,a)=\sum_{t=0}^\infty \gamma^t\,r_t,
\qquad
G^a(s,a)=\partial_a \sum_{t=0}^\infty \gamma^t\,r_t,
\qquad
G^s(s,a)=\partial_s \sum_{t=0}^\infty \gamma^t\,r_t,
\]
and let 
\[
\eta^\pi(s,a)
=\mathrm{Law}\!\bigl(Z(s,a),\,G^a(s,a),\,G^s(s,a)\bigr).
\]
Then \(\eta^\pi\) is a fixed point of the complete Sobolev Bellman operator:
\[
\boxed{%
T_\pi^{S_{s,a}}\,\eta^\pi \;=\;\eta^\pi.
}
\]
\end{lemma}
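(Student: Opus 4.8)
The plan is to verify the fixed-point identity \emph{pathwise} — one noise realization at a time — and then pass to laws using the affine-pushforward characterization of $T_\pi^{S_{s,a}}$. First I would set up the reparameterized trajectory from $(s,a)$: put $s_0=s$, $a_0=a$, and generate $r_t=r(s_t,a_t;\varepsilon_r^{(t)})$, $s_{t+1}=f(s_t,a_t;\varepsilon_f^{(t)})$, $a_{t+1}=\pi(s_{t+1};\varepsilon_\pi^{(t)})$ with all the $\varepsilon^{(t)}$ i.i.d.\ across time and independent across streams, exactly as in Section~\ref{sec:wasserstein_preamble_and_assumptions}. Viewing $Z(s,a)=\sum_{t\ge0}\gamma^t r_t$ as a function of $(s,a)$ and of this noise, the assumed Jacobian bounds on $f,r,\pi$ together with $\gamma<1$ make the formally differentiated series $\sum_{t\ge0}\gamma^t\,\partial_{(s,a)}r_t$ converge uniformly; hence $Z(\cdot,\cdot)$ is $C^1_b$ and its gradients $G^a=\partial_a Z$, $G^s=\partial_s Z$ are the term-by-term derivatives. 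This interchange of $\partial$ and $\sum$ is the one analytic point that needs a genuine (if routine) justification.

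Next I would record the one-step identity $Z(s,a)=r(s,a;\varepsilon_r^{(0)})+\gamma\,Z(s_1,a_1)$ and differentiate it in $a$ and in $s$. The chain rule through $s_1=f(s,a;\varepsilon_f^{(0)})$ and $a_1=\pi(s_1;\varepsilon_\pi^{(0)})$ — precisely the computation in the subsection \emph{Derivation of the action-gradient term}, carried out for \emph{both} the $a$- and $s$-derivatives — shows that the stacked random vector $W(s,a):=\bigl(Z(s,a),G^a(s,a),G^s(s,a)\bigr)$ satisfies, for every fixed noise realization,
\[
W(s,a)\;=\;b(s,a;\varepsilon_r^{(0)})\;+\;\mathcal L_{\mathrm{full}}(s,a;\varepsilon_f^{(0)},\varepsilon_\pi^{(0)})\,\bigl[W(s_1,a_1)\bigr],
\]
with $b$ and $\mathcal L_{\mathrm{full}}$ the exact objects appearing in Eq.~\ref{eq:complete_bellman_update}. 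Checking that the chain-rule output reassembles into that specific block-triangular $\mathcal L_{\mathrm{full}}$ (return row $\gamma\cdot$, action-gradient and state-gradient rows mixing $f_a^\top\pi_s^\top$, $f_s^\top\pi_s^\top$, $f_s^\top$) is the main bookkeeping obstacle; everything around it is formal.

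Finally I would take $\mathrm{Law}$ of both sides. The left-hand side is $\eta^\pi(s,a)$ by definition. On the right, the tail vector $W(s_1,a_1)$ is built only from noise at times $t\ge1$, hence is independent of $(\varepsilon_r^{(0)},\varepsilon_f^{(0)},\varepsilon_\pi^{(0)})$, and conditionally on $(s_1,a_1)$ its law is $\eta^\pi(s_1,a_1)$ by the Markov structure of the construction; so $W(s_1,a_1)$ plays exactly the role of the variable $X'\sim\eta^\pi(s',a')$ in the pathwise definition of $T_\pi^{S_{s,a}}$ (Eq.~\ref{eq:distributional_sobolev_pathwise_complete}, the complete-operator analogue of Lemma~\ref{lem:affine_pushforward_law}). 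Therefore the law of the right-hand side is $\bigl(T_\pi^{S_{s,a}}\eta^\pi\bigr)(s,a)$, and since $(s,a)$ was arbitrary we conclude $T_\pi^{S_{s,a}}\eta^\pi=\eta^\pi$. Measurability of $(s,a,\varepsilon)\mapsto\partial_{(s,a)}Z$, needed so these laws are well defined, is the Carath\'eodory argument already invoked in Appendix~\ref{appendix:gradient_random_variable}.
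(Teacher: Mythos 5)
Your proposal is correct and follows essentially the same route as the paper: differentiate the one-step Bellman recursion pathwise to obtain the stacked affine identity with $b_{\mathrm{full}}$ and $\mathcal L_{\mathrm{full}}$ from Eq.~\ref{eq:complete_bellman_update}, then pass to laws via the affine-pushforward characterization (Lemma~\ref{lem:affine_pushforward_law}, i.e.\ the pathwise form in Eq.~\ref{eq:distributional_sobolev_pathwise_complete}), using that the tail vector depends only on noise from time $1$ onward. The one small caveat is your claim that the Jacobian bounds together with $\gamma<1$ alone give uniform convergence of the term-by-term differentiated series: the $t$-th gradient term can grow like $\kappa^t$ in the Jacobian bounds, so this really needs a condition of the form $\gamma\,\kappa<1$ — or one simply invokes the paper's standing $C^1_b$ reparameterization assumption from Section~\ref{sec:wasserstein_preamble_and_assumptions}, which is how the paper sidesteps the interchange entirely.
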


\begin{proof}
Recalling the one‐step affine update  
\[
\Phi_{s,a}(x;\,\varepsilon_r,\varepsilon_f,\varepsilon_\pi)
=
b_{\mathrm{full}}(s,a;\varepsilon_r)
+
\mathcal{L}_{\mathrm{full}}(s,a;\varepsilon_f,\varepsilon_\pi)[\,x\,],
\]
the Bellman recursion and its derivatives combine to  
\[
\bigl(Z(s,a),\,G^a(s,a),\,G^s(s,a)\bigr)
=
\Phi_{s,a}\bigl(Z(s',a'),\,G^a(s',a'),\,G^s(s',a')\bigr).
\]
By Lemma~\ref{lem:affine_pushforward_law} we have
\begin{align*}
\mathrm{Law}\bigl(Z(s,a),G^a(s,a),G^s(s,a)\bigr)
&=
\mathrm{Law}\Bigl(\Phi_{s,a}\bigl(Z(s',a'),G^a(s',a'),G^s(s',a')\bigr)\Bigr)\\
&=
\bigl(T_\pi^{S_{s,a}}\,\eta^\pi\bigr)(s,a).
\end{align*}
Since this holds for every \((s,a)\), we conclude \(T_\pi^{S_{s,a}}\eta^\pi=\eta^\pi\).  
\end{proof}

\begin{theorem}[Supremum-\(p\)–Wasserstein contraction of the \textbf{incomplete} Sobolev Bellman operator]
\label{thm:sobolev_incomplete_contraction}
Let 
\[
T_\pi^{S_a}:
\bigl(\mathcal S\times\mathcal A\to\mathcal P(\mathbb{R}^{1+m})\bigr)
\;\longrightarrow\;
\bigl(\mathcal S\times\mathcal A\to\mathcal P(\mathbb{R}^{1+m})\bigr)
\]
be the action–gradient Sobolev Bellman operator updating only
\(\,H(s,a)=(Z(s,a),\partial_aZ(s,a))\).  

Let \(p \geq 1\). Fix any two return–gradient laws
\(\eta_{1},\eta_{2}:\mathcal S\times\mathcal A\to\mathcal P(\mathbb{R}^{1+m})\).
Assume
\[
\|f_a\|_2\le L_{f,a},
\quad
\|\pi_s\|_2\le L_\pi.
\]

\medskip
\noindent\textbf{(Reparameterized pathwise gradients and coupled lifting).}
For each \((s',a')\), assume there exist random variables
\[
(H_1',G_1')\in\mathbb R^{1+m}\times\mathbb R^{n},
\qquad
(H_2',G_2')\in\mathbb R^{1+m}\times\mathbb R^{n},
\]
defined on a common probability space, such that \(H_i'=(Z_i',\partial_a Z_i')\sim \eta_i(s',a')\) and
\(G_i'=\partial_s Z_i'(s',a')\) is the pathwise (reparameterization) derivative as in
Appendix~\ref{appendix:gradient_random_variable} and is induced by our distribution parametrization
(Appendix~\ref{appendix:distribution_parametrization}). Moreover, assume that there exists an \emph{optimal}
coupling \(\zeta^*\in\Pi(\eta_1(s',a'),\eta_2(s',a'))\) such that \((H_1',H_2')\sim\zeta^*\) and the joint
law of \((H_1',G_1',H_2',G_2')\) is compatible with \(\zeta^*\), meaning that its \((H_1',H_2')\)-marginal equals \(\zeta^*\) and it couples the corresponding pathwise derivatives \(G_1',G_2'\).

Assume there is a constant \(L_{\mathcal D_s}\ge0\) such that for every \((s',a')\) and for the above coupling,
\begin{equation}
\label{eq:coupling_identity}
\Bigl(\E\|G_1'-G_2'\|_2^p\Bigr)^{1/p}
\;\le\;
L_{\mathcal D_s}\,
\Bigl(\E\|H_1'-H_2'\|_2^p\Bigr)^{1/p}.
\end{equation}

Set
\[
\kappa_{\mathrm{eff}}
=\max\{1,\,L_{f,a}L_\pi\}+L_{f,a}\,L_{\mathcal D_s}.
\]
Then
\[
\boxed{%
\;
\bar W_{p}\bigl(T_\pi^{S_a}\eta_1,\;T_\pi^{S_a}\eta_2\bigr)
\;\le\;
\gamma\,\kappa_{\mathrm{eff}}\,
\bar W_{p}(\eta_1,\eta_2).\;}
\]
If \(\gamma\,\kappa_{\mathrm{eff}}<1\), \(T_\pi^{S_a}\) is a strict contraction.
\end{theorem}

\begin{proof}
All vector norms \(\|\cdot\|\) are Euclidean norms, and operator norms are the corresponding induced norms. All \(W_p\) distances are taken with the metric \(d(x,y)=\|x-y\|\).

Fix \((s,a)\) and draw one sample \((\varepsilon_r,\varepsilon_f,\varepsilon_\pi)\). Set \(c=(\varepsilon_r,\varepsilon_f,\varepsilon_\pi)\) and write
\[
s' = f(s,a;\varepsilon_f),\quad
a' = \pi(s';\varepsilon_\pi),
\]
and define the (pseudo-)affine update acting on the extended pair \((H',G')\) by
\[
\Phi(H',G')
=
b(s,a;\varepsilon_r)
+ A(s,a;\varepsilon_f,\varepsilon_\pi)\,H'
+ N(s,a;\varepsilon_f)\,G'.
\]

For this fixed noise draw \(c\), let \(T_{\pi,c}^{S_a}\) denote the corresponding one-step update, so that
\((T_{\pi,c}^{S_a}\eta)(s,a)=\mathrm{Law}\bigl(\Phi(H',G')\mid c\bigr)\), 
where the remaining randomness is only that of the next-step draw \(H'\sim\eta(s',a')\) and its attached pathwise gradient \(G'\).

By the uniform Jacobian bounds of Section~\ref{sec:wasserstein_preamble_and_assumptions},
\[
\sup_{s,a,\varepsilon_f}\|f_a(s,a;\varepsilon_f)\|\;\le\;L_{f,a},
\qquad
\sup_{s',\varepsilon_\pi}\|\pi_s(s';\varepsilon_\pi)\|\;\le\;L_\pi,
\]
so
\[
\|A(s,a;\varepsilon_f,\varepsilon_\pi)\|_2
=\gamma\max\!\bigl\{1,\;\|f_a(s,a;\varepsilon_f)\|_2\,\|\pi_s(s';\varepsilon_\pi)\|_2\bigr\}
\le\gamma\max\{1,\;L_{f,a}L_\pi\},
\]
and similarly
\[
\|N(s,a;\varepsilon_f)\|_2
=\gamma\,\|f_a(s,a;\varepsilon_f)\|_2
\le\gamma\,L_{f,a}.
\]

Let \((H_1',G_1',H_2',G_2')\) be coupled as in the theorem assumptions, so that \((H_1',H_2')\sim\zeta^*\) is an
optimal coupling of \(\eta_1(s',a')\) and \(\eta_2(s',a')\). Write \(H_i'=X_i'\in\mathbb R^{1+m}\).
Then
\[
\begin{aligned}
\|\Phi(X_1',G_1')-\Phi(X_2',G_2')\|
&= \bigl\|A(X_1'-X_2') + N(G_1'-G_2')\bigr\|
  &\quad(\text{definition of }\Phi)\\
&\le \|A(X_1'-X_2')\| + \|N(G_1'-G_2')\|
  &\quad(\text{triangle inequality})\\
&\le \|A\|\,\|X_1'-X_2'\| + \|N\|\,\|G_1'-G_2'\|
  &\quad(\text{operator-norm bound}).
\end{aligned}
\]
Taking \(p\)th powers, expectations, and \(p\)th roots, and using Minkowski’s inequality yields
\begin{align*}
&\Bigl(\E\|\Phi(X_1',G_1')-\Phi(X_2',G_2')\|^p\Bigr)^{1/p}\\
&\qquad\le\;
\Bigl(\E\bigl(\|A\|\,\|X_1'-X_2'\|+\|N\|\,\|G_1'-G_2'\|\bigr)^p\Bigr)^{1/p}
\quad(\text{raise to $p$, take $\E$, take $p$th root})\\
&\qquad\le\;
\|A\|\Bigl(\E\|X_1'-X_2'\|^p\Bigr)^{1/p}
\;+\;
\|N\|\Bigl(\E\|G_1'-G_2'\|^p\Bigr)^{1/p}
\quad(\text{Minkowski inequality}).
\end{align*}
Invoking the derivative–coupling bound Equation~\ref{eq:coupling_identity} gives
\[
\Bigl(\E\|\Phi(X_1',G_1')-\Phi(X_2',G_2')\|^p\Bigr)^{1/p}
\le
\bigl(\|A\|+\|N\|\,L_{\mathcal D_s}\bigr)
\Bigl(\E\|X_1'-X_2'\|^p\Bigr)^{1/p}.
\]
Since \((X_1',X_2')\sim\zeta^*\) is optimal,
\[
\Bigl(\E\|X_1'-X_2'\|^p\Bigr)^{1/p}
=
W_p\!\bigl(\eta_1(s',a'),\eta_2(s',a')\bigr).
\]
Define \((U,V):=\bigl(\Phi(X_1',G_1'),\Phi(X_2',G_2')\bigr)\). Then \(\mathrm{Law}(U,V)\) is an admissible coupling of the
corresponding one-step updated laws at noise draw \(c\), hence
\[
W_p\!\bigl(T_{\pi,c}^{S_a}\eta_1(s,a),\,T_{\pi,c}^{S_a}\eta_2(s,a)\bigr)
\le
\Bigl(\E\|U-V\|^p\Bigr)^{1/p}.
\]
Combining the last inequalities and substituting the bounds on \(\|A\|\) and \(\|N\|\) gives, for this fixed noise draw \(c\),
\begin{align*}
W_p\!\bigl(T_{\pi,c}^{S_a}\eta_1(s,a),\,T_{\pi,c}^{S_a}\eta_2(s,a)\bigr)
&\le
\gamma\bigl(\max\{1,L_{f,a}L_\pi\}+L_{f,a}L_{\mathcal D_s}\bigr)\,
W_p\!\bigl(\eta_1(s',a'),\eta_2(s',a')\bigr)\\
&=
\gamma\,\kappa_{\mathrm{eff}}\;
W_p\!\bigl(\eta_1(s',a'),\eta_2(s',a')\bigr).
\end{align*}

By Lemma~\ref{prop:wp-mix-integral-conditional} (mixture \(p\)-convexity), and using that
\(W_p(\eta_1(s',a'),\eta_2(s',a'))\le \bar W_p(\eta_1,\eta_2)\),
\[
\begin{aligned}
W_p^p\!\bigl(T_\pi^{S_a}\eta_1(s,a),\,T_\pi^{S_a}\eta_2(s,a)\bigr)
&\le \E_c\Bigl[\,W_p^p\!\bigl(T_{\pi,c}^{S_a}\eta_1(s,a),\,T_{\pi,c}^{S_a}\eta_2(s,a)\bigr)\Bigr]\\
&\le (\gamma\,\kappa_{\mathrm{eff}})^p\;
      \E_c\Bigl[\,W_p^p\!\bigl(\eta_1(s',a'),\,\eta_2(s',a')\bigr)\Bigr]\\
&\le (\gamma\,\kappa_{\mathrm{eff}})^p\,\bar W_p^p(\eta_1,\eta_2).
\end{aligned}
\]
Taking \(p\)th roots and then the supremum over \((s,a)\) yields the claim.
\end{proof}

\begin{lemma}[Fixed‐point law of the \textbf{incomplete} Sobolev Bellman operator]\label{lem:sobolev_full_fp}
Define the infinite-horizon return and its full action-gradient under policy~\(\pi\) by
\[
Z(s,a)=\sum_{t=0}^\infty \gamma^t\,r_t,
\qquad
G(s,a)=\partial_a \sum_{t=0}^\infty \gamma^t\,r_t,
\]
and let 
\(\eta^\pi(s,a)=\operatorname{Law}\!\bigl(Z(s,a),\,G(s,a)\bigr)\).  Then \(\eta^\pi\) is a fixed point of the Sobolev Bellman operator:
\[\boxed{%
\;
T_\pi^{S_a}\,\eta^\pi \;=\;\eta^\pi.\;}
\]
\end{lemma}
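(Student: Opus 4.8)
The plan is to mirror the argument for Lemma~\ref{lem:sobolev_complete_fp}, the only new wrinkle being that the incomplete operator is merely \emph{pseudo}-affine: its linear part $\mathcal L = A + N\,\mathcal D_s$ (see Eq.~\eqref{eq:sobolev_affine_update_dist}) reconstructs the next-step state-gradient $\partial_s Z(s',a')$ by differentiating the return component rather than reading it off a stored coordinate. So I first have to make sure that, evaluated at the \emph{true} return law, this reconstruction returns the genuine object.

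First I would recall that under the $C^1_b$ reparameterization of Section~\ref{sec:wasserstein_preamble_and_assumptions}, a single exogenous seed $\varepsilon_Z$ produces an entire sample path $(s,a)\mapsto Z(s,a;\varepsilon_Z)$ that is continuously differentiable with uniformly bounded gradients. Because $r$, $f$, $\pi$ are $C^1$ and the discount is geometric, the series $\sum_t \gamma^t r_t$ and its term-by-term $a$- and $s$-derivatives converge uniformly on the compact domain $\mathcal S\times\mathcal A$, so $Z$, $\partial_a Z$, $\partial_s Z$ are all well-defined $C^1_b$ random fields and, by the measurability argument of Appendix~\ref{appendix:gradient_random_variable}, $(s,a,\varepsilon_Z)\mapsto \nabla_{s,a}Z(s,a;\varepsilon_Z)$ is jointly Borel. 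In particular $\eta^\pi(s,a)=\operatorname{Law}(Z(s,a),\partial_a Z(s,a))$ is a bona fide element of the domain of $T_\pi^{S_a}$, and $\mathcal D_s$ applied to the sampled function $Z(\cdot,\cdot;\varepsilon_Z)$ at $(s',a')$ literally equals $\partial_s Z(s',a';\varepsilon_Z)$.

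Next I would establish the pathwise fixed-point identity. Fix $(s,a)$ and independent seeds $\varepsilon_r,\varepsilon_f,\varepsilon_\pi,\varepsilon_Z'$, and set $r=r(s,a;\varepsilon_r)$, $s'=f(s,a;\varepsilon_f)$, $a'=\pi(s';\varepsilon_\pi)$. The infinite-horizon return obeys the pathwise Bellman recursion $Z(s,a)=r+\gamma Z(s',a')$; differentiating with respect to $a$ and applying the chain rule exactly as in the "Derivation of the action-gradient term" subsection gives
\[
\partial_a Z(s,a)=\partial_a r(s,a;\varepsilon_r)+\gamma\bigl(f_a(s,a;\varepsilon_f)\bigr)^{\!T}\bigl[\partial_s Z(s',a')+\bigl(\pi_s(s';\varepsilon_\pi)\bigr)^{\!T}\partial_a Z(s',a')\bigr].
\]
Stacking these two equations is precisely the statement that $H(s,a)=\Phi_{s,a}\!\bigl(H(s',a')\bigr)$ for the random affine map $\Phi_{s,a}(x)=b(s,a;\varepsilon_r)+\mathcal L(s,a;\varepsilon_f,\varepsilon_\pi)[x]$ of Lemma~\ref{lem:affine_pushforward_law}, where the $N\,\mathcal D_s$ block supplies the $\partial_s Z(s',a')$ term — legitimately, by the previous paragraph.

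Finally I would take laws on both sides. Since $H(s',a')\sim\eta^\pi(s',a')$ and the seeds are drawn exactly as in the definition of the operator, Lemma~\ref{lem:affine_pushforward_law} gives $\operatorname{Law}\!\bigl(\Phi_{s,a}(H(s',a'))\bigr)=(T_\pi^{S_a}\eta^\pi)(s,a)$; but the left-hand side is $\operatorname{Law}\!\bigl(H(s,a)\bigr)=\eta^\pi(s,a)$. As $(s,a)$ was arbitrary, $T_\pi^{S_a}\eta^\pi=\eta^\pi$. I expect the only real obstacle to be the bookkeeping in the third paragraph: justifying the interchange of $\partial_a$/$\partial_s$ with the infinite sum and checking that the resulting differentiated series is the \emph{same} random field whose law we call $\eta^\pi$ — once the $C^1_b$ and bounded-Jacobian assumptions are in place this is routine dominated-convergence-style reasoning, but it is the step where the "pseudo-affine" subtlety actually has to be discharged.
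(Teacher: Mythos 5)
Your proof is correct and follows essentially the same route as the paper's: establish the pathwise identity \(H(s,a)=\Phi_{s,a}\bigl(H(s',a')\bigr)\) from the Bellman recursion and its \(a\)-derivative, then invoke Lemma~\ref{lem:affine_pushforward_law} to pass to laws and conclude \(T_\pi^{S_a}\eta^\pi=\eta^\pi\). The paper's version is simply terser, leaving implicit the points you spell out (term-by-term differentiation of the discounted series and the consistency of \(\mathcal D_s\) with the true state-gradient), which are exactly the regularity facts guaranteed by the assumptions of Section~\ref{sec:wasserstein_preamble_and_assumptions}.
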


\begin{proof}
Recalling the one-step affine update  
\[
\Phi_{s,a}(x;\,\varepsilon_r,\varepsilon_f,\varepsilon_\pi)
=b(s,a;\varepsilon_r)
+\mathcal{L}(s,a;\varepsilon_f,\varepsilon_\pi)[\,x\,],
\]
the Bellman recursion and its derivative combine to  
\[
\bigl(Z(s,a),\,G(s,a)\bigr)
=\Phi_{s,a}\bigl(Z(s',a'),\,G(s',a')\bigr).
\]
By Lemma~\ref{lem:affine_pushforward_law} we have
\[
\operatorname{Law}\bigl(Z(s,a),G(s,a)\bigr)
=\operatorname{Law}\bigl(\Phi_{s,a}(Z(s',a'),G(s',a'))\bigr)
=(T_\pi^{S_a}\,\eta^\pi)(s,a).
\]
Since this holds for every \((s,a)\), we conclude \(T_\pi^{S_a}\eta^\pi=\eta^\pi\).  
\end{proof}

\begin{corollaryT}[Convergence of Sobolev evaluation iterates]\label{corr:convergence}
Under the conditions of Theorem~\ref{thm:sobolev-contraction}, let 
\(\kappa\) denote the contraction constant with\ 
\(\kappa=\kappa_{\mathrm{eff}}\) for the incomplete operator or \(\kappa=\kappa_{\mathrm{full}}\) for the complete one and suppose 
\(\gamma\,\kappa<1\).  For any initial Sobolev return–distribution function 
\(\eta_{0}\), define the sequence
\[
\eta_{n+1}
= T_{\pi}^{S}\,\eta_{n},
\]
where \(T_{\pi}^{S}\) may be either the incomplete (\(S_a\)) or complete (\(S_{s,a}\)) Sobolev operator.  Then by Banach’s fixed‐point theorem the iterates converge to the unique fixed point \(\eta^{\pi}\) (cf.\ Lemmas~\ref{lem:sobolev_complete_fp}, \ref{lem:sobolev_full_fp}):
\[
\bar W_{p}\bigl(\eta_{n},\,\eta^{\pi}\bigr)
\;\le\;
(\gamma\,\kappa)^{n}
\;\bar W_{p}\bigl(\eta_{0},\,\eta^{\pi}\bigr)
\;\xrightarrow[n\to\infty]{}\;0.
\]
In particular, \(\eta_{n}\to\eta^{\pi}\) in the supremum–\(p\)–Wasserstein metric.
\end{corollaryT}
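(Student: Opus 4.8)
The plan is to recognize this corollary as a direct application of the Banach fixed–point theorem, since all three ingredients are already available: (i) by Theorem~\ref{thm:sobolev-contraction} (complete operator) or Theorem~\ref{thm:sobolev_incomplete_contraction} (incomplete operator), $T_\pi^{S}$ is $\gamma\kappa$–Lipschitz in the metric $\bar W_p$ with $\gamma\kappa<1$; (ii) by Lemma~\ref{lem:sobolev_complete_fp} (resp.\ Lemma~\ref{lem:sobolev_full_fp}) the law $\eta^\pi$ of the infinite–horizon Sobolev return is a fixed point of $T_\pi^{S}$; (iii) the ambient metric space is complete. Granting (iii), Banach's theorem gives uniqueness of the fixed point, and iterating the contraction estimate gives the geometric rate. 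So the work reduces to pinning down a complete ambient space on which $\bar W_p$ is genuinely a (finite–valued) metric and on which $T_\pi^{S}$ acts as a self–map.

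First I would fix the codomain $\mathbb{R}^{d}$ (with $d=1+m$ or $1+m+n$) equipped with the Euclidean distance, a Polish space, so that $(\mathcal P_p(\mathbb{R}^{d}),W_p)$ is complete by the standard result \citep{villani2009optimal}. Under the uniform boundedness assumptions of Section~\ref{sec:wasserstein_preamble_and_assumptions} (bounded rewards, bounded Jacobians, $Z\in C^1_b$), all Sobolev return laws of interest are supported in a fixed compact ball $K\subset\mathbb{R}^{d}$; let $\mathcal M$ be the set of Borel–measurable maps $\eta\colon\mathcal S\times\mathcal A\to\mathcal P(K)$. Then $\mathcal P(K)$ is a closed (hence complete) subset of $\mathcal P_p(\mathbb{R}^{d})$, and $\sup_{(s,a)}W_p(\eta_1(s,a),\eta_2(s,a))\le\mathrm{diam}(K)<\infty$, so $\bar W_p$ is a bona fide metric on $\mathcal M$; the space of bounded maps into a complete metric space with the sup–distance is complete, and measurability is preserved under uniform limits (into the separable space $\mathcal P_p$), so $(\mathcal M,\bar W_p)$ is complete. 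Finally, $T_\pi^{S}$ maps $\mathcal M$ into itself: by Lemma~\ref{lem:affine_pushforward_law} each backup is an affine pushforward with uniformly bounded coefficients, hence sends compactly supported laws to compactly supported laws, and joint measurability in $(s,a)$ follows from the measurability arguments of Appendix~\ref{appendix:gradient_random_variable} and Section~\ref{appendix:distributional_sobolev_operator}.

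With completeness established, I would invoke Banach: $T_\pi^{S}$ is a strict contraction on $(\mathcal M,\bar W_p)$ with modulus $\gamma\kappa<1$, so it has a unique fixed point, which by Lemma~\ref{lem:sobolev_complete_fp}/\ref{lem:sobolev_full_fp} is $\eta^\pi$. For the rate I would induct on $n$: the case $n=0$ is trivial, and assuming $\bar W_p(\eta_n,\eta^\pi)\le(\gamma\kappa)^n\,\bar W_p(\eta_0,\eta^\pi)$, the contraction property together with $T_\pi^{S}\eta^\pi=\eta^\pi$ gives
\[
\bar W_p(\eta_{n+1},\eta^\pi)=\bar W_p(T_\pi^{S}\eta_n,\,T_\pi^{S}\eta^\pi)\le\gamma\kappa\,\bar W_p(\eta_n,\eta^\pi)\le(\gamma\kappa)^{n+1}\,\bar W_p(\eta_0,\eta^\pi).
\]
Letting $n\to\infty$ and using $\gamma\kappa<1$ yields $\bar W_p(\eta_n,\eta^\pi)\to0$, i.e.\ convergence in the supremum–$p$–Wasserstein metric.

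The main obstacle is step (iii): ensuring the working function space is genuinely complete while $\bar W_p$ stays finite. This forces the restriction to a closed, contraction–invariant subspace such as $\mathcal M$ above — which is precisely where the uniform boundedness hypotheses of Section~\ref{sec:wasserstein_preamble_and_assumptions} are used, since an arbitrary $\eta_0$ with unbounded $p$–th moments could give $\bar W_p(\eta_0,\eta^\pi)=\infty$ and render the bound vacuous — and it requires checking that Borel measurability of $(s,a)\mapsto\eta_n(s,a)$ survives both the pushforward backup and passage to the sup–limit. Everything downstream (the contraction constant and the identification of the fixed point) has already been discharged in the cited theorems and lemmas, so once the completeness of the ambient space is settled the corollary is a routine invocation of Banach's theorem.
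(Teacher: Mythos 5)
Your proposal is correct and takes essentially the same route as the paper, which offers no separate proof of this corollary: it simply combines the contraction bounds (Theorems~\ref{thm:sobolev-contraction} and \ref{thm:sobolev_incomplete_contraction}), the fixed-point identifications (Lemmas~\ref{lem:sobolev_complete_fp} and \ref{lem:sobolev_full_fp}), and Banach's theorem, exactly as you do via the iterated inequality $\bar W_p(\eta_{n+1},\eta^\pi)\le\gamma\kappa\,\bar W_p(\eta_n,\eta^\pi)$. Your extra care about completeness of the ambient space, finiteness of $\bar W_p(\eta_0,\eta^\pi)$ for arbitrary initializations, and measurability of the iterates supplies details the paper leaves implicit.
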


\subsection{Proof of Proposition 3.1}\label{appendix:prop_W1_policy_gradient}
\begin{lemma}[Mean‐difference bound via \(W_1\)]\label{lem:mean_w1_bound}
Let \(X,Y\) be \(\mathbb{R}^d\)-valued random variables with distributions \(\mu=\operatorname{Law}(X)\) and \(\nu=\operatorname{Law}(Y)\), and assume \(\mathbb{E}\|X\|<\infty\), \(\mathbb{E}\|Y\|<\infty\).  Then
\[\boxed{%
\;
\bigl\|\mathbb{E}[X] - \mathbb{E}[Y]\bigr\|
\;\le\;
W_1(\mu,\nu),\;}
\]
\textit{Here, \(\|\cdot\|\) denotes the Euclidean norm on \(\mathbb{R}^d\), and \(W_1\) is taken with respect to the ground metric \(d(x,y)=\|x-y\|\).}
\end{lemma}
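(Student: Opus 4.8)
The plan is to prove the bound directly from the coupling definition of $W_1$ (Definition~\ref{def:wp} with $p=1$), using only convexity of the Euclidean norm via Jensen's inequality for vector-valued integrals. This keeps the argument self-contained and in the same style as Lemmas~\ref{lem:affine-pushforward-general} and \ref{lem:mixture_conditional}, and avoids invoking Kantorovich--Rubinstein duality.

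First I would fix $\varepsilon>0$ and, exactly as in those earlier proofs, choose an $\varepsilon$-optimal coupling $\pi\in\Pi(\mu,\nu)$: a Borel probability measure on $\mathbb{R}^d\times\mathbb{R}^d$ with marginals $\mu$ and $\nu$ such that
\[
\int_{\mathbb{R}^d\times\mathbb{R}^d}\|x-y\|\,d\pi(x,y)\;<\;W_1(\mu,\nu)+\varepsilon.
\]
Because the marginals of $\pi$ are $\mu$ and $\nu$, integrating the two coordinate projections against $\pi$ recovers $\int x\,d\pi(x,y)=\mathbb{E}[X]$ and $\int y\,d\pi(x,y)=\mathbb{E}[Y]$, both finite by the hypotheses $\mathbb{E}\|X\|<\infty$ and $\mathbb{E}\|Y\|<\infty$; hence $\mathbb{E}[X]-\mathbb{E}[Y]=\int (x-y)\,d\pi(x,y)$.

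Next I would apply Jensen's inequality for the convex map $z\mapsto\|z\|$ to the vector-valued integral against the probability measure $\pi$, then invoke $\varepsilon$-optimality:
\[
\bigl\|\mathbb{E}[X]-\mathbb{E}[Y]\bigr\|
=\Bigl\|\int (x-y)\,d\pi(x,y)\Bigr\|
\;\le\;\int \|x-y\|\,d\pi(x,y)
\;<\;W_1(\mu,\nu)+\varepsilon.
\]
Since $\varepsilon>0$ was arbitrary, letting $\varepsilon\downarrow 0$ gives $\|\mathbb{E}[X]-\mathbb{E}[Y]\|\le W_1(\mu,\nu)$, which is the claim.

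There is no real obstacle here; the only points meriting a word of care are (i) the finiteness of $\mathbb{E}[X]$ and $\mathbb{E}[Y]$, supplied by the integrability hypotheses, which makes every integral above well defined, and (ii) the fact that the infimum defining $W_1$ need not be attained, which is precisely why I pass through $\varepsilon$-optimal couplings rather than an exact minimizer. As an alternative I would note that the bound also follows in one line from Kantorovich--Rubinstein duality by testing against the $1$-Lipschitz linear functionals $x\mapsto\langle u,x\rangle$ with $\|u\|=1$ and taking the supremum over $u$.
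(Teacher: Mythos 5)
Your argument is correct, but it takes a genuinely different route from the paper. The paper proves the bound through Kantorovich--Rubinstein duality: it forms the unit vector $u=(\mathbb{E}[X]-\mathbb{E}[Y])/\|\mathbb{E}[X]-\mathbb{E}[Y]\|$, observes that $f(x)=u^\top x$ is $1$-Lipschitz, and reads off $\|\mathbb{E}[X]-\mathbb{E}[Y]\|=\mathbb{E}[f(X)]-\mathbb{E}[f(Y)]\le W_1(\mu,\nu)$ from the dual representation of $W_1$ (this is essentially the one-line alternative you sketch at the end). You instead work with the primal (coupling) definition: an $\varepsilon$-optimal coupling $\pi$, the identity $\mathbb{E}[X]-\mathbb{E}[Y]=\int(x-y)\,d\pi$, the bound $\|\int(x-y)\,d\pi\|\le\int\|x-y\|\,d\pi$ (norm of a vector-valued integral, i.e.\ Jensen for the convex map $z\mapsto\|z\|$), and $\varepsilon\downarrow 0$. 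Both are complete proofs; your version is more elementary in that it never invokes duality and matches the $\varepsilon$-optimal-coupling style of the paper's other transport lemmas (e.g.\ the affine push-forward and mixture lemmas), while the paper's dual argument is shorter and makes the extremal $1$-Lipschitz test function explicit. Your attention to the two technical points --- integrability making the mean identity well defined, and passing through $\varepsilon$-optimal couplings because the infimum need not be attained --- is exactly right and mirrors the care taken elsewhere in the appendix.
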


\begin{proof}
Let 
\[
m_X=\mathbb{E}[X], 
\quad
m_Y=\mathbb{E}[Y].
\]
If \(m_X\neq m_Y\), define the unit vector
\[
u \;=\;\frac{m_X - m_Y}{\|m_X - m_Y\|}.
\]
Then the scalar function \(f(x)=u^\top x\) satisfies
\[
|f(x)-f(y)| = |u^\top(x-y)|
\le
\|u\|\;\|x-y\|
=\|x-y\|,
\]
so \(\|f\|_{\mathrm{Lip}}\le1\).  By Kantorovich–Rubinstein duality \citep{villani2009optimal},
\[
W_1(\mu,\nu)
= \sup_{\|g\|_{\mathrm{Lip}}\le1}
  \bigl|\mathbb{E}[g(X)] - \mathbb{E}[g(Y)]\bigr|
\;\ge\;
\bigl|\mathbb{E}[f(X)] - \mathbb{E}[f(Y)]\bigr|.
\]
But 
\(\mathbb{E}[f(X)] - \mathbb{E}[f(Y)] = u^\top(m_X - m_Y) = \|m_X - m_Y\|\),
hence
\(\|m_X - m_Y\|\le W_1(\mu,\nu)\).
If \(m_X=m_Y\), the inequality is trivial.
\end{proof}

\begin{proposition}
Let \(\pi\) be an \(L_{\pi,\theta}\)\nobreakdash-Lipschitz continuous policy, and let 
\(\mathrm{Law}[\nabla_a Z^\pi(s,a)\mid_{a=\pi(s)}]\) and 
\(\mathrm{Law}[\nabla_a \hat Z(s,a)\mid_{a=\pi(s)}]\) denote the true and estimated distributions of the action‐gradients at \(a=\pi(s)\), respectively.  Then the error between the true and estimated policy gradients is bounded by
\[
\boxed{%
\;
\begin{gathered}
\bigl\|\nabla_\theta J(\theta)-\nabla_\theta \hat J(\theta)\bigr\|
\;\le\;\\
\frac{L_{\pi,\theta}}{1 - \gamma}
\;\mathbb{E}_{s \sim d^\pi_\mu}\Bigl[
  W_{1}\bigl(\mathrm{Law}[\nabla_a Z^\pi(s,a)\mid_{a=\pi(s)}],\,\mathrm{Law}[\nabla_a \hat Z(s,a)\mid_{a=\pi(s)}]\bigr)
\Bigr].
\end{gathered}
\;}
\]
This proposition is a distributional extension of Proposition 3.1 from \cite{doro2020how}.
\end{proposition}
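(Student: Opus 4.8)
The plan is to combine the deterministic policy gradient theorem with the mean–difference estimate of Lemma~\ref{lem:mean_w1_bound}. Recall that the distributional critic $\hat Z$ induces a scalar estimate $\hat Q(s,a)=\mathbb{E}[\hat Z(s,a)]$, and that $\nabla_\theta\hat J(\theta)$ denotes the (approximate) policy gradient obtained by substituting $\hat Q$ for $Q^{\pi_\theta}$ in the deterministic policy gradient formula while keeping the true discounted occupancy $d^\mu_\pi$ fixed — exactly as actor–critic methods do in practice. Since likewise $Q^\pi(s,a)=\mathbb{E}[Z^\pi(s,a)]$, the two gradient expressions differ only through their inner action–gradient terms, which is what makes the subtraction clean.

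First I would write both gradients via the deterministic policy gradient theorem \citep{silver2014deterministic} and subtract, obtaining
\[
\nabla_\theta J(\theta)-\nabla_\theta\hat J(\theta)
=\frac{1}{1-\gamma}\,\mathbb{E}_{s\sim d^\mu_\pi}\!\Bigl[\nabla_\theta\pi_\theta(s)\,\bigl(\nabla_a Q^\pi(s,a)-\nabla_a\hat Q(s,a)\bigr)\Bigr]_{a=\pi_\theta(s)}.
\]
Taking Euclidean norms, pulling the norm inside the expectation over $s$ (Jensen), and using submultiplicativity of the operator norm gives
\[
\bigl\|\nabla_\theta J(\theta)-\nabla_\theta\hat J(\theta)\bigr\|
\le\frac{1}{1-\gamma}\,\mathbb{E}_{s\sim d^\mu_\pi}\!\Bigl[\|\nabla_\theta\pi_\theta(s)\|_{\op}\,\bigl\|\nabla_a Q^\pi(s,\pi(s))-\nabla_a\hat Q(s,\pi(s))\bigr\|\Bigr],
\]
after which I bound $\|\nabla_\theta\pi_\theta(s)\|_{\op}\le L_\pi$ uniformly in $s$ by the Lipschitz property of $\pi$ (see Appendix~\ref{appendix:lipschitz_policy_critic}).

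It then remains to identify the inner term as a difference of means. Under the reparameterization / $C^1_b$ assumptions of Section~\ref{sec:wasserstein_preamble_and_assumptions}, differentiation and expectation commute (differentiation under the integral sign, justified by the uniform Jacobian bounds acting as the dominating function), so $\nabla_a Q^\pi(s,a)\big|_{a=\pi(s)}=\mathbb{E}\bigl[\nabla_a Z^\pi(s,a)\big|_{a=\pi(s)}\bigr]$ is the mean of $G(s)$, and similarly $\nabla_a\hat Q(s,a)\big|_{a=\pi(s)}$ is the mean of $\hat G(s)$. Applying Lemma~\ref{lem:mean_w1_bound} pointwise in $s$ yields $\bigl\|\nabla_a Q^\pi(s,\pi(s))-\nabla_a\hat Q(s,\pi(s))\bigr\|\le W_1\bigl(G(s),\hat G(s)\bigr)$; substituting this into the bound above and moving the constants outside the expectation produces exactly the claimed inequality.

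The main obstacle is the interchange of $\nabla_a$ with the expectation defining $Q$ — this is where the structural hypotheses pay off, since the $C^1_b$ reparameterization of $Z$ together with the uniformly bounded Jacobians of Section~\ref{sec:wasserstein_preamble_and_assumptions} supply the integrable dominating function required for differentiation under the integral. The only other delicate point is reading "$L_\pi$-Lipschitz" as a bound on the parameter–Jacobian $\|\nabla_\theta\pi_\theta(s)\|_{\op}$, which holds for standard bounded–weight, Lipschitz–activation networks (Appendix~\ref{appendix:lipschitz_policy_critic}); the rest of the argument is the triangle inequality, Jensen, and the $W_1$ mean bound.
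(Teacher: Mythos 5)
Your proposal is correct and follows essentially the same route as the paper's proof: express both gradients via the deterministic policy gradient theorem, pull out $\|\nabla_\theta\pi(s)\|\le L_\pi$ with the triangle inequality, and bound the resulting mean difference by $W_1$ via the Kantorovich--Rubinstein argument (Lemma~\ref{lem:mean_w1_bound}). The only difference is that you explicitly justify the interchange $\nabla_a \E[Z]=\E[\nabla_a Z]$ using the $C^1_b$ reparameterization assumptions, a step the paper's proof takes for granted by writing the gradients directly in that form.
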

\begin{proof}
\mbox{} \\
\mbox{} \\
\textbf{Step 1: True and estimated policy gradients.}

The true policy gradient is:
\[
\nabla_\theta J(\theta) = \frac{1}{1 - \gamma} \mathbb{E}_{s \sim d^\pi_\mu} \left[ \mathbb{E}\left[ \nabla_a Z^\pi(s,a) \big|_{a = \pi(s)} \right] \nabla_\theta \pi(s) \right],
\]
and the estimated policy gradient is:
\[
\nabla_\theta \hat{J}(\theta) = \frac{1}{1 - \gamma} \mathbb{E}_{s \sim d^\pi_\mu} \left[ \mathbb{E} \left[ \nabla_a \hat{Z}(s,a) \big|_{a = \pi(s)} \right] \nabla_\theta \pi(s) \right].
\]

\textbf{Step 2: Policy gradient error.}
The norm of the difference between the true and estimated policy gradients is:
\[
\begin{aligned}
\left\| \nabla_\theta J(\theta) - \nabla_\theta \hat{J}(\theta) \right\| &= \Bigl\| \frac{1}{1 - \gamma} \mathbb{E}_{s \sim d^\pi_\mu} \Bigl[ \left( \mathbb{E} \left[ \nabla_a Z^\pi(s,a) \big|_{a = \pi(s)} \right] - \mathbb{E} \left[ \nabla_a \hat{Z}(s,a) \big|_{a = \pi(s)} \right] \right) \\ & \quad \times \nabla_\theta \pi(s) \Bigr] \Bigr\|.
\end{aligned}
\]
\textbf{Step 3: Applying the Triangle Inequality and Lipschitz Continuity}

Using the triangle inequality and the Lipschitz continuity of the policy (\( \| \nabla_\theta \pi(s) \| \leq L_{\pi,\theta} \)), we have:
\[
\begin{aligned}
\left\| \nabla_\theta J(\theta) - \nabla_\theta \hat{J}(\theta) \right\| &\leq \frac{1}{1 - \gamma} \mathbb{E}_{s \sim d^\pi_\mu} \Bigl[ \left\| \mathbb{E} \left[ \nabla_a Z^\pi(s,a) \big|_{a = \pi(s)} \right] - \mathbb{E} \left[ \nabla_a \hat{Z}(s,a) \big|_{a = \pi(s)} \right] \right\| \\ & \quad \times  \left\| \nabla_\theta \pi(s) \right\| \Bigr] \\  
&\leq \frac{L_{\pi,\theta}}{1 - \gamma} \mathbb{E}_{s \sim d^\pi_\mu} \left[ \left\| \mathbb{E} \left[ \nabla_a Z^\pi(s,a) \big|_{a = \pi(s)} \right] - \mathbb{E} \left[ \nabla_a \hat{Z}(s,a) \big|_{a = \pi(s)} \right] \right\| \right].
\end{aligned}
\]

\textbf{Step 4: Bounding the mean difference by \(W_1\).}

Let
\[
X = \nabla_a Z^\pi(s,a)\bigl\lvert_{a=\pi(s)},
\qquad
Y = \nabla_a \hat{Z}(s,a)\bigl\lvert_{a=\pi(s)}.
\]
By Lemma~\ref{lem:mean_w1_bound}, we have
\begin{align}
\bigl\|\E[X]-\E[Y]\bigr\|
&\le\;
W_1\!\bigl(\mathrm{Law}(X),\,\mathrm{Law}(Y)\bigr) \notag\\
&=\;
W_1\!\Bigl(
  \mathrm{Law}\bigl(\nabla_a Z^\pi(s,a)\big|_{a=\pi(s)}\bigr),\,
  \mathrm{Law}\bigl(\nabla_a \hat Z(s,a)\big|_{a=\pi(s)}\bigr)
\Bigr). \notag
\end{align}
\textbf{Step 5: Conclusion}

Combining the results from the previous steps, we established that the $L^2$ norm of the difference between the true and estimated policy gradients can be bounded as follows:

\[
\begin{gathered}
\bigl\|\nabla_\theta J(\theta)-\nabla_\theta \hat J(\theta)\bigr\|
\;\le\;\\
\frac{L_{\pi,\theta}}{1-\gamma}\,
\E_{s\sim d^\pi_\mu}\Bigl[
  W_{1}\bigl(
    \mathrm{Law}[\nabla_a Z^\pi(s,a)\mid_{a=\pi(s)}],\,
    \mathrm{Law}[\nabla_a \hat Z(s,a)\mid_{a=\pi(s)}]
  \bigr)
\Bigr].
\end{gathered}
\]

\end{proof}

\newpage
\section{Practical difficulties with Wasserstein for training}
\label{appendix:ot_training_difficulty}

\paragraph{Adversarial $W_1$ is a loose proxy for true Wasserstein.}
Some works have directly applied WGAN training to distributional RL, for example \citet{freirich2019distributional}, who cast return distributions as targets for adversarial matching. In practice, WGAN training replaces the exact Kantorovich--Rubinstein dual \citep{arjovsky2017wasserstein,gulrajani2017improved} with a \emph{parametric} discriminator plus approximate Lipschitz control (weight clipping or gradient penalties). This induces three sources of deviation from the true distance: finite-capacity approximation error, imperfect Lipschitz enforcement, and optimization error. Systematic analyses show that the resulting WGAN losses can correlate poorly with the actual Wasserstein metric and need not be meaningful approximations of it, undermining proofs that presume access to the exact $W_1$. See both the empirical and theoretical analysis of \citet{mallasto2019well} and the critique by \citet{stanczuk2021wasserstein}.

\paragraph{Computational cost of exact OT in multiple dimensions.}
Even ignoring estimator issues, computing multivariate $W_p$ exactly on mini-batches is costly: building the pairwise cost matrix requires $O(m^2)$ memory, and solving the discrete OT problem takes $O(m^3 \log m)$ time \citep{genevay2019sample}. This makes per-update calls prohibitive in distributional RL.

\paragraph{Sample complexity.}
Beyond runtime, OT also suffers from poor statistical efficiency: the empirical Wasserstein distance converges to its population value at rate $O(n^{-1/d})$ in dimension $d$, compared to $O(n^{-1/2})$ for MMD \citep{genevay2019sample}. This slow convergence further limits its practicality in high-dimensional RL.

\newpage
\section{Background on MMD}\label{appendix:background_MMD}
\begin{definition}[Maximum Mean Discrepancy as an IPM]
Let $k\colon X\times X\to\mathbb R$ be a symmetric, positive-semi-definite reproducing kernel with RKHS $\mathcal H$ and feature map
\[
\phi\colon X\to\mathcal H,\qquad \phi(x)=k(x,\cdot).
\]
For probability measures $P,Q$ on $X$, define their mean embeddings
\[
\mu_P=\int_X\phi(x)\,dP(x),
\quad
\mu_Q=\int_X\phi(x)\,dQ(x).
\]

Then the \emph{Maximum Mean Discrepancy} is
\[
\mathrm{MMD}_k(P,Q)\;:=\;\|\mu_P-\mu_Q\|_{\mathcal H},
\]
whose square admits the kernel expansion
\[
\begin{aligned}
\mathrm{MMD}^2_k(P,Q)
&=\|\mu_P-\mu_Q\|_{\mathcal H}^2 \\[4pt]
&=\iint k(x,x')\,dP(x)\,dP(x')
 \;+\;\iint k(y,y')\,dQ(y)\,dQ(y')\\[4pt]
&\quad-\;2\iint k(x,y)\,dP(x)\,dQ(y) \\[4pt]
&=\iint k(x,y)\;d\bigl(P-Q\bigr)(x)\,d\bigl(P-Q\bigr)(y).
\end{aligned}
\]

Moreover, $\mathrm{MMD}_k$ coincides with the integral probability metric (IPM) over the unit ball of $\mathcal H$, namely
\[
\mathrm{MMD}_k(P,Q)
\;=\;
\sup_{\substack{f\in\mathcal H\\\|f\|_{\mathcal H}\le1}}
\Bigl\{\mathbb E_{x\sim P}[f(x)] \;-\; \mathbb E_{y\sim Q}[f(y)]\Bigr\}
\;=\;\|\mu_P-\mu_Q\|_{\mathcal H},
\]
as shown in \cite{JMLR:v13:gretton12a}.
\end{definition}

\medskip\noindent\textbf{Remark (Euclidean densities).}  
Working in \(\mathbb{R}^d\), if \(P\) and \(Q\) admit densities \(p(x)\) and \(q(x)\) with respect to Lebesgue measure \(dx\), then
\[
dP(x)=p(x)\,dx,\quad
dQ(x)=q(x)\,dx,\quad
d\bigl(P-Q\bigr)(x)=\bigl(p(x)-q(x)\bigr)\,dx,
\]
and each of the above integrals becomes an ordinary Lebesgue integral in \(x,y\).

\begin{definition}[Conditionally positive definite (CPD) kernel]\label{def:cpd-integral}
Let $X$ be a measurable space and let $k:X\times X\to\mathbb{R}$ be symmetric. 
We say that $k$ is \emph{conditionally positive definite (CPD)} if
\[
\iint_{X\times X} k(x,x')\,d\mu(x)\,d\mu(x') \;\geq\;0
\qquad\text{for all finite signed measures }\mu\text{ on }X\text{ with }\mu(X)=0.
\]
If the inequality is strict for every nonzero such $\mu$, then $k$ is \emph{conditionally strictly positive definite (CSPD)}.
\end{definition}

\begin{proposition}[Equivalence of $\gamma_k$ and RKHS--MMD for CPD kernels]\label{prop:gamma-equals-mmd-cpd}
Let $k:X\times X\to\mathbb{R}$ be \emph{conditionally positive definite} (CPD) and define
\[
\rho_k(x,y)\;:=\;k(x,x)+k(y,y)-2k(x,y).
\]
Fix $z_0\in X$ and set the \emph{distance–induced} (one–point centered) kernel
\[
k^\circ(x,y)
\;:=\;\tfrac12\bigl[\rho_k(x,z_0)+\rho_k(y,z_0)-\rho_k(x,y)\bigr]
\;=\;k(x,y)-k(x,z_0)-k(z_0,y)+k(z_0,z_0).
\]
Then $k^\circ$ is positive definite and admits an RKHS $\mathcal H_{k^\circ}$. For any $P,Q$ with finite integrals,
\[
\boxed{
\begin{aligned}
\gamma_k^2(P,Q)
&:=\iint k(x,y)\,d(P-Q)(x)\,d(P-Q)(y) \\[4pt]
&=\iint k^\circ(x,y)\,d(P-Q)(x)\,d(P-Q)(y) \\[4pt]
&=\bigl\|\mu_{k^\circ}(P)-\mu_{k^\circ}(Q)\bigr\|_{\mathcal H_{k^\circ}}^{2} \\[4pt]
&=\mathrm{MMD}^2_{k^\circ}(P,Q) .
\end{aligned}}
\]

\emph{Justification.} This follows from the distance–induced kernel construction and equivalence results in \cite{sejdinovic2013equivalence}.
\end{proposition}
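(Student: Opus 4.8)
The plan is to establish the displayed chain of four equalities by, in order: (i) a one-line algebraic identity for $k^\circ$; (ii) a reduction of positive-definiteness of $k^\circ$ to the CPD hypothesis on $k$ via a one-point augmentation trick; (iii) using the zero-mass condition $\mu(X)=0$ for $\mu:=P-Q$ to discard the correction terms, so that $k$ and $k^\circ$ induce the same bilinear form on $P-Q$; and (iv) invoking the kernel-mean-embedding expansion already recorded in the Definition of MMD above, now applied with $k^\circ$ in place of $k$. None of these steps is deep; the content is organizational.

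For step (i), substituting $\rho_k(x,y)=k(x,x)+k(y,y)-2k(x,y)$ into $\tfrac12\bigl[\rho_k(x,z_0)+\rho_k(y,z_0)-\rho_k(x,y)\bigr]$ cancels the diagonal terms $k(x,x),k(y,y)$ and, using symmetry $k(y,z_0)=k(z_0,y)$, leaves $k(x,y)-k(x,z_0)-k(z_0,y)+k(z_0,z_0)$, so the two forms of $k^\circ$ coincide. For step (ii), given $x_1,\dots,x_n\in X$ and $c_1,\dots,c_n\in\mathbb R$, set $x_0:=z_0$ and $c_0:=-\sum_{i=1}^n c_i$ so that $\sum_{i=0}^n c_i=0$; expanding $\sum_{i,j=1}^n c_i c_j\,k^\circ(x_i,x_j)$ using the explicit form of $k^\circ$ and symmetry of $k$, the correction terms reassemble precisely into the $i=0$ and $j=0$ summands, giving $\sum_{i,j=1}^n c_i c_j\,k^\circ(x_i,x_j)=\sum_{i,j=0}^n c_i c_j\,k(x_i,x_j)\ge 0$ by the CPD property applied to the zero-sum vector $(c_0,\dots,c_n)$. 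Hence $k^\circ$ is positive semi-definite, and by Moore--Aronszajn it has an RKHS $\mathcal H_{k^\circ}$ with feature map $\phi^\circ(x)=k^\circ(x,\cdot)$ (note $k^\circ(z_0,z_0)=0$). If $k$ is CSPD the inequality is strict for nonzero zero-sum vectors, which one may record as giving characteristicness, though it is not needed here.

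For step (iii), write $\mu:=P-Q$, a finite signed measure with $\mu(X)=0$. Plugging the explicit form of $k^\circ$ into $\iint k^\circ(x,y)\,d\mu(x)\,d\mu(y)$ and applying Fubini, each of the three correction terms factors through $\int d\mu=\mu(X)=0$ (for instance $\iint k(x,z_0)\,d\mu(x)\,d\mu(y)=\mu(X)\int k(x,z_0)\,d\mu(x)=0$), so $\iint k^\circ\,d\mu\,d\mu=\iint k\,d\mu\,d\mu=\gamma_k(P,Q)^2$. For step (iv), since $k^\circ$ is a genuine PSD kernel, the kernel expansion in the Definition of MMD above, applied verbatim with $k^\circ$ replacing $k$, yields $\iint k^\circ(x,y)\,d(P-Q)(x)\,d(P-Q)(y)=\bigl\|\mu_{k^\circ}(P)-\mu_{k^\circ}(Q)\bigr\|_{\mathcal H_{k^\circ}}^2=\mathrm{MMD}_{k^\circ}(P,Q)^2$. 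Chaining (iii) and (iv) gives all four equalities.

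The only genuine obstacle is measure-theoretic bookkeeping rather than any conceptual point: one must check that the double integrals converge and that Tonelli/Fubini is applicable so that the $\mu(X)=0$ cancellations in step (iii) are licit, and that the Bochner integral $\mu_{k^\circ}(P)=\int\phi^\circ\,dP$ exists in $\mathcal H_{k^\circ}$ — which follows from the "finite integrals" hypothesis, e.g.\ $\int\sqrt{k^\circ(x,x)}\,dP(x)<\infty$. These technical conditions, together with the full equivalence between $\gamma_k$ and the distance-induced-kernel MMD, are precisely what is made rigorous in \cite{sejdinovic2013equivalence}; what remains is only to confirm that our one-point centering of $k^\circ$ at $z_0$ matches their normalization and that the embedding identity is applied to $k^\circ$ and not to $k$ itself.
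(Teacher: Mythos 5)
Your proposal is correct. Note, however, that the paper itself does not prove this proposition at all: it simply defers to the distance--induced kernel construction and equivalence results of Sejdinovic et al.\ (2013). What you have written is a self-contained elementary verification of exactly the facts that citation encapsulates: the algebraic identity for $k^\circ$, positive semi-definiteness of $k^\circ$ via the one-point augmentation trick (setting $x_0=z_0$, $c_0=-\sum_i c_i$, which corresponds to the zero-mass signed measure $\sum_{i=0}^n c_i\delta_{x_i}$ and hence falls under the paper's measure-theoretic CPD definition), the cancellation of the centering terms against $\mu(X)=0$ for $\mu=P-Q$, and the standard mean-embedding expansion applied to the PSD kernel $k^\circ$ (Moore--Aronszajn), which the paper already records in its MMD background section. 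Each step checks out, and your closing remarks on Fubini/Tonelli and Bochner integrability under the ``finite integrals'' hypothesis correctly identify the only technical bookkeeping involved. The one loose aside is the parenthetical claim that strict conditional positive definiteness ``gives characteristicness'': strictness over finitely supported zero-sum vectors alone does not immediately yield injectivity of the embedding on all probability measures, but since you explicitly flag this as unneeded, it does not affect the proof. In short, your route is the same as the one implicit in the paper's citation, with the benefit of making the argument explicit rather than outsourced.
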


\begin{proposition}[MMD as a Metric on \(\mathcal P(X)\)]
Let \(k\colon X\times X\to\mathbb R\) be a symmetric kernel.  
We say that \(\mathrm{MMD}_k\) \emph{defines a metric} on \(\mathcal P(X)\) iff \(k\) is \emph{conditionally strictly positive definite} (CSPD), i.e., for every nonzero finite signed Borel measure \(\nu\) with \(\nu(X)=0\),
\[
\iint_{X\times X} k(x,y)\,d\nu(x)\,d\nu(y)\;>\;0.
\]
Then \(\mathrm{MMD}_k\) satisfies the metric axioms on \(\mathcal P(X)\):
\begin{enumerate}
  \item \emph{Nonnegativity:} \(\mathrm{MMD}_k(P,Q)\ge 0.\)
  \item \emph{Symmetry:} \(\mathrm{MMD}_k(P,Q)=\mathrm{MMD}_k(Q,P).\)
  \item \emph{Identity of indiscernibles:} \(\mathrm{MMD}_k(P,Q)=0 \Rightarrow P=Q.\)
  \item \emph{Triangle inequality:} for any \(P,Q,R\in\mathcal P(X)\),
    \(\mathrm{MMD}_k(P,Q)\le \mathrm{MMD}_k(P,R)+\mathrm{MMD}_k(R,Q).\)
\end{enumerate}

\emph{Justification.} This is the standard correspondence between negative-type distances, distance-induced kernels, and RKHS MMD metrics as outlined in \cite{sejdinovic2013equivalence}.
\end{proposition}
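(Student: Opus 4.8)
The plan is to deduce all four metric axioms from the Hilbert-space embedding identity already recorded in the excerpt, rather than work with the double integral directly. I would begin by observing that a CSPD kernel is in particular CPD, so Proposition~\ref{prop:gamma-equals-mmd-cpd} applies and yields, with $k^\circ$ the distance-induced (one-point centered) kernel and $\mathcal H_{k^\circ}$ its RKHS,
\[
\mathrm{MMD}_k(P,Q)^2
=\iint k(x,y)\,d(P-Q)(x)\,d(P-Q)(y)
=\bigl\|\mu_{k^\circ}(P)-\mu_{k^\circ}(Q)\bigr\|_{\mathcal H_{k^\circ}}^{2}
\]
for all $P,Q\in\mathcal P(X)$. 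From this representation nonnegativity and symmetry are immediate: the middle expression is a CPD quadratic form evaluated at the signed measure $\nu=P-Q$, which satisfies $\nu(X)=0$ because $P,Q$ are probability measures, hence it is $\ge 0$; and the double integral is visibly invariant under swapping $P$ and $Q$.

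For the identity of indiscernibles I would argue both directions from the same identity. If $P=Q$ then $\nu=P-Q$ is the zero measure and the integral vanishes. Conversely, if $\mathrm{MMD}_k(P,Q)=0$, then $\iint k\,d\nu\,d\nu=0$ for the finite signed Borel measure $\nu=P-Q$ with $\nu(X)=0$; the CSPD hypothesis says this quadratic form is \emph{strictly} positive on every nonzero such measure, so $\nu=0$, i.e.\ $P=Q$. The reverse implication in the stated ``iff'' — that $\mathrm{MMD}_k$ being a metric forces $k$ to be CSPD — would follow by the same token, since any nonzero finite signed $\nu$ with $\nu(X)=0$ can be renormalized through its Hahn–Jordan decomposition into a difference of two distinct probability measures.

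For the triangle inequality I would pass through the mean embeddings: the Hilbert-space triangle inequality applied to $\mu_{k^\circ}(P),\mu_{k^\circ}(Q),\mu_{k^\circ}(R)$ gives
\[
\bigl\|\mu_{k^\circ}(P)-\mu_{k^\circ}(Q)\bigr\|_{\mathcal H_{k^\circ}}
\le
\bigl\|\mu_{k^\circ}(P)-\mu_{k^\circ}(R)\bigr\|_{\mathcal H_{k^\circ}}
+\bigl\|\mu_{k^\circ}(R)-\mu_{k^\circ}(Q)\bigr\|_{\mathcal H_{k^\circ}},
\]
and substituting the identity above term by term turns this into $\mathrm{MMD}_k(P,Q)\le\mathrm{MMD}_k(P,R)+\mathrm{MMD}_k(R,Q)$.

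The only genuine obstacle — and the point I would flag explicitly — is justifying that the reduction to the RKHS picture is valid for \emph{all} $P,Q,R\in\mathcal P(X)$: the mean embeddings $\mu_{k^\circ}(\cdot)$ must exist as Bochner integrals in $\mathcal H_{k^\circ}$, which requires a mild integrability condition (e.g.\ $\int\sqrt{k^\circ(x,x)}\,dP(x)<\infty$) of the centered kernel against the measures at hand, together with enough regularity on $X$ for the construction of \cite{sejdinovic2013equivalence} to apply. For the bounded, continuous kernels used in the paper (the multiquadric kernel, appropriately centered) this holds automatically, so the reduction is clean and the four axioms follow as above; I would state this caveat and then cite \cite{sejdinovic2013equivalence} for the distance-induced-kernel equivalence underpinning Proposition~\ref{prop:gamma-equals-mmd-cpd}.
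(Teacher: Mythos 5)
Your proposal is correct and follows essentially the same route the paper takes: the paper's ``justification'' is precisely the distance-induced-kernel/RKHS correspondence of \cite{sejdinovic2013equivalence} (Proposition~\ref{prop:gamma-equals-mmd-cpd}), and you simply unpack it to read off nonnegativity, symmetry, and the triangle inequality from the Hilbert norm, with CSPD supplying identity of indiscernibles. One small nit: the multiquadric kernel is not bounded, so your closing caveat should appeal to the finite-moment assumptions (which the paper imposes elsewhere) rather than boundedness to guarantee the centered mean embeddings exist.
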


\medskip\noindent\textbf{Examples of kernels inducing a metric.}   
\begin{itemize}
  \item \emph{Gaussian RBF kernel:} for any bandwidth $\sigma>0$,
    \[
      k_\sigma^{\mathrm{RBF}}(x,y)
      \;=\;\exp\bigl(-\|x-y\|_2^2/(2\sigma^2)\bigr),
    \]
    which is characteristic on $\mathbb{R}^d$ and hence induces a metric on $\mathcal P(\mathbb{R}^d)$ via $\mathrm{MMD}_{k_\sigma^{\mathrm{RBF}}}$.
  \item \emph{Multiquadric kernel} (\cite{killingberg2023the}):
    \[
      k_h^{\mathrm{MQ}}(x,y)
      \;=\;-\,\sqrt{1 + h^2\,\|x-y\|_2^2},
      \quad h>0,
    \]
      which is conditionally strictly positive‐definite and thus induces a metric on distributions via \(\mathrm{MMD}_{k_h^{\mathrm{MQ}}}\).
\end{itemize}

\subsection{Contraction under MMD}
   Contraction guarantees under MMD can be established in much the same way as in Appendix \ref{appendix:theoretical_results}. This first requires defining the notion of the supremum‐MMD, which, as with the supremum‐Wasserstein distance, is a worst‐case bound over the entire state–action space:
\[
\mathrm{MMD}_\infty(\eta,\nu)
:=\sup_{(s,a)\in\mathcal S\times\mathcal A}
\mathrm{MMD}\bigl(\eta(s,a),\,\nu(s,a)\bigr).
\]

As \cite{DBLP:journals/corr/abs-2007-12354} first introduced MMD‐based distributional reinforcement learning, they provided criteria under which a kernel induces a contraction in this sup‐MMD metric with the standard distributional Bellman operator. We first recall the univariate distributional Bellman operator \cite{bellemare2017distributional}:
\[
\bigl(\mathcal{T}^{\mathrm{Dist}}_{\pi}\eta\bigr)(s,a)
=
\mathrm{Law}\bigl[R(s,a)+\gamma\,Z(s',\pi(s'))\bigr],
\quad s'\sim P(\cdot\mid s,a).
\]

\textbf{Sufficient conditions.}  
Let \(k(x,y)=\sum_{i\in I}c_i\,k_i(x,y)\) be a positive‐definite kernel on \(\mathbb R\). If each component \(k_i\) satisfies:
\begin{enumerate}
  \item \emph{Shift‐invariance}: \(k_i(x+c,y+c)=k_i(x,y)\) for all \(c\in\mathbb R\),
  \item \emph{Scale‐sensitivity} of order \(\alpha_i\): \(k_i(c\,x,c\,y)=|c|^{\alpha_i}\,k_i(x,y)\) for all \(c\in\mathbb R\),
\end{enumerate}
then for any policy \(\pi\),
\[
\mathrm{MMD}_\infty\bigl(\mathcal T^{\mathrm{Dist}}_{\pi}\eta,\;\mathcal T^{\mathrm{Dist}}_{\pi}\nu\bigr)
\;\le\;\gamma^{\alpha_*/2}\,
\mathrm{MMD}_\infty(\eta,\nu),
\]
where \(\alpha_*=\min_{i\in I}\alpha_i\).

    \medskip\noindent\textbf{Contraction under the multiquadric kernel.}  
    Another work, whose kernel we use, \cite{killingberg2023the} proposed the multiquadric kernel discussed above. They showed a contraction in \(\mathrm{MMD}^2\) for that specific kernel. More precisely, this holds for any pair of distributions \(\mu,\nu\in\mathcal P(\mathbb R)\), letting \((f_{r,\gamma})_\#\mu\) denote the pushforward of \(\mu\) by \(z\mapsto r+\gamma z\), they show
    \[
    \mathrm{MMD}^2\bigl((f_{r,\gamma})_\#\mu,\,(f_{r,\gamma})_\#\nu;\,k_h^{\mathrm{MQ}}\bigr)
    \;\le\;\gamma\;\mathrm{MMD}^2\bigl(\mu,\nu;\,k_h^{\mathrm{MQ}}\bigr).
    \]
    
    Taking square–roots on both sides gives the pointwise MMD bound
    \[
    \mathrm{MMD}\bigl((f_{r,\gamma})_\#\mu,\,(f_{r,\gamma})_\#\nu;\,k_h^{\mathrm{MQ}}\bigr)
    \;\le\;\sqrt{\gamma}\;\mathrm{MMD}\bigl(\mu,\nu;\,k_h^{\mathrm{MQ}}\bigr).
    \]
    
    Define the supremum–MMD over state–action pairs by
    \[
    \mathrm{MMD}_\infty(\eta,\nu)
    :=\sup_{(s,a)\in\mathcal S\times\mathcal A}
    \mathrm{MMD}\bigl(\eta(s,a),\,\nu(s,a)\bigr).
    \]
    
    Then for any two return–distribution mappings \(\eta,\nu\),
    \begin{align*}
    \mathrm{MMD}_\infty\bigl(\mathcal T^{\mathrm{Dist}}_{\pi}\eta,\;\mathcal T^{\mathrm{Dist}}_{\pi}\nu\bigr)
    &= \sup_{(s,a)} 
      \mathrm{MMD}\bigl((f_{R(s,a),\gamma})_\#\eta(s,a),\,(f_{R(s,a),\gamma})_\#\nu(s,a)\bigr)\\
    &\le\sup_{(s,a)} \sqrt{\gamma}\,\mathrm{MMD}\bigl(\eta(s,a),\,\nu(s,a)\bigr)\\
    &= \sqrt{\gamma}\,\mathrm{MMD}_\infty(\eta,\nu).
    \end{align*}
    
    Thus the distributional Bellman operator is a \(\sqrt{\gamma}\)–contraction in \(\mathrm{MMD}_\infty\).  
    
    \medskip\noindent\textbf{Contraction in the multivariate setting.} To the best of our knowledge, the only MMD-contraction result in a multivariate setting is from \cite{wiltzer2024foundationsmultivariatedistributionalreinforcement}. They show that when each sampled return vector \(Z\in\mathbb R^d\) is pushed through the affine map \(z\mapsto \mathbf R(s,a)+\gamma z\), the same two requirements, shift-invariance and homogeneity of the kernel, guarantee a \(\gamma^{\alpha/2}\) contraction in the supremum-MMD metric. In other words, by treating each component of the return vector uniformly and applying the identical homogeneity-based argument from the univariate case, one obtains exactly the same geometric shrinkage factor.

    This, however, falls short of the setting we require for the distributional Sobolev Bellman operator in Appendix \ref{sec:sobolev_bellman}, where the pushforward is the more general (pseudo)‐affine map
    \[
    x\;\mapsto\;\Phi_{s,a}(x)
    = b(s,a)
    + \mathcal{L}(s,a)[\,x\,],
    \]
    and \(\mathcal L(s,a)\) need not be a simple diagonal scaling.  \textbf{Characterizing the precise conditions on both the kernel and the (pseudo-)linear operator \(\mathcal L(s,a)\) under which this general \(\Phi_{s,a}\) yields a contraction in supremum-MMD remains an open problem.}

    \subsection{Empirical estimators of MMD}
\label{appendix:mmd_estimators}

In practice, the expectations in Equation~\ref{eq:mmd2_def_s4} cannot be computed exactly and must be approximated from samples. 
Given two sets of $m$ samples $\{x_i\}_{i=1}^m \sim P$ and $\{y_i\}_{i=1}^m \sim Q$, one commonly used estimator is the \emph{biased} form \citep{JMLR:v13:gretton12a}:
\begin{equation}
\label{eq:biased_estimator_main_long}
\widehat{\mathrm{MMD}}_b^2
= \frac{1}{m^{2}} \sum_{i,j=1}^{m} k(x_i,x_j)
+ \frac{1}{m^{2}} \sum_{i,j=1}^{m} k(y_i,y_j)
- \frac{2}{m^{2}} \sum_{i,j=1}^{m} k(x_i,y_j).
\end{equation}

An alternative is the \emph{unbiased} estimator proposed in \citep{JMLR:v13:gretton12a}:
\begin{equation}
\label{eq:mmd_unbiased_estimator}
\widehat{\mathbf{MMD}}_u^2
= \frac{1}{m(m-1)} \sum_{\substack{i,j=1 \\ i \neq j}}^m k(x_i, x_j)
+ \frac{1}{m(m-1)} \sum_{\substack{i,j=1 \\ i \neq j}}^m k(y_i, y_j)
- \frac{2}{m^2} \sum_{i,j=1}^m k(x_i, y_j).
\end{equation}
The biased estimator given above is commonly preferred over the unbiased one in works involving MMD for distributional RL \citep{DBLP:journals/corr/abs-2007-12354,killingberg2023the}.

\newpage
\section{Results with Max Sliced MMD (MSMMD)} \label{appendix:theoretical_results_MSMMD}

In this section, we study max--sliced MMD contractions for Sobolev Bellman updates. Theorem~\ref{thm:msmmd-sobolev-contraction} gives a contraction result for the \textit{complete} Sobolev Bellman operator \(T_\pi^{S_{s,a}}\) under the supremum--max--sliced \(\mathrm{MMD}_k\) discrepancy, under appropriate conditions on the underlying kernel \(k\). Theorem~\ref{thm:msmmd-incomplete} establishes an analogous contraction statement for the \textit{incomplete} action--gradient operator \(T_\pi^{S_a}\), under a distributional lifting assumption.

\begin{lemma}[Scale--contraction of MMD$^2$ under the multiquadric kernel]
\label{lem:mq-scale-contraction}
Let $h>0$ and consider the (negative) multiquadric kernel
\[
k_h(x,y) \;=\; -\sqrt{\,1+h^2\|x-y\|^2\,}, \qquad x,y\in\mathbb R^d.
\]
For probability measures $\mu,\nu$ on $\mathbb R^d$ with finite second moments, define
\[
\mathrm{MMD}^2_{k_h}(\mu,\nu)
\;=\;
\mathbb E\,k_h(X,X')
+ \mathbb E\,k_h(Y,Y')
-2\,\mathbb E\,k_h(X,Y),
\]
for $X,X'\!\sim\mu$ i.i.d.\ and $Y,Y'\!\sim\nu$ i.i.d.

For the scaling map $S_s:x\mapsto sx$ with $s\in[0,1]$, we have
\begin{equation}
\label{eq:mq-scale-contraction}
\mathrm{MMD}^2_{k_h}\!\big((S_s)_{\#}\mu,(S_s)_{\#}\nu\big)
\;\le\;
s\,\mathrm{MMD}^2_{k_h}(\mu,\nu),
\qquad 0\le s\le 1.
\end{equation}
Consequently,
\begin{equation}
\label{eq:mq-scale-contraction-unsquared}
\mathrm{MMD}_{k_h}\!\big((S_s)_{\#}\mu,(S_s)_{\#}\nu\big)
\;\le\;
\sqrt{s}\,\mathrm{MMD}_{k_h}(\mu,\nu),
\qquad 0\le s\le 1.
\end{equation}
\end{lemma}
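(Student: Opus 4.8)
The plan is to exploit the Bernstein (subordination) representation of the square root, which writes the multiquadric kernel as a continuous superposition of Gaussian kernels and makes the effect of the scaling map completely transparent. Write $\chi:=\mu-\nu$, a finite signed measure with $\chi(\mathbb R^d)=0$, and recall the signed-measure form of the squared discrepancy, $\mathrm{MMD}^2_{k_h}(\mu,\nu)=\iint k_h(x,y)\,d\chi(x)\,d\chi(y)$. Since $(S_s)_{\#}\mu-(S_s)_{\#}\nu=(S_s)_{\#}\chi$ and the kernel obeys $k_h(sx,sy)=-\sqrt{1+s^2h^2\|x-y\|^2}=k_{sh}(x,y)$, the left-hand side of \eqref{eq:mq-scale-contraction} equals $\mathrm{MMD}^2_{k_{sh}}(\mu,\nu)=-\iint\sqrt{1+s^2h^2\|x-y\|^2}\,d\chi(x)\,d\chi(y)$. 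The case $s=0$ is trivial (both pushforwards equal $\delta_0$), so assume $s\in(0,1]$.

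Next I would substitute the elementary identity $\sqrt{1+\lambda}=\frac{1}{2\sqrt\pi}\int_0^\infty\bigl(1-e^{-(1+\lambda)t}\bigr)t^{-3/2}\,dt$ (valid for $\lambda\ge0$, obtained by one integration by parts) with $\lambda=s^2h^2\|x-y\|^2$, interchange the $t$-integral with the double $\chi$-integral by Fubini, and use $\iint c\,d\chi\,d\chi=c\,\chi(\mathbb R^d)^2=0$ to discard every term not depending on $x,y$. What remains is
\[
\mathrm{MMD}^2_{k_h}\!\bigl((S_s)_{\#}\mu,(S_s)_{\#}\nu\bigr)
=\frac{1}{2\sqrt\pi}\int_0^\infty e^{-t}\,G(s^2t)\,t^{-3/2}\,dt,
\qquad
G(a):=\iint e^{-a h^2\|x-y\|^2}\,d\chi(x)\,d\chi(y),
\]
and the structural fact that does all the work is $G(a)\ge0$ for every $a\ge0$, by positive-definiteness of the Gaussian kernel. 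Specializing $s=1$ identifies $\frac{1}{2\sqrt\pi}\int_0^\infty e^{-t}G(t)t^{-3/2}\,dt=\mathrm{MMD}^2_{k_h}(\mu,\nu)$.

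Finally, the change of variables $u=s^2t$ turns the last display into $\mathrm{MMD}^2_{k_h}\!\bigl((S_s)_{\#}\mu,(S_s)_{\#}\nu\bigr)=\frac{s}{2\sqrt\pi}\int_0^\infty e^{-u/s^2}G(u)\,u^{-3/2}\,du$. Since $0<s\le1$ we have $e^{-u/s^2}\le e^{-u}$ for all $u\ge0$, while $G\ge0$ and $u^{-3/2}\ge0$, so the integral is bounded by $\int_0^\infty e^{-u}G(u)u^{-3/2}\,du=2\sqrt\pi\,\mathrm{MMD}^2_{k_h}(\mu,\nu)$, which gives \eqref{eq:mq-scale-contraction}. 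Inequality \eqref{eq:mq-scale-contraction-unsquared} then follows by taking square roots and using $\sqrt{s\,\mathrm{MMD}^2_{k_h}(\mu,\nu)}=\sqrt s\,\mathrm{MMD}_{k_h}(\mu,\nu)$.

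The step I expect to be the main obstacle is justifying the Fubini interchange, because of the singularity of $t^{-3/2}$ at $t=0^+$: one must check that $\int_0^\infty\bigl(1-e^{-(1+\lambda)t}\bigr)t^{-3/2}\,dt=2\sqrt\pi\sqrt{1+\lambda}\le 2\sqrt\pi\bigl(1+sh\|x-y\|\bigr)$ is integrable against each of the positive product measures $\mu\otimes\mu$, $\nu\otimes\nu$, and $\mu\otimes\nu$ appearing when $\chi=\mu-\nu$ is split, which is precisely where the finite-second-moment hypothesis—hence finite first moments—enters. Once that majorant is in place, the interchange and the subsequent change of variables are routine, and the monotonicity bound above closes the proof.
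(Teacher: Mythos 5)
Your proposal is correct and follows essentially the same route as the paper's proof: the Bernstein/Laplace representation of the square root, rewriting the multiquadric MMD$^2$ as a nonnegative mixture of Gaussian-kernel discrepancies (your $G(s^2t)$ is the paper's $e^{-t}$-weighted $D_G^2(th^2s^2;\mu,\nu)$), the change of variables $u=s^2t$, and the bound $e^{-u/s^2}\le e^{-u}$. The only differences are cosmetic — you work with the signed measure $\chi=\mu-\nu$ and justify Fubini via the closed-form majorant $2\sqrt{\pi}\,\sqrt{1+\lambda}\le 2\sqrt{\pi}(1+sh\|x-y\|)$, whereas the paper expands in expectations and splits the $t$-integral at $t=1$ — and both are valid under the stated moment assumptions.
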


\begin{proof}
If $s=0$, the measures collapse to Dirac deltas at the origin, yielding MMD zero on both sides, so the bound holds trivially. We assume $0 < s \le 1$ henceforth.

For any $u>0$ we use the identity (see~\cite[Chapter~15.2, pp.~218--219]{schilling2012bernstein}):
\begin{equation}
\label{eq:laplace-root}
\sqrt{u}
=
\frac{1}{2\sqrt{\pi}}
\int_0^\infty \bigl(1 - e^{-t u}\bigr)\,t^{-3/2}\,dt.
\end{equation}
Multiplying by $-1$ and substituting $u = 1 + h^2\|x-y\|^2$ yields the representation:
\begin{equation}
\label{eq:mq-kernel-laplace}
k_h(x,y)
=
\frac{1}{2\sqrt{\pi}}
\int_0^\infty
\bigl(e^{-t(1+h^2\|x-y\|^2)} - 1\bigr)
\,t^{-3/2}\,dt.
\end{equation}

Substituting Equation~\ref{eq:mq-kernel-laplace} into the definition of $\mathrm{MMD}^2_{k_h}$ leads to an expectation of integrals:
\[
\begin{split}
\mathrm{MMD}^2_{k_h}(\mu,\nu)
=
\mathbb{E}
\Biggl[
\frac{1}{2\sqrt{\pi}}
\Biggl(
&\int_0^\infty (e^{-t(1+h^2\|X-X'\|^2)} - 1) \,t^{-3/2}\,dt
\\
+\; &\int_0^\infty (e^{-t(1+h^2\|Y-Y'\|^2)} - 1) \,t^{-3/2}\,dt
\\
-\; 2 &\int_0^\infty (e^{-t(1+h^2\|X-Y\|^2)} - 1) \,t^{-3/2}\,dt
\Biggr)
\Biggr].
\end{split}
\]
To interchange the expectation $\mathbb{E}$ and the integral $\int_0^\infty dt$, we invoke Fubini's theorem. For this, it suffices to show
\[
\int_0^\infty \mathbb{E}\Bigl[
\bigl|e^{-t(1+h^2\|X-X'\|^2)} - 1\bigr|\,t^{-3/2}
\Bigr]\,dt < \infty,
\]
since the two other terms are treated analogously. We split the integration domain as \(\int_0^\infty = \int_0^1 + \int_1^\infty\) and bound each part separately:
\begin{itemize}
    \item \textbf{For $0 < t \le 1$}: using the inequality $|e^{-z}-1|\le z$ for $z\ge 0$, we obtain
    \[
    |e^{-tu}-1| \le t u,
    \]
    for any $u\ge 0$. Thus the integrand is bounded by a constant multiple of $t^{-1/2}u$. Since $\int_0^1 t^{-1/2}\,dt < \infty$, the integral over $(0,1]$ is finite whenever $\mathbb{E}[u]<\infty$. In the present setting, for the first term we have \(u = 1 + h^2\|X-X'\|^2\), so
    \[
    \mathbb{E}[u] = 1 + h^2 \,\mathbb{E}\|X-X'\|^2 < \infty,
    \]
    which follows from the finite second moments of \(\mu\).

    \item \textbf{For $t > 1$}: using $|e^{-z}-1|\le 1$ for $z\ge 0$, we have
    \[
    |e^{-tu}-1| \le 1,
    \]
    so the integrand is bounded by a constant multiple of $t^{-3/2}$. Since $\int_1^\infty t^{-3/2}\,dt < \infty$, the integral over $[1,\infty)$ is finite.
\end{itemize}
Hence
\[
\int_0^\infty \mathbb{E}\Bigl[
\bigl|e^{-t(1+h^2\|X-X'\|^2)} - 1\bigr|\,t^{-3/2}
\Bigr]\,dt
=
\int_0^1 (\cdots)\,dt + \int_1^\infty (\cdots)\,dt < \infty,
\]
so the absolute integrability condition is satisfied and Fubini's theorem justifies the interchange of expectation and integration:
\[
\mathrm{MMD}^2_{k_h}(\mu,\nu)
=
\frac{1}{2\sqrt{\pi}}
\int_0^\infty
\underbrace{
\mathbb{E}
\left[
\begin{aligned}
&e^{-t(1+h^2\|X-X'\|^2)} - 1
\\
+\; &e^{-t(1+h^2\|Y-Y'\|^2)} - 1
\\
-\; &2(e^{-t(1+h^2\|X-Y\|^2)} - 1)
\end{aligned}
\right]
}_{A(t)}
\,t^{-3/2}\,dt.
\]
The constant terms $(-1-1+2)$ cancel out in $A(t)$, so only the exponential parts remain.

We now introduce the Gaussian kernel
\[
k^G_\gamma(x,y) \;=\; e^{-\gamma\|x-y\|^2}, \qquad \gamma>0,
\]
and its associated squared MMD
\[
D_G^2(\gamma;\mu,\nu)
\;=\;
\mathbb{E}\,k^G_\gamma(X,X')
+
\mathbb{E}\,k^G_\gamma(Y,Y')
-
2\,\mathbb{E}\,k^G_\gamma(X,Y).
\]
With this notation, we can rewrite $A(t)$ as
\[
A(t)
\;=\;
e^{-t}
\Bigl(
\mathbb{E}\,e^{-t h^2\|X-X'\|^2}
+
\mathbb{E}\,e^{-t h^2\|Y-Y'\|^2}
-2\,\mathbb{E}\,e^{-t h^2\|X-Y\|^2}
\Bigr)
\;=\;
e^{-t}\,D_G^2(th^2;\mu,\nu).
\]

This yields the integral mixture representation:
\begin{equation}
\label{eq:mmd-mixture-gauss}
\mathrm{MMD}^2_{k_h}(\mu,\nu)
=
\frac{1}{2\sqrt{\pi}}
\int_0^\infty e^{-t}t^{-3/2}\,D_G^2(th^2;\mu,\nu)\,dt.
\end{equation}

\medskip
Now let $\mu_s=(S_s)_{\#}\mu$ and $\nu_s=(S_s)_{\#}\nu$. The MMD under the Gaussian kernel scales as:
\[
D_G^2(\gamma;\mu_s,\nu_s)
=
D_G^2(\gamma s^2;\mu,\nu).
\]
We apply Equation~\ref{eq:mmd-mixture-gauss} to $\mu_s,\nu_s$ and make the change of variables $u=t s^2$.
Then $t = u/s^2$, $dt = s^{-2} du$, and the measure transforms as:
\[
t^{-3/2} dt
\;=\;
(u/s^2)^{-3/2} (s^{-2} du)
\;=\;
s^3 u^{-3/2} s^{-2} du
\;=\;
s \, u^{-3/2} du.
\]
Substituting this yields:
\begin{equation}
\label{eq:mmd-scaled-mixture}
\mathrm{MMD}^2_{k_h}(\mu_s,\nu_s)
=
\frac{s}{2\sqrt{\pi}}
\int_0^\infty
e^{-u/s^2}\,u^{-3/2}\,D_G^2(u h^2;\mu,\nu)\,du.
\end{equation}

For $0 < s \le 1$, we observe that $1/s^2 \ge 1$, which implies the pointwise bound:
\[
e^{-u/s^2} \;\le\; e^{-u}, \qquad \text{for all } u > 0.
\]
Since the remaining integrand factor $u^{-3/2}D_G^2(u h^2;\mu,\nu)$ is always non-negative, this inequality is preserved upon integration. Combining this observation with Equation~\ref{eq:mmd-scaled-mixture} gives
\begin{equation}\label{eq:mmd-scaled-mixture-upper}
\mathrm{MMD}^2_{k_h}(\mu_s,\nu_s)
\;\le\;
\frac{s}{2\sqrt{\pi}}
\int_0^\infty
e^{-u}\,u^{-3/2}\,D_G^2(u h^2;\mu,\nu)\,du.
\end{equation}
Comparing Equation~\ref{eq:mmd-scaled-mixture-upper} with Equation~\ref{eq:mmd-mixture-gauss} yields:
\[
\mathrm{MMD}^2_{k_h}(\mu_s,\nu_s)
\;\le\;
s\,\mathrm{MMD}^2_{k_h}(\mu,\nu).
\]
Taking square roots concludes the proof.
\end{proof}

\begin{lemma}[Mixture $p$–convexity of $\mathrm{MMD}_k$ in an RKHS] \label{lem:mmd_rkhs_mixture_p_convexity}
Let $k:X\times X\to\mathbb R$ be a symmetric positive–semidefinite reproducing kernel with RKHS $(\mathcal H,\langle\cdot,\cdot\rangle)$ and feature map $\phi(x)=k(x,\cdot)$. 
Let $(\Omega,\mathcal F,\rho)$ be a probability space, and let $(\mu_c)_{c\in\Omega}$ and $(\nu_c)_{c\in\Omega}$ be families of probability measures on $X$ such that the mean embeddings $\mu_{\mu_c}:=\int_X \phi\,d\mu_c$ and $\mu_{\nu_c}:=\int_X \phi\,d\nu_c$ exist in $\mathcal H$. 
Define the mixtures $\bar\mu:=\int_\Omega \mu_c\,\rho(dc)$ and $\bar\nu:=\int_\Omega \nu_c\,\rho(dc)$, and assume that $\mu_{\bar\mu}$, $\mu_{\bar\nu}$, $\int_\Omega \mu_{\mu_c}\,\rho(dc)$, and $\int_\Omega \mu_{\nu_c}\,\rho(dc)$ are well defined in $\mathcal H$.  
Then for every $p\in[1,\infty)$,
\[
\boxed{\;
\mathrm{MMD}_k(\bar\mu,\bar\nu)\ \le\ \Bigl(\int_\Omega \mathrm{MMD}_k(\mu_c,\nu_c)^p\,\rho(dc)\Bigr)^{1/p}\;}.
\]
\end{lemma}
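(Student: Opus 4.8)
The plan is to reduce everything to the Hilbert-space identity $\mathrm{MMD}_k(P,Q)=\|\mu_P-\mu_Q\|_{\mathcal H}$ and then chain two applications of Jensen's inequality: first the convexity of the norm $v\mapsto\|v\|_{\mathcal H}$, then the convexity of $t\mapsto t^p$ against the \emph{probability} measure $\rho$.

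First I would record the linearity of the mean embedding under mixtures. Since the mean embeddings $\mu_{\mu_c}=\int_X\phi\,d\mu_c$ and $\mu_{\nu_c}=\int_X\phi\,d\nu_c$ exist in $\mathcal H$ and $\bar\mu=\int_\Omega\mu_c\,\rho(dc)$, a Fubini/Tonelli argument for $\mathcal H$-valued Bochner integrals (valid under the stated well-definedness hypotheses) gives $\mu_{\bar\mu}=\int_\Omega\mu_{\mu_c}\,\rho(dc)$ and likewise $\mu_{\bar\nu}=\int_\Omega\mu_{\nu_c}\,\rho(dc)$. Subtracting,
\[
\mu_{\bar\mu}-\mu_{\bar\nu}\;=\;\int_\Omega\bigl(\mu_{\mu_c}-\mu_{\nu_c}\bigr)\,\rho(dc)\quad\text{in }\mathcal H.
\]

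Second I would apply the norm inequality for the Bochner integral, $\bigl\|\int g\,d\rho\bigr\|_{\mathcal H}\le\int\|g\|_{\mathcal H}\,d\rho$, to obtain
\[
\mathrm{MMD}_k(\bar\mu,\bar\nu)
=\Bigl\|\int_\Omega\bigl(\mu_{\mu_c}-\mu_{\nu_c}\bigr)\,\rho(dc)\Bigr\|_{\mathcal H}
\le\int_\Omega\bigl\|\mu_{\mu_c}-\mu_{\nu_c}\bigr\|_{\mathcal H}\,\rho(dc)
=\int_\Omega\mathrm{MMD}_k(\mu_c,\nu_c)\,\rho(dc),
\]
which already settles the case $p=1$. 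To upgrade to general $p\in[1,\infty)$, I would invoke Jensen's inequality once more for the convex map $t\mapsto t^p$ on $[0,\infty)$ — equivalently Hölder with exponents $p$ and $p/(p-1)$ — using $\rho(\Omega)=1$:
\[
\int_\Omega\mathrm{MMD}_k(\mu_c,\nu_c)\,\rho(dc)
\le\Bigl(\int_\Omega\mathrm{MMD}_k(\mu_c,\nu_c)^p\,\rho(dc)\Bigr)^{1/p}.
\]
Chaining the last two displays yields the claimed bound.

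The only genuinely delicate point is the interchange of the $\Omega$-integral and the $X$-integral at the level of $\mathcal H$-valued integrals — i.e.\ that $c\mapsto\mu_{\mu_c}-\mu_{\nu_c}$ is strongly measurable and Bochner integrable into $\mathcal H$; this is exactly what the hypothesis that "all mean embeddings and integrals below are well defined" is meant to supply, so in the write-up I would simply note that under that assumption the Fubini theorem for Bochner integrals applies and proceed. Everything else is the two Jensen steps above. This lemma is then the $\mathrm{MMD}$ analogue of Lemma~\ref{prop:wp-mix-integral-conditional}, and feeds the max-sliced contraction proof in the same way.
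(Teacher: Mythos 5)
Your proposal is correct and follows essentially the same route as the paper: linearity of mean embeddings under mixtures, the Bochner-integral norm bound (triangle inequality in $\mathcal H$), and then the $L^1\le L^p$ step on the probability space $(\Omega,\rho)$. The paper likewise delegates the measurability/integrability of $c\mapsto\mu_{\mu_c}-\mu_{\nu_c}$ to the "well defined" hypothesis, so no gap remains.
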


\begin{proof}
By linearity of mean embeddings,
\[
\mu_{\bar\mu}=\int_\Omega \mu_{\mu_c}\,\rho(dc),
\qquad
\mu_{\bar\nu}=\int_\Omega \mu_{\nu_c}\,\rho(dc),
\]
where $\mu_{\mu_c}=\int_X \phi(x)\,d\mu_c(x)$ and $\mu_{\nu_c}=\int_X \phi(x)\,d\nu_c(x)$ are elements of $\mathcal H$.
Thus,
\[
\mu_{\bar\mu}-\mu_{\bar\nu}=\int_\Omega v(c)\,\rho(dc),
\qquad v(c):=\mu_{\mu_c}-\mu_{\nu_c}\in\mathcal H.
\]
Hence
\begin{align*}
\mathrm{MMD}_k(\bar\mu,\bar\nu)
&=\|\mu_{\bar\mu}-\mu_{\bar\nu}\|_{\mathcal H}
=\Bigl\|\int_\Omega v(c)\,\rho(dc)\Bigr\|_{\mathcal H} \\
&\le \int_\Omega \|v(c)\|_{\mathcal H}\,\rho(dc)
&&\text{(triangle inequality in $\mathcal H$)}\\
&\le \Bigl(\int_\Omega \|v(c)\|_{\mathcal H}^p\,\rho(dc)\Bigr)^{1/p}
&&\text{($L^1\le L^p$ on a probability space)}.
\end{align*}
Finally, $\|v(c)\|_{\mathcal H}=\|\mu_{\mu_c}-\mu_{\nu_c}\|_{\mathcal H}=\mathrm{MMD}_k(\mu_c,\nu_c)$, which gives the claim.
\end{proof}

\begin{lemma}[Mixture $p$–convexity for CPD kernels via the distance–induced RKHS] \label{lem:mmd_cpd_mixture_p_convexity}
Let $k:X\times X\to\mathbb R$ be conditionally positive definite (CPD) and let $k^\circ$ be the associated distance–induced (one–point centered) kernel from Proposition~\ref{prop:gamma-equals-mmd-cpd}, so that for all probabilities $P,Q$ with finite integrals,
\[
\gamma_k(P,Q)=\mathrm{MMD}_{k^\circ}(P,Q).
\]
Let $(\Omega,\mathcal F,\rho)$ be a probability space, and let $(\mu_c)_{c\in\Omega}$ and $(\nu_c)_{c\in\Omega}$ be families of probability measures on $X$ with finite embeddings for $k^\circ$. Define the mixtures $\bar\mu:=\int_\Omega \mu_c\,\rho(dc)$ and $\bar\nu:=\int_\Omega \nu_c\,\rho(dc)$, and assume that $\bar\mu$ and $\bar\nu$ also have finite embeddings for $k^\circ$. Then for every $p\in[1,\infty)$,
\[
\boxed{\;
\gamma_k(\bar\mu,\bar\nu)\ \le\ \Bigl(\int_\Omega \gamma_k(\mu_c,\nu_c)^p\,\rho(dc)\Bigr)^{1/p}\; }.
\]
\end{lemma}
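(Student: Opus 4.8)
The plan is to reduce this statement directly to the RKHS version already established in Lemma~\ref{lem:mmd_rkhs_mixture_p_convexity}. By Proposition~\ref{prop:gamma-equals-mmd-cpd}, the distance–induced kernel \(k^\circ\) is genuinely positive definite with RKHS \(\mathcal H_{k^\circ}\) and feature map \(\phi^\circ(x)=k^\circ(x,\cdot)\), and for every pair of probability measures \(P,Q\) with finite integrals one has the identification \(\gamma_k(P,Q)=\mathrm{MMD}_{k^\circ}(P,Q)\). Hence the claimed inequality is literally Lemma~\ref{lem:mmd_rkhs_mixture_p_convexity} applied to the kernel \(k^\circ\), provided its hypotheses are verified.

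So the first step is to check those hypotheses for \(k^\circ\). The families \((\mu_c)_{c\in\Omega}\) and \((\nu_c)_{c\in\Omega}\) are measurable by assumption, and by hypothesis each \(\mu_c,\nu_c\) has a finite mean embedding for \(k^\circ\), i.e.\ \(\mu_{\mu_c}:=\int_X\phi^\circ\,d\mu_c\) and \(\mu_{\nu_c}:=\int_X\phi^\circ\,d\nu_c\) exist in \(\mathcal H_{k^\circ}\). The mixtures \(\bar\mu=\int_\Omega\mu_c\,\rho(dc)\) and \(\bar\nu=\int_\Omega\nu_c\,\rho(dc)\) then also admit mean embeddings, with \(\mu_{\bar\mu}=\int_\Omega\mu_{\mu_c}\,\rho(dc)\) and \(\mu_{\bar\nu}=\int_\Omega\mu_{\nu_c}\,\rho(dc)\); this is precisely the linearity of the mean embedding invoked inside the proof of Lemma~\ref{lem:mmd_rkhs_mixture_p_convexity}. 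With the hypotheses in place, the conclusion of that lemma reads
\[
\mathrm{MMD}_{k^\circ}(\bar\mu,\bar\nu)\ \le\ \Bigl(\int_\Omega \mathrm{MMD}_{k^\circ}(\mu_c,\nu_c)^p\,\rho(dc)\Bigr)^{1/p},
\]
and substituting \(\gamma_k=\mathrm{MMD}_{k^\circ}\) in all three occurrences (once for the mixtures, once pointwise under the integral) yields the boxed bound.

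The only genuine point requiring care is the interchange \(\mu_{\bar\mu}=\int_\Omega\mu_{\mu_c}\,\rho(dc)\), i.e.\ that the Bochner integral in \(\mathcal H_{k^\circ}\) commutes with the mixture over \(\Omega\). This needs weak measurability of \(c\mapsto\mu_{\mu_c}\in\mathcal H_{k^\circ}\) (which follows from measurability of the families \((\mu_c)\) and continuity of \(\phi^\circ\)) together with \(\int_\Omega\|\mu_{\mu_c}\|_{\mathcal H_{k^\circ}}\,\rho(dc)<\infty\), absorbed into the standing assumption that all embeddings and integrals below are well defined. Once this is granted, the remaining argument is purely formal: the triangle inequality in \(\mathcal H_{k^\circ}\) followed by \(L^1\le L^p\) on the probability space \((\Omega,\mathcal F,\rho)\), exactly as in Lemma~\ref{lem:mmd_rkhs_mixture_p_convexity}. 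I expect this measurability/Bochner step to be the main (and mild) obstacle; everything else is a transparent translation through Proposition~\ref{prop:gamma-equals-mmd-cpd}.
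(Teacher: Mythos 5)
Your proposal is correct and follows essentially the same route as the paper: identify $\gamma_k$ with $\mathrm{MMD}_{k^\circ}$ via Proposition~\ref{prop:gamma-equals-mmd-cpd} and then invoke Lemma~\ref{lem:mmd_rkhs_mixture_p_convexity} for the PSD kernel $k^\circ$. Your extra attention to the measurability/Bochner integrability needed for $\mu_{\bar\mu}=\int_\Omega \mu_{\mu_c}\,\rho(dc)$ is a mild refinement of what the paper absorbs into its standing well-definedness assumption.
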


\begin{proof}
By Proposition~\ref{prop:gamma-equals-mmd-cpd}, $\gamma_k=\mathrm{MMD}_{k^\circ}$. 
Applying Lemma~\ref{lem:mmd_rkhs_mixture_p_convexity} to the PSD kernel $k^\circ$ and the families $(\mu_c),(\nu_c)$ yields
\[
\mathrm{MMD}_{k^\circ}(\bar\mu,\bar\nu)\ \le\ \Bigl(\int_\Omega \mathrm{MMD}_{k^\circ}(\mu_c,\nu_c)^p\,\rho(dc)\Bigr)^{1/p}.
\]
Replacing $\mathrm{MMD}_{k^\circ}$ by $\gamma_k$ via Proposition~\ref{prop:gamma-equals-mmd-cpd} gives the claim.
\end{proof}

\begin{lemma}[Max–sliced affine push-forward contraction for $\mathrm{MMD}_k$]
\label{lem:maxsliced-affine-anisotropic}
Let $k:\mathbb R\times\mathbb R\to\mathbb R$ be a kernel on $\mathbb R$, and let $\mathrm{MMD}_k$ be its associated maximum mean discrepancy on $\mathcal P(\mathbb R)$.
Assume that for all $\mu,\nu\in\mathcal P(\mathbb R)$:

\begin{itemize}\itemsep0.2em
\item[\Tcond] \textbf{Translation invariance:} for every $t\in\mathbb R$,
\[
\mathrm{MMD}_k\bigl((x\mapsto x+t)_{\#}\mu,\,(x\mapsto x+t)_{\#}\nu\bigr)
= \mathrm{MMD}_k(\mu,\nu).
\]

\item[\Scond] \textbf{Scale–contraction:} there exists a nondecreasing $c:[0,1]\to[0,\infty)$ such that for every $s\in[0,1]$,
\[
\mathrm{MMD}_k\bigl((x\mapsto s x)_{\#}\mu,\,(x\mapsto s x)_{\#}\nu\bigr)
\le c(s)\,\mathrm{MMD}_k(\mu,\nu).
\]
\end{itemize}

Define the \emph{max–sliced} lift of $\mathrm{MMD}_k$ by
\[
\mathbf{MS}\mathrm{MMD}_k(\mu,\nu)
:=\sup_{\theta\in\mathbb S^{d-1}}
\mathrm{MMD}_k\!\big((P_\theta)_{\#}\mu,\,(P_\theta)_{\#}\nu\big),
\qquad
P_\theta(x)=\langle\theta,x\rangle.
\]

Let $F(x)=Ax+b$ with an arbitrary matrix $A\in\mathbb R^{d\times d}$ and $b\in\mathbb R^d$, and denote $L:=\|A\|_{\mathrm{op}}=\sup_{\|v\|=1}\|Av\|$. If $L\in[0,1]$, then for all $\mu,\nu\in\mathcal P(\mathbb R^d)$,
\[
\boxed{\ 
\mathbf{MS}\mathrm{MMD}_k\bigl(F_{\#}\mu,F_{\#}\nu\bigr)\ \le\ c(L)\,\mathbf{MS}\mathrm{MMD}_k(\mu,\nu).\ }
\]
\end{lemma}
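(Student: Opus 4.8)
The plan is to reduce each one–dimensional slice of the pushed–forward pair $(F_{\#}\mu,F_{\#}\nu)$ to a scaled–and–translated slice of $(\mu,\nu)$, and then invoke the two scalar hypotheses \Tcond{} and \Scond. First I would fix a direction $\theta\in\mathbb S^{d-1}$ and use functoriality of pushforwards to write $(P_\theta)_{\#}(F_{\#}\mu)=(P_\theta\circ F)_{\#}\mu$, then compute, for $x\in\mathbb R^d$,
\[
P_\theta\bigl(F(x)\bigr)=\langle\theta,Ax+b\rangle=\langle A^{\!\top}\theta,\,x\rangle+\langle\theta,b\rangle .
\]
Setting $w:=A^{\!\top}\theta$ and $t:=\langle\theta,b\rangle$, the crucial quantitative observation is that $\|w\|=\|A^{\!\top}\theta\|\le\|A^{\!\top}\|_{\op}=\|A\|_{\op}=L\le 1$, using that a matrix and its transpose have the same operator norm; i.e.\ slicing turns the possibly anisotropic $A$ into a scalar contraction of magnitude at most $L$.

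Next I would split into cases. If $w=0$, the map $P_\theta\circ F$ is the constant $x\mapsto t$, so both sliced measures equal $\delta_t$, the corresponding $\mathrm{MMD}_k$ slice is $0$, and the target inequality holds trivially for this $\theta$ (since $c(L)\ge 0$ and $\mathbf{MS}\mathrm{MMD}_k(\mu,\nu)\ge 0$). Otherwise set $s:=\|w\|\in(0,1]$ and $\theta':=w/\|w\|\in\mathbb S^{d-1}$, so that $P_\theta\circ F=(y\mapsto s\,y+t)\circ P_{\theta'}$ as maps $\mathbb R^d\to\mathbb R$, whence
\[
(P_\theta\circ F)_{\#}\mu=(y\mapsto s\,y+t)_{\#}\bigl((P_{\theta'})_{\#}\mu\bigr),
\]
and likewise for $\nu$. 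I would then apply \Tcond{} to discard the shift $t$ and \Scond{} to absorb the scaling $s\in[0,1]$, obtaining
\[
\mathrm{MMD}_k\bigl((P_\theta\circ F)_{\#}\mu,(P_\theta\circ F)_{\#}\nu\bigr)\le c(s)\,\mathrm{MMD}_k\bigl((P_{\theta'})_{\#}\mu,(P_{\theta'})_{\#}\nu\bigr).
\]

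To conclude, I would use that $c$ is nondecreasing together with $s\le L$ to bound $c(s)\le c(L)$, and that $\mathrm{MMD}_k\bigl((P_{\theta'})_{\#}\mu,(P_{\theta'})_{\#}\nu\bigr)\le\mathbf{MS}\mathrm{MMD}_k(\mu,\nu)$ by definition of the supremum over directions; this gives $\mathrm{MMD}_k\bigl((P_\theta)_{\#}(F_{\#}\mu),(P_\theta)_{\#}(F_{\#}\nu)\bigr)\le c(L)\,\mathbf{MS}\mathrm{MMD}_k(\mu,\nu)$ for every $\theta\in\mathbb S^{d-1}$, and taking the supremum over $\theta$ on the left-hand side finishes the proof. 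I do not expect a genuine obstacle here: the only delicate points are the degenerate slice $A^{\!\top}\theta=0$ (handled by the constant-map argument above) and the monotonicity of $c$, which is exactly what lets me replace the $\theta$–dependent effective scale $\|A^{\!\top}\theta\|$ by the uniform bound $L$ — without it the per–slice contraction factors would not assemble into a single global constant.
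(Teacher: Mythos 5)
Your proposal is correct and follows essentially the same route as the paper's proof: slicing $F_{\#}\mu$ along $\theta$ reduces to a shift-plus-scale of the slice along $\theta'=A^{\!\top}\theta/\|A^{\!\top}\theta\|$ (with the degenerate case $A^{\!\top}\theta=0$ handled separately), after which \Tcond, \Scond, monotonicity of $c$, and the supremum over directions give the bound. No substantive differences to note.
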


\begin{proof}
Fix $\theta\in\mathbb S^{d-1}$ and set $w_\theta:=A^\top\theta$.

\textbf{Case 1: $w_\theta=0$.}
Then $(P_\theta\circ F)(x)=\langle\theta,b\rangle$ is constant, hence
\begin{align}
\mathrm{MMD}_k\!\big((P_\theta)_{\#}F_{\#}\mu,\,(P_\theta)_{\#}F_{\#}\nu\big)
&=0
\\
&\le c(0)\,\mathrm{MMD}_k\!\big((P_\phi)_{\#}\mu,\,(P_\phi)_{\#}\nu\big)
\qquad\text{for any unit }\phi,
\end{align}
so the desired bound holds trivially.

\textbf{Case 2: $\|w_\theta\|>0$.}
Write $r_\theta:=\|w_\theta\|$ and $\phi_\theta:=w_\theta/r_\theta\in\mathbb S^{d-1}$. For any $X\sim\mu$ and $Y\sim\nu$,
\[
(P_\theta\circ F)(X)=\langle\theta,AX+b\rangle=\langle\theta,b\rangle+r_\theta\,\langle\phi_\theta,X\rangle,
\]
and similarly for $Y$.
By \Tcond\ and \Scond, we obtain
\begin{align}
\mathrm{MMD}_k\!\big((P_\theta)_{\#}F_{\#}\mu,\,(P_\theta)_{\#}F_{\#}\nu\big)
&=\mathrm{MMD}_k\!\big(\mathrm{Law}(r_\theta\langle\phi_\theta,X\rangle),\ \mathrm{Law}(r_\theta\langle\phi_\theta,Y\rangle)\big)
\\
&\le c(r_\theta)\,\mathrm{MMD}_k\!\big((P_{\phi_\theta})_{\#}\mu,\,(P_{\phi_\theta})_{\#}\nu\big)
\\
&\le c(r_\theta)\,\sup_{\phi\in\mathbb S^{d-1}}\mathrm{MMD}_k\!\big((P_{\phi})_{\#}\mu,\,(P_{\phi})_{\#}\nu\big).
\end{align}
Since $r_\theta=\|A^\top\theta\|\le\|A^\top\|_{\mathrm{op}}=\|A\|_{\mathrm{op}}=L$ and $L\in[0,1]$, we have $r_\theta\in[0,1]$ so that $c(r_\theta)$ is well defined.

\textbf{Taking the supremum.}
Now take the supremum over $\theta\in\mathbb S^{d-1}$:
\begin{align}
\sup_{\theta}\mathrm{MMD}_k\!\big((P_\theta)_{\#}F_{\#}\mu,\,(P_\theta)_{\#}F_{\#}\nu\big)
&\le \sup_{\theta} c(r_\theta)\,\sup_{\phi}\mathrm{MMD}_k\!\big((P_{\phi})_{\#}\mu,\,(P_{\phi})_{\#}\nu\big).
\end{align}
Using $r_\theta\le L$ and the fact that $c$ is nondecreasing on $[0,1]$, we obtain
\begin{align}
\sup_{\theta}\mathrm{MMD}_k\!\big((P_\theta)_{\#}F_{\#}\mu,\,(P_\theta)_{\#}F_{\#}\nu\big)
&\le c(L)\,\mathbf{MS}\mathrm{MMD}_k(\mu,\nu).
\end{align}

The left-hand side is exactly $\mathbf{MS}\mathrm{MMD}_k(F_{\#}\mu,F_{\#}\nu)$, which proves the claim.
\end{proof}

\begin{lemma}[Max–sliced mixture $p$-convexity for $\mathrm{MMD}_k$]
\label{lem:maxsliced-mix}
Let $k:\mathbb R\times\mathbb R\to\mathbb R$ be a kernel on $\mathbb R$, and let $\mathrm{MMD}_k$ be its associated maximum mean discrepancy on $\mathcal P(\mathbb R)$.
Assume that $\mathrm{MMD}_k$ is mixture $p$-convex for some $p\in[1,\infty)$: 
for every probability space $(\Omega,\mathcal F,\rho)$ and families $(\mu_c),(\nu_c)\subset\mathcal P(\mathbb R)$ for which $\bar\mu:=\int_\Omega \mu_c\,\rho(dc)$ and $\bar\nu:=\int_\Omega \nu_c\,\rho(dc)$ are well defined,
\[
\mathrm{MMD}_k\!\left(\int_\Omega \mu_c\,\rho(dc),\ \int_\Omega \nu_c\,\rho(dc)\right)
\ \le\ \Bigg(\int_\Omega \mathrm{MMD}_k(\mu_c,\nu_c)^p\,\rho(dc)\Bigg)^{\!1/p}.
\]
Define the max–sliced lift on $\mathcal P(\mathbb R^d)$ by
\[
\mathbf{MS}\mathrm{MMD}_k(\mu,\nu)
:=\sup_{\theta\in\mathbb S^{d-1}}
\mathrm{MMD}_k\!\big((P_\theta)_\#\mu,\,(P_\theta)_\#\nu\big),
\qquad P_\theta(x)=\langle\theta,x\rangle .
\]
Then $\mathbf{MS}\mathrm{MMD}_k$ is also mixture $p$-convex:
\[
\boxed{\ 
\mathbf{MS}\mathrm{MMD}_k\!\left(\int_\Omega \mu_c\,\rho(dc),\ \int_\Omega \nu_c\,\rho(dc)\right)
\ \le\
\Bigg(\int_\Omega \mathbf{MS}\mathrm{MMD}_k(\mu_c,\nu_c)^{p}\,\rho(dc)\Bigg)^{\!1/p}. \ }
\]
\end{lemma}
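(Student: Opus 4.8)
The plan is to reduce the $d$-dimensional claim to the assumed one-dimensional mixture $p$-convexity one slice at a time, using that linear projections commute with mixtures. First I would fix an arbitrary direction $\theta\in\mathbb S^{d-1}$ and note that for any measurable family $(\lambda_c)_{c\in\Omega}\subset\mathcal P(\mathbb R^d)$ one has the pushforward–mixture identity
\[
(P_\theta)_\#\!\Bigl(\int_\Omega \lambda_c\,\rho(dc)\Bigr)
\;=\;\int_\Omega (P_\theta)_\#\lambda_c\,\rho(dc),
\]
which follows by evaluating both sides on an arbitrary Borel set $A\subseteq\mathbb R$, writing the left side as $\int_\Omega \lambda_c(P_\theta^{-1}A)\,\rho(dc)$, and using Tonelli. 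Applied to $(\mu_c)$ and $(\nu_c)$, this shows that $(P_\theta)_\#\bar\mu$ is the $\rho$-mixture of the scalar laws $\big((P_\theta)_\#\mu_c\big)_c$, and likewise $(P_\theta)_\#\bar\nu$ is the mixture of $\big((P_\theta)_\#\nu_c\big)_c$.

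Next I would invoke the hypothesis that $\mathrm{MMD}_k$ is mixture $p$-convex on $\mathcal P(\mathbb R)$, applied to these two scalar families, to obtain
\[
\mathrm{MMD}_k\big((P_\theta)_\#\bar\mu,\,(P_\theta)_\#\bar\nu\big)
\;\le\;
\Bigl(\int_\Omega \mathrm{MMD}_k\big((P_\theta)_\#\mu_c,\,(P_\theta)_\#\nu_c\big)^p\,\rho(dc)\Bigr)^{1/p}.
\]
Then I would bound the integrand pointwise in $c$: by definition of the max–sliced lift, $\mathrm{MMD}_k\big((P_\theta)_\#\mu_c,(P_\theta)_\#\nu_c\big)\le \sup_{\theta'\in\mathbb S^{d-1}}\mathrm{MMD}_k\big((P_{\theta'})_\#\mu_c,(P_{\theta'})_\#\nu_c\big)=\mathbf{MS}\mathrm{MMD}_k(\mu_c,\nu_c)$, and by monotonicity of the integral the right-hand side is at most $\big(\int_\Omega \mathbf{MS}\mathrm{MMD}_k(\mu_c,\nu_c)^p\,\rho(dc)\big)^{1/p}$. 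Since this last bound is independent of $\theta$ and the inequality holds for every $\theta$, taking the supremum over $\theta\in\mathbb S^{d-1}$ on the left-hand side gives precisely $\mathbf{MS}\mathrm{MMD}_k(\bar\mu,\bar\nu)\le \big(\int_\Omega \mathbf{MS}\mathrm{MMD}_k(\mu_c,\nu_c)^p\,\rho(dc)\big)^{1/p}$, which is the claim.

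The analytic core above is essentially a two-line argument, so I expect the main obstacle to be the measurability and integrability bookkeeping needed to apply the scalar hypothesis and to make the final integral meaningful. One needs $c\mapsto(P_\theta)_\#\mu_c$ and $c\mapsto(P_\theta)_\#\nu_c$ to be measurable families with well-defined embeddings for each fixed $\theta$ (this is inherited from measurability of $(\mu_c),(\nu_c)$ since post-composition with the fixed continuous map $P_\theta$ preserves it), and one needs $c\mapsto\mathbf{MS}\mathrm{MMD}_k(\mu_c,\nu_c)$ to be measurable; the latter is handled by noting that $\theta\mapsto\mathrm{MMD}_k\big((P_\theta)_\#\mu,(P_\theta)_\#\nu\big)$ is continuous, so the supremum defining $\mathbf{MS}\mathrm{MMD}_k$ may be restricted to a countable dense subset of $\mathbb S^{d-1}$, making it a countable supremum of jointly measurable maps. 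I would relegate these routine verifications to a remark rather than belabor them in the main line of the proof.
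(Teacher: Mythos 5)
Your proposal is correct and follows essentially the same route as the paper's proof: project each slice, use that pushforward commutes with mixtures, apply the scalar mixture $p$-convexity, bound the sliced MMD of each $(\mu_c,\nu_c)$ by its max-sliced value, and take the supremum over directions. The extra measurability remarks (countable dense subset of $\mathbb S^{d-1}$, measurability of the pushforward families) are a welcome addition that the paper leaves implicit.
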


\begin{proof}
Fix $\theta\in\mathbb S^{d-1}$ and set
\[
\mu_c^\theta:=(P_\theta)_\#\mu_c,
\qquad 
\nu_c^\theta:=(P_\theta)_\#\nu_c
\ \ \in \mathcal P(\mathbb R).
\]
Pushforward commutes with mixtures:
\[
(P_\theta)_\#\!\Big(\textstyle\int \mu_c\,d\rho\Big)=\int \mu_c^\theta\,d\rho,
\qquad
(P_\theta)_\#\!\Big(\textstyle\int \nu_c\,d\rho\Big)=\int \nu_c^\theta\,d\rho.
\]
By mixture $p$-convexity of $\mathrm{MMD}_k$ on $\mathbb R$,
\begin{equation}\label{eq:msmix-mmd-1}
\mathrm{MMD}_k\!\left((P_\theta)_\#\!\textstyle\int \mu_c\,d\rho,\ (P_\theta)_\#\!\int \nu_c\,d\rho\right)
\ \le\ \Big(\int \mathrm{MMD}_k(\mu_c^\theta,\nu_c^\theta)^p\,d\rho\Big)^{\!1/p}.
\end{equation}

Taking the supremum over $\theta$ on the left-hand side of Equation~\ref{eq:msmix-mmd-1} gives
\begin{equation}\label{eq:msmix-mmd-2}
\sup_{\theta}\ \mathrm{MMD}_k\!\left((P_\theta)_\#\!\textstyle\int \mu_c\,d\rho,\ (P_\theta)_\#\!\int \nu_c\,d\rho\right)
\ \le\ \sup_{\theta}\ \Big(\int \mathrm{MMD}_k(\mu_c^\theta,\nu_c^\theta)^p\,d\rho\Big)^{\!1/p}.
\end{equation}

Define $f(\theta,c):=\mathrm{MMD}_k(\mu_c^\theta,\nu_c^\theta)$ and $h(c):=\sup_{\phi} f(\phi,c)=\mathbf{MS}\mathrm{MMD}_k(\mu_c,\nu_c)$. 
Since $f(\theta,c)\le h(c)$ pointwise in $c$, we obtain for every $\theta$,
\[
\Big(\int f(\theta,c)^p\,d\rho(c)\Big)^{\!1/p}\ \le\ \Big(\int h(c)^p\,d\rho(c)\Big)^{\!1/p}.
\]
Taking $\sup_\theta$ yields
\begin{equation}\label{eq:msmix-mmd-3}
\sup_{\theta}\ \Big(\int \mathrm{MMD}_k(\mu_c^\theta,\nu_c^\theta)^p\,d\rho\Big)^{\!1/p}
\ \le\ \Big(\int \mathbf{MS}\mathrm{MMD}_k(\mu_c,\nu_c)^p\,d\rho\Big)^{\!1/p}.
\end{equation}

Combining Equation~\ref{eq:msmix-mmd-2} and Equation~\ref{eq:msmix-mmd-3} shows
\[
\mathbf{MS}\mathrm{MMD}_k\!\left(\int \mu_c\,d\rho,\ \int \nu_c\,d\rho\right)
\ \le\ \Bigg(\int \mathbf{MS}\mathrm{MMD}_k(\mu_c,\nu_c)^{p}\,\rho(dc)\Bigg)^{\!1/p},
\]
as claimed.
\end{proof}

\begin{theorem}[Supremum--max--sliced $\mathrm{MMD}_k$ contraction of the \textbf{complete} Sobolev Bellman operator]
\label{thm:msmmd-sobolev-contraction}
Let
\[
T_{\pi}^{S_{s,a}}\colon
\bigl(\mathcal{S}\times\mathcal{A}\to\mathcal{P}(\mathbb{R}^{1+m+n})\bigr)
\;\longrightarrow\;
\bigl(\mathcal{S}\times\mathcal{A}\to\mathcal{P}(\mathbb{R}^{1+m+n})\bigr),
\]
be the complete Sobolev Bellman operator bootstrapping the vector \((Z,\partial_{a}Z,\partial_{s}Z)\).

Let $k:\mathbb R\times\mathbb R\to\mathbb R$ be a kernel on $\mathbb R$ and let $\mathrm{MMD}_k$ be its associated maximum mean discrepancy on $\mathcal P(\mathbb R)$.
Define the max--sliced lift on $\mathcal P(\mathbb R^{1+m+n})$ by
\[
\mathbf{MS}\mathrm{MMD}_k(\mu,\nu)
:=\sup_{\theta\in\mathbb S^{m+n}}
\mathrm{MMD}_k\big((P_\theta)_{\#}\mu,\,(P_\theta)_{\#}\nu\big),
\qquad P_\theta(x)=\langle\theta,x\rangle,
\]
and its supremum version over state--action pairs by
\[
\overline{\mathbf{MS}\mathrm{MMD}_k}(\eta_1,\eta_2)
:=\sup_{(s,a)}\mathbf{MS}\mathrm{MMD}_k\big(\eta_1(s,a),\,\eta_2(s,a)\big).
\]

Assume $\mathrm{MMD}_k$ satisfies:
\begin{itemize}\itemsep0.2em
\item[\Tcond] \textbf{Translation invariance:} for all $t\in\mathbb R$ and all $\mu,\nu\in\mathcal P(\mathbb R)$,
\[
\mathrm{MMD}_k\big((x\mapsto x+t)_{\#}\mu,\,(x\mapsto x+t)_{\#}\nu\big)
=\mathrm{MMD}_k(\mu,\nu).
\]
\item[\Scond] \textbf{Scale--contraction:} there exists a nondecreasing $c:[0,1]\to[0,\infty)$ such that for all $s\in[0,1]$ and all $\mu,\nu\in\mathcal P(\mathbb R)$,
\[
\mathrm{MMD}_k\big((x\mapsto s x)_{\#}\mu,\,(x\mapsto s x)_{\#}\nu\big)
\le c(s)\,\mathrm{MMD}_k(\mu,\nu).
\]
\item[\Mpcond] \textbf{Mixture $p$-convexity:} for some $p\in[1,\infty)$, for every probability space $(\Omega,\mathcal F,\rho)$ and families $(\mu_c),(\nu_c)\subset\mathcal P(\mathbb R)$ for which the mixtures are well defined,
\[
\mathrm{MMD}_k\!\left(\int_\Omega \mu_c\,\rho(dc),\ \int_\Omega \nu_c\,\rho(dc)\right)
\le \Bigg(\int_\Omega \mathrm{MMD}_k(\mu_c,\nu_c)^p\,\rho(dc)\Bigg)^{1/p}.
\]
\end{itemize}

The complete Sobolev Bellman operator is defined by the same reparameterized ingredients as in Theorem~\ref{thm:sobolev-contraction}, namely the policy $\pi(s';\varepsilon_\pi)$, the transition $f(s,a;\varepsilon_f)$, and the reward $r(s,a;\varepsilon_r)$. In particular, for each $(s,a)$ and noise draw $(\varepsilon_f,\varepsilon_\pi)$ the complete Sobolev update acts as an affine push-forward with linear part $\mathcal L(s,a;\varepsilon_f,\varepsilon_\pi)$ (as in Theorem~\ref{thm:sobolev-contraction}). Define the uniform bound
\[
\|\mathcal L\|_{d}
=\sup_{s,a}\sup_{\varepsilon_f,\varepsilon_\pi}
\bigl\|\mathcal L(s,a;\varepsilon_f,\varepsilon_\pi)\bigr\|_{\op}
\;\le\;
\gamma\,\kappa_{\mathrm{full}}.
\]
Assume $\|\mathcal L\|_{d}\in[0,1]$.
Then for any two Sobolev return--distribution functions \(\eta_{1},\eta_{2}\),
\[
\boxed{%
\;
\overline{\mathbf{MS}\mathrm{MMD}_k}\!\bigl(T_\pi^{S_{s,a}}\eta_{1},\,T_\pi^{S_{s,a}}\eta_{2}\bigr)
\;\le\;
c(\|\mathcal L\|_{d})\;
\overline{\mathbf{MS}\mathrm{MMD}_k}\!\bigl(\eta_{1},\,\eta_{2}\bigr).
\;}
\]
In particular, \(T_\pi^{S_{s,a}}\) is a strict contraction whenever \(c(\|\mathcal L\|_{d})<1\).
\end{theorem}

\begin{proof}
Fix $(s,a)$ and condition on $C=(\varepsilon_r,\varepsilon_f,\varepsilon_\pi)$. Set
\[
r = r(s,a;\varepsilon_r),\qquad
s' = f(s,a;\varepsilon_f),\qquad
a' = \pi(s';\varepsilon_\pi),
\]
and let $X'_i\mid C\sim \eta_i(s',a')$. By the push-forward identity (Lemma~\ref{lem:law-pushforward}) and the affine update in Equation~\ref{eq:complete_bellman_update},
\[
\bigl(T_\pi^{S_{s,a}}\eta_i\bigr)(s,a)
=\mathrm{Law}\bigl(\Phi_{s,a}(X'_i)\bigr).
\]
\textbf{Affine push-forward at fixed $C$.}
Let $L(C):=\|\mathcal L(s,a;\varepsilon_f,\varepsilon_\pi)\|_{\op}$. Applying Lemma~\ref{lem:maxsliced-affine-anisotropic} (for $\mathrm{MMD}_k$), which relies on \Tcond\ and \Scond, to the conditional laws of $X'_i$ gives
\begin{align}
& \mathbf{MS}\mathrm{MMD}_k\big(\mathrm{Law}(\Phi_{s,a}(X'_1)\mid C),\,\mathrm{Law}(\Phi_{s,a}(X'_2)\mid C)\big) \notag \\
&\qquad \le c\big(L(C)\big)\,
   \mathbf{MS}\mathrm{MMD}_k\big(\mathrm{Law}(X'_1\mid C),\,\mathrm{Law}(X'_2\mid C)\big) \label{eq:fixedC-maxsliced-mmd} \\
&\qquad = c\big(L(C)\big)\,
   \mathbf{MS}\mathrm{MMD}_k\big(\eta_1(s',a'),\,\eta_2(s',a')\big). \notag
\end{align}

\textbf{Averaging over $C$.}
Lemma~\ref{lem:maxsliced-mix} (for $\mathrm{MMD}_k$), which relies on \Mpcond, together with Equation~\ref{eq:fixedC-maxsliced-mmd} yields
\begin{align}
& \mathbf{MS}\mathrm{MMD}_k\big(\mathrm{Law}(\Phi_{s,a}(X'_1)),\,\mathrm{Law}(\Phi_{s,a}(X'_2))\big) \notag \\
&\qquad \le \Bigg(\int \mathbf{MS}\mathrm{MMD}_k\big(\mathrm{Law}(\Phi_{s,a}(X'_1)\mid C),\mathrm{Law}(\Phi_{s,a}(X'_2)\mid C)\big)^{p}\, \rho(dC)\Bigg)^{1/p} \\
&\qquad \le \Bigg(\int \big(c\big(L(C)\big)\,
            \mathbf{MS}\mathrm{MMD}_k\big(\eta_1(s',a'),\eta_2(s',a')\big)\big)^{p}\, \rho(dC)\Bigg)^{1/p} \\
&\qquad \le c(\|\mathcal L\|_{d})\,
   \Bigg(\int \mathbf{MS}\mathrm{MMD}_k\big(\eta_1(s',a'),\eta_2(s',a')\big)^{p}\, \rho(dC)\Bigg)^{1/p}, \notag
\end{align}
since $c$ is nondecreasing on $[0,1]$ and $L(C)\le \|\mathcal L\|_{d}\le1$ for all $C$.

\textbf{Supremum bound.}
For any realization of $C$, by definition of the supremum metric,
\begin{align*}
\mathbf{MS}\mathrm{MMD}_k\big(\eta_1(s',a'),\eta_2(s',a')\big)
&\le \sup_{(u,v)} \mathbf{MS}\mathrm{MMD}_k\big(\eta_1(u,v),\,\eta_2(u,v)\big) \\
&= \overline{\mathbf{MS}\mathrm{MMD}_k}(\eta_1,\eta_2).
\end{align*}
Combining this with the previous inequality,
\begin{align*}
& \mathbf{MS}\mathrm{MMD}_k\big((T_\pi^{S_{s,a}}\eta_1)(s,a),\,(T_\pi^{S_{s,a}}\eta_2)(s,a)\big) \\
&\qquad = \mathbf{MS}\mathrm{MMD}_k\big(\mathrm{Law}(\Phi_{s,a}(X'_1)),\,\mathrm{Law}(\Phi_{s,a}(X'_2))\big) \\
&\qquad \le c(\|\mathcal L\|_{d})\,
    \Bigg(\int \overline{\mathbf{MS}\mathrm{MMD}_k}(\eta_1,\eta_2)^{p}\,\rho(dC)\Bigg)^{1/p} \\
&\qquad = c(\|\mathcal L\|_{d})\,\overline{\mathbf{MS}\mathrm{MMD}_k}(\eta_1,\eta_2).
\end{align*}
since $\overline{\mathbf{MS}\mathrm{MMD}_k}(\eta_1,\eta_2)$ does not depend on $C$.
Taking the supremum over $(s,a)$ completes the proof:
\[
\overline{\mathbf{MS}\mathrm{MMD}_k}(T_\pi^{S_{s,a}}\eta_1,T_\pi^{S_{s,a}}\eta_2)\ \le\ c(\|\mathcal L\|_{d})\ \overline{\mathbf{MS}\mathrm{MMD}_k}(\eta_1,\eta_2).
\]
\end{proof}

\begin{corollaryT}[Supremum--max--sliced contraction for MMD with the multiquadric kernel]
\label{cor:msmmd-mq}
Let $k_h(x,y)=-\sqrt{1+h^2\|x-y\|^2}$ with $h>0$ (the multiquadric kernel).
Consider the setting of Theorem~\ref{thm:msmmd-sobolev-contraction}, and assume in addition that the bound $\|\mathcal L\|_{d}$ satisfies $\|\mathcal L\|_{d}\in[0,1]$.
Then, for all return--distribution functions $\eta_1,\eta_2$,
\[
\boxed{\
\overline{\mathbf{MS}\mathrm{MMD}}_{k_h}\!\bigl(T_\pi^{S_{s,a}}\eta_1,\ T_\pi^{S_{s,a}}\eta_2\bigr)
\;\le\; \sqrt{\|\mathcal L\|_{d}}\ \overline{\mathbf{MS}\mathrm{MMD}}_{k_h}\!\bigl(\eta_1,\ \eta_2\bigr).
\ }
\]
\end{corollaryT}

\begin{proof}
We verify the conditions of Theorem~\ref{thm:msmmd-sobolev-contraction} for the MMD associated with the multiquadric kernel.
\begin{itemize}
    \item \textbf{\Tcond\ (Translation invariance):} Since $k_h$ is radial, it depends only on the distance $\|x-y\|$, implying that the MMD induced by $k_h$ is invariant under simultaneous translation of its arguments.
    \item \textbf{\Scond\ (Scale--contraction):} By Lemma~\ref{lem:mq-scale-contraction} (specifically Equation~\ref{eq:mq-scale-contraction-unsquared}), for any $s\in[0,1]$, the unsquared MMD under $k_h$ satisfies scale--contraction with the specific function $c(s)=\sqrt{s}$.
    \item \textbf{\Mpcond\ (Mixture $p$--convexity):} The kernel $k_h$ is conditionally positive definite (CPD) (see e.g., Theorem~3.1 of \cite{killingberg2023the}). Therefore, by Lemma~\ref{lem:mmd_cpd_mixture_p_convexity}, the MMD associated with $k_h$ satisfies mixture $p$--convexity.
\end{itemize}
Applying Theorem~\ref{thm:msmmd-sobolev-contraction} with $c(s)=\sqrt{s}$ yields the contraction factor $c(\|\mathcal L\|_{d})=\sqrt{\|\mathcal L\|_{d}}$.
\end{proof}

\begin{theorem}[Supremum--max--sliced MMD contraction of the \textbf{incomplete} Sobolev Bellman operator]
\label{thm:msmmd-incomplete}
Let
\(T_\pi^{S_a}:(\mathcal S\times\mathcal A\to\mathcal P(\mathbb R^{1+m}))\to(\mathcal S\times\mathcal A\to\mathcal P(\mathbb R^{1+m}))\)
be the action--gradient (incomplete) Sobolev Bellman operator that updates \(H=(Z,\partial_a Z)\) as in Equation~\ref{eq:joint_operator_block_affine_dist}.

Assume:
\begin{itemize}\itemsep0.25em
\item[\Tcond] \textbf{Translation invariance:}
\(\mathrm{MMD}_k((x\mapsto x+t)_{\#}\mu,(x\mapsto x+t)_{\#}\nu)= \mathrm{MMD}_k(\mu,\nu)\) for all \(t\in\mathbb R\) and all \(\mu,\nu\in\mathcal P(\mathbb R)\).

\item[\Scond] \textbf{Scale--contraction:}
There exists a nondecreasing \(c_k:[0,1]\to[0,\infty)\) such that
\[
\mathrm{MMD}_k((x\mapsto s x)_{\#}\mu,(x\mapsto s x)_{\#}\nu)\le c_k(s)\,\mathrm{MMD}_k(\mu,\nu),
\quad \forall\,s\in[0,1],\ \mu,\nu\in\mathcal P(\mathbb R).
\]

\item[\Mpcond] \textbf{Mixture $1$--convexity:}
For every probability space \((\Omega,\mathcal F,\rho)\) and families \(\mu_c,\nu_c\in\mathcal P(\mathbb R)\) for which
\(\bar\mu:=\int_\Omega \mu_c\,\rho(dc)\) and \(\bar\nu:=\int_\Omega \nu_c\,\rho(dc)\) are well defined,
\[
\mathrm{MMD}_k\!\left(\int_\Omega \mu_c\,\rho(dc),\ \int_\Omega \nu_c\,\rho(dc)\right)
\ \le\
\int_\Omega \mathrm{MMD}_k(\mu_c,\nu_c)\,\rho(dc),
\]
and the corresponding max--sliced version (Lemma~\ref{lem:maxsliced-mix}) holds.
\end{itemize}

Assume
\[
\|f_a\|_2\le L_{f,a},
\qquad
\|\pi_s\|_2\le L_\pi.
\]
For the linear blocks in Equation~\ref{eq:joint_operator_block_affine_dist},
\[
A(s,a;\varepsilon_f,\varepsilon_\pi)
=\gamma\begin{pmatrix}1&0\\[2pt] 0& f_a^\top\,\pi_s^\top\end{pmatrix},
\qquad
N(s,a;\varepsilon_f)=\gamma\begin{pmatrix}0\\[2pt] f_a^\top\end{pmatrix}.
\]
Define
\[
\bar L\;:=\;\sup_{s,a,\varepsilon_f,\varepsilon_\pi}\,\|[\,A\ \ N\,]\|,
\qquad \text{and assume }\bar L\le 1.
\]

For each \((s',a')\), assume there exists a lifting map
\[
\mathcal J_{s',a'}:\ \mathcal P(\mathbb R^{1+m})\ \longrightarrow\ \mathcal P(\mathbb R^{1+m+n}),
\]
with first marginal equal to the input law and joint law induced by our distribution parametrization (Appendix~\ref{appendix:distribution_parametrization}), so that the second marginal matches the true law of \(\partial_s Z'\).
Assume that for some \(L_{\mathrm{aug}}\ge 1\),
\[
\mathbf{MS}\mathrm{MMD}_k\big(\mathcal J_{s',a'}(\mu),\,\mathcal J_{s',a'}(\nu)\big)
\ \le\
L_{\mathrm{aug}}\ \mathbf{MS}\mathrm{MMD}_k(\mu,\nu),
\quad \forall\,\mu,\nu.
\]

Define
\(
\overline{\mathbf{MS}\mathrm{MMD}}_k(\eta_1,\eta_2)
=\sup_{(s,a)}\mathbf{MS}\mathrm{MMD}_k(\eta_1(s,a),\eta_2(s,a)).
\)
Then, for all return--gradient maps \(\eta_1,\eta_2\),
\[
\boxed{\
\overline{\mathbf{MS}\mathrm{MMD}}_{k}\!\bigl(T_\pi^{S_a}\eta_1,\ T_\pi^{S_a}\eta_2\bigr)
\ \le\
L_{\mathrm{aug}}\;c_k(\bar L)\;\overline{\mathbf{MS}\mathrm{MMD}}_{k}\!\bigl(\eta_1,\ \eta_2\bigr).
\ }
\]
In particular, if \(L_{\mathrm{aug}}\;c_k(\bar L)<1\), then \(T_\pi^{S_a}\) is a strict contraction.

For the multiquadric kernel \(k_h(x,y)=-\sqrt{1+h^2\|x-y\|^2}\),
Lemma~\ref{lem:mq-scale-contraction} gives \(c_k(s)=\sqrt{s}\) for \(s\in[0,1]\).
Hence, whenever \(\bar L \le 1\), a sufficient condition for strict contraction is
\(L_{\mathrm{aug}}\sqrt{\bar L}<1\).
\end{theorem}
\begin{proof}
Fix \((s,a)\) and a noise draw \(C=(\varepsilon_r,\varepsilon_f,\varepsilon_\pi)\). Set
\[
s' = f(s,a;\varepsilon_f),\qquad a' = \pi(s';\varepsilon_\pi).
\]

We will rewrite the incomplete update as an affine map on an augmented next-step variable that includes the missing state-gradient component.

\medskip
\noindent\textbf{Augmented variable.}
Let $\tilde X'_i=(H'_i,\,U'_i)\in\mathbb R^{(1+m)+n}$ with law
\(\tilde\eta_i=\mathcal J_{s',a'}(\eta_i(s',a'))\).
Define the affine map on the augmented space
\[
\Phi_{s,a}(\tilde x;C)
\;:=\;
b \;+\; [\,A\ \ N\,]\tilde x.
\]
Then, conditionally on $C$, we have
\[
\langle\theta,\,\Phi_{s,a}(\tilde X'_i;C)\rangle
= \langle\theta,b\rangle + \big\langle [\,A\ \ N\,]^\top\theta,\ \tilde X'_i\big\rangle.
\]
Moreover, by the same argument as in Lemma~\ref{lem:affine_pushforward_law},
\[
(T_\pi^{S_a}\eta_i)(s,a)
=
\mathrm{Law}\bigl(\Phi_{s,a}(\tilde X'_i;C)\bigr).
\]

\medskip
\noindent\textbf{Apply max--sliced affine contraction lemma.}
Since $\bar L \le 1$, the operator norm of the joint matrix $\|[\,A\ \ N\,]\|$ is in $[0,1]$ for all $C$. We may thus apply Lemma~\ref{lem:maxsliced-affine-anisotropic} (which relies on \Tcond\ and \Scond):
\begin{align*}
& \mathbf{MS}\mathrm{MMD}_k\!\Big(\mathrm{Law}(\Phi_{s,a}(\tilde X'_1;C)\mid C),\ \mathrm{Law}(\Phi_{s,a}(\tilde X'_2;C)\mid C)\Big) \\
&\qquad \le c_k(\|[\,A\ \ N\,]\|)\;
\mathbf{MS}\mathrm{MMD}_k(\tilde\eta_1,\tilde\eta_2).
\end{align*}

\medskip
\noindent\textbf{Lift back to $(Z,\partial_a Z)$.}
By the lifting assumption,
\[
\mathbf{MS}\mathrm{MMD}_k(\tilde\eta_1,\tilde\eta_2)
\ \le\ L_{\mathrm{aug}}\;
\mathbf{MS}\mathrm{MMD}_k(\eta_1(s',a'),\,\eta_2(s',a')).
\]

\medskip
\noindent\textbf{Average over noise.}
By Lemma~\ref{lem:maxsliced-mix} (mixture $1$--convexity), we have
\begin{align*}
& \mathbf{MS}\mathrm{MMD}_k\big((T_\pi^{S_a}\eta_1)(s,a),(T_\pi^{S_a}\eta_2)(s,a)\big) \\
&\qquad \le \int \mathbf{MS}\mathrm{MMD}_k\big(\mathrm{Law}(\Phi_{s,a}(\tilde X'_1;C)\mid C),\mathrm{Law}(\Phi_{s,a}(\tilde X'_2;C)\mid C)\big)\,\rho(dC).
\end{align*}
Combining the affine and lifting bounds derived above, the integrand is bounded pointwise by
\[
L_{\mathrm{aug}}\,c_k(\|[\,A\ \ N\,]\|)\, \mathbf{MS}\mathrm{MMD}_k\big(\eta_1(s',a'),\eta_2(s',a')\big).
\]
By definition of the supremum metric, $\mathbf{MS}\mathrm{MMD}_k(\eta_1(u,v),\eta_2(u,v)) \le \overline{\mathbf{MS}\mathrm{MMD}}_k(\eta_1,\eta_2)$ for any state-action pair $(u,v)$.
Using this global bound and $\|[\,A\ \ N\,]\|\le\bar L$, the integral satisfies
\[
\mathbf{MS}\mathrm{MMD}_k\big((T_\pi^{S_a}\eta_1)(s,a),\,(T_\pi^{S_a}\eta_2)(s,a)\big)
\le L_{\mathrm{aug}}\;c_k(\bar L)\;
    \overline{\mathbf{MS}\mathrm{MMD}}_k(\eta_1,\eta_2).
\]
Taking the supremum over \((s,a)\) on the left-hand side yields the claim.
\end{proof}

\section{Background on the world-model}
    \subsection{Conditional variational auto-encoders}
    \label{appendix:cVAE}
    A principled invertible generative model can be obtained from a Variational Auto-Encoder (VAE) \citep{kingma2013auto}. More interestingly for us are conditional VAE \cite{Sohn2015LearningSO} which we briefly introduce.
    
    A Conditional Variational Autoencoder (cVAE) is a generative model that learns to generate new samples from a distribution conditioned on given input information. In our case, the cVAE models the distribution of next states and rewards conditioned on current states and actions.
    
    Formally, the cVAE consists of two components:
    \begin{itemize}
        \item \textbf{Encoder}: The encoder \(q_{\zeta}(\varepsilon \mid s',r ; s, a)\) maps the observed next state \(s'\) and reward \(r\), conditioned on the current state-action pair \((s, a)\), to a latent variable \(\varepsilon\), typically modeled as a Gaussian distribution with diagonal covariance matrix: 
        \begin{equation}
            q_{\zeta}(\varepsilon \mid s', r; s, a) = \mathcal{N}(\varepsilon; \mu_{\zeta}(s', r, s, a), \sigma_{\zeta}^2(s', r, s, a) \odot I).
        \end{equation}
        \item \textbf{Decoder}: The decoder \(p_{\psi}(s', r \mid \varepsilon; s, a)\) reconstructs the next state \(s'\) and reward \(r\) from the latent variable \(\varepsilon\), conditioned on the current state-action pair \((s, a)\).
        \item \textbf{Prior}: The prior \(p_{\upsilon}(s', r \mid \varepsilon; s, a)\) allows more flexibility in the latent space than a simple standard Gaussian $\mathcal{N}(0;I)$
        It is also parametrized as a multivariate diagonal Gaussian:
        \begin{equation}
            p_{\upsilon}(\varepsilon \mid s, a) = \mathcal{N}(\varepsilon; \mu_{\upsilon}(s, a), \sigma_{\upsilon}^2(s, a) \odot I).
        \end{equation}
    \end{itemize}
    
    The objective of a cVAE is to maximize the Evidence Lower Bound (ELBO), which balances accurate reconstruction of the input with a regularization term that ensures the learned posterior distribution remains close to the prior distribution. The objective is as follows
\begin{equation}
\begin{split}
    \mathcal{L}_{\text{cVAE}}(\zeta, \psi) = \mathbb{E}_{q_{\zeta}(\varepsilon \mid s', r; s, a)} \left[\log p_{\psi}(s', r \mid \varepsilon; s, a)\right] \\
    - \lambda_{\text{KL}} \times D_{\text{KL}}\left(q_{\zeta}(\varepsilon \mid s', r; s, a) \parallel p_\upsilon(\varepsilon \mid s, a)\right).
\end{split}
\end{equation}
    The first term encourages faithful reconstruction of the next state and reward, while the second term regularizes the posterior distribution to remain close to a standard Gaussian prior. Assuming the decoder \(p_{\psi}(s', r \mid \varepsilon; s, a)\) is Gaussian with a fixed variance, the reconstruction term reduces to an L2 loss, which can be estimated using the difference between the reconstructed samples and the true samples.

    Assuming the encoder parametrizes a Gaussian with diagonal covariance and that the prior is also Gaussian with identity covariance and zero mean, the KL divergence can be estimated from encoded input samples as
    \begin{equation}
        D_{\text{KL}}(\zeta) = \mathbb{E}_{(s, a)} \left[ \frac{1}{2} \sum_{j=1}^d \left( 1 + \log(\sigma_{\zeta,j}^2(s,a)) - \mu_{\zeta,j}^2(s,a) - \sigma_{\zeta,j}^2(s,a) \right) \right].
    \end{equation}

\subsection{Conditional normalizing flows as world models}
\label{appendix:flows}

Normalizing flows provide an alternative class of generative models based on a 
sequence of invertible transformations with tractable Jacobians. 
Let $f_\phi : \mathbb{R}^d \to \mathbb{R}^d$ be an invertible map, and let 
$z \sim p_Z$ denote a simple base distribution (e.g.\ standard Gaussian). 
A flow represents a random variable $x$ through the transformation $x = f_\phi^{-1}(z)$.
Its density follows from the change-of-variables formula:
\begin{equation}
\label{eq:cov-flow}
p_X(x)
=
p_Z\!\left(f_\phi(x)\right)
\left|\det \nabla_x f_\phi(x)\right|.
\end{equation}
By designing each transformation so that its Jacobian determinant is efficient to 
compute, normalizing flows permit exact likelihood 
evaluation and fast sampling.

\paragraph{RealNVP.}
RealNVP~\citep{dinh2016density} is a widely used normalizing-flow architecture based 
on \emph{affine coupling layers}. Each coupling layer partitions its input 
$(x_1, x_2)$, leaves $x_1$ unchanged, and transforms $x_2$ as
\[
y_1 = x_1, \qquad 
y_2 = x_2 \odot \exp\bigl(s_\phi(x_1)\bigr) + t_\phi(x_1),
\]
where $s_\phi$ and $t_\phi$ are neural networks.  
Because the transformation of $x_2$ depends only on $x_1$, the Jacobian is triangular, 
and the log-determinant reduces to a sum over the scaling terms. Stacking multiple 
coupling layers with alternating partitions yields an expressive model with 
computationally cheap density evaluation and sampling.

\paragraph{Conditional flows for one-step dynamics.}
To model one-step transitions, the flow is conditioned on the current 
state–action pair $(s,a)$. The coupling-layer networks 
$s_\phi(\cdot)$ and $t_\phi(\cdot)$ therefore take $(s,a)$ as an additional input, 
yielding a conditional distribution
\[
p_\phi(s', r \mid s, a).
\]
Sampling proceeds by drawing $z \sim p_Z$ and applying the inverse transformation
\[
(s', r) = f_\phi^{-1}(z; s,a),
\]
which provides a differentiable mapping from $(s,a,z)$ to next-state and reward 
samples. As with the cVAE model introduced above, this enables efficient sampling 
together with the pathwise derivatives $\partial (s',r)/\partial (s,a)$ required 
for Sobolev temporal differences.

\section{Algorithm design, pseudo-code and baseline}
\label{section:pseudo-codes}
In this section we describe the motivation for some design choices of our method. Section \ref{section:design_choices} sets the design for the toy reinforcement learning experiment of Section \ref{section:toy_rl} and the MuJoCo experiments from Section \ref{sec:results_rl}. We follow by providing the pseudo-code in Section \ref{SUBsection:pseudo-codes}. Finally in Section \ref{sec:baselines} we discuss our implementation of our primary baseline MAGE \cite{doro2020how}.

\subsection{Design choices} \label{section:design_choices}
Firstly, most experiments are conducted in the \emph{data-efficient} setting, where the number of updates per interaction with the environment (UTD ratio) is larger than one, making stability and overestimation bias critical concerns. MAGE adds gradient regularization on top of TD3 \citep{fujimoto2018addressing}, which itself incorporates several modifications to DDPG \citep{lillicrap2016continuous}. Below, we describe each modification and whether it was retained in our implementation.

\textbf{Target policy smoothing:} this technique applies random noise to the policy’s action when estimating the TD learning target. It is the only TD3 (and MAGE) modification we retained, as other changes cluttered the implementation and introduced noise that could interact poorly with distributional modeling. It is worth noting that omitting target policy smoothing in similar settings is not uncommon \citep{singh2020samplebased, kuznetsov2020controllingoverestimationbiastruncated}.

\textbf{Delayed policy update:} the policy is updated once for every two critic updates, which helps stabilize learning by preventing premature policy shifts.

\textbf{Double estimation:} an ensemble of two critics, \(Q_1, Q_2\) or \(Z_1, Z_2\), is used, with the bootstrapped target taken as the minimum of the two estimates at the next state (including policy smoothing noise). This modification introduces an underestimation bias to counteract the well-known overestimation issue in value-based methods \citep{van2016deep}. We observed double estimation to be critical for stable performance in the data-efficient setting: without it, average Q-values quickly diverged. Overestimation bias can also be addressed in the distributional setting, as demonstrated by TQC \citep{kuznetsov2020controllingoverestimationbiastruncated}. A straightforward approach to induce underestimation is to truncate the top \(p\%\) of values in the critic’s target distribution; in our setup, truncating as few as 25\% of values proved highly effective. Following TQC \citep{chen2021randomized}, we employ an ensemble of two distributional critics whose samples are concatenated.

\newpage
\subsection{Pseudo-codes} \label{SUBsection:pseudo-codes}

In this subsection, we present the pseudo-code for the $\mathrm{MSMMD}$ estimation procedure, the MMD estimation procedure, and the full algorithm for policy evaluation and improvement.

\begin{itemize}[noitemsep, topsep=0pt, parsep=0pt, partopsep=0pt]
    \item Algorithm~\ref{alg:msdelta-empirical} (Estimation of $\mathrm{MSMMD}$): approximates the supremum over projection directions by gradient ascent on the unit sphere, yielding an empirical max--sliced MMD between two sets of samples.
    \item Algorithm~\ref{alg:MMD_sobolev} (MMD Estimation of Sobolev samples): leverages the world model to generate differentiable Sobolev-return samples, and applies truncation to mitigate overestimation bias as in \cite{kuznetsov2020controllingoverestimationbiastruncated}.
    \item Algorithm~\ref{alg:DSDPG} (Full DSDPG algorithm): integrates our Sobolev-MMD components into the DDPG framework \cite{lillicrap2016continuous}.
\end{itemize}

\iffalse
\begin{algorithm}[H]
\caption{Estimation of $\mathrm{MSMMD}$ from empirical samples}
\label{alg:msdelta-empirical}
\KwIn{Empirical samples $X=\{x_i\}_{i=1}^N \subset \mathbb R^d$, $Y=\{y_i\}_{i=1}^N \subset \mathbb R^d$}
\KwIn{Kernel $k$ defining the MMD; gradient steps $T$; step size $\eta$}
\BlankLine

\KwSty{Initialize a unit direction:}
$w \sim \mathcal N(0,I_d)$; \quad $\theta \gets w / \|\!w\!\|$ \tcp*{random unit direction}

\BlankLine
\KwSty{Project--optimize over directions:}
\For{$t=1,\dots,T$}{
  $u_i \gets \langle \theta, x_i \rangle,\quad
   v_i \gets \langle \theta, y_i \rangle \quad \text{for } i=1,\dots,N$ \tcp*{project to 1D along $\theta$}

  $J(\theta) \gets \mathrm{MMD}_k\!\big(\{u_i\}_{i=1}^N,\; \{v_i\}_{i=1}^N\big)$ \tcp*{maximize MMD over $\theta$}

  $g \gets \nabla_{\theta} J(\theta)$ \tcp*{gradient w.r.t.\ direction}

  $w \gets w + \eta\, g$ \tcp*{ascent step in unconstrained space}
  $\theta \gets w / \|\!w\!\|$ \tcp*{re-normalize onto the unit sphere}
}

\BlankLine
$\bar{\theta} \gets \mathrm{stop\_grad}(\theta)$ \tcp*{stop gradient on the final direction}

\BlankLine
\KwOut{$\widehat{\mathrm{MSMMD}}(X,Y) \;\gets\; \mathrm{MMD}_k\!\big(\{\langle \bar{\theta}, x_i\rangle\}_{i=1}^N,\; \{\langle \bar{\theta}, y_i\rangle\}_{i=1}^N\big)$}
\end{algorithm}
\fi

\begin{algorithm}[H]
\caption{Estimation of $\mathrm{MSMMD}$ from empirical samples}
\label{alg:msdelta-empirical}
\begin{algorithmic}[1]
  \STATE {\bfseries Require:} Empirical samples $X=\{x_i\}_{i=1}^N \subset \mathbb R^d$, $Y=\{y_i\}_{i=1}^N \subset \mathbb R^d$
  \STATE {\bfseries Require:} Kernel $k$ defining the MMD
  \STATE {\bfseries Require:} Number of gradient steps $T$, step size $\eta$
  \STATE {\bfseries Output:} $\widehat{\mathrm{MSMMD}}(X,Y)$

  \STATE Draw $w \sim \mathcal N(0,I_d)$ \hfill \COMMENT{random unit direction}
  \STATE $\theta \leftarrow w / \|\!w\!\|$ \hfill \COMMENT{random unit direction}

  \FOR{$t = 1,\dots,T$}
    \FOR{$i = 1,\dots,N$}
      \STATE $u_i \leftarrow \langle \theta, x_i \rangle$ \hfill \COMMENT{project to 1D along $\theta$}
      \STATE $v_i \leftarrow \langle \theta, y_i \rangle$ \hfill \COMMENT{project to 1D along $\theta$}
    \ENDFOR

    \STATE $J(\theta) \leftarrow \mathrm{MMD}_k\!\big(\{u_i\}_{i=1}^N,\; \{v_i\}_{i=1}^N\big)$ \hfill \COMMENT{maximize MMD over $\theta$}
    \STATE $g \leftarrow \nabla_{\theta} J(\theta)$ \hfill \COMMENT{gradient w.r.t.\ direction}

    \STATE $w \leftarrow w + \eta\, g$ \hfill \COMMENT{ascent step in unconstrained space}
    \STATE $\theta \leftarrow w / \|\!w\!\|$ \hfill \COMMENT{re-normalize onto the unit sphere}
  \ENDFOR

  \STATE $\bar{\theta} \leftarrow \mathrm{stop\_grad}(\theta)$ \hfill \COMMENT{stop gradient on the final direction}

  \STATE {\bfseries Return:} $\widehat{\mathrm{MSMMD}}(X,Y)\leftarrow \mathrm{MMD}_k\!\big(\{\langle \bar{\theta}, x_i\rangle\}_{i=1}^N,\; \{\langle \bar{\theta}, y_i\rangle\}_{i=1}^N\big)$
\end{algorithmic}
\end{algorithm}

\begin{algorithm}[H]
   \caption{Estimation of MMD\(^2\) loss via imagination of transition samples with Sobolev samples}
   \label{alg:MMD_sobolev}
\begin{algorithmic}[1]
   \STATE {\bfseries Require:} Number of samples \( M \), kernel \( k \), discount factor \( \gamma \in (0, 1) \)
   \STATE {\bfseries Require:} Truncation percentage \( p\in[0,100] \)
   \STATE {\bfseries Require:} Distributional critic \(Z_\phi(s, a, \varepsilon)\)
   \STATE {\bfseries Require:} Policy network \(\pi_\theta(s)\)
   \STATE {\bfseries Require:} Conditional VAE (cVAE) with prior \( p_{\upsilon}(\varepsilon) \) and decoder \( p_{\psi}(s', r \mid s, a, \varepsilon) \)
   \STATE {\bfseries Input:} Transition sample \( (s, a) \)
   \STATE {\bfseries Input:} Online critic parameter \( \phi \), target critic parameters \( \phi'_1 \) and \( \phi'_2 \)
   \STATE {\bfseries Input:} Target policy parameter \(\theta'\)
   \STATE {\bfseries Input:} Boolean flag \texttt{use\_action\_gradient}
   \STATE {\bfseries Output:} Gradient estimation of MMD with respect to \( \phi \)

   \STATE Draw \( \varepsilon \sim p_{\upsilon}(\varepsilon) \) \hfill \COMMENT{Sample latent variable from the prior}
   \STATE \( (\hat{s}', \hat{r}) \sim p_{\psi}(s', r \mid s, a, \varepsilon) \) \hfill \COMMENT{Generate transition using the decoder}
   \STATE \( a' \leftarrow \pi_{\theta'}(\hat{s}') \) \hfill \COMMENT{Action from target policy on \( \hat{s}'\)}

   \STATE Sample \( Z_{1:M} \overset{i.i.d.}{\sim} Z_\phi(s, a) \) \hfill \COMMENT{Samples from online critic}
   \STATE Sample 
      \( Z_{\text{next},1:M}^{(1)} \overset{i.i.d.}{\sim} Z_{\phi'_1}(\hat{s}', a')\),
    \STATE Sample \( Z_{\text{next},1:M}^{(2)} \overset{i.i.d.}{\sim} Z_{\phi'_2}(\hat{s}', a')\)
      \hfill \COMMENT{Samples from each target critic}
   \STATE \( Z_{\text{next},1:2M} \leftarrow \text{concat}\bigl(Z_{\text{next}}^{(1)},\,Z_{\text{next}}^{(2)}\bigr) \)
      \hfill \COMMENT{Concatenate target‐critic samples}

   \IF{\texttt{use\_action\_gradient}}
       \FOR{each \( 1 \leq i \leq 2M \)}
           \STATE \( Y_i \leftarrow \mathbf{f}^{S}_{\hat{r}, \hat{s}', \gamma}\bigl(Z_{\text{next},i}\bigr) \) \hfill \COMMENT{Bellman target Sobolev samples}
           \STATE \( \nabla_a Y_i \leftarrow \nabla_a \mathbf{f}^{S}_{\hat{r}, \hat{s}', \gamma}\bigl(Z_{\text{next},i}\bigr) \) \hfill \COMMENT{Action gradient of Bellman target}
           \STATE \( Y_i \leftarrow \text{concat}(Y_i, \nabla_a Y_i) \)
       \ENDFOR
       \FOR{each \( 1 \leq i \leq M \)}
           \STATE \( \nabla_a Z_i \leftarrow \nabla_a Z_{\phi}(s, a) \) \hfill \COMMENT{Action gradient of online critic}
           \STATE \( Z_i \leftarrow \text{concat}(Z_i, \nabla_a Z_i) \)
       \ENDFOR
   \ELSE
       \FOR{each \( 1 \leq i \leq 2M \)}
           \STATE \( Y_i \leftarrow \mathbf{f}^{S}_{\hat{r}, \hat{s}', \gamma}\bigl(Z_{\text{next},i}\bigr) \)
       \ENDFOR
   \ENDIF

   \STATE Prune top \(p\%\) of \(\{Y_i\}_{i=1}^{2M}\) by return magnitude \hfill \COMMENT{Low‐bias target set}
   \STATE Compute MMD\(^2 \):
     \STATE MMD\(^2 \!\leftarrow\! \sum_{i\neq j} \bigl[k(Z_i,Z_j) - 2k(Z_i,Y_j) + k(Y_i,Y_j)\bigr]\)

   \STATE {\bfseries Return:} \( \text{MMD}^2 \)
\end{algorithmic}
\end{algorithm}

\begin{algorithm}[H]
   \caption{Distributional Sobolev Deterministic Policy Gradient (DSDPG)}
   \label{alg:DSDPG}
\begin{algorithmic}[1]
   \STATE {\bfseries Require:} Number of samples \( M \), number of policy samples \( M_{\text{policy}} \), kernel \( k \), discount factor \( \gamma \in (0, 1) \), learning rates \( \alpha_\theta, \alpha_\phi \)
   \STATE {\bfseries Require:} Two distributional critics \( Z_{\phi_1}(s, a) \) and \( Z_{\phi_2}(s, a) \) with targets \( Z_{\phi_1'}(s, a) \), \( Z_{\phi_2'}(s, a) \)
   \STATE {\bfseries Require:} Actor network \( \pi_\theta(s) \) with target \( \pi_{\theta'}(s) \)
   \STATE {\bfseries Require:} Replay buffer \( \mathcal{D} \)
   \STATE {\bfseries Require:} Boolean flag \texttt{use\_action\_gradient}
   \STATE {\bfseries Require:} Parameters of cVAE: Encoder (\(\zeta\)), Decoder (\(\psi\)), Prior (\(\upsilon\))
   \STATE {\bfseries Input:} Initial parameters \( \theta, \phi_1, \phi_2 \) and target parameters \( \theta' \gets \theta, \phi_1' \gets \phi_1, \phi_2' \gets \phi_2 \)
   \STATE {\bfseries Input:} Policy update frequency \( d \)

   \FOR{each episode}
       \STATE Initialize a random process \( \mathcal{N} \) for action exploration
       \STATE Observe initial state \( s_0 \)
       \FOR{each step in the episode}
           \STATE \( a_t \leftarrow \pi_\theta(s_t) + \mathcal{N}_t \) \hfill \COMMENT{Select action with exploration noise}
           \STATE Execute \( a_t \), observe \( (s_{t+1}, r_t) \)
           \STATE Store \( (s_t, a_t, r_t, s_{t+1}) \) in replay buffer \( \mathcal{D} \)
           \STATE \( \zeta, \psi, \upsilon \gets \textbf{train\_world\_model}(s_t, a_t, s_{t+1}, r_t) \) \hfill \COMMENT{Train cVAE world model and update parameters}
       \ENDFOR

       \FOR{each gradient step}
           \STATE Sample mini-batch \( \{ (s_i, a_i) \}_{i=1}^{N} \sim \mathcal{D} \) \hfill \COMMENT{Replay buffer sampling}
           \STATE Compute Distributional Loss via MMD (cf.\ Alg.~\ref{alg:MMD_sobolev}):
           \STATE NB. Samples are concatenated and top $p\%$ are removed as in TQC \citep{kuznetsov2020controllingoverestimationbiastruncated}
           \STATE \( L_{Z_1} \leftarrow \text{MMD\_Sobolev}(Z_{\phi_1}, (Z_{\phi_1'}, Z_{\phi_2'}), s_i, a_i, \pi_{\theta'}, \gamma, \texttt{use\_action\_gradient}) \)
           \STATE \( L_{Z_2} \leftarrow \text{MMD\_Sobolev}(Z_{\phi_2}, (Z_{\phi_1'}, Z_{\phi_2'}), s_i, a_i, \pi_{\theta'}, \gamma, \texttt{use\_action\_gradient}) \)

           \STATE Update critics:
           \STATE \( \phi_1 \gets \phi_1 - \alpha_\phi \nabla_{\phi_1} L_{Z_1} \)
           \STATE \( \phi_2 \gets \phi_2 - \alpha_\phi \nabla_{\phi_2} L_{Z_2} \)

           \IF{gradient step mod \( d = 0 \)}
               \STATE Sample \( M_{\text{policy}} \) values from critic for each state in the batch:
               \STATE \( Z_{\text{policy}, 1:M_{\text{policy}}}^{(i)} \overset{i.i.d.}{\sim} Z_{\phi_1}(s_i, \pi_\theta(s_i)) \) \quad for each \( i \in \{1, \dots, N\} \)

               \STATE Compute actor loss using double summation:
               \STATE \( L_\pi = -\frac{1}{N} \sum_{i=1}^{N} \frac{1}{M_{\text{policy}}} \sum_{j=1}^{M_{\text{policy}}} Z_{\text{policy}, j}^{(i)} \)
               
               \STATE Update actor: \( \theta \gets \theta - \alpha_\theta \nabla_\theta L_\pi \)

               \STATE Update target networks:
               \STATE \( \phi_1' \gets \tau \phi_1 + (1 - \tau) \phi_1' \)
               \STATE \( \phi_2' \gets \tau \phi_2 + (1 - \tau) \phi_2' \)
               \STATE \( \theta' \gets \tau \theta + (1 - \tau) \theta' \)
           \ENDIF
            
       \ENDFOR
   \ENDFOR
\end{algorithmic}
\end{algorithm}

\iffalse
\begin{listing}[H]
  \caption{DSDPG training loop with reduced policy updates}
  \label{lst:dsdpg_core}
\begin{minted}[linenos, frame=lines, fontsize=\footnotesize]{python}
def dsdp_train(env, replay_buffer, cvae,
               critic1, critic2, critic1_target, critic2_target,
               actor, actor_target,
               M, M_policy, kernel, gamma,
               alpha_phi, alpha_theta, policy_update_freq,
               use_action_gradient):
    """
    Distributional Sobolev Deterministic Policy Gradient (DSDPG)
    """
    for episode in range(num_episodes):
        state = env.reset()
        noise_process.reset()
        done = False
        # collect experience & train cVAE world model
        while not done:
            action = actor(state) + noise_process()
            next_state, reward, done, _ = env.step(action)
            replay_buffer.add(state, action, reward, next_state)
            cvae_encoder, cvae_decoder, cvae_prior = train_world_model(
                state, action, next_state, reward)
            state = next_state

        # critic updates via Sobolev-MMD
        for step in range(gradient_steps):
            batch = replay_buffer.sample(batch_size)
            states, actions = batch.states, batch.actions
            loss_z1 = MMD_Sobolev(critic1, critic1_target,
                                  states, actions, actor_target,
                                  gamma, use_action_gradient)
            loss_z2 = MMD_Sobolev(critic2, critic2_target,
                                  states, actions, actor_target,
                                  gamma, use_action_gradient)
            critic1.params -= alpha_phi * grad(loss_z1, critic1.params)
            critic2.params -= alpha_phi * grad(loss_z2, critic2.params)

            if step % policy_update_freq == 0:
                # reduced policy update
                loss_pi = -critic1.sample(states, actor(states), M_policy).mean()
                actor.params -= alpha_theta * grad(loss_pi, actor.params)
                sync_targets(critic1, critic1_target,
                             critic2, critic2_target,
                             actor, actor_target)
\end{minted}
\end{listing}
\fi

\subsection{Baseline - MAGE} \label{sec:baselines}

The primary baseline of this work is MAGE \cite{doro2020how}, as they were the first to propose to use the action gradient to steer policy evaluation. The primary difference with our work is they did so deterministically. Hence our MMD setup collapses to a simple regression setting. 

As a primary baseline, we consider the deterministic counterpart of the Sobolev‐distributional backup.  Define
\[
\begin{aligned}
f_{\det}(s,a)
&=
\begin{bmatrix}
f_{\det}^{\mathrm{ret}}(s,a)\\[4pt]
f_{\det}^{\mathrm{act}}(s,a)
\end{bmatrix}, \\[8pt]
f_{\det}^{\mathrm{ret}}(s,a)
&=
r(s,a) + \gamma\,Q_{\phi}(s',\pi(s')), \\[8pt]
f_{\det}^{\mathrm{act}}(s,a)
&=
\nabla_a r(s,a) + \gamma\,\nabla_a Q_{\phi}(s',\pi(s')).
\end{aligned}
\]
\[
\mathrm{L}^{S_a}(\phi;\,s,a)
=
\bigl\lvert f_{\det}^{\mathrm{ret}}(s,a) - Q_{\phi}(s,a)\bigr\rvert^2
\;+\;
\lambda_S\,
\bigl\lVert f_{\det}^{\mathrm{act}}(s,a) - \nabla_a Q_{\phi}(s,a)\bigr\rVert^2.
\]

In practice, \(s'\) and \(r\) are sampled from the stochastic world model (cVAE) as in DSDPG.  
Furthermore, the above L2 terms are replaced by Huber losses, with the gradient‐term weight set to 5 just as in \cite{doro2020how}.

\section{Toy supervised learning} \label{appendix:toy_supervised}
   To motivate our algorithm, we demonstrate its ability to learn the joint distribution over both the output and gradient of a random function in a supervised setup. We compare deterministic Sobolev training \cite{czarnecki2017sobolev} against our Distributional Sobolev training and show that only the latter can capture the full variability of both outputs and gradients.

 \begin{figure}[h!]
        \centering
        \begin{subfigure}[b]{0.40\linewidth}
            \centering
            \includegraphics[width=\textwidth]{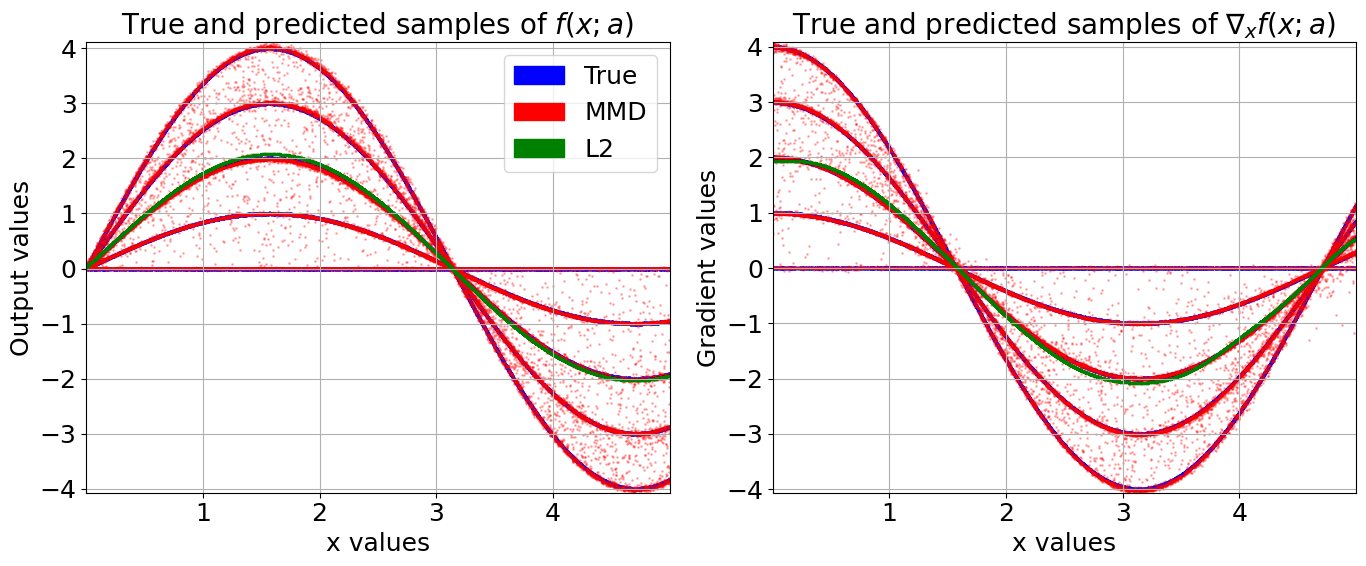}%
            \caption{}
            \label{fig:toy_sup_samples}
        \end{subfigure}
        \begin{subfigure}[b]{0.27\linewidth}
            \centering
            \includegraphics[width=\textwidth]{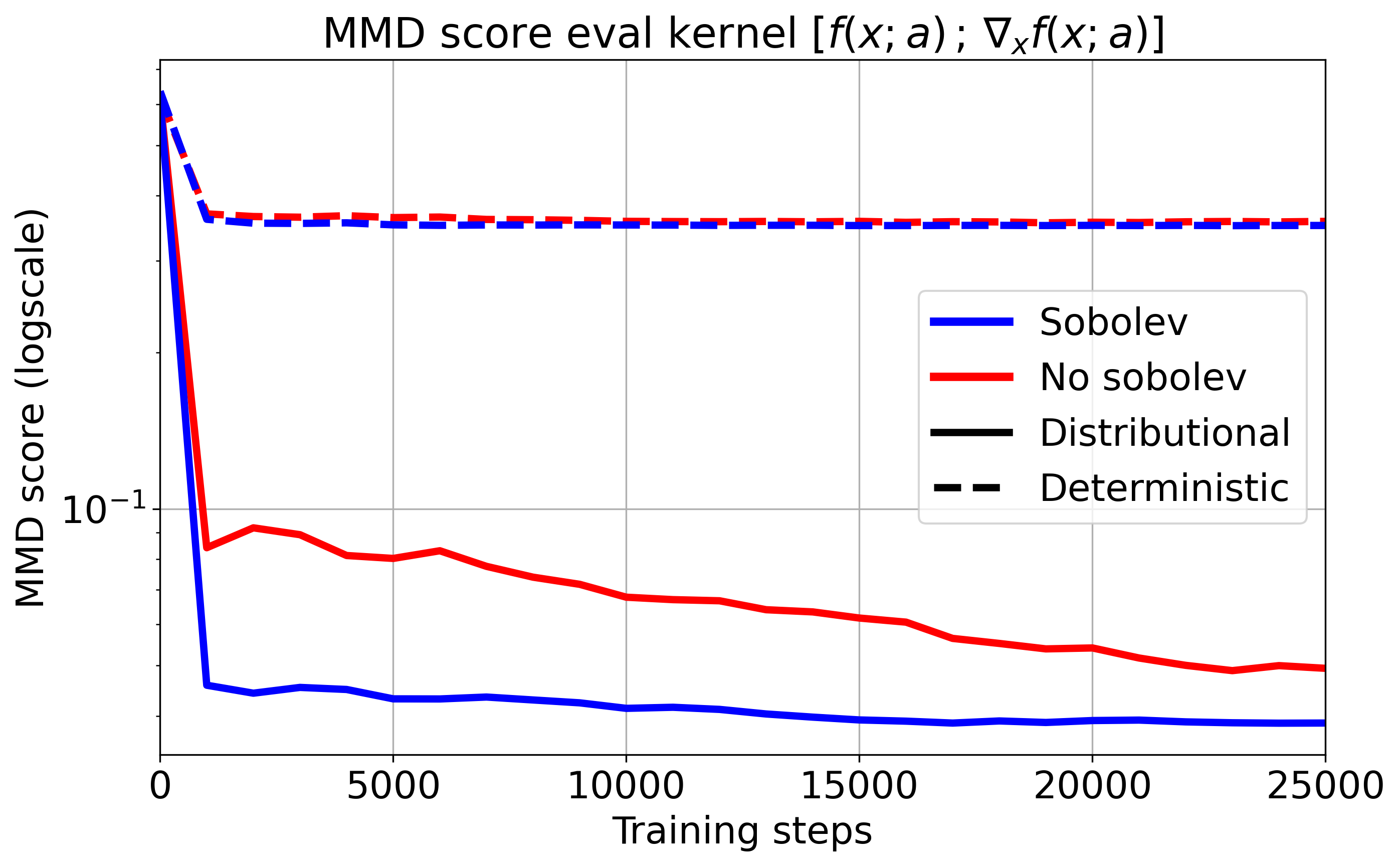}%
            \caption{}
            \label{fig:toy_mmd_score_eval_unlimited}
        \end{subfigure}
        \begin{subfigure}[b]{0.27\linewidth}
            \centering
            \includegraphics[width=\textwidth]{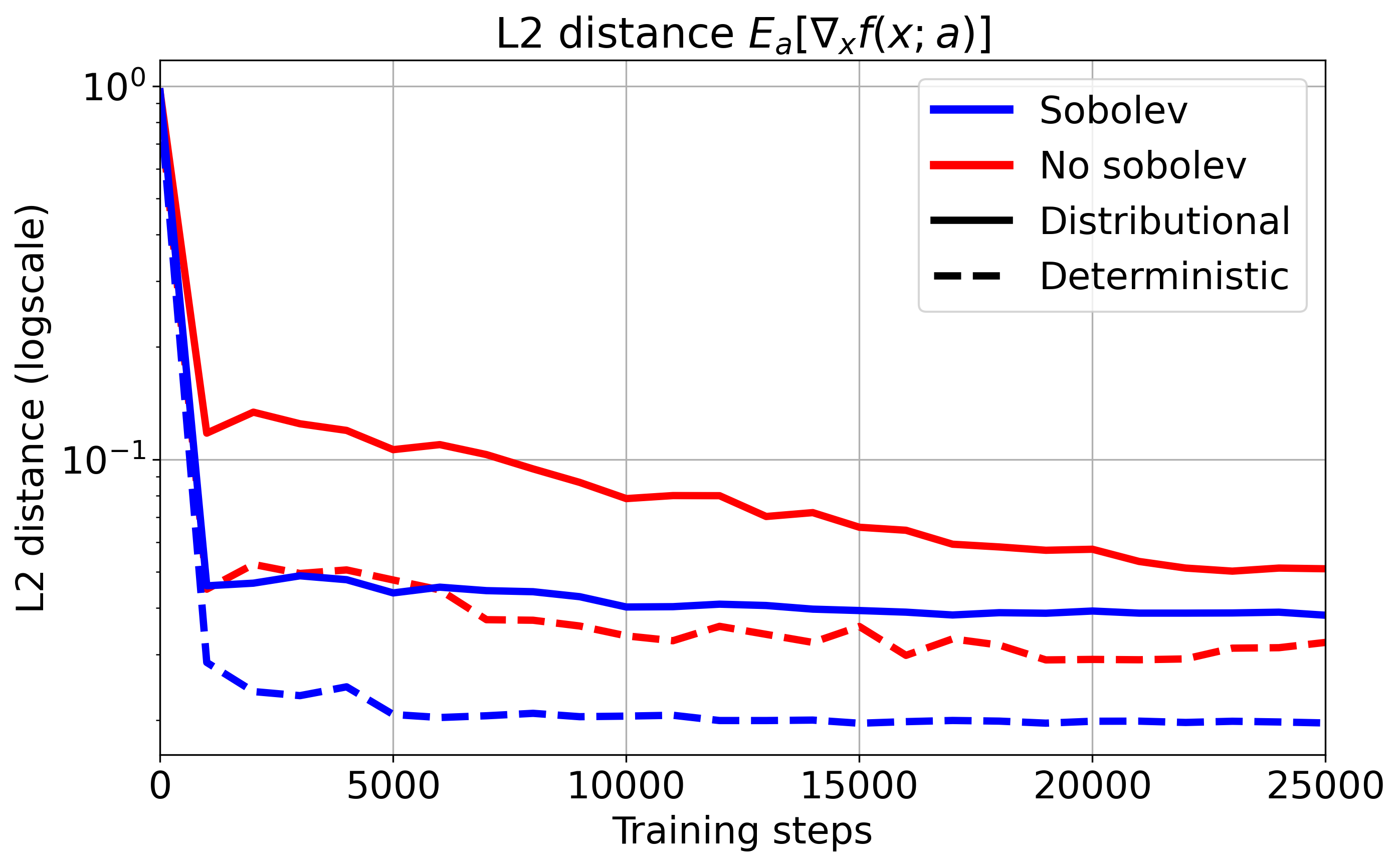}%
            \caption{}
            \label{fig:toy_l2_grad_eval_unlimited}
        \end{subfigure}
        \caption{(a) Samples of the marginals of the full Sobolev distribution \([f(x;a);\nabla_x f(x;a)]\): output (left) and gradient (right). Blue: samples from the true distribution; red: samples from the Distributional Sobolev model trained via MMD; green: samples from the deterministic Sobolev baseline \cite{czarnecki2017sobolev}. (b) Biased MMD score (lower is better) on the joint variable. (c) \(L_2\) error between predicted sample mean and true mean. \textbf{NB. Predicted (red and green) and true (blue) samples are highly overlapping.}}

                \label{fig:toy_sup_unlimited_samples}
    \end{figure}

        The task involves learning a one-dimensional conditional distribution \( p(y|x) \), defined as a mixture of sinusoids \( f(x; a) = a \times \sin(x) \), where the latent variable \( a \) is uniformly drawn from \(\{0, 1, 2, 3, 4\}\). The distributions over outputs $f(x;a)$ and their gradient $\nabla_x f(x;a)$ are depicted in Figure \ref{fig:toy_sup_samples}. It compares an MMD-based model and a regression-based model, trained using stochastic gradient descent with identical architectures. In an unlimited data regime, new pairs of \(x\) and \(a\) were drawn for each batch, and for each \(x\), four \(a\) values were sampled with replacement, yielding samples \((x, y_{1:4})\). %More details are provided in Appendix \ref{appendix:toy_supervised}. 
        
        As expected, the MMD‐based model captures the full joint distribution
        \([f(x; a);\; \nabla_x f(x; a)]\),
        whereas the regression baseline collapses to the conditional mean
        \(\mathbb{E}_a[f(x; a); \nabla_x f(x; a)]\). Figure~\ref{fig:toy_mmd_score_eval_unlimited} plots the MMD score (lower is better) on this joint variable using an evaluation kernel, and Figure~\ref{fig:toy_l2_grad_eval_unlimited} shows the \(L_2\) error between the regression model and the sample mean of the Sobolev generator. As a result, the MMD model better matches the entire distribution, the regression model slightly outperforms on the mean, and both methods effectively exploit the gradient signal (blue curves).

    The distributional variant via MMD and the regression one via L2 used the same architecture except for some noise of dimension 10 drawn from $\mathcal{N}(0;I)$ concatenated to the input for the distributional generator. 
    For each pair $(x, y_{1:4})$, four samples were drawn from the generator. Both were trained using Rectified Adam \cite{liu2021varianceadaptivelearningrate} optimizer with a learning rate of $1 \times 10^{-3}$ and $(\beta_0, \beta_1)=(0.5, 0.9)$. Neural network is a simple MLP with 2 hidden layers of 256 neurons and Swish non-linearities \citep{Ramachandran2018SearchingFA}. 

   Maximum Mean Discrepancy (MMD) was estimated using a mixture of RBF kernels with bandwidths \(\sigma_i\) from the set \(\{\sigma_1, \sigma_2, \dots, \sigma_7\} = \{0.01, 0.05, 0.1, 0.5, 1, 10, 100\}\). We used the biased estimator from Equation \ref{eq:biased_estimator_main_long}.

    The equation for a mixture of RBF kernels is given by:
    
    \begin{equation}
    k^{\text{mix}}(x, y) = \sum_{i=1}^{7} \exp\left(-\frac{\|x - y\|^2}{2\sigma_i^2}\right).
    \end{equation}
    The evaluation kernel we used was the Rational Quadratic $k^\text{RQ}_\alpha$ with $\alpha=1$ with

    \begin{equation}
        k^{\text{RQ}}_\alpha(x, y) = \left( 1 + \frac{\|x - y\|^2}{2\alpha} \right)^{-\alpha}
    \end{equation}
    
    Regarding the dataset, the $(x, y_{1:4})$ pairs were drawn with $x \sim \mathcal{U}[0;5]$ and $a$ was drawn from $\left\{0, 1, 2, 3, 4\right\}$ with replacement. In the limited data regime, the pairs $(x, y_{1:4})$ were drawn once and stayed fixed. The batch size was thus equal to the number of points in the dataset. In the unlimited data regime 256 new pairs were drawn for each batch. Every experiment was run for 25 000 batch samples and thus the same number of SGD steps.
    
    \paragraph{Limited data regime}
        In both supervised and reinforcement learning tasks, the assumption of unlimited data is unrealistic. Here, we explore how the performance of the MMD-based and regression-based models diverges when the amount of available data is restricted. We use the same setup as before, but with a fixed number of $(x, y_{1:4})$ pairs. 
        Several aspects of the learned models can be inspected. In order to assess stability, we report the average norm of the second-order derivative over the input space. For accuracy, we measure the average L2 losses between the true expected gradient and the predicted gradient. Results are shown in Figure \ref{fig:toy_rl_limited_data}. As can be seen, the deterministic model tends to overfit rapidly, while the distributional variant (MMD) proves more robust, maintaining better performance even with constrained data. Notably, the second-order derivative for the deterministic model escalates sharply as data becomes constrained, indicating instability in its approximation. In Appendix \ref{appendix:common_trick}, we discuss different common tricks to mitigate overfitting and related issues in RL.

    \begin{figure}[h!]
        \centering
        \begin{subfigure}[b]{0.45\textwidth} % Adjusted width to allow more space for the image
            \centering
            \includegraphics[width=\linewidth]{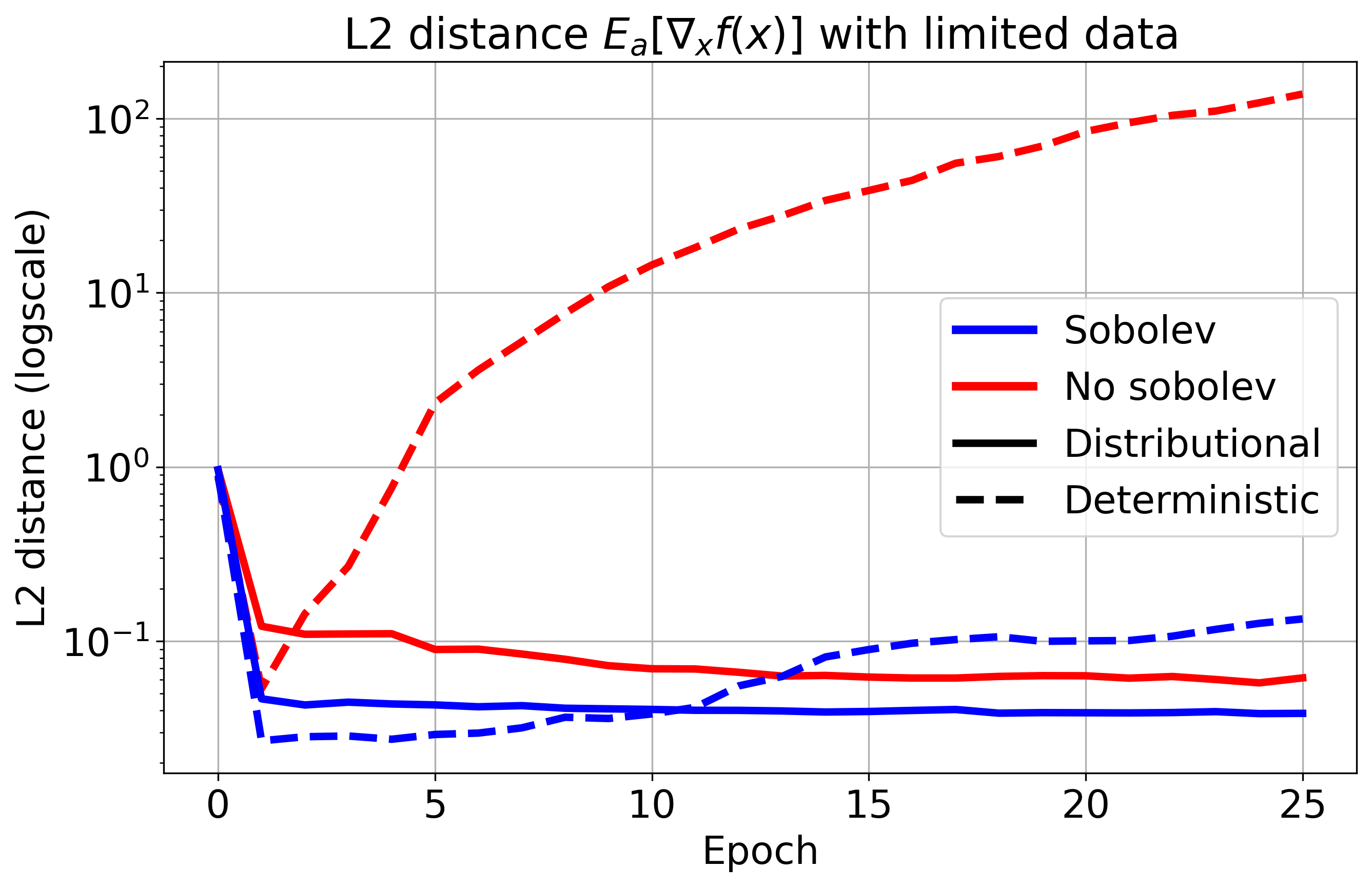}
            \caption{}
            \label{fig:toy_l2_grad_eval_512}
        \end{subfigure}
        \hspace{0.01\textwidth} % Reduced horizontal space
        \begin{subfigure}[b]{0.47\textwidth} % Adjusted width to allow more space for the image
            \centering
            \includegraphics[width=\linewidth]{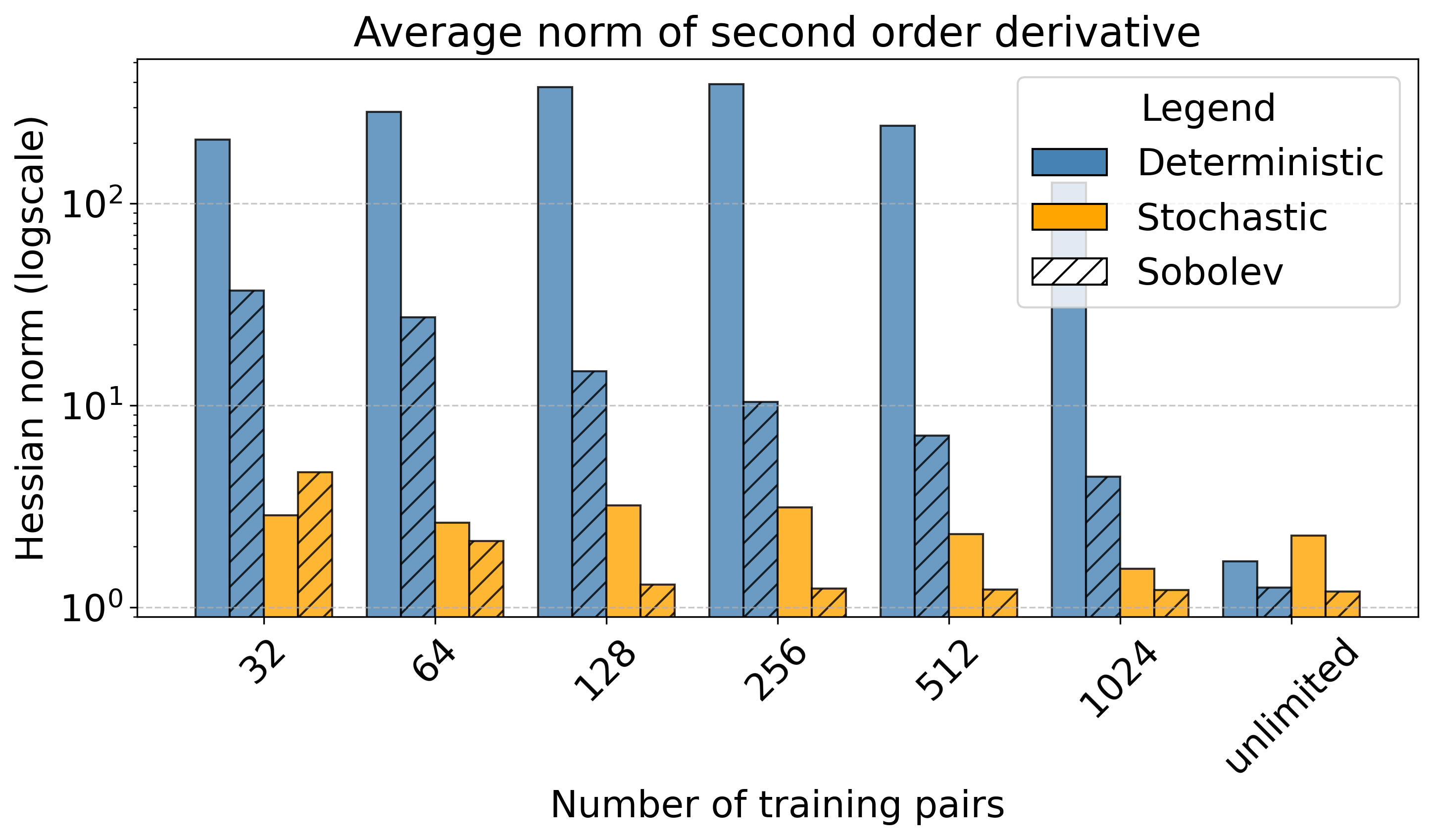}
            \caption{}
            \label{fig:toy_hessian_eval_512}
    \end{subfigure}

    \caption{Toy supervised learning problem. Comparison between MMD-based or L2 based modeling. Left panel: training curve of L2 loss (logscale) on gradient between true conditional expectation with regression prediction and with empirical mean from MMD-based model. Sobolev (blue) used gradient information to train either using MMD (full line) or L2 regression (dashed line). Right panel: average over the input space of the second order derivative (logscale) of predicted gradient from deterministic model (blue), MMD / stochastic (yellow) and with gradient information / Sobolev (dashed). Metrics averaged over 5 seeds.}
    \label{fig:toy_rl_limited_data}
    \end{figure}    
    
    \subsection{Common tricks} \label{appendix:common_trick}
     Overfitting with limited data is a common issue in regression tasks. Early stopping seems an obvious solution in this case but we emphasize that it requires an evaluation criterion that is not always available (for example, in policy evaluation). Other solutions include weight regularization \cite{NIPS1991_8eefcfdf}, dropout \cite{JMLR:v15:srivastava14a}, Bayesian neural networks \cite{blundell2015weightuncertaintyneuralnetworks}, ensembling \cite{chua2018deepreinforcementlearninghandful}, and spectral normalization \cite{zheng2023modelensemblenecessarymodelbased}, all of which often reduce network capacity.
        To address similar issues, \cite{fujimoto2018addressing} proposed adding noise to the target to match, effectively smoothing the critic. As argued by \cite{ball2021offcon3stateartanyway}, this method can be seen as indirectly acting like spectral normalization, encouraging smoother gradients and effectively reducing the magnitude of the second derivative. Appendix \ref{appendix:toy_supervised_noise} shows how noise scale impacts overfitting by inducing bias. On the other hand, we propose avoiding such assumptions by using generative modeling to add latent freedom.
        
    \subsection{Adding noise}
    \label{appendix:toy_supervised_noise}
    Inspired by \cite{fujimoto2018addressing}, we added some independent noise on $x$ for each $(x, y_{1:4})$. Noise scale $\sigma$ was in $\left\{0.01, 0.1, 0.5\right\}$. For each new batch, it was sampled from a standard Gaussian $\eta \sim \mathcal{N}(0;\sigma^2)$ and added as $\tilde{x} = x + \eta$

    In Figure \ref{fig:toy_samples_no_noise}-\ref{fig:toy_noise_samples}, we can see the impact of the various noise scales on the predictions of the deterministic regression. As can be seen, adding noise on $x$ has a positive effect in terms of stabilizing the gradient but it induces a bias that grows with the scale of the noise. Moreover, this noise depends on the application and makes strong assumptions about the function to learn. The stabilizing effect of additive noise can further be seen in Figure \ref{fig:toy_noise_l2_losses}-\ref{fig:toy_noise_hessians} where both the L2 loss and average second order derivative are displayed as a function of the number of sampled locations.

    \begin{figure}[htbp]
        \centering
        % First image (first row)
        \includegraphics[width=0.75\textwidth]{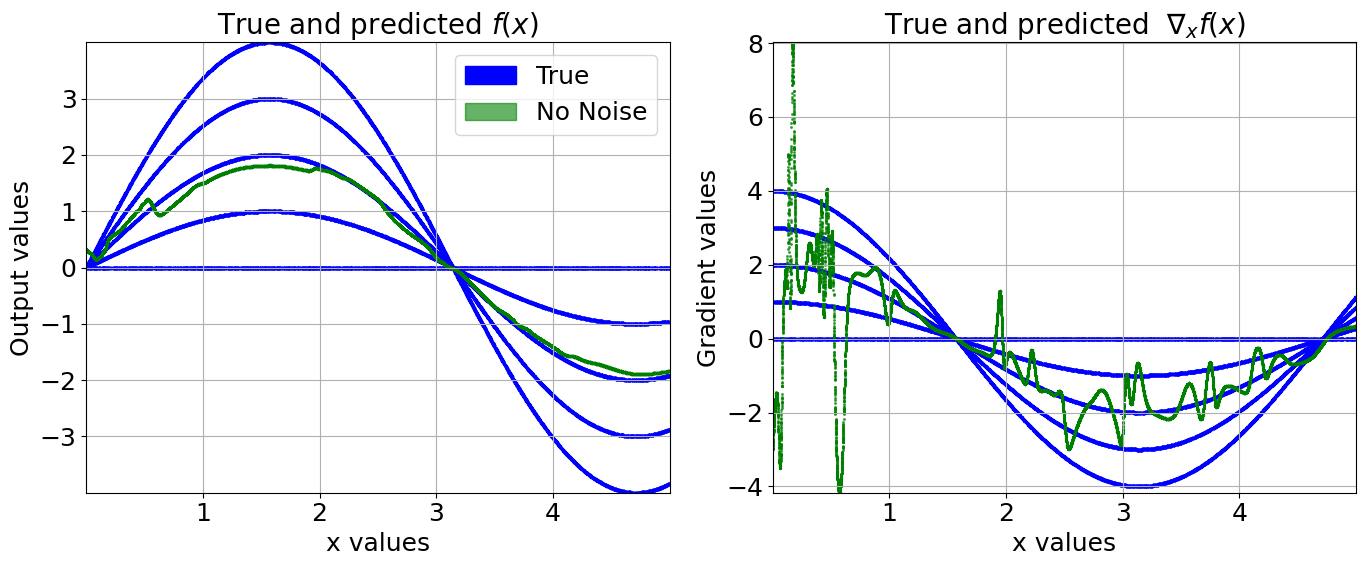}
        \caption{Toy supervised learning problem. Comparison of samples from the true five-mode distribution with predictions made by a deterministic model trained with L2 loss (green). The output space is shown on the left, and the gradient space on the right. Results obtained after 25,000 training steps.}
        \label{fig:toy_samples_no_noise}
        \includegraphics[width=0.755\textwidth]{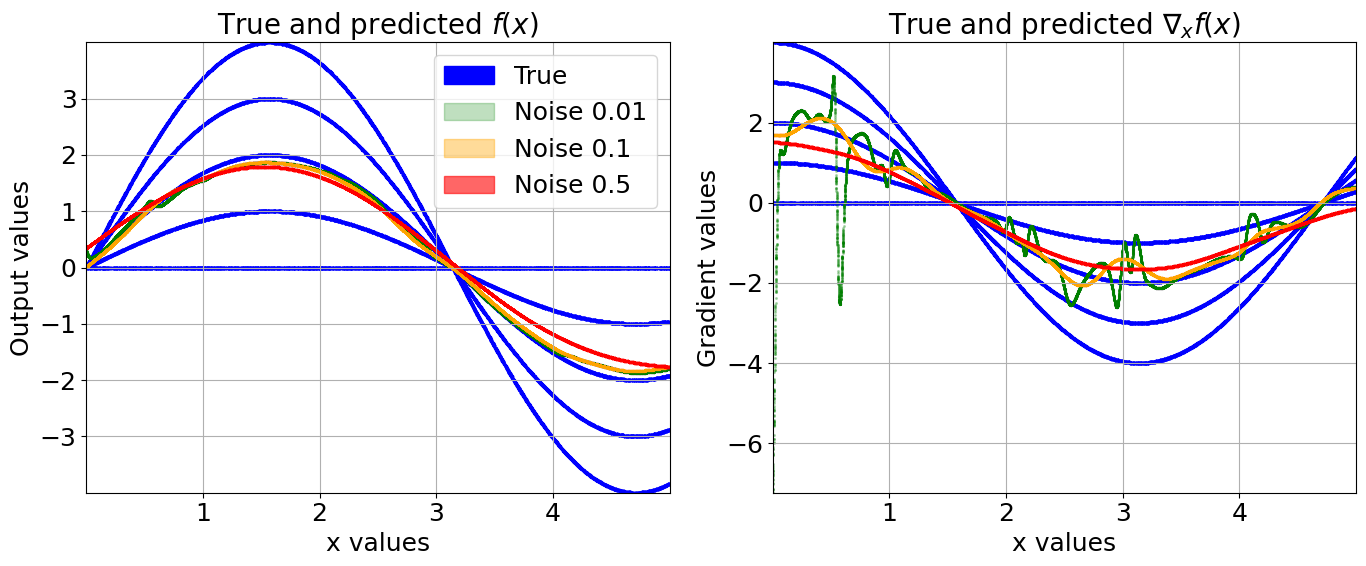}
        \caption{Toy supervised learning problem. Comparison between true samples from the distribution of five modes and deterministic models trained with varying levels of additive noise on their input data. Low level of noise (green), medium level of noise (orange), high level of noise (red). Results obtained after 25,000 training steps.}
        \label{fig:toy_noise_samples}
    \end{figure}
    
    \begin{figure}[htbp]
        \centering
        % First image (first row)
        \includegraphics[width=0.65\textwidth]{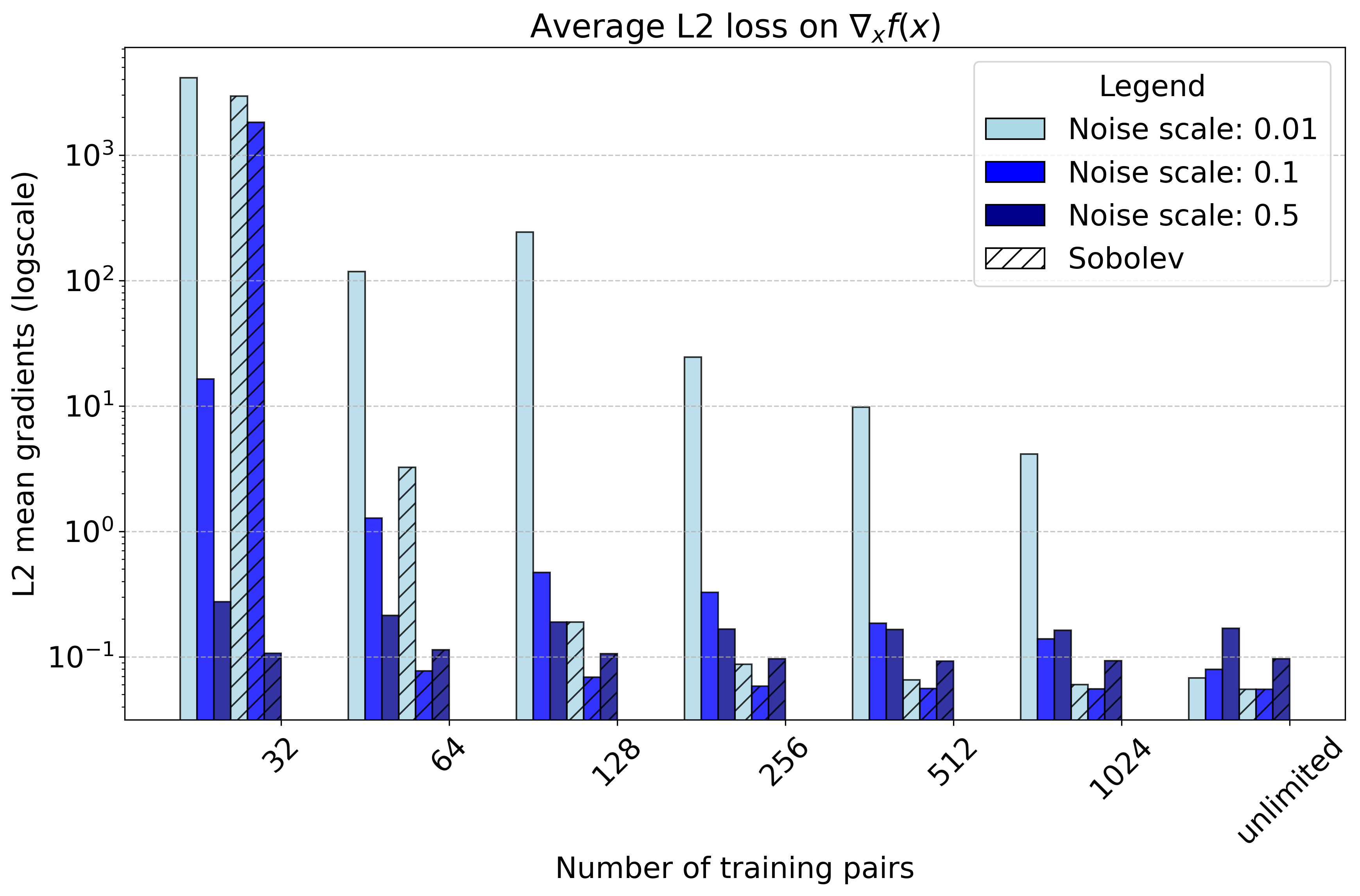}
        \caption{Toy supervised learning problem. Comparison of the L2 loss between the predicted gradient and the conditional expectation of the true distribution. Different scales of additive noise on the input are compared: low noise (light blue), medium noise (medium blue), and high noise (dark blue), alongside Sobolev training (dashed). Results are shown after 25,000 training steps.}
        \label{fig:toy_noise_l2_losses}
        \includegraphics[width=0.65\textwidth]{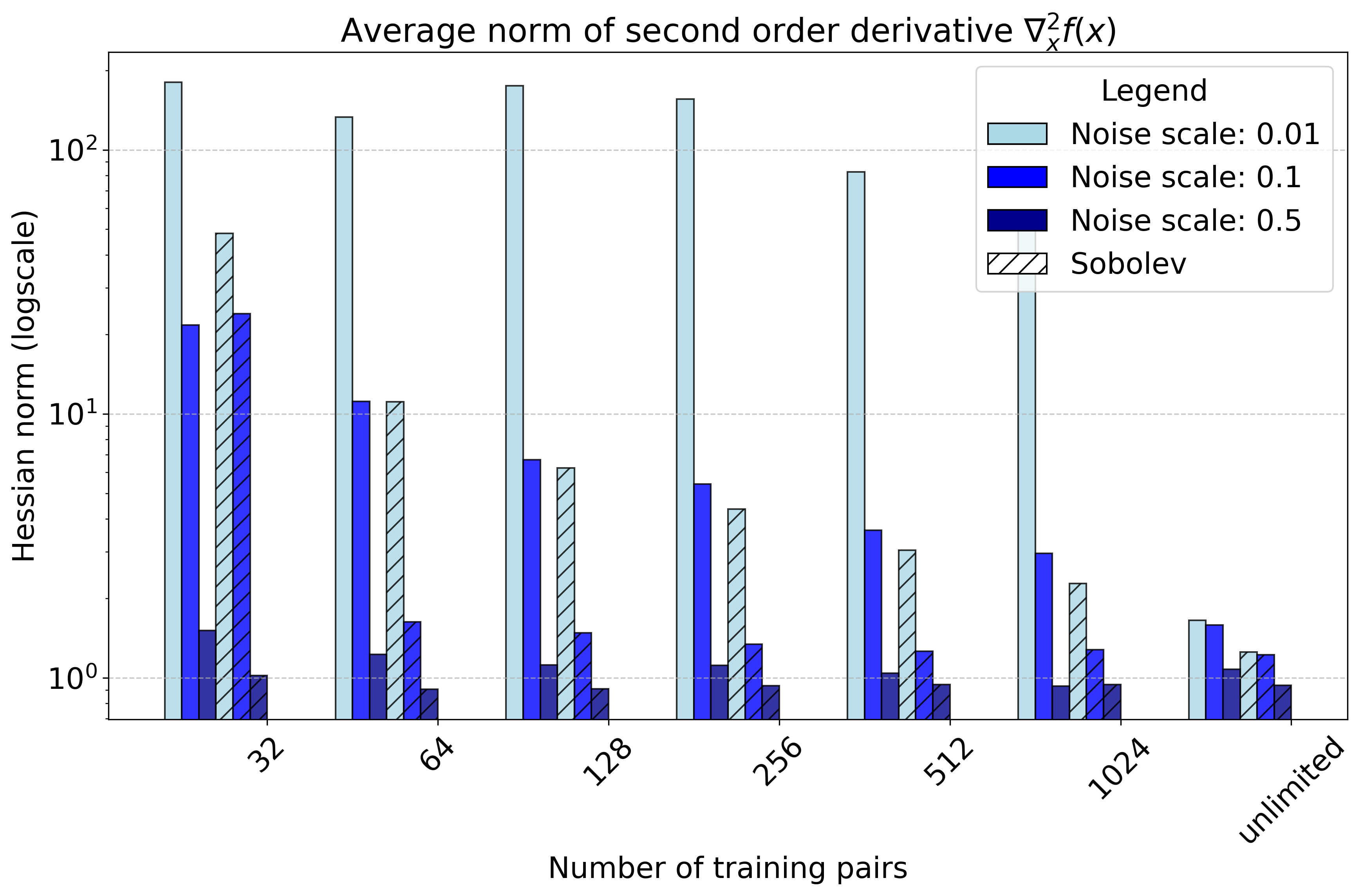}
        \caption{Toy supervised learning problem. Comparison of the average second order derivative norm over the input space. Different scales of additive noise on the input are compared: low noise (light blue), medium noise (medium blue), and high noise (dark blue), alongside Sobolev training (dashed).}
        \label{fig:toy_noise_hessians}
    \end{figure}

\iffalse
\subsection{Toy RL environment}
    \label{appendix:toy_rl_env}
    \subsubsection{Implementation}
    \subsubsection{Experiment setup}
\fi

\section{Toy Reinforcement Learning}
    \label{appendix:toy_rl_env}

\subsection{Environment}
In this appendix, we provide full details of our custom toy environment, summarized in Table~\ref{tab:toy_env_params}.

\textbf{Environment Dynamics.} \quad
A point mass moves in a two-dimensional continuous state space, and the agent controls its acceleration in \([-1,1]\) per axis (Acceleration range).  At each step the mass’s velocity is updated via Euler integration with friction (Friction coefficient) and then its position is advanced by the new velocity (Integration time step).  The mass has unit mass (Mass of agent) and radius 0.5 (Agent radius), and if its center leaves the square of half-width 3 (Bounding box half-width) the episode ends with no reward (Reward for leaving bounding box).

\textbf{Partial Observability via Memory.} \quad
At the start of each episode the mass is placed uniformly in a square of half-width 1 (Initialization area half-width).  We then sample one of \(M\in\{3,4,5,6\}\) hidden bonus locations arranged radially around the center.  Reaching the correct location of radius 0.5 (Bonus radius) yields a terminal reward (Reward for reaching bonus).  The agent does not know which location is active, but each visit to a location sets a binary memory flag, making the MDP partially observable and forcing exploration.

\textbf{Controlling Distributional Modes via \(M\).} \quad
By sweeping \(M\in\{3,4,5,6\}\) we directly tune the number of modes in the return distribution across episodes, from concentrated when \(M\) is small to highly multimodal when \(M\) is large.

\begin{table}[H]
  \centering
  \begin{tabular}{@{}ll@{}}
    \toprule
    \textbf{Parameter}                 & \textbf{Value}         \\
    \midrule
    Maximum episode length             & 25 steps               \\
    Reward for reaching bonus          & 10                      \\
    Reward for leaving bounding box    & 0                       \\
    Acceleration range                 & \([-1,1]\) per axis    \\
    Mass of agent                      & 1                       \\
    Agent radius                       & 0.5                     \\
    Bonus radius                       & 0.5                     \\
    Bounding box half-width            & 3                       \\
    Initialization area half-width     & 1                       \\
    Friction coefficient               & 0.1                     \\
    Integration time step              & 0.5                     \\
    \bottomrule
  \end{tabular}
  \caption{Summary of the toy environment’s key parameters.}
  \label{tab:toy_env_params}
\end{table}

\subsection{Experimental details}
    The settings were similar to Section \ref{appendix:rl} apart from the TQC \cite{kuznetsov2020controllingoverestimationbiastruncated} truncation parameter \(p\) which was set to \(5\%\).

\section{Reinforcement learning experiments}
    \label{appendix:rl}
    Here we describe the architectures, optimizers and other hyperparameters of the full Distributional Sobolev Deterministic Policy Gradient algorithm. 

    \begin{table}[htbp]
        \caption{Hyperparameters for the DDPG and DSDPG experiments on MuJoCo environments}
        \label{table:hyperparameters}
        \begin{center}
            \begin{tabular}{|l|l|}
                \hline
                \textbf{Item} & \textbf{Value} \\ \hline
                Discount $\gamma$ &  0.99 \\ \hline
                Polyak averaging $\tau$ & 0.005 \\ \hline
                Buffer size &  $10^6$\\ \hline
                Batch size &  $256$\\ \hline
                Exploration noise scale & 0.1 \\ \hline
                Critic learning rate & $1 \times 10^{-4}$ \\ \hline
                Policy learning rate & $1 \times 10^{-4}$ \\ \hline
                cVAE learning rate & $1 \times 10^{-4}$ \\ \hline
                cVAE weight decay & $1 \times 10^{-4}$ \\ \hline
                cVAE KL weight & 0.1 \\ \hline
                cVAE latent dim & $|\mathcal{S}|+ 1$ \\ \hline
                Critic input noise dim & 64 \\ \hline
                Number of samples dist. & 10 \\ \hline
                $\%p$ truncation (TQC) & $25\%$ \\ 
                maximum slicing optimization steps  & 100 \\ \hline maximum slicing LR  & 1e-4 \\ \hline maximum slicing optimizer & Adam \citep{kingma2014adam} \\ \hline
            \end{tabular}
        \end{center}
    \end{table}

    \paragraph{Policy network}
        Policy network is a MLP with 2 hidden layers of 400 neurons. The non-linearity was Swish \citep{Ramachandran2018SearchingFA}. Final activations are mapped to the output space using a linear transformation followed by a $\tanh$ non-linearity. The policy network was optimized using the Rectified Adam \cite{liu2021varianceadaptivelearningrate} with a learning rate of $1 \times 10^{-4}$.
        
    \paragraph{Critic network}
        Critic network architecture is almost the same as for the policy network. Importantly, it is kept constant for experiments using normal DDPG and DSDPG apart from noise concatenated on the input $[s;a]$. The network is a MLP with 400 neurons, skip connections from the input and Swish activations. No non-linearity was applied on the output after the last linear layer. The critic network is optimized using Rectified Adam \cite{liu2021varianceadaptivelearningrate} with \((\beta_1, \beta_2)=(0.9, 0.999)\) and a learning rate of $1 \times 10^{-4}$.
        
    \paragraph{Conditional VAE world model}
       The encoder, decoder and prior networks are MLPs with 3 hidden layers, each containing 1024 neurons. Skip connections are applied from the input to each hidden layer. The cVAE is optimized using RAdam with weight decay \(1\times 10^{-4}\), \((\beta_1, \beta_2) = (0.9, 0.999)\), and a learning rate of \(1\times 10^{-4}\).

        The prior is a learned diagonal multivariate Gaussian $\mathcal{N}(\varepsilon; \mu_{\upsilon}(s, a), \sigma_{\upsilon}^2(s, a) \odot I)$ with a latent dimension equal to the size of the random variable being modeled, which is \(|\mathcal{S}|+1\) for \((s', r)\). Following \cite{doro2020how, Zhu_Ren_Li_Liu_2024}, the cVAE predicts the difference between the current and next observation, \(\delta_s = s' - s\) which is then added back to \(s\), along with the reward \(r\).

    \paragraph{Conditional Generative Moment Matching}
        For the distributional critic, noise vectors were concatenated with the state-action pairs \((s, a)\) and passed through the same architecture as the deterministic critic. The noise dimension was set to 64. Noise was transformed by an independent 2 layers small MLP of width 64 with Swish activation before being passed to the critic. For each state-action pair, 10 samples were drawn to update both the critic and the policy. The multiquadric kernel \( k_h^\text{MQ}(x,y) = -\sqrt{1 + h^2\|x - y\|_2^2} \) \cite{killingberg2023the} was used, with the kernel parameter \( h \) set to 100.

    \subsection{Full curves and wall-clock time}
        The full evaluation curves on the 6 environments are reported in Figure~\ref{fig:main_curves}. The wall-clock time of the different methods is displayed in Table~\ref{tab:runtime_humanoid}.
        \begin{table}[t]
        \centering
        \begin{tabular}{lc}
        \hline
        Method & Time for 1000 iterations (s) \\
        \hline
        MSMMD Sobolev & 62.5 \\
        MSMMD         & 45.5 \\
        MMD Sobolev   & 40.0 \\
        MMD           & 35.7 \\
        IQN           & 35.7 \\
        Huber Sobolev & 31.3 \\
        Huber         & 30.3 \\
        \hline
        \end{tabular}
        \caption{Wall-clock time to perform 1000 iterations of each method under Humanoid-v2. 
        Experiments were run on a single Nvidia H100 GPU.}
        \label{tab:runtime_humanoid}
        \end{table}

        \begin{figure}[htbp]
  \centering
  \includegraphics[width=0.9\linewidth]{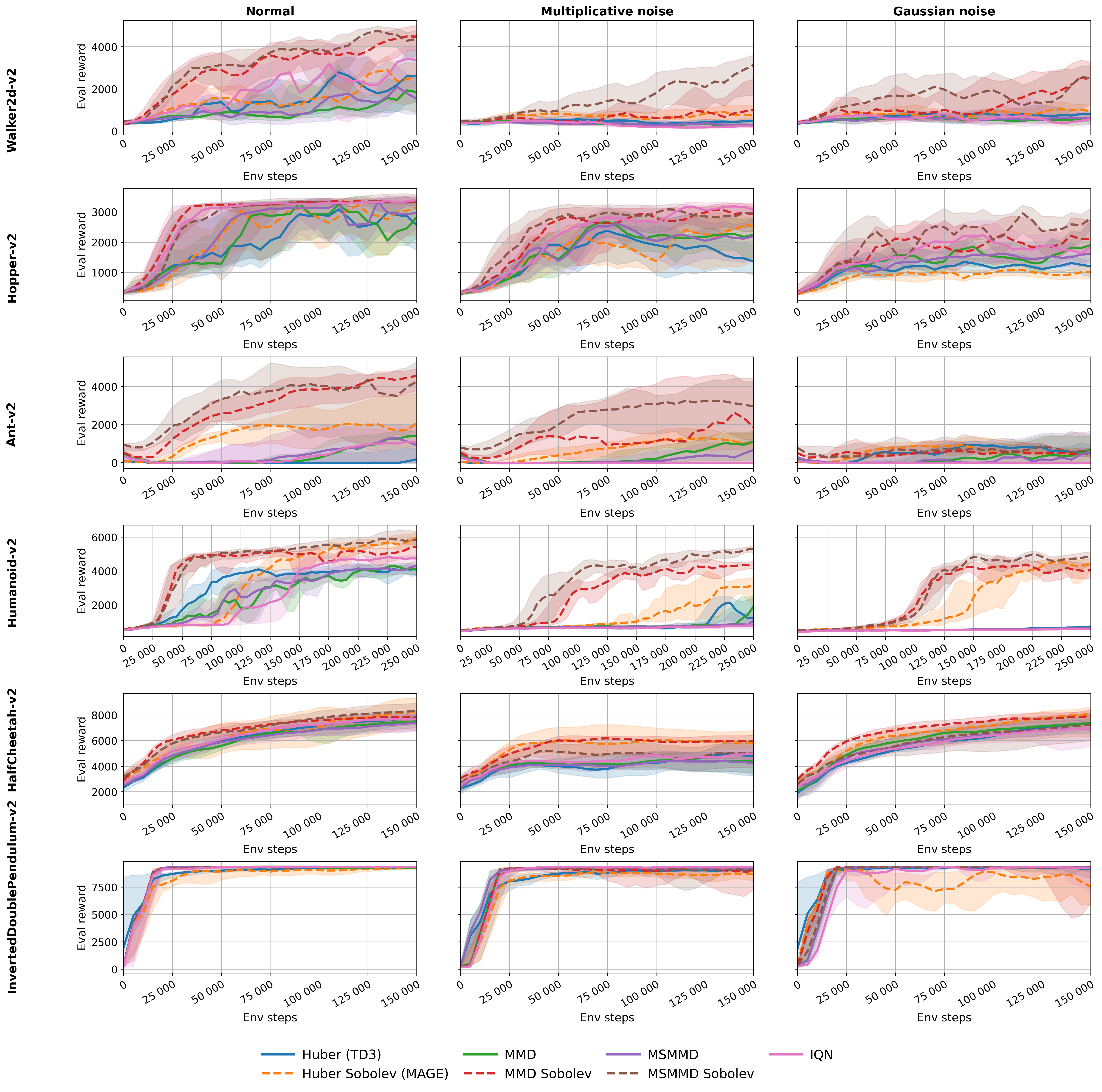}%
  \caption{Evaluation of DSDPG (MMD Sobolev), deterministic Sobolev/MAGE \cite{doro2020how}, TD3-Huber \cite{fujimoto2018addressing}, IQN \cite{dabney2018implicit}, and standard MMD \cite{DBLP:journals/corr/abs-2007-12354,killingberg2023the} on six  MuJoCo tasks. Results are reported over 10 random seeds. The median is displayed with 25\%-75\% IQR. Window smoothing with window size 3.}
  \label{fig:main_curves}
\end{figure}

    \section{Additional experiments - RL}
\label{appendix:rl_more_experiments}
    In this section we provide further experiments along three axes: kernel bandwidth (Section \ref{section:MQ_bandwidth}), noise scale (Section \ref{section:noise_scale}), number of samples in the distributional methods (Section \ref{section:number_samples}) and capacity of the world model (Section \ref{section:model_capacity}).

   \subsection{Conditional flow world model}
   \label{appendix:flow_ablation}
The conditional flow world model is implemented using a RealNVP architecture
\citep{dinh2016density}. Each affine coupling layer is parameterized by a 
conditioner MLP with two hidden layers of width 512, using SiLU activations 
\citep{elfwing2018sigmoid} and LayerNorm \citep{ba2016layer} after each hidden 
layer. The linear transformations within the coupling blocks use the 
invertible $1{\times}1$ transformation parameterized through an LU 
decomposition as introduced in Glow \citep{kingma2018glow}, which enables 
learned invertible mixing of feature dimensions while keeping the 
log-determinant of the Jacobian inexpensive to compute. As in the cVAE world 
model, the flow predicts the difference $\delta_s = s' - s$ along with the 
reward $r$, and the next state is reconstructed as $s' = s + \delta_s$.

Following the same output parameterization as the cVAE world model, the flow
predicts the concatenated vector $(\delta_s, r)$ of dimension $|\mathcal{S}|+1$,
where $\delta_s = s' - s$. The next state is then recovered as $s' = s +
\delta_s$.

Training proceeds by maximizing the conditional log-likelihood
$\log p_\phi(\delta_s, r \mid s,a)$ using the same optimizer as for the cVAE. Sampling draws $z\sim\mathcal{N}(0,I)$
from the base distribution and applies the inverse flow conditioned on $(s,a)$,
\[
(\delta_s, r) = f_\phi^{-1}(z; s,a).
\]
Because the model is fully reparameterized, each sample $(\delta_s,r)$ admits
pathwise derivatives $\partial (\delta_s,r)/\partial (s,a)$, which are required
for computing Sobolev temporal-difference targets.

The empirical performance of the normalizing flow based variant of our method is displayed in Figure~\ref{fig:flow_curves}. As can be seen, the distributional Sobolev variants we propose still outperform the deterministic variant proposed in MAGE \citep{doro2020how} under the stochastic settings introduced in Section~\ref{sec:results_rl}.

\begin{figure}[htbp]
  \centering
  % tighten up spacing around the caption
  \setlength{\abovecaptionskip}{4pt}
  \setlength{\belowcaptionskip}{4pt}
  % tighten up spacing between columns
  \setlength{\tabcolsep}{2pt}
  % no extra vertical padding in tabular rows
  \renewcommand{\arraystretch}{0}

  \begin{tabular}{c}
    \includegraphics[width=0.9\linewidth]{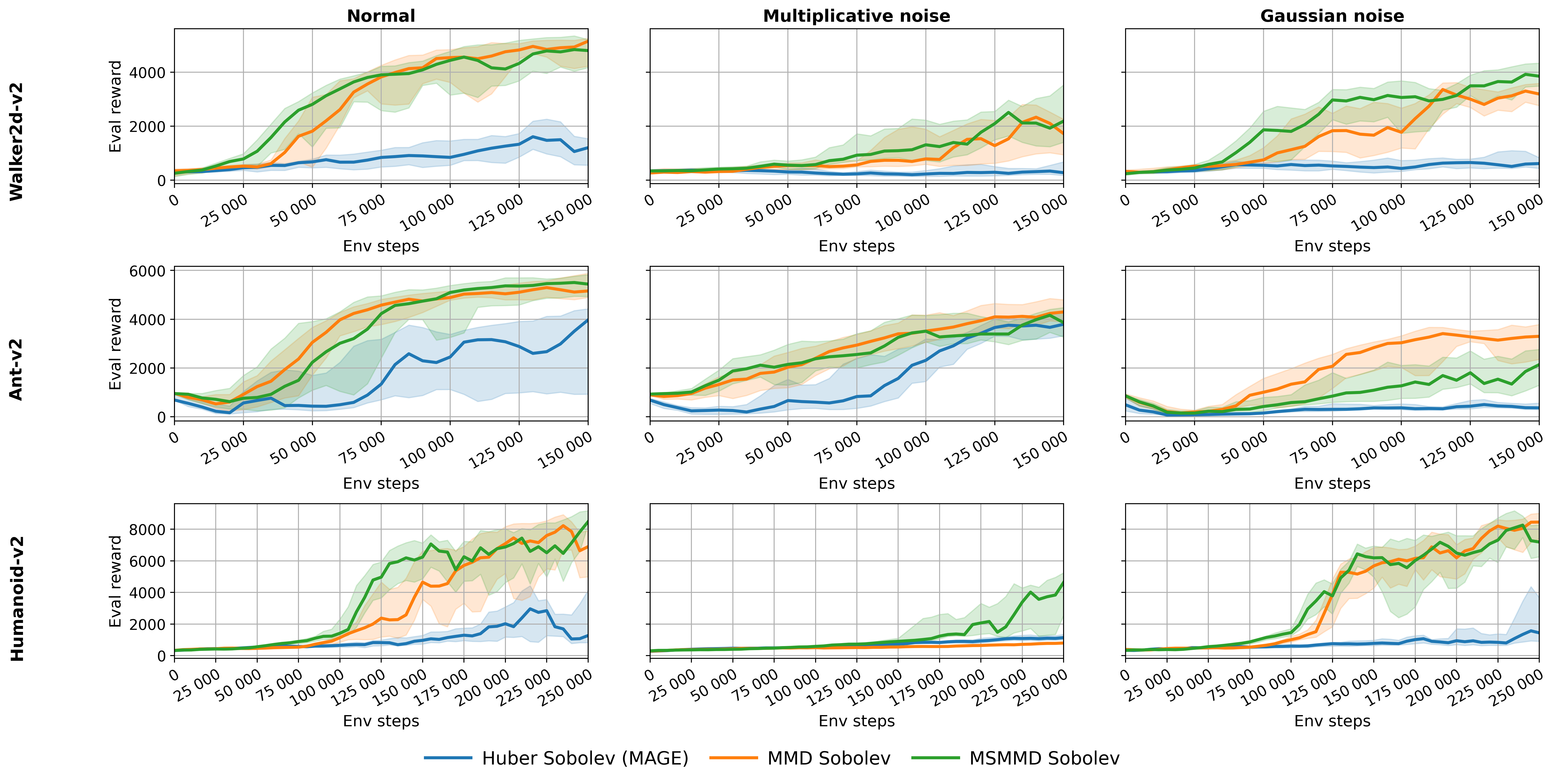} 
    % leave this cell blank for a symmetric layout
    \\
  \end{tabular}

  \caption{Full curves of DSDPG and MAGE \citep{doro2020how} using a RealNVP \citep{dinh2016density} normalizing flow as differentiable world model.}

  \label{fig:flow_curves}
\end{figure}

    \subsection{Overestimation bias}
    \label{section:overestimation_bias}
   In our implementation we rely on the TQC trick \citep{kuznetsov2020controllingoverestimationbiastruncated}, 
    which mitigates overestimation by discarding the largest values in the target distribution. 
    This mechanism is introduced and discussed in Section~\ref{section:design_choices}. 
    It is essential for stable learning: in our ablations, removing truncation leads to 
    substantially degraded performance. 
    For completeness, we also remove the double‐estimation correction from MAGE \citep{doro2020how}. 
    These ablations show that controlling overestimation is critical for reliable training. 
    The corresponding results are reported in Figures~\ref{fig:mujoco_grid_tqc:a} and~\ref{fig:mujoco_grid_tqc:b}, 
    which present the final evaluation performance and the normalized Area Under the Curve 
    with and without the overestimation–bias countermeasure.

    \begin{figure}[h]
  \centering
  % tighten up spacing around the caption
  \setlength{\abovecaptionskip}{4pt}
  \setlength{\belowcaptionskip}{4pt}

  \begin{subfigure}{0.49\linewidth}
    \centering
    \includegraphics[width=\linewidth]{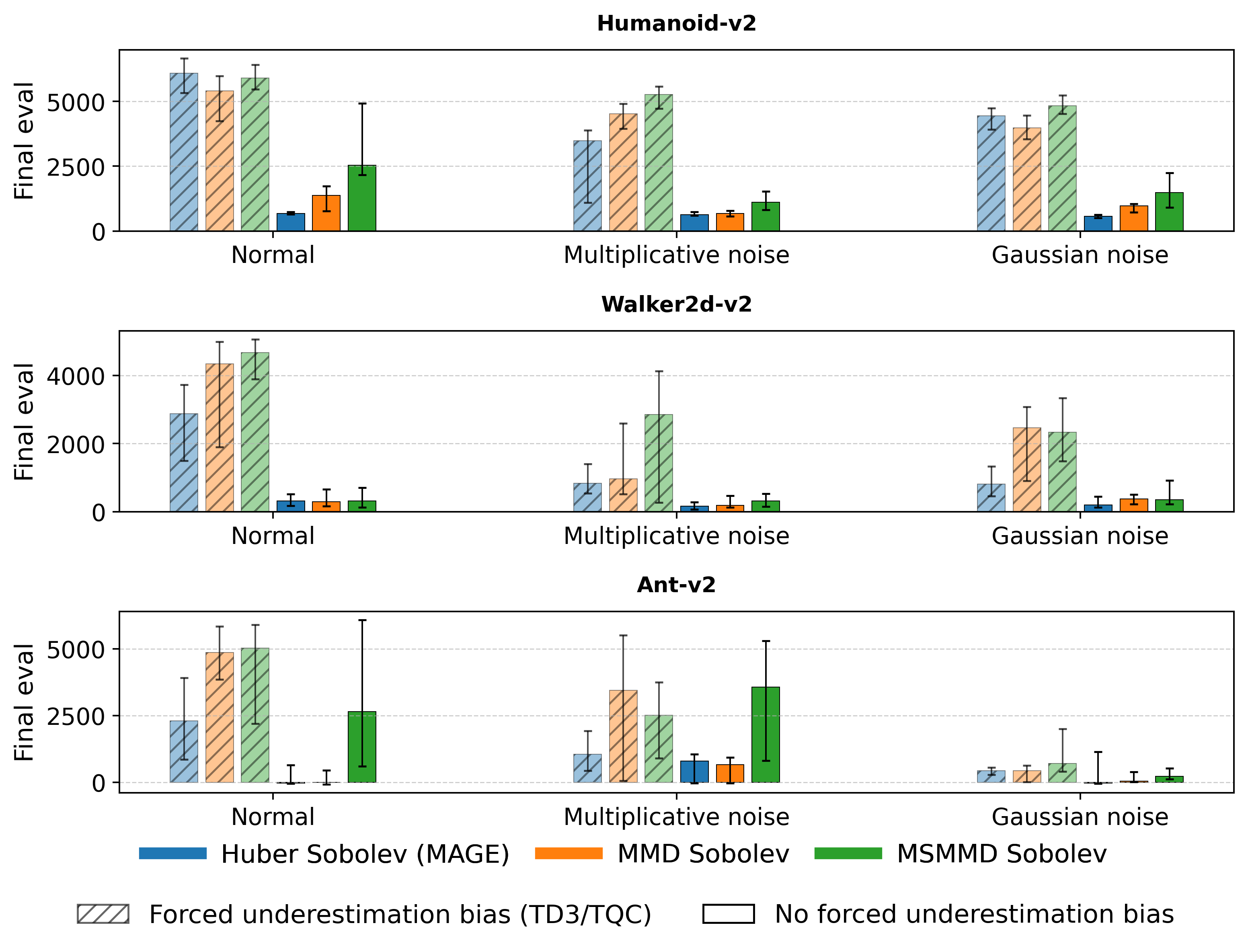}
    \caption{}
    \label{fig:mujoco_grid_tqc:a}
  \end{subfigure}
  \begin{subfigure}{0.49\linewidth}
    \centering
    \includegraphics[width=\linewidth]{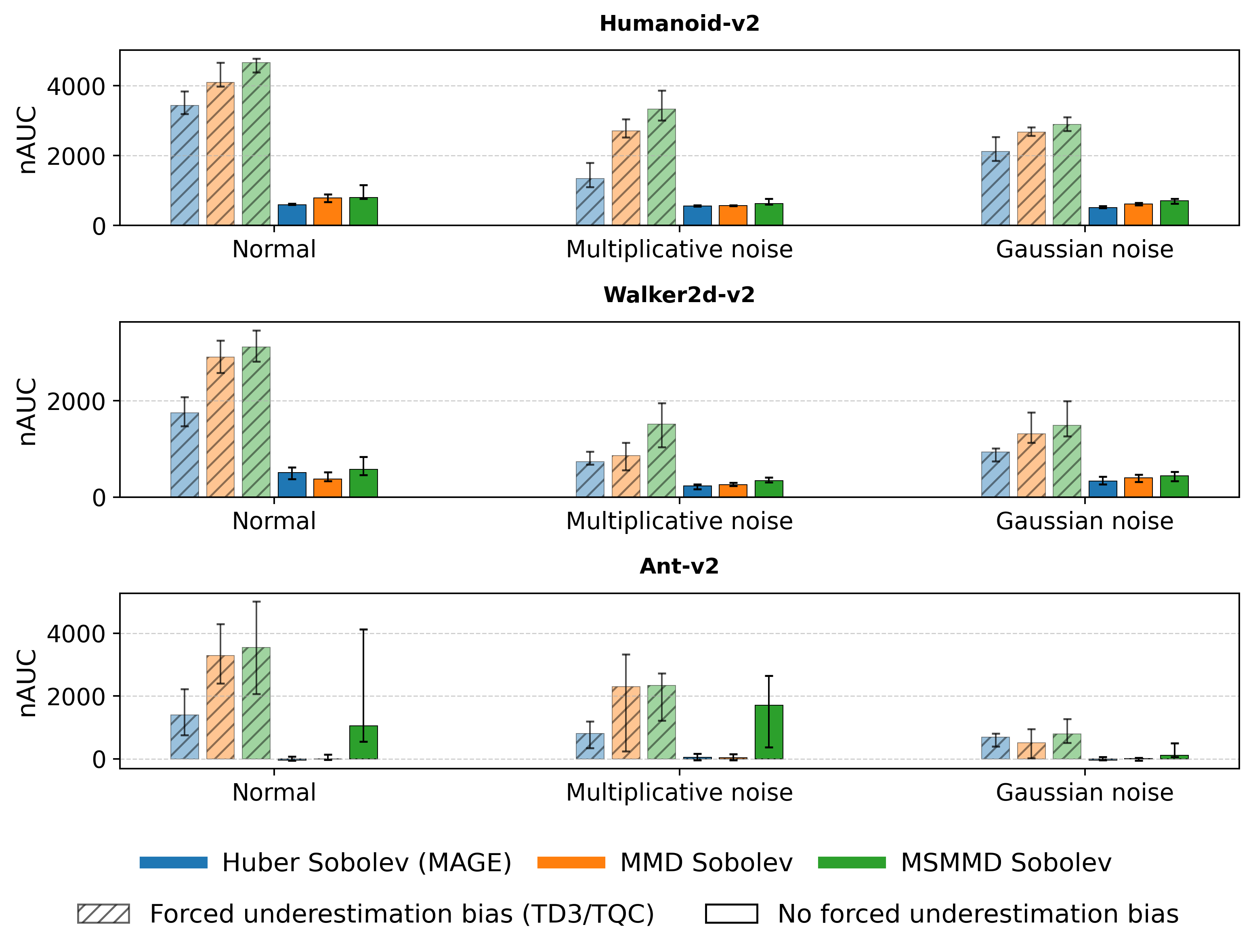}
    \caption{}
    \label{fig:mujoco_grid_tqc:b}
  \end{subfigure}

  \caption{Comparison of MAGE \citep{doro2020how} and DSDPG with and without overestimation bias measures. We removed the double estimation of MAGE borrowed from \citep{fujimoto2018addressing} and the TQC trick \citep{kuznetsov2020controllingoverestimationbiastruncated} of our DSDPG method.}
  \label{fig:mujoco_grid_tqc}
\end{figure}

    \subsection{Multiquadric kernel bandwidth}
    \label{section:MQ_bandwidth}
    We recall the multiquadric kernel
\[
k_h^{\text{MQ}}(x,y) = -\sqrt{1 + h^2 \|x-y\|^2_2}.
\]
As shown in \cite{killingberg2023the}, selecting a proper value for \(h\) is critical as it affects both the expressiveness of the kernel and the numerical stability of its MMD estimator: a too–small \(h\) yields a nearly constant kernel, reducing its discrimination power, while a too–large \(h\) increases the gradient magnitude, scaling on the order of \(h\), which can lead to exploding gradients and training oscillations unless mitigated by techniques such as kernel rescaling or gradient clipping. 

In Section \ref{sec:results_rl}, the default value for \(h\) was 100 which worked reliably. Here we test the values \(h=10\) and \(h=250\). The results are depicted in Figure \ref{fig:bandwidth_bars} and \ref{fig:bandwidth_curves} where we display the final evaluation performance and the evaluation curves. The comparison is made on the environments with and without multiplicative noise (as in Section \ref{sec:results_rl}). As can be seen, overall, the performance of MMD and DSDPG (MMD Sobolev) seems to be quite insensitive to the kernel bandwidth. Our choice of \(h=100\) seems to be robust.
\begin{figure}[htbp]
  \centering
  % tighten up spacing around the caption
  \setlength{\abovecaptionskip}{4pt}
  \setlength{\belowcaptionskip}{4pt}
  % tighten up spacing between columns
  \setlength{\tabcolsep}{2pt}
  % no extra vertical padding in tabular rows
  \renewcommand{\arraystretch}{0}

  \begin{tabular}{cc}
    \includegraphics[width=0.475\linewidth]{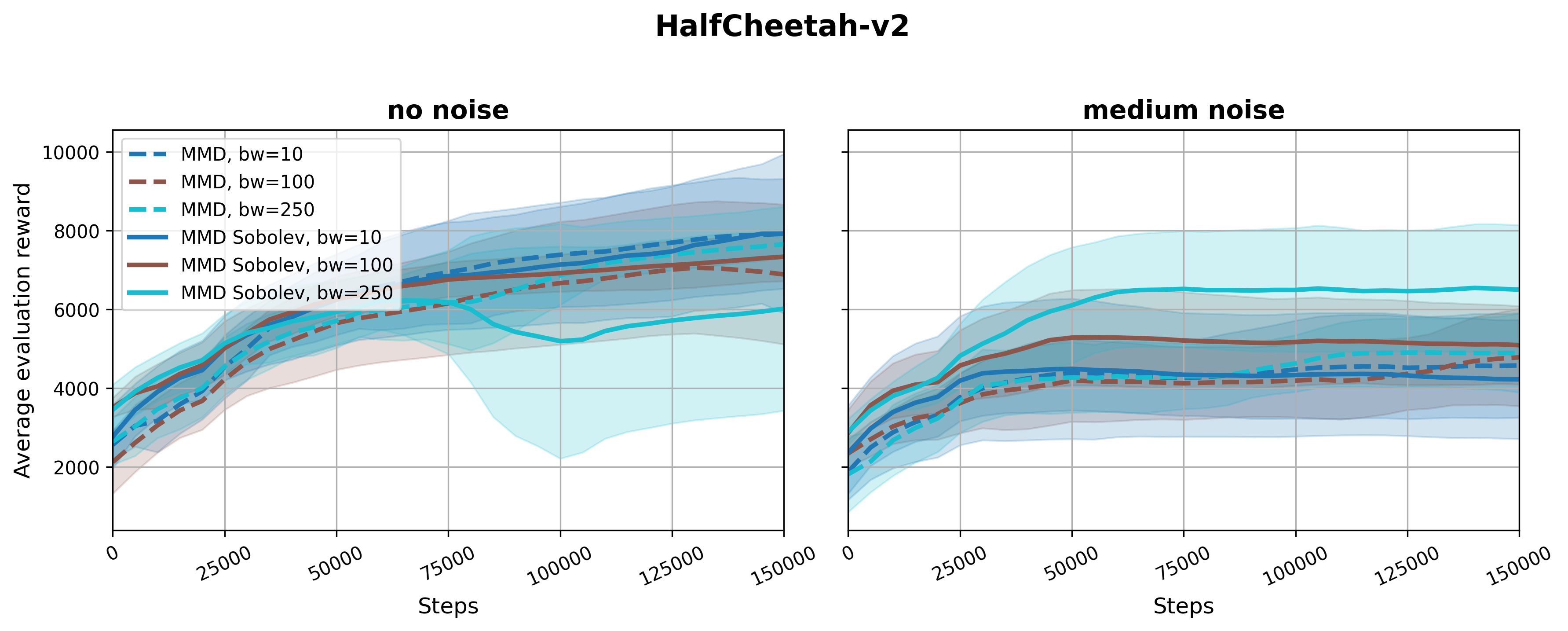} &
    \includegraphics[width=0.475\linewidth]{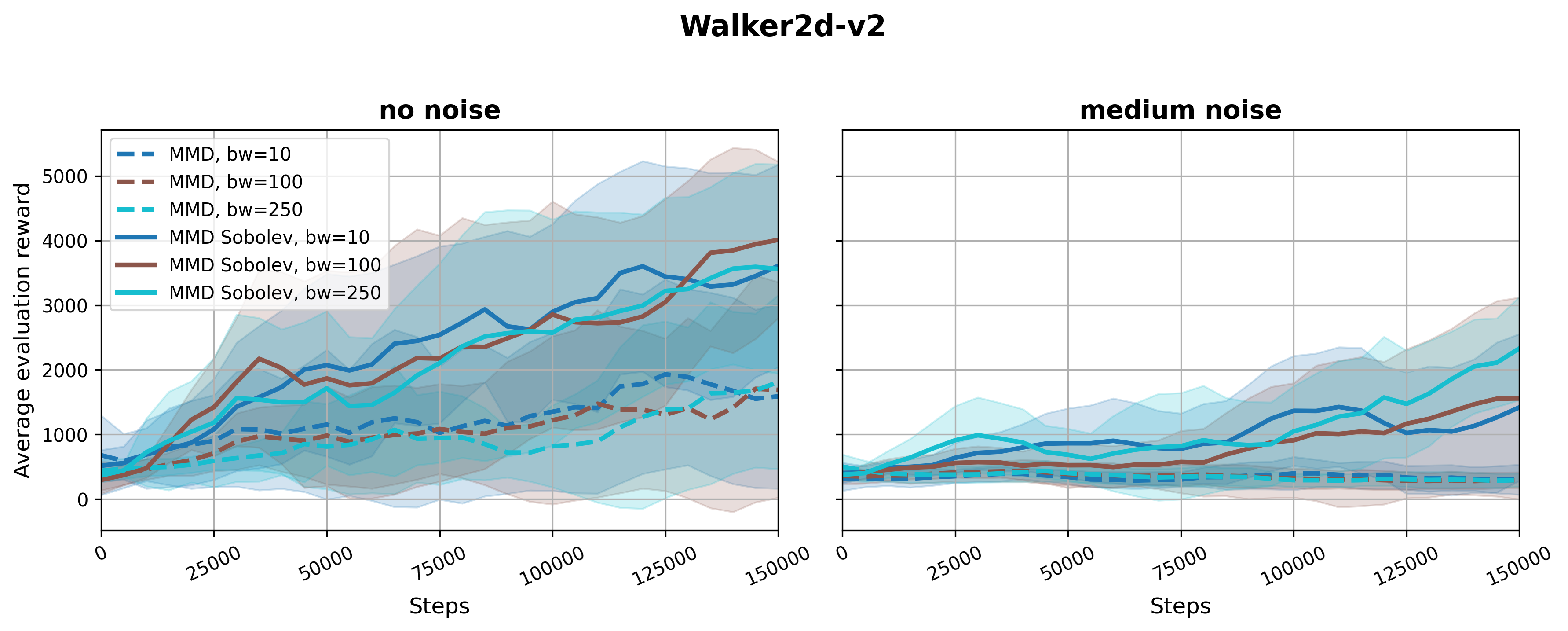} \\[2pt]
    \includegraphics[width=0.475\linewidth]{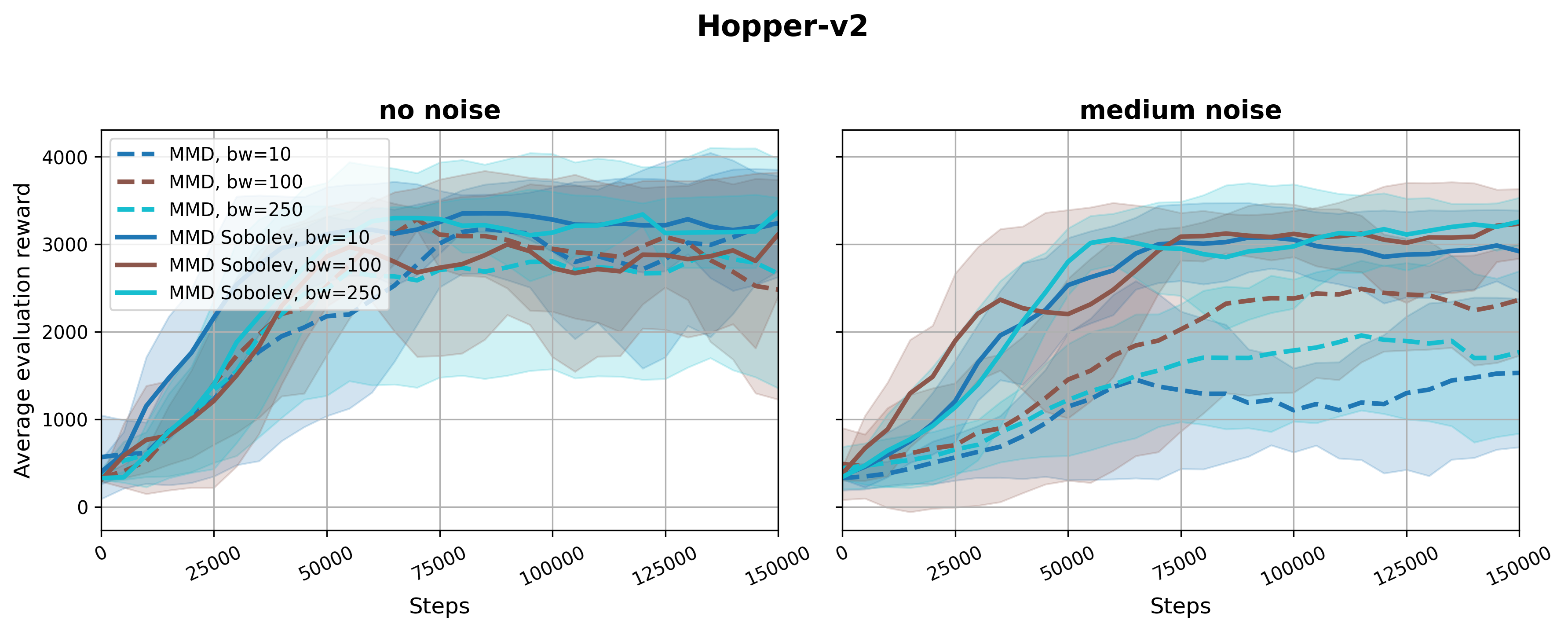} &
    \includegraphics[width=0.475\linewidth]{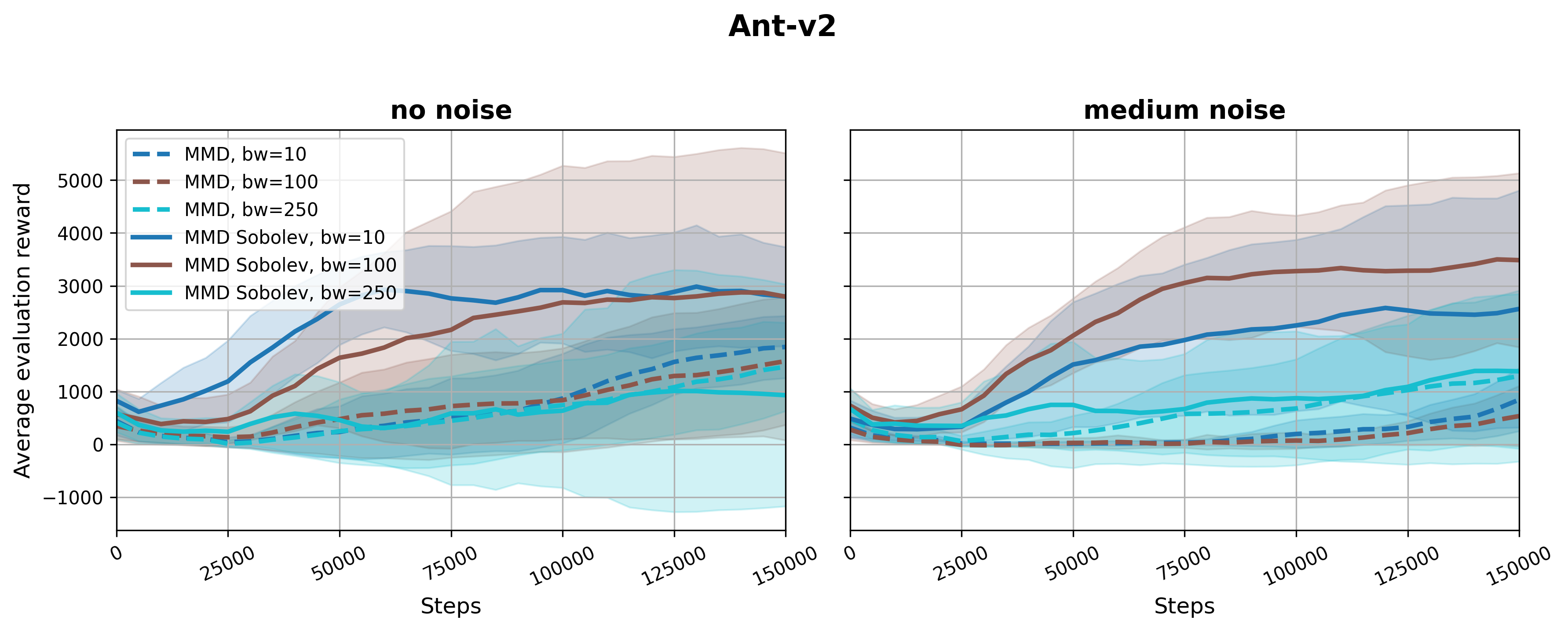}   \\[2pt]
    \includegraphics[width=0.475\linewidth]{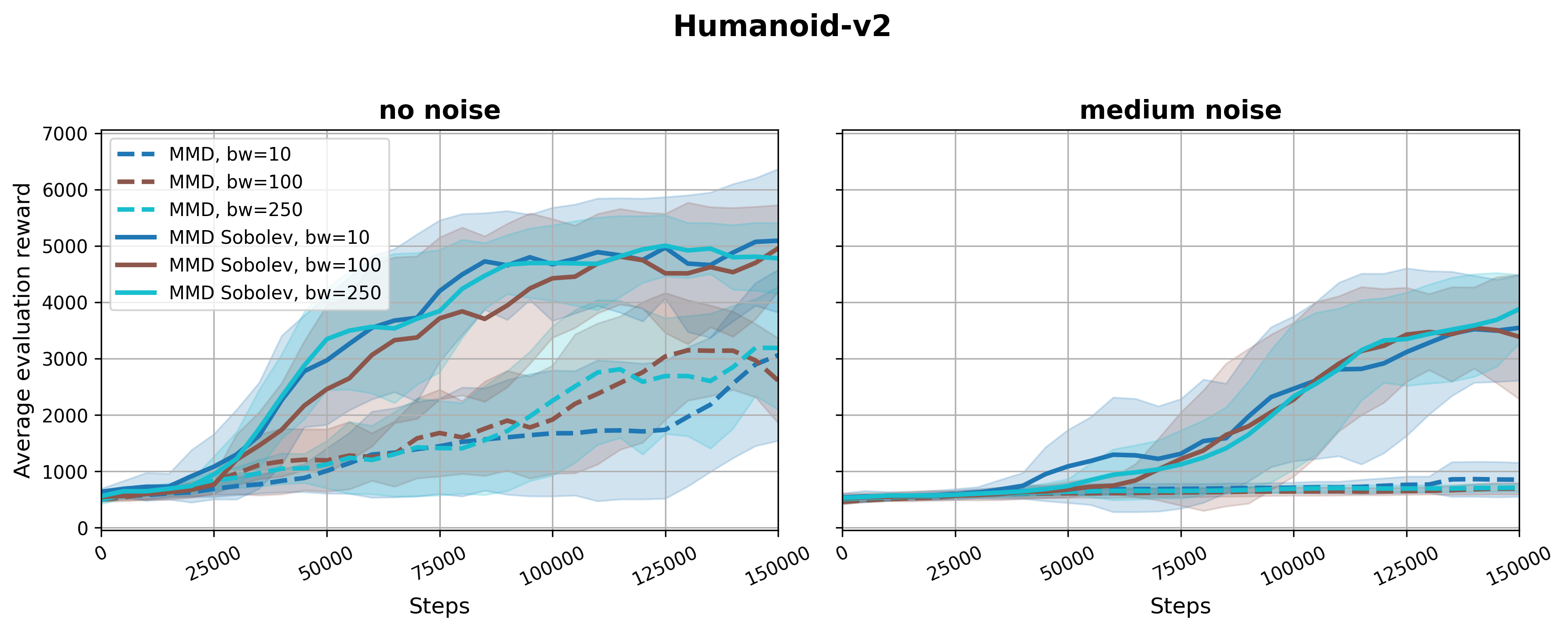} &
    % leave this cell blank for a symmetric layout
    \\
  \end{tabular}

  \caption{Full curves of DSDPG against baselines on 5 MuJoCo environments. }

  \label{fig:bandwidth_curves}
\end{figure}

\begin{figure}[htbp]
  \centering
  % tighten up spacing around the caption
  \setlength{\abovecaptionskip}{4pt}
  \setlength{\belowcaptionskip}{4pt}
  % tighten up spacing between columns
  \setlength{\tabcolsep}{2pt}
  % no extra vertical padding in tabular rows
  \renewcommand{\arraystretch}{0}

  \begin{tabular}{cc}
    \includegraphics[width=0.475\linewidth]{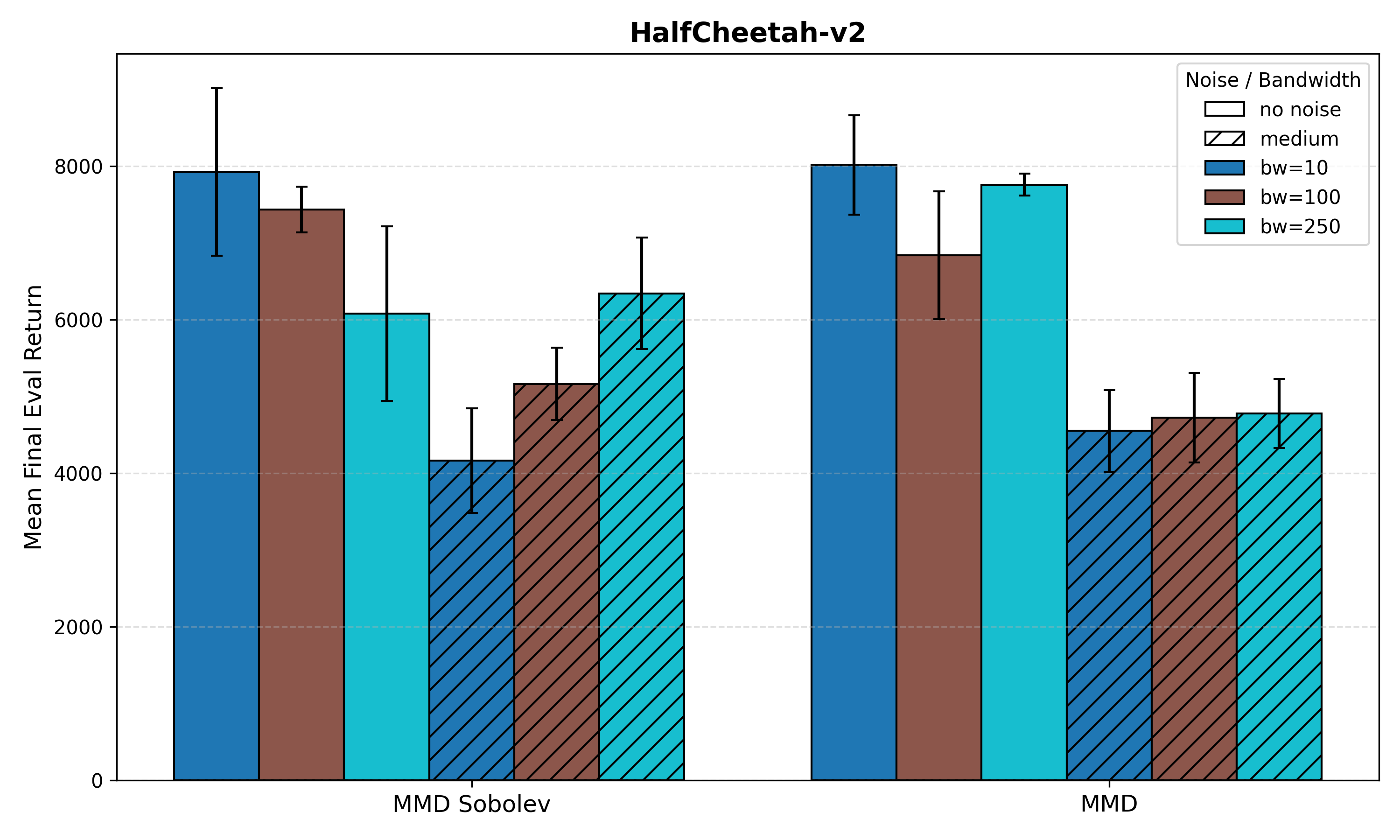} &
    \includegraphics[width=0.475\linewidth]{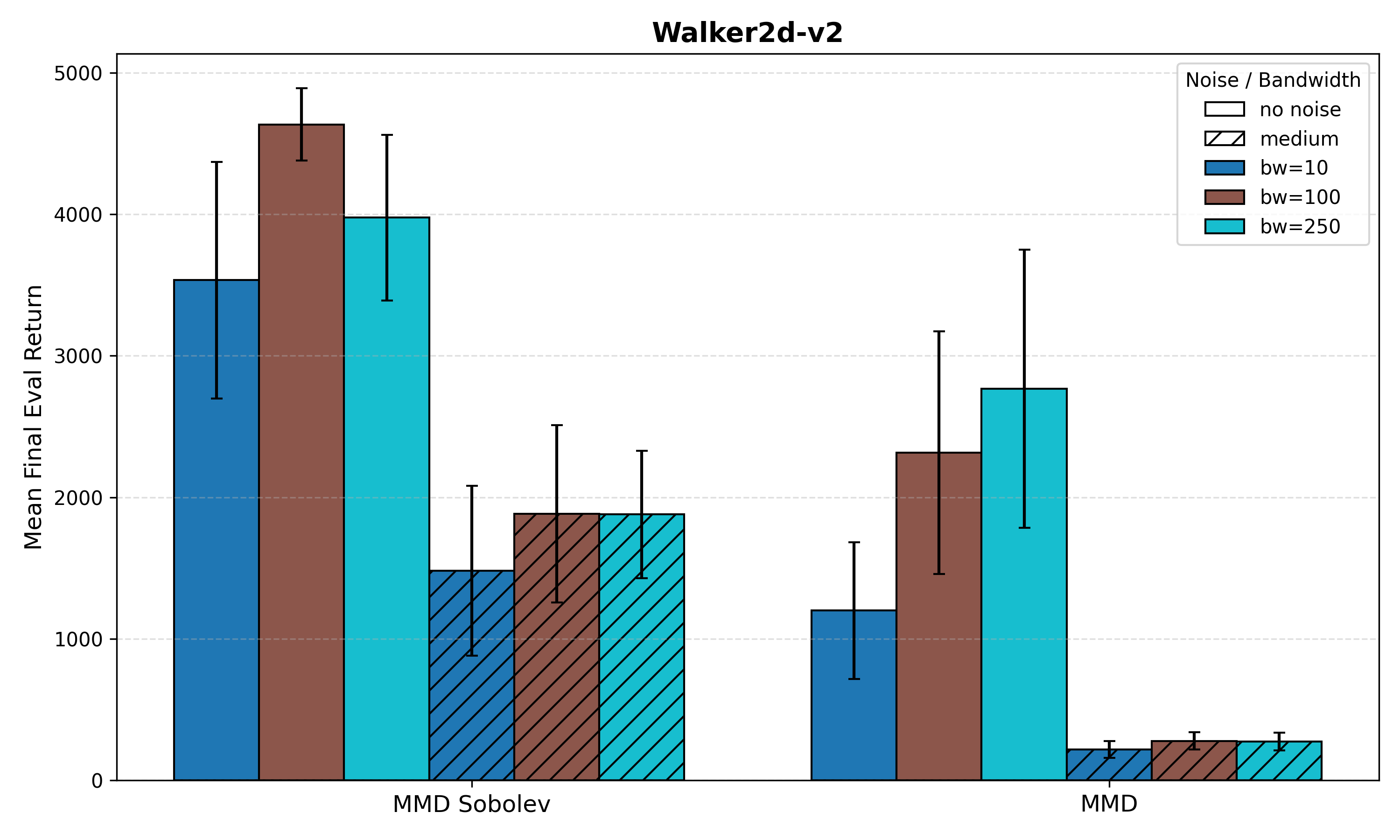} \\[2pt]
    \includegraphics[width=0.475\linewidth]{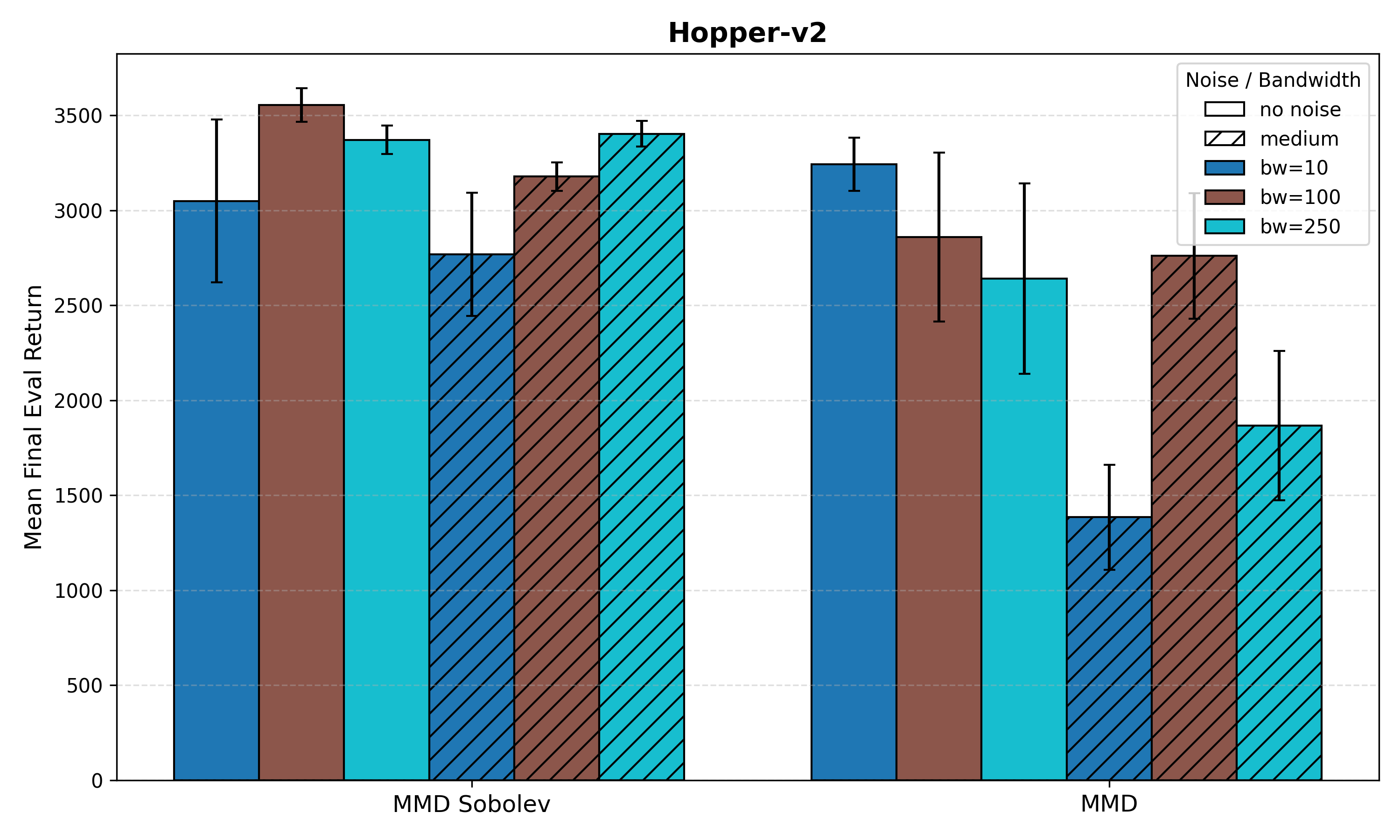} &
    \includegraphics[width=0.475\linewidth]{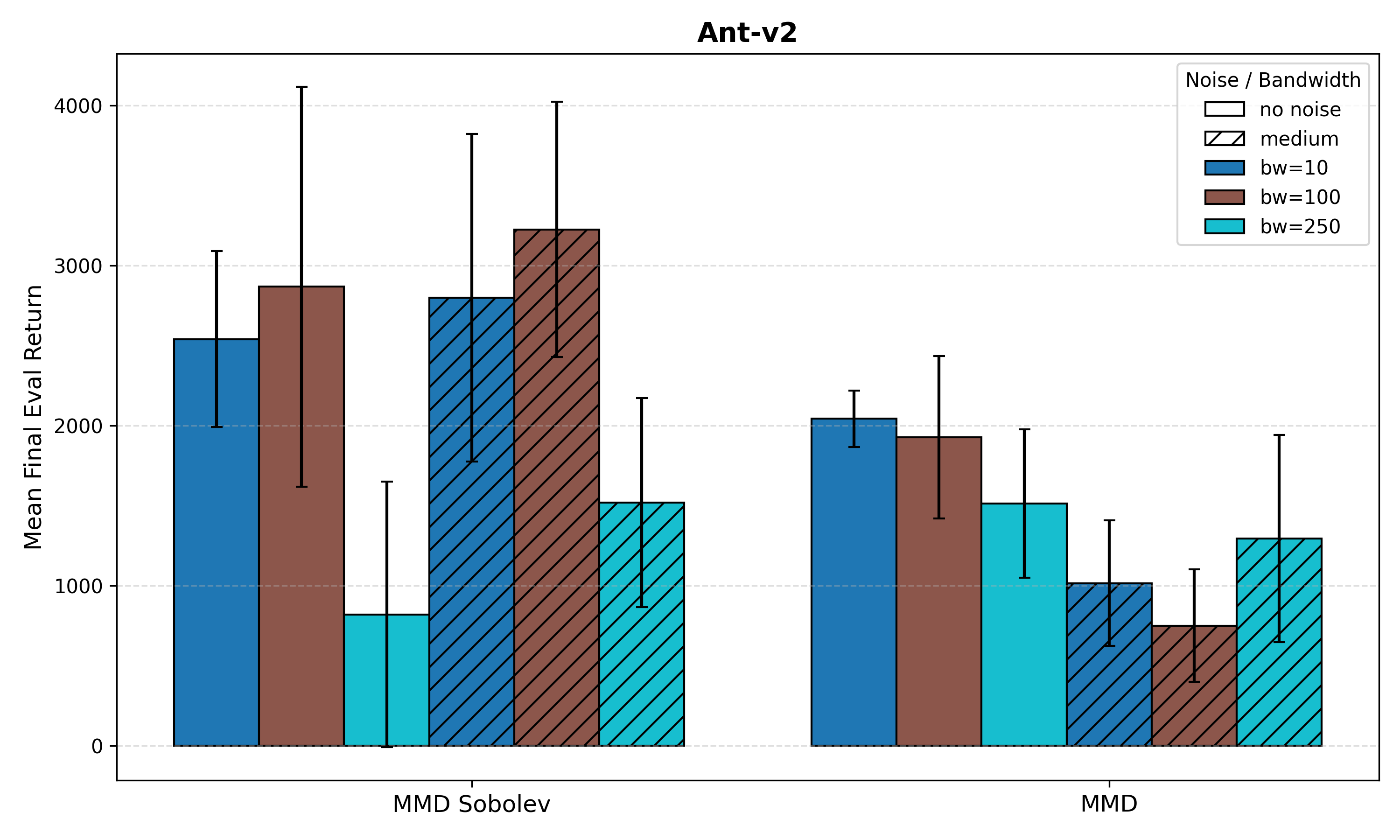}   \\[2pt]
    \includegraphics[width=0.475\linewidth]{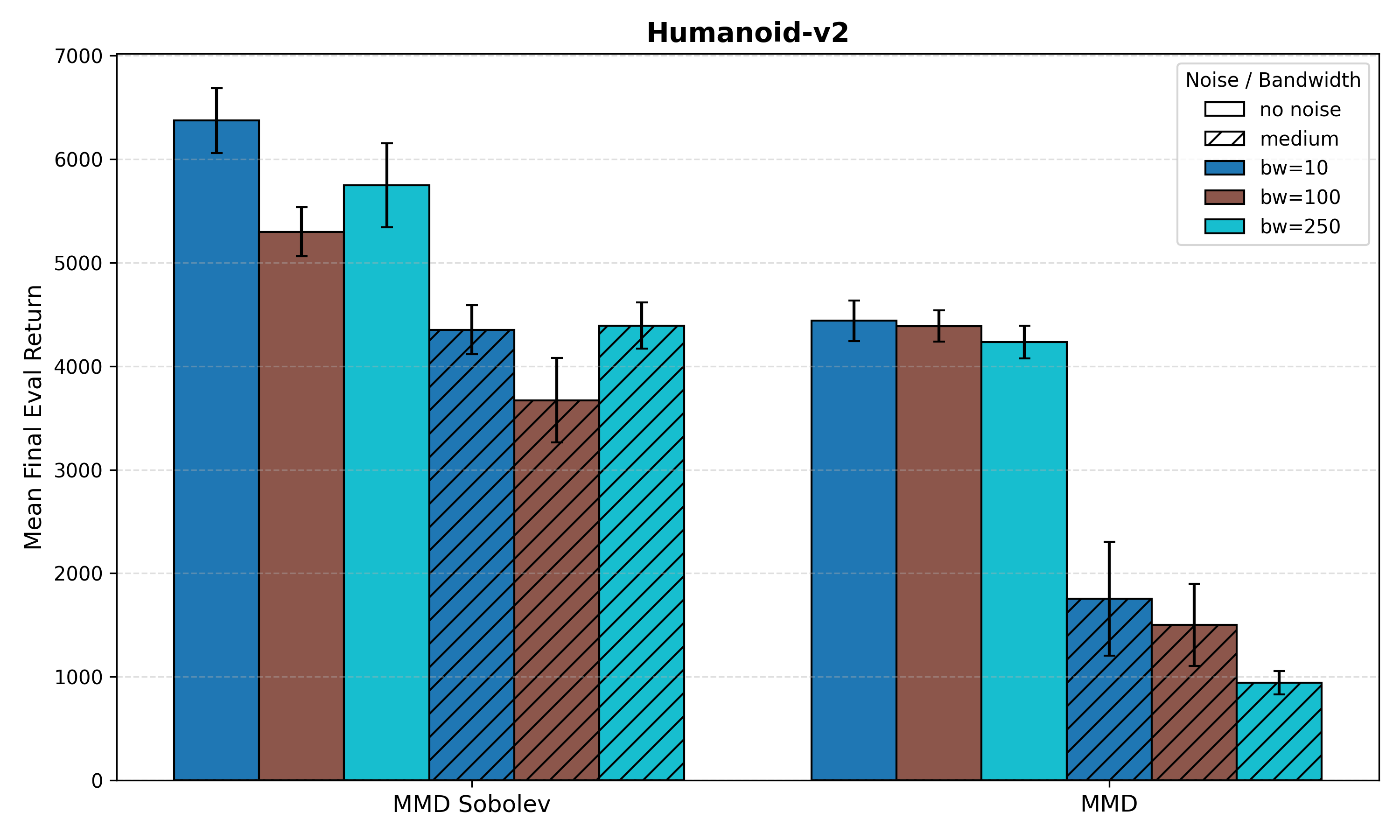} &
    % leave this cell blank for a symmetric layout
    \\
  \end{tabular}

  \caption{Comparison of bandwidths under the multiquadric kernel for MMD distributional RL and Sobolev MMD distributional RL. Bar plots of the final averaged evaluation reward. Mean over 5 seeds.}

  \label{fig:bandwidth_bars}
\end{figure}

    \subsection{Noise scale}
    \label{section:noise_scale}
        Since the noise scale we exposed in Section~\ref{sec:results_rl}, which we called \textit{medium}, was already sufficient to observe a gap between DSDPG and the baseline, we tried to increase the noise scale. We moved from \(n\sim\mathcal{U}[0.8,1.2]\) to \(n\sim\mathcal{U}[0.7,1.3]\). These changes already made the tasks harder. The results are displayed in Figure \ref{fig:noise_scale_curves}. As can be seen, the good performance of DSDPG (MMD Sobolev) depicted in Section \ref{sec:results_rl} can be extended to the larger noise scale, as our method maintains a consistent gap against the baselines (especially MAGE). 

        \begin{figure}[htbp]
          \centering
          % tighten up spacing around the caption
          \setlength{\abovecaptionskip}{4pt}
          \setlength{\belowcaptionskip}{4pt}
          % tighten up spacing between columns
          \setlength{\tabcolsep}{2pt}
          % no extra vertical padding in tabular rows
          \renewcommand{\arraystretch}{0}
        
          \begin{tabular}{c}
            \includegraphics[width=0.875\linewidth]{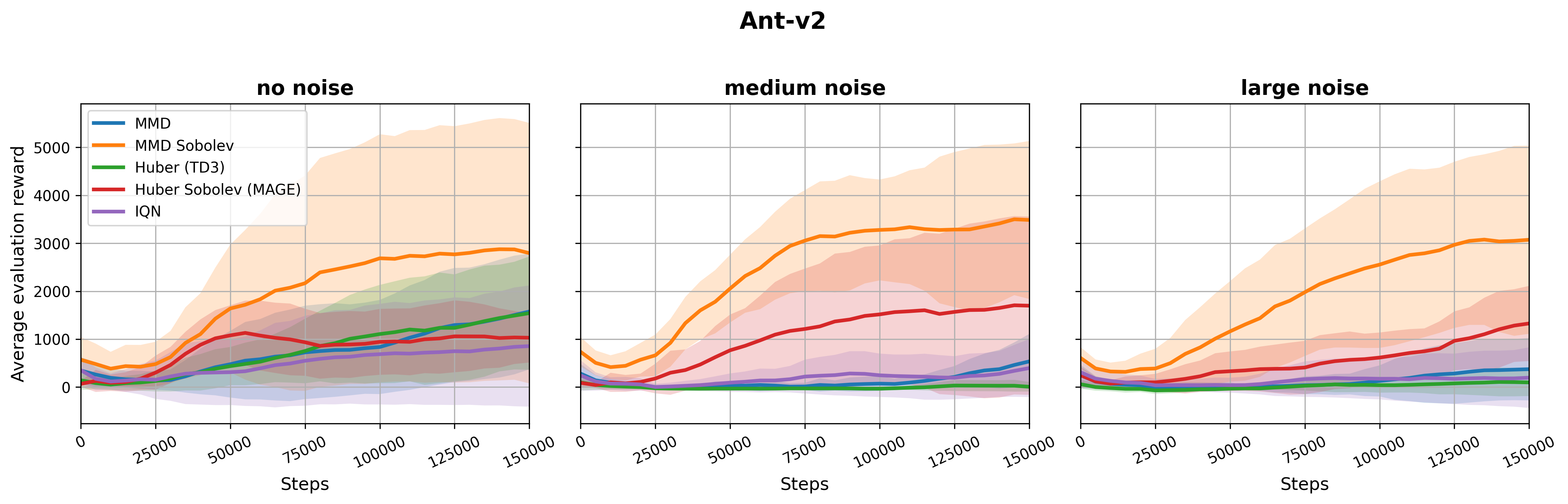} \\[2pt]
            \includegraphics[width=0.875\linewidth]{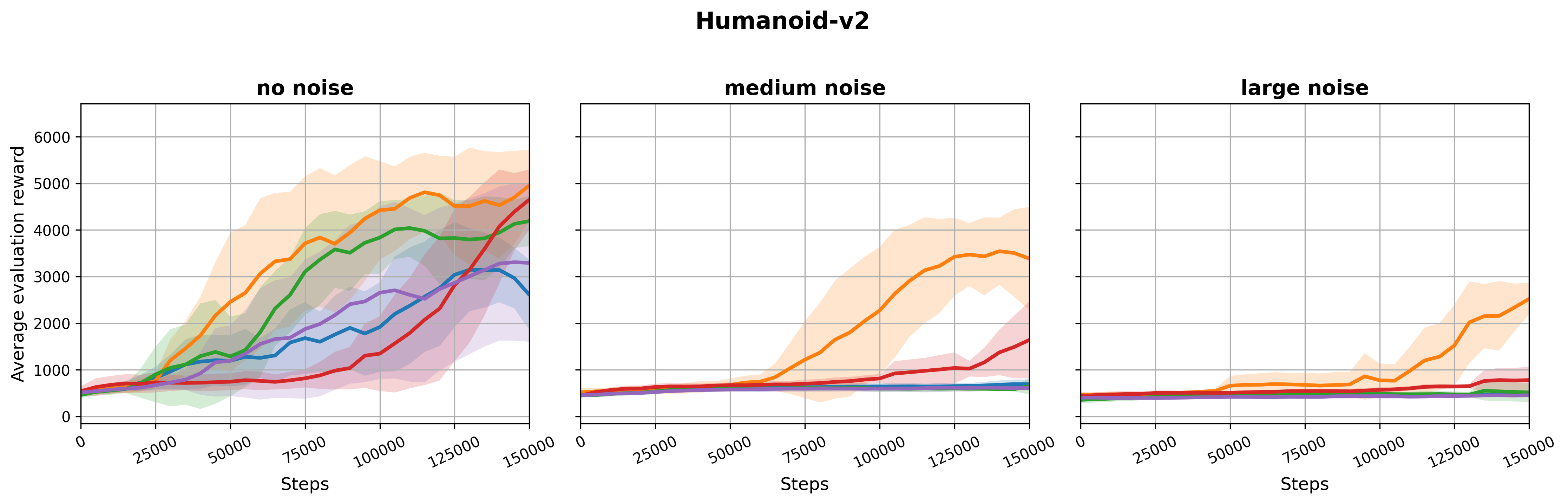}  \\[2pt]
            \includegraphics[width=0.875\linewidth]{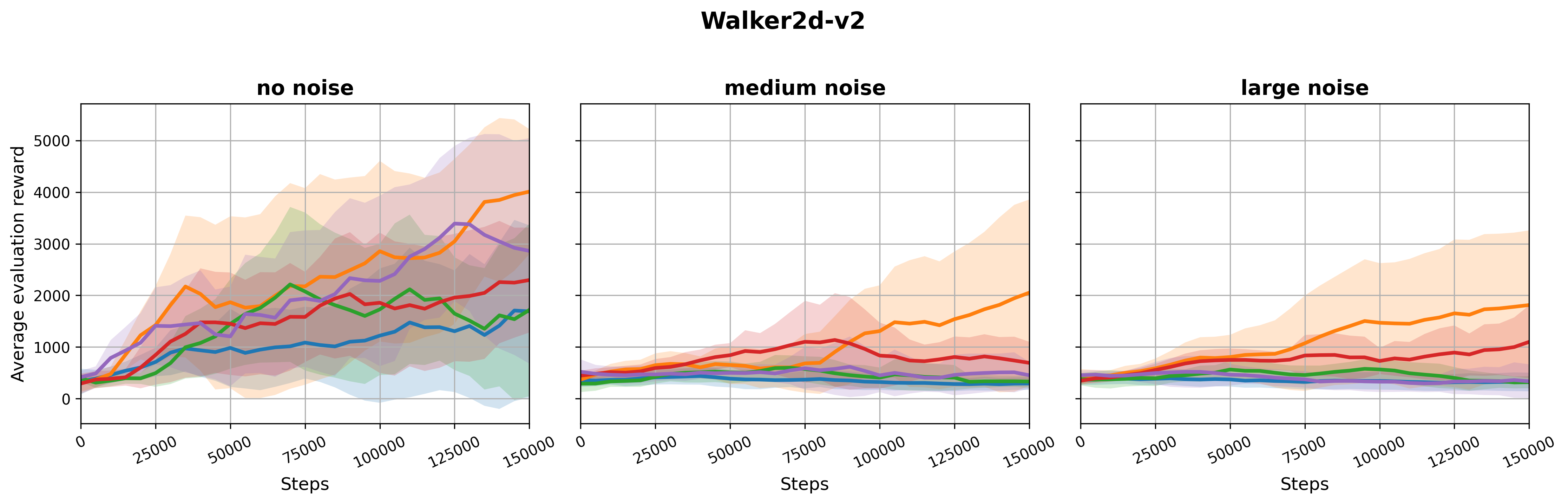}
            % leave this cell blank for a symmetric layout
            \\
          \end{tabular}
        
          \caption{Comparison of multiplicative noise scales. Averaged evaluation sum of rewards over 5 seeds.}
        
          \label{fig:noise_scale_curves}
        \end{figure}
        
    \subsection{Number of samples}
    \label{section:number_samples}
    Sensitivity to the number of samples is an important design question as the cost of generating samples will scale linearly and the cost of estimating MMD will scale quadratically. It is also interesting to verify how sensitive our method is to that parameter. The results displayed in Section \ref{sec:results_rl} used 10 samples to model the Sobolev distributions. Here we additionally test with 5 and 25 samples. In Figure \ref{fig:samples_bars} and \ref{fig:samples_curves} we display  the final evaluation returns and evaluation curves while varying the number of samples. As can be seen, there is no clear trend to be found. The Ant-v2 environment seems to be particularly sensitive to this parameter. Apart from this example we observe our method is robust to the number of samples as at least two different values maintained performance.

    \begin{figure}[htbp]
  \centering
  % tighten up spacing around the caption
  \setlength{\abovecaptionskip}{4pt}
  \setlength{\belowcaptionskip}{4pt}
  % tighten up spacing between columns
  \setlength{\tabcolsep}{2pt}
  % no extra vertical padding in tabular rows
  \renewcommand{\arraystretch}{0}

  \begin{tabular}{cc}
    \includegraphics[width=0.475\linewidth]{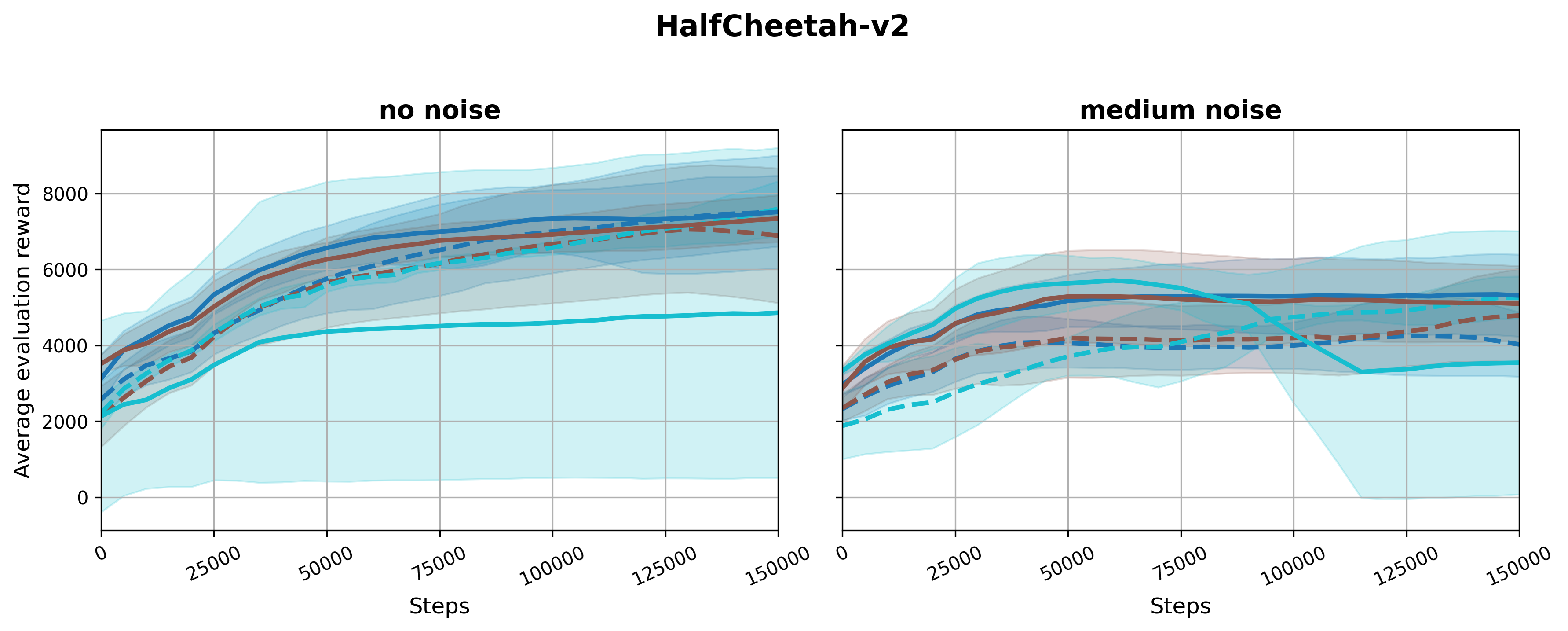} &
    \includegraphics[width=0.475\linewidth]{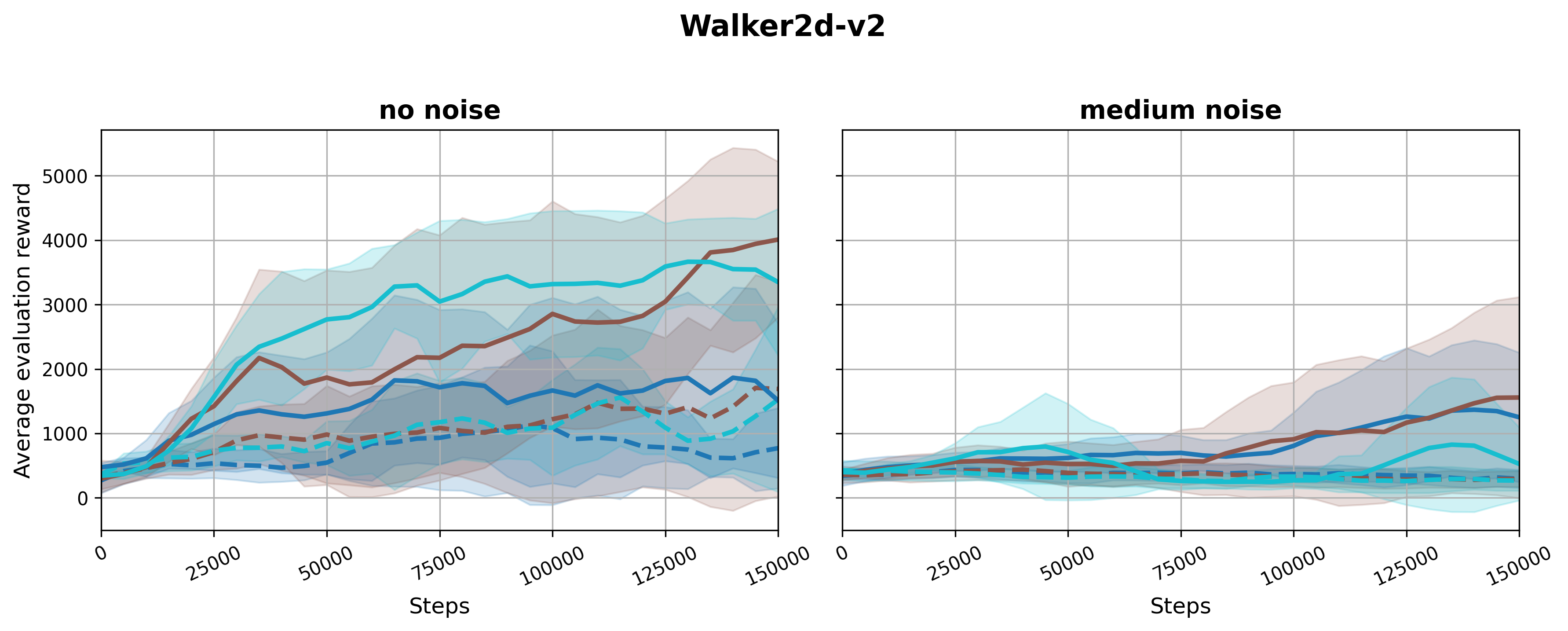} \\[2pt]
    \includegraphics[width=0.475\linewidth]{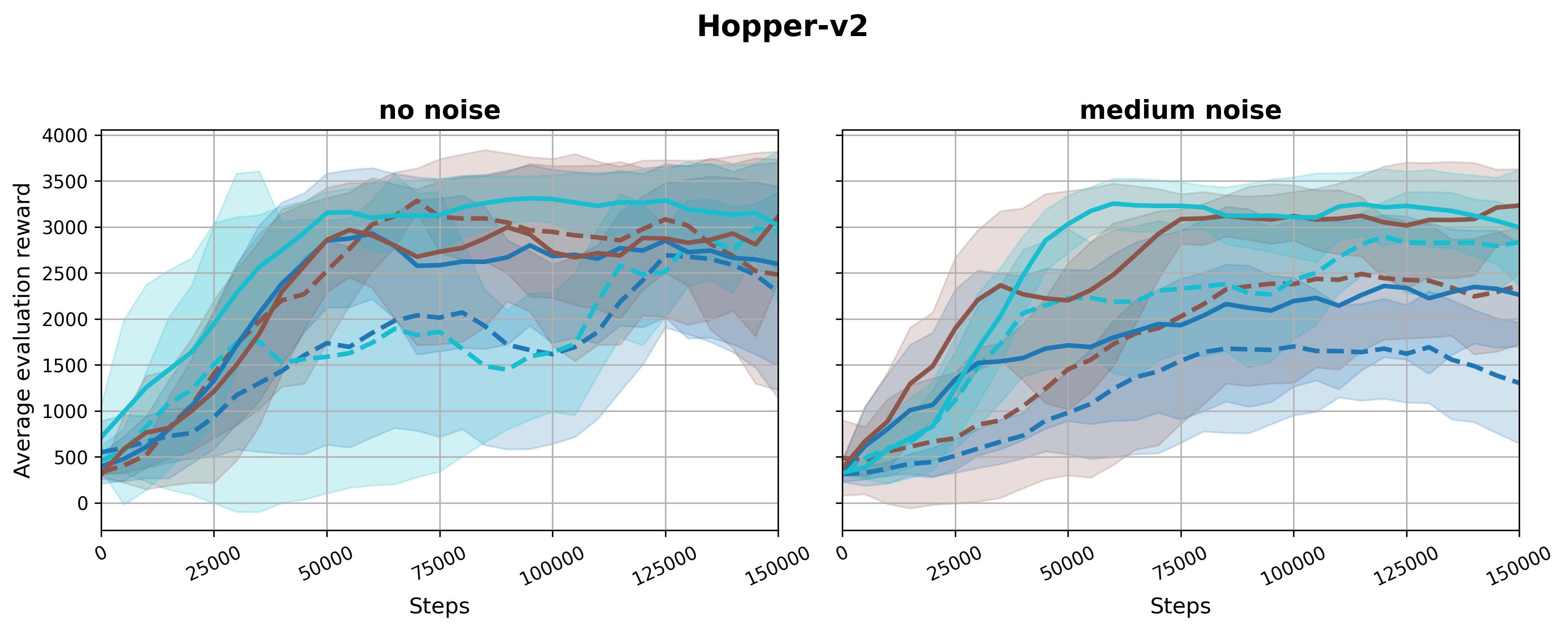} &
    \includegraphics[width=0.475\linewidth]{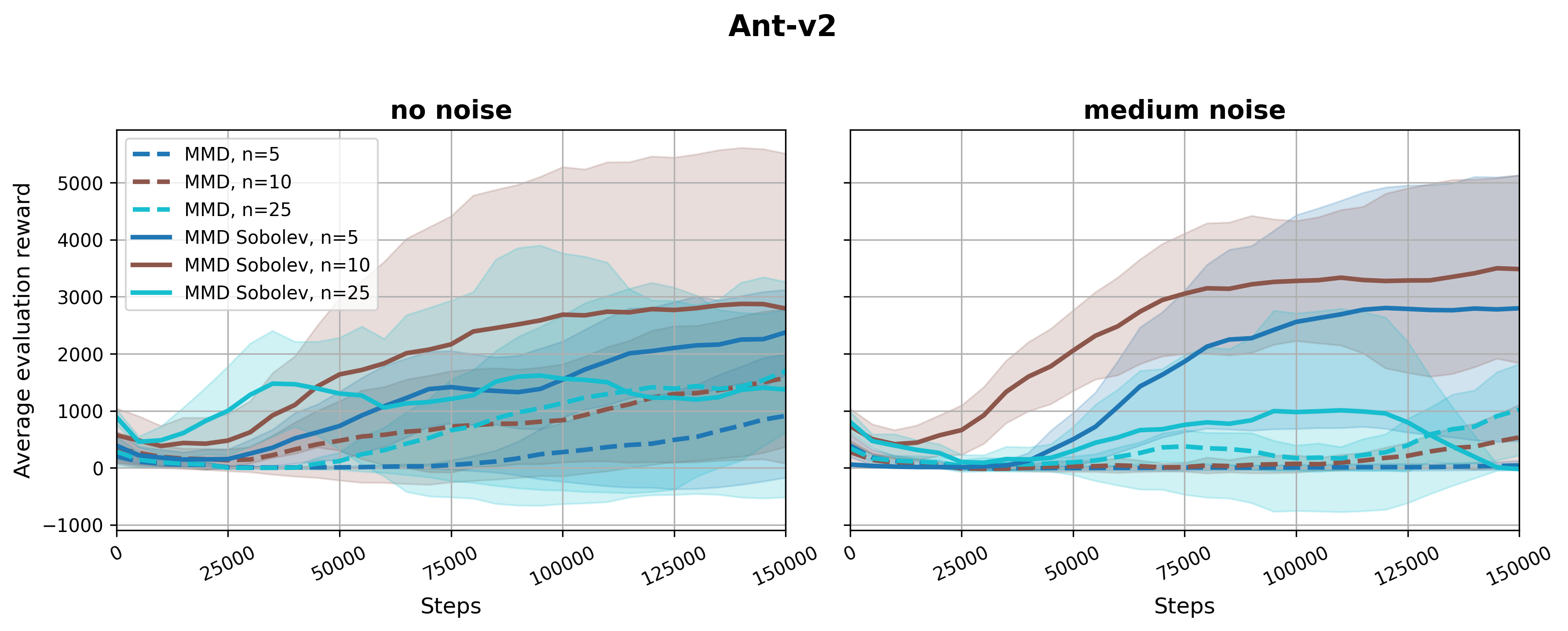}   \\[2pt]
    \includegraphics[width=0.475\linewidth]{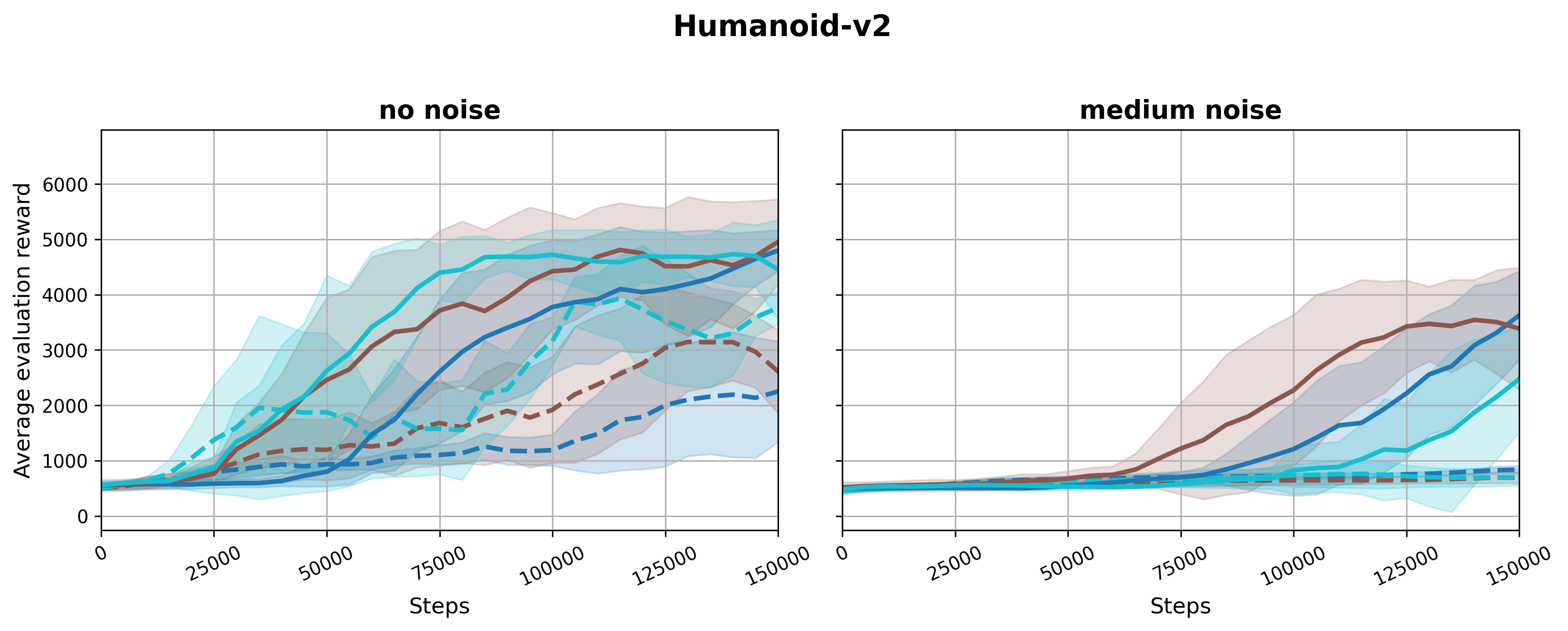} &
    % leave this cell blank for a symmetric layout
    \\
  \end{tabular}

  \caption{Comparison of number of samples used in the MMD methods (usual MMD and DSDPG denoted as MMD Sobolev). Evaluation curves are mean over 5 seeds.}

  \label{fig:samples_curves}
\end{figure}

\begin{figure}[htbp]
  \centering
  % tighten up spacing around the caption
  \setlength{\abovecaptionskip}{4pt}
  \setlength{\belowcaptionskip}{4pt}
  % tighten up spacing between columns
  \setlength{\tabcolsep}{2pt}
  % no extra vertical padding in tabular rows
  \renewcommand{\arraystretch}{0}

  \begin{tabular}{cc}
    \includegraphics[width=0.475\linewidth]{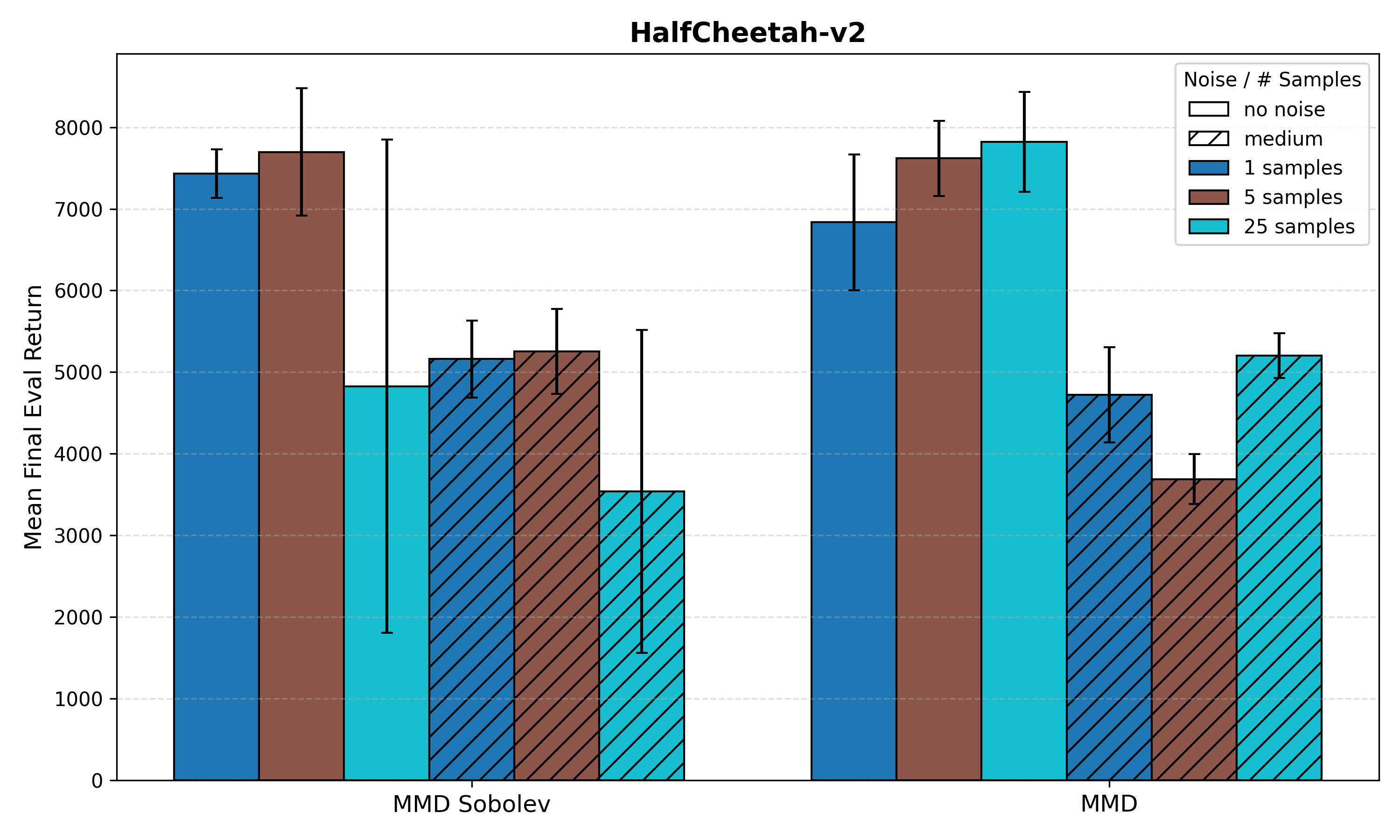} &
    \includegraphics[width=0.475\linewidth]{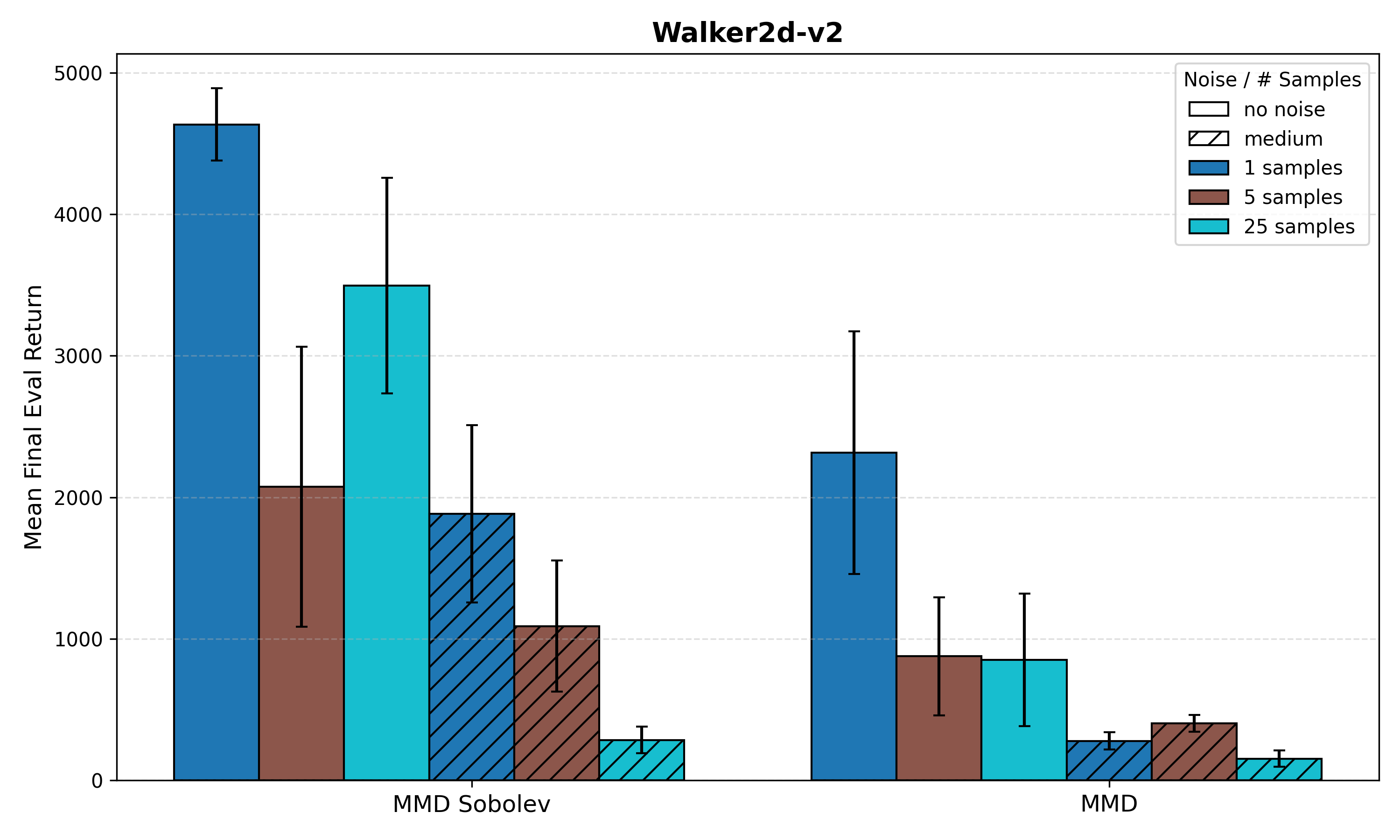} \\[2pt]
    \includegraphics[width=0.475\linewidth]{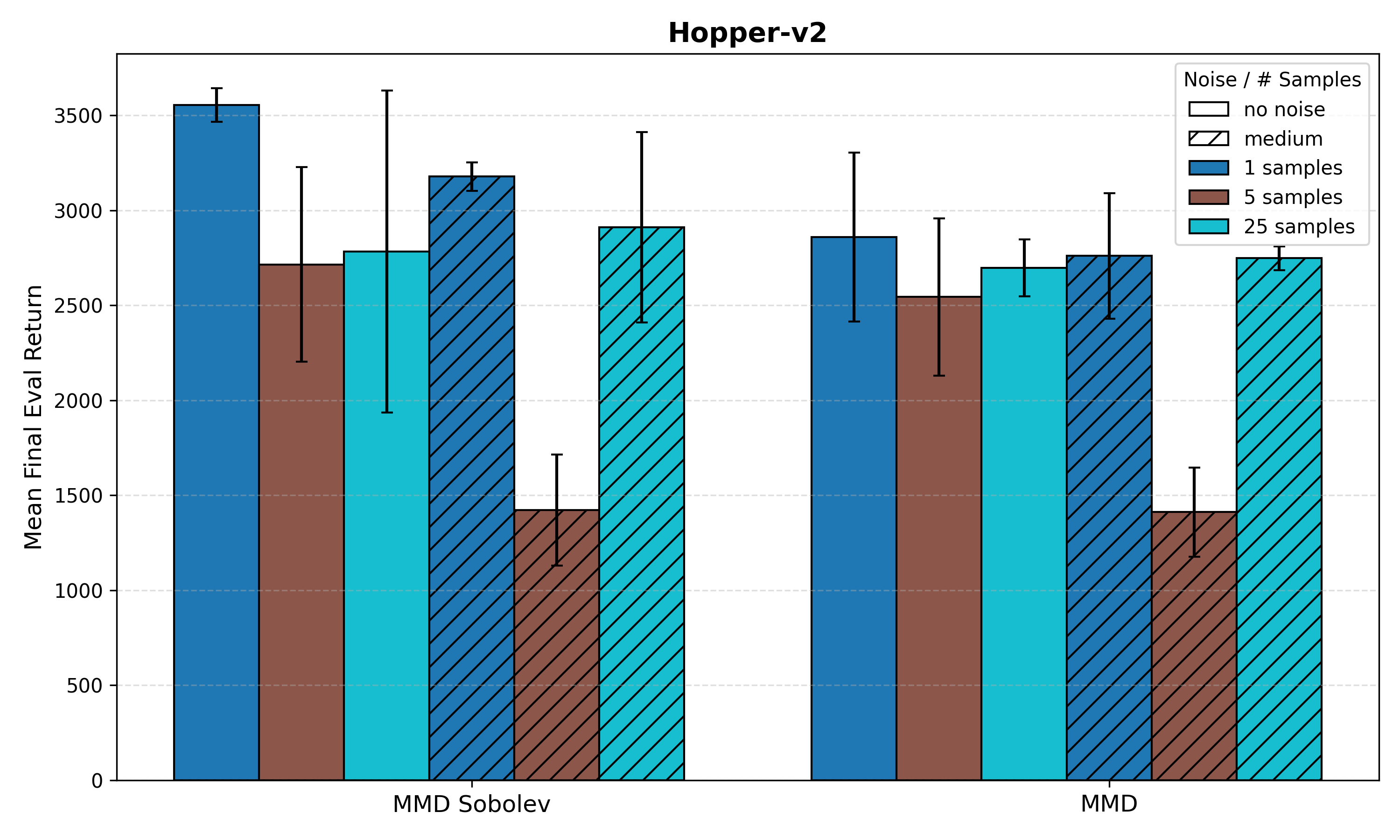} &
    \includegraphics[width=0.475\linewidth]{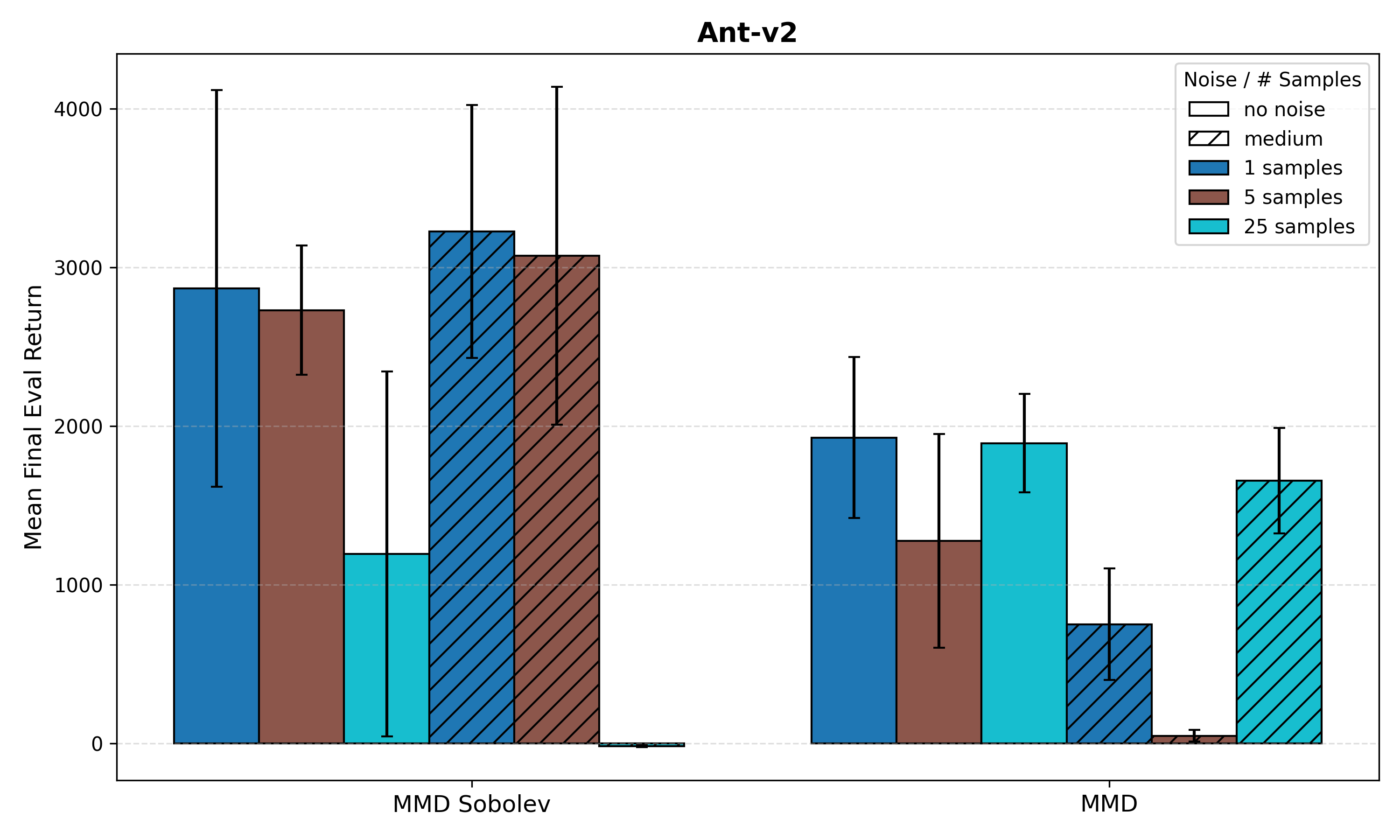}   \\[2pt]
    \includegraphics[width=0.475\linewidth]{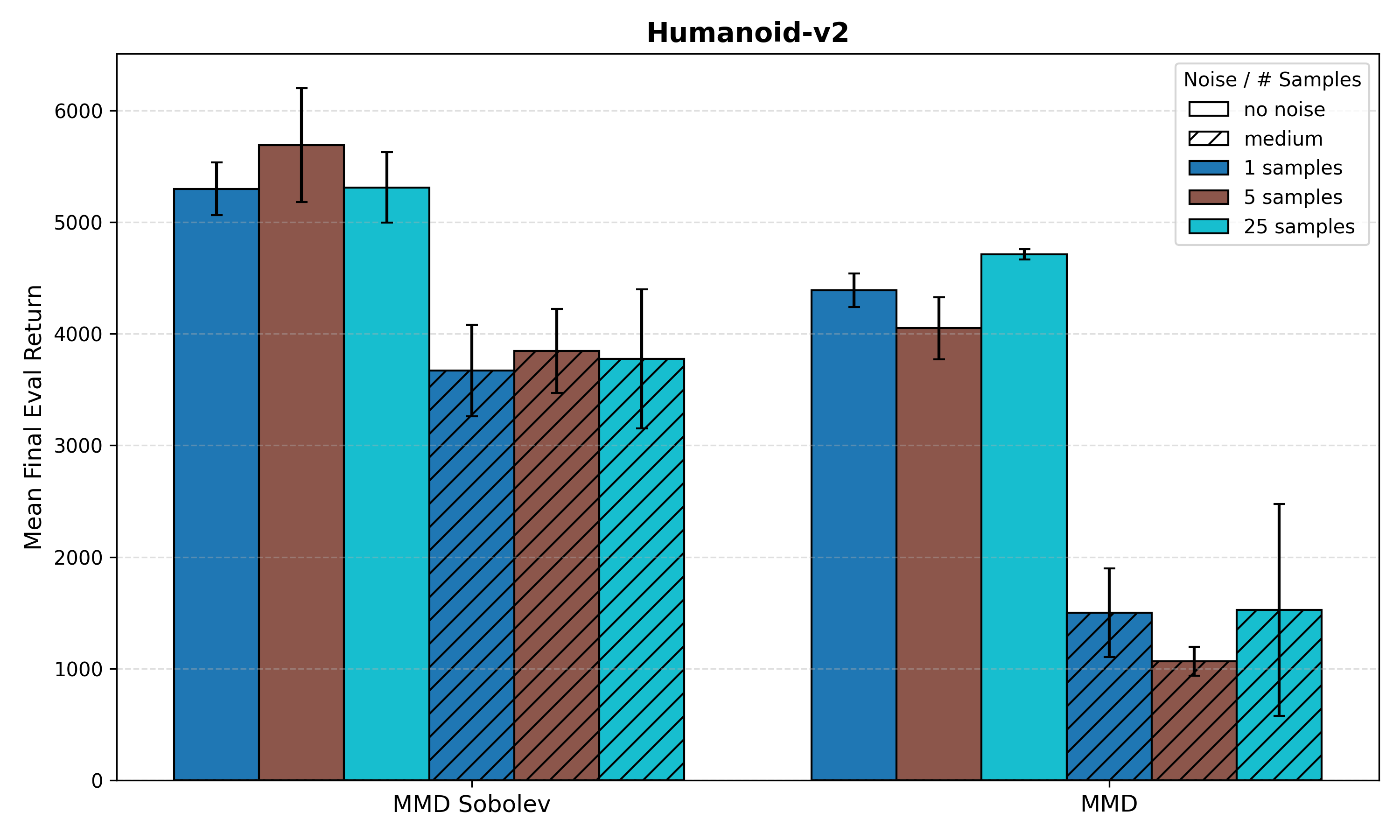} &
    % leave this cell blank for a symmetric layout
    \\
  \end{tabular}

  \caption{Comparison of number of samples used in the MMD methods (usual MMD and DSDPG denoted as MMD Sobolev). Final evaluation performance as mean over 5 seeds.}

  \label{fig:samples_bars}
\end{figure}

    \subsection{World model capacity}
    \label{section:model_capacity}
    The world model size is, along the number of samples, one of the primary parameter driving the overall cost of our method. As for MAGE \cite{doro2020how}, we backpropagate through the world model to infer the gradients of the target. Here we change the width of each neural network in the cVAE world model from 1024 to 256. We observe in Figure \ref{fig:wm_size}, on the single Walker2d-v2 environment we tested, rather than being insensitive, it seems the choice of 1024 was sub-optimal for this environment. This suggests our method is robust to this design choice. This should be evaluated on more demanding (higher dimensional) environments (Ant-v2, Humanoid-v2).

\begin{figure}[htbp]
  \centering
  % tighten up spacing around the caption
  \setlength{\abovecaptionskip}{4pt}
  \setlength{\belowcaptionskip}{4pt}
  % tighten up spacing between columns
  \setlength{\tabcolsep}{2pt}
  % no extra vertical padding in tabular rows
  \renewcommand{\arraystretch}{0}

  \begin{tabular}{c}
    \includegraphics[width=0.9\linewidth]{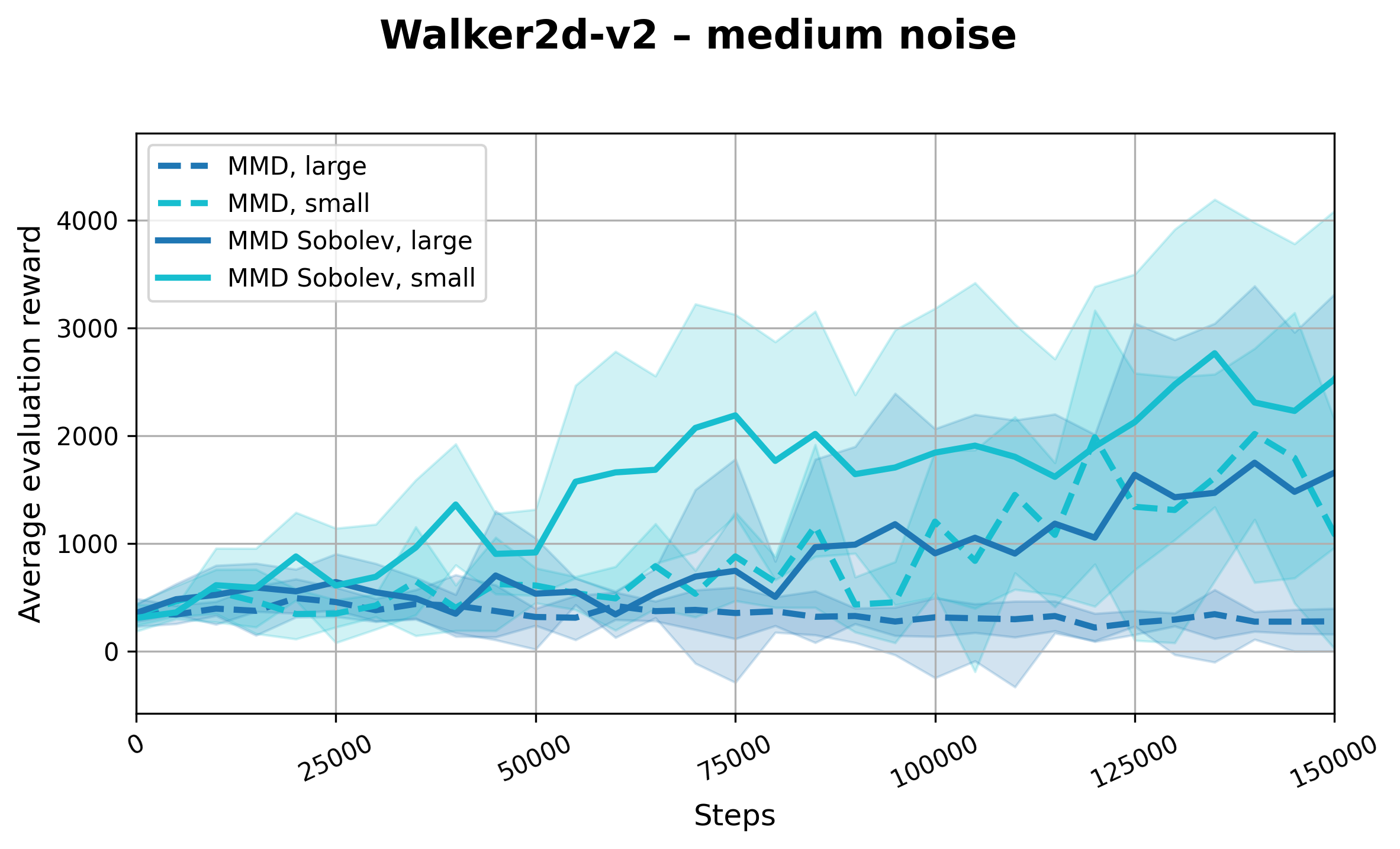} 
    % leave this cell blank for a symmetric layout
    \\
  \end{tabular}

  \caption{Comparison of world model size for MMD distributional RL and Sobolev MMD distributional RL. Bar plots of the final averaged evaluation reward. Mean over 5 seeds.}

  \label{fig:wm_size}
\end{figure}

\section{LLM usage}
We used an LLM-based assistant to support the preparation of this paper. 
In particular, it was employed to (i) rephrase draft paragraphs for clarity and suggest alternative framings of related work, 
(ii) format proofs, explore directions, and verify intermediate steps, 
(iii) assist in debugging code, 
(iv) suggest LaTeX equation formatting, and 
(v) help identify relevant theoretical results in preceding works. 
All core research contributions, including the development of theoretical results, algorithms, and experiments, were carried out by the authors.

\end{document}